\newcommand{\rank}{\textup{\textsf{rank}}}
\renewcommand{\comment}[1]{}
\newcommand{\COMMENTCOLOR}{black}
\newcommand{\COLTCOMMENTCOLOR}{black}
\newcommand{\TODOCOLOR}{black}
\newcommand{\COMMENTAPRIL}{black}
\newcommand{\paramdependent}{{\text{parameter-dependent dual function}}}
\definecolor{colorY}{rgb}{0.7 , 0.7 , 0.2}
\title{

Sample complexity of data-driven tuning of model hyperparameters in neural networks with structured \paramdependent

}
\author{
 Maria-Florina Balcan \\
  Carnegie Mellon University \\
  \texttt{ninamf@cs.cmu.edu} \\
  \and
  Anh Tuan Nguyen\footnote{Correspondence: \texttt{atnguyen@cs.cmu.edu}} \\
  Carnegie Mellon University \\
  \texttt{atnguyen@cs.cmu.edu} \\
  \and
  Dravyansh Sharma \\
  Toyota Technological Institute at Chicago\footnote{Most of the work was done when DS was at CMU.}\\
  \texttt{dravy@ttic.edu} \\
}
\begin{document}
\setlength{\parskip}{0.5em}    
\setlength{\parindent}{0pt}  
\maketitle

\begin{abstract}

Modern machine learning algorithms, especially deep learning-based techniques, typically involve careful
    hyperparameter tuning to achieve the best performance. Despite the surge of intense interest in practical
    techniques like Bayesian optimization and random search-based approaches to automating this laborious and
    compute-intensive task, the fundamental learning-theoretic complexity of tuning hyperparameters for deep
    neural networks is poorly understood. Inspired by this glaring gap, we initiate the formal study of hyperparameter tuning complexity in deep learning through a recently introduced data-driven setting. 
    We assume that we have a series of deep learning tasks, and we have to tune hyperparameters to do well on average over the distribution of
    tasks. A major difficulty is that the utility function as a function of the hyperparameter is very volatile and furthermore,
    it is given implicitly by an optimization problem over the model parameters.
    To tackle this challenge, we introduce a new technique to characterize the discontinuities and 
    oscillations of the utility function on any fixed problem instance as we vary the hyperparameter; our analysis 
    relies on subtle concepts including tools from differential/algebraic geometry and constrained optimization. This can be 
    used to show that the learning-theoretic complexity of the corresponding family of utility functions is bounded. We instantiate 
    our results and provide   sample complexity bounds for concrete applications—tuning a hyperparameter that interpolates neural activation functions and setting the kernel 
    parameter in graph neural networks.\looseness-1 

\end{abstract}

\newpage
\tableofcontents
\newpage

\setlength{\parskip}{0.5em}    
\setlength{\parindent}{0pt}  

\section{Introduction}
Developing deep neural networks that work best for a given application  typically corresponds to a tedious selection of hyperparameters and architectures over extremely large search spaces. This process of adapting a deep learning algorithm or model to a new application domain takes up significant engineering and research resources, and often involves unprincipled techniques with limited or no theoretical guarantees on their effectiveness. While the success of pre-trained (foundation) models have shown the usefulness of transferring effective parameters (weights) of learned deep models across tasks \citep{devlin2018bert,achiam2023gpt}, it is less clear how to leverage prior experience of ``good'' hyperparameters to new tasks. In this work, we develop a principled framework for tuning continuous hyperparameters in deep networks by leveraging similar problem instances and obtain sample complexity guarantees for learning provably good hyperparameter values.

The vast majority of practitioners still use a naive ``grid search" based approach which involves selecting a finite grid of (often continuous-valued) hyperparameters and selecting the one that performs the best. A lot of recent literature has been devoted to automating and improving this hyperparameter tuning process, prominent techniques include Bayesian optimization \citep{hutter2011sequential,bergstra2011algorithms,snoek2012practical,snoek2015scalable} and random search based methods \citep{bergstra2012random,li2018hyperband}. While these approaches work well in practice, they either lack a formal basis or enjoy limited theoretical guarantees only under strong assumptions. For example, Bayesian optimization analysis assumes that the performance of the deep network as a function of the hyperparameter can be approximated as a noisy evaluation of an expensive function, typically making assumptions on the form of this noise, and requires setting several hyperparameters and other design choices including the amount of noise, the acquisition function which determines the hyperparameter search space, the type of kernel and its bandwidth parameter. Other techniques, including random search methods and spectral approaches \citep{hazan2018hyperparameter} make fewer assumptions but only work for a discrete and finite grid of hyperparameters.

In this work, we consider the problem of tuning neural network hyperparameters in a multi-task setting. We assume there is a fixed but unknown distribution over the learning tasks, and we have access to tasks drawn from that distribution at the training time. We want to learn the hyperparameters to use on future tasks from the same distribution, i.e.\ on any future task we use the learned hyperparameter, and then we find the best network weights for that task. To address this question, we observe a parallel to prior work on data-driven algorithm design \citep{gupta2016pac,balcan2020data}. Despite a similar formulation, our goal is to learn the network hyperparameters, and not configure the training algorithm that learns the network weights. The best weights learned during training for a fixed task ultimately depend on the hyperparameter selected in a complex way, making our setting more challenging than those previously studied under this paradigm.


Mathematically, we aim to analyze the learning-theoretic complexity of a specific function class $\cU = \{u_{\alpha}: \cX \rightarrow [0, H] \mid \alpha\in \cA\}$, where $\cA = [\alpha_1, \alpha_2] \subset \bbR$ is the hyperparameter space and $\cX$ is the set of problem instances. We note that  each problem instance $\boldsymbol{x}\in \cX$ corresponds to a training dataset for a single task (e.g.\ a dataset of images, as opposed to a single image).
We assume there is an application-specific problem problem distribution $\cD$ over the datasets (tasks) in $\cX$.
Each function $u_{\alpha}(\boldsymbol{x})$ in $\cU$ is defined as:
$
    u_{\alpha}(\boldsymbol{x}) = \max_{\boldsymbol{w} \in \cW} f(\boldsymbol{x}, \alpha, \boldsymbol{w}),
$
where $u_{\alpha}(\boldsymbol{x})$ measures the performance of the deep network on a problem instance $\boldsymbol{x}$ and hyperparameter $\alpha$ using the best parameter $\boldsymbol{w}$. Here $\cW = [w_{\min}, w_{\max}]^d \subset \bbR^d$ represents a $d$-dimensional parameter space. Our goal is to establish an upper bound on the learning-theoretic complexity of $\cU$, leveraging the structural properties of $f(\boldsymbol{x}, \alpha, \boldsymbol{w})$. 
We introduce two key notations in our analysis: the parameter-dependent dual function $f_{\boldsymbol{x}}(\alpha, \boldsymbol{w}):= f(\boldsymbol{x}, \alpha, \boldsymbol{w})$, describing the performance corresponding to hyperparameter $\alpha$ and parameter $\boldsymbol{w}$ for a fixed problem instance $\boldsymbol{x}$, and the dual utility function $u^*_{\boldsymbol{x}}(\alpha): = u_{\alpha}(\boldsymbol{x}) = \max_{\boldsymbol{w} \in \cW}f_{\boldsymbol{x}}(\alpha, \boldsymbol{w})$. Inspired by model hyperparameter tuning applications (see Section \ref{sec:applications}), we assume that the parameter-dependent dual $f_{\boldsymbol{x}}(\alpha, \boldsymbol{w})$ is a the piecewise polynomial function in $\alpha$ and $\boldsymbol{w}$ (see Section \ref{sec:problem-setting} for details). 
Based on this structure of $f_{\boldsymbol{x}}(\alpha, \boldsymbol{w})$, we aim to understand the structure of $u^*_{\boldsymbol{x}}(\alpha)$ which by the earlier work of \cite{balcan2021much}\footnote{
    This work introduces the relation between the structure of the dual $u^*_{\boldsymbol{x}}$ and the learnability of the primal class $\cU$ in a different context of algorithm configuration, but the approach is useful for our setting.
}
, would imply learnability of $\cU = \{u_{\alpha}: \cX \rightarrow [0, H] \mid \alpha\in \cA\}$.

The major difficulty we have to overcome is that even if $f_{\boldsymbol{x}}(\alpha, \boldsymbol{w})$
is a nicely structured  piecewise polynomial function, the function $u^*_{\boldsymbol{x}}(\alpha)= \max_{w \in \cW}{f_{\boldsymbol{x}}(\alpha, \boldsymbol{w})}$ appears to  be less structured;
in particular, the function  $u^*_{\boldsymbol{x}}(\alpha)$ is not necessarily piecewise polynomial\footnote{ 
    Consider the case where $f_{\boldsymbol{x}}(\alpha, w) = w\alpha - \frac{w^3}{3}$ for $\alpha, w \geq 0$, and define $u^*_{\boldsymbol{x}}(\alpha) = \sup_{w \geq 0}f_{\boldsymbol{x}}(\alpha, w)$. One can easily show that in this case, $u^*_{\boldsymbol{x}}(\alpha) = \frac{2}{3}\alpha^{\frac{3}{2}}$, which is not a polynomial in $\alpha$.
}.
Our key technical innovation is to show that for the case of one hyperparameter $\alpha$ we can still find enough structure in the dual functions $u^*_{\boldsymbol{x}}$; specifically by leveraging the structure of parameter-dependent dual functions $f_{\boldsymbol{x}}(\alpha, \boldsymbol{w})$ we show how to bound the number of discontinuities and local maxima of $u^*_{\boldsymbol{x}}$, which in turn implies that they have bounded oscillations (as defined technically in Section~\ref{sec:oscillation-results}), which then imply a bound on the pseudo-dimension of $\cU$ (using results from \citealt{balcan2021much}).

Our proposed framework implies generalization guarantees for data-driven model hyperparameter tuning across various applications. We demonstrate this through two concrete examples with active research interest. Our first application is to tuning an interpolation hyperparameter for the activation function used at each node of the neural network. Different activation functions perform well on different datasets \citep{ramachandran2017searching,liu2018darts}. We analyze the sample complexity of tuning the best combination from a pair of activation functions by learning a real-valued hyperparameter that interpolates between them. We tune the hyperparameter across multiple problem instances, an important setting for multi-task learning. Our contribution is  related to neural architecture search (NAS, \citealt{zoph2017neural, pham2018efficient, liu2018progressive}). NAS  automates the discovery and optimization of neural network architectures, replacing human-led design with computational methods. Several techniques have been proposed \citep{bergstra2013making,baker2017designing,white2021bananas}, but they lack principled theoretical guarantees (see additional related work in Appendix \ref{app:related-work}), and multi-task learning is a known open research direction \citep{elsken2019neural}.  We also instantiate our framework for tuning the graph kernel parameter in Graph Neural Networks (GNNs) \citep{kipf2017semi} designed for more effectively deep learning with structured data. Hyperparameter tuning for graph kernels has been studied in the context of classical non-neural models \citep{balcan2021data,sharma2023efficiently}, in this work we provide the first provable guarantees for  tuning the graph hyperparameter for the more effective modern approach of graph neural networks. While we focus on these specific applications, our proposed framework's generality makes it applicable to many other scenarios.

\textbf{Our contributions.}
 In this work, we provide {an analysis} for the learnability of parameterized algorithms involving both parameters and hyperparameters, when the learner has access to problem instances drawn from a fixed, unknown distribution. Our analysis captures model hyperparameter tuning in deep networks with a piecewise polynomial \paramdependent. Concretely,
\begin{itemize} 
    \item We introduce general tools which connect the number of discontinuities and local maxima of a piecewise continuous function with its learning-theoretic complexity (Section \ref{sec:oscillation-results}). 
    
    \item We show that when the \paramdependent \,$f_{\boldsymbol{x}}(\alpha, \boldsymbol{w
    })$ computed by a deep network $f$ on a fixed dataset $\boldsymbol{x}$ is \textit{piecewise constant}, the function $u^*_{\boldsymbol{x}}$ is also piecewise constant. This structure occurs in classification tasks with a 0-1 loss objective. We then establish an upper-bound for the pseudo-dimension of $\cU$, which translates to learning guarantees for $\cU$ (Theorem \ref{thm:learning-guarantee-piecewise-constant}).
    
    \item As our main contribution, we  prove that when the \paramdependent \,$f_{\boldsymbol{x}}(\alpha, \boldsymbol{w})$ exhibits a \textit{piecewise polynomial} structure, under mild regularity assumptions, we can establish an upper bound for the number of discontinuities and local extrema of the dual utility function $u^*_{\boldsymbol{x}}$. We use tools
    from differential geometry to identify the smooth 1-manifolds in the space $\cA \times \cW$ corresponding to the derivative curves which determine $u^*_{\boldsymbol{x}}$ and  results from algebraic geometry and constrained optimization to bound the number of  local extrema of $u^*_{\boldsymbol{x}}$ along such manifolds. We then translate the structure of $u^*_{\boldsymbol{x}}$ to learning guarantees for $\cU$ (Theorem \ref{thm:learning-guarantee-piecewise-poly}).
    
    \item Finally, we investigate data-driven algorithm configuration problems for deep networks, including tuning interpolation parameters for neural activation functions (Theorem \ref{thm:nas-pdim}) and hyperparameter tuning for semi-supervised learning with graph convolutional networks (Theorem \ref{thm:pdim-GCN-classification}). By carefully analyzing the structure of the corresponding dual utility functions, we uncover the underlying piecewise structure relevant for our framework. We use this structure to obtain learnability guarantees for tuning the hyperparameters for both classification and regression problems. 
\end{itemize}

\textbf{Related work.} Data-driven algorithm design has been successfully applied to tune fundamental algorithms in machine learning and beyond (Appendix \ref{app:related-work}). {Our work is the first work to focus on tuning hyperparameters for deep networks using a data-driven lens}. A key technical challenge that we overcome is that varying the hyperparameter even slightly can lead to a significantly different learned deep network (even for the same training set) with completely different parameters (weights) which is hard to characterize directly. This is very different from a typical data-driven problem where one is able to show closed forms or precise structural properties for the variation of the learning algorithm's behavior as a function of the hyperparameter \citep{balcan2021much}. We elaborate further on our technical novelties in Section \ref{sec:technical}. Our theoretical advances are potentially useful beyond deep networks, to algorithms with a tunable hyperparameter and several learned parameters.

\subsection{Technical overview} \label{sec:technical}

To analyze the pseudo-dimension of the utility function class $\cU$, by using our proposed results (\autoref{lm:local-extrema-to-oscillation}), the key challenge  is  to establish the relevant piecewise structure of the dual utility function class $u^*_{\boldsymbol{x}}$. Different from typical problems studied in data-driven algorithm design, $u^*_{\boldsymbol{x}}$ in our case is not an explicit function of the hyperparameter $\alpha$, but defined implicitly via an optimization problem over the network weights $\boldsymbol{w}$, i.e.\ $u^*_{\boldsymbol{x}}(\alpha) = \max_{\boldsymbol{w} \in \cW}f_{\boldsymbol{x}}(\alpha, \boldsymbol{w})$. In the case where $f_{\boldsymbol{x}}(\alpha, \boldsymbol{w})$ is piecewise constant, we can partition the hyperparameter space $\cA$ into multiple segments, over which the set of connected components for any fixed value of the hyperparameter  remains unchanged. Thus, the behavior on a fixed instance as a function of the hyperparameter $\alpha$ is also piecewise constant and the pseudo-dimension bounds follow. It is worth noting that  $u^*_{\boldsymbol{x}}$ cannot be viewed as a simple projection of $f_{\boldsymbol{x}}$ onto the hyperparameter space $\cA$, making it challenging to determine the relevant structural properties of  $u^*_{\boldsymbol{x}}$.

For the case  $f_{\boldsymbol{x}}(\alpha, \boldsymbol{w})$ is piecewise polynomial, the structure is significantly more complicated and we do not obtain a clean functional form for the dual utility function class $u^*_{\boldsymbol{x}}$. We first simplify the problem to focus on individual pieces, and analyze the behavior of 
\begin{equation}
    \begin{aligned}
        u^*_{\boldsymbol{x}, i}(\alpha) = \sup_{\boldsymbol{w}: (\alpha, \boldsymbol{w}) \in R_{\boldsymbol{x}, i}}f_{\boldsymbol{x}, i}(\alpha, \boldsymbol{w})
    \end{aligned}
\end{equation}
in the region $R_i$ where $f_{\boldsymbol{x}}(\alpha, \boldsymbol{w}) = f_{\boldsymbol{x}, i}(\alpha, \boldsymbol{w})$ is a polynomial. We then employ ideas from algebraic geometry to give an upper-bound for the number of local extrema $\boldsymbol{w}$ for each $\alpha$ and use tools from differential geometry  to identify the \textit{smooth 1-manifolds} on which the local extrema $(\alpha, \boldsymbol{w})$ lie. We then decompose such manifolds into \textit{monotonic-curves}, which have the property that they  intersect at most once with any fixed-hyperparameter hyperplane $\alpha = \alpha_0$. Using these observations, we can finally  partition $\cA$ into intervals, over which $u^*_{\boldsymbol{x}, i}$ can be expressed as a maximum of multiple continuous functions for each of which we have upper bounds on the number of local extrema. Putting together, we are able to leverage a result from \citep{balcan2021much} to bound the pseudo-dimension. 

{
    \subsection{Technical challenges and novelty} \label{apdx:paper-positioning}
    We note that the main and foremost contribution (Lemma \ref{thm:learning-guarantee-piecewise-constant}, Theorem \ref{thm:learning-guarantee-piecewise-poly}) in this paper is a new technique for analyzing the model hyperparameter tuning in data-driven setting, where the network utility as a function of both parameter and hyperparameter $f_{\boldsymbol{x}}(\alpha, \boldsymbol{w})$, {\color{\COMMENTCOLOR} termed \paramdependent}, on a fixed instance $\boldsymbol{x}$  admits a specific piecewise polynomial structure. In this section, we will make an in-depth comparison between our setting and  the settings in prior work in data-driven algorithm hyperparameter tuning, and discuss why our setting is  challenging and requires novel techniques to analyze. 
    
    \paragraph{Novel challenges. } We note that our setting  requires significant technical novelty {\color{\COMMENTCOLOR}beyond} prior work in data-driven algorithm design. {Generally, most prior work on statistical data-driven algorithm design falls into two categories: }
    \begin{enumerate}
        \item The hyperparameter tuning process does not involve the parameters $\boldsymbol{w}$, 
        {\color{\COMMENTCOLOR} that is, the learning algorithm is completely determined by the hyperparameter $\alpha$ and has no parameters that need to be tuned using a training set.} 
        Some concrete examples include tuning hyperparameters of hierarchical clustering algorithms \citep{balcan2017learning, balcan2020learning}, branch and bound (B\&B) algorithms for (mixed) integer linear programming \citep{balcan2018learning, balcan2022structural}, and graph-based semi-supervised learning \citep{balcan2021data}. The typical approach is to show that the utility function $u^*_{\boldsymbol{x}}(\alpha)$ admits specific piecewise structure of $\alpha$, typically piecewise polynomial and rational. 
        \item The hyperparameter tuning process involves the parameters $\boldsymbol{w}$, for example in tuning regularization hyperparameters in linear regression. However, here the optimal parameters $\boldsymbol{w}^*(\alpha)$ can either have a {\color{\COMMENTCOLOR}closed} analytical form in terms of the hyperparameter $\alpha$ \citep{balcan2022provably}, or can be easily approximated in terms of $\alpha$ with bounded error \citep{balcan2024algorithm}. 
    \end{enumerate}
    However, in our setting, the utility function $u^*_{\boldsymbol{x}}(\alpha)$ is defined via an optimization problem $u^*_{\boldsymbol{x}}(\alpha) = \max_{\boldsymbol{w}} f_{\boldsymbol{x}}(\alpha, \boldsymbol{w})$, where the \paramdependent\,$f_{\boldsymbol{x}}(\alpha, \boldsymbol{w})$ admits a piecewise polynomial structure. This involves the parameter $\boldsymbol{w}$ so it does not belong to the first case, and  it is not clear how to use the second approach either. This {\color{\COMMENTCOLOR}is why}  our problem is quite challenging and requires the development of novel techniques.
    
    \paragraph{New techniques.} Two general approaches are known from prior work to establish a generalization guarantee for $\mathcal{U}$. 
    \begin{enumerate}
        \item The first approach is to establish a pseudo-dimension bound for $\mathcal{U}$ via alternatively analyzing the learning-theoretic dimensions of the piece and boundary function classes, derived when establishing the piecewise structure of  $u^*_{\boldsymbol{x}}(\alpha)$ (following  Theorem 3.3 of \citep{balcan2021much}). \textit{We build on this idea. However, in order to apply it we need significant innovation to analyze the structure of the function $u^*_{\boldsymbol{x}}$ in our case.}
        

        \item The second approach is specialized to the case where the computation of $u^*_{\boldsymbol{x}}(\alpha)$ can be described via the GJ algorithm \citep{bartlett2022generalization}, where we can do four basic {\color{\COMMENTCOLOR}operations} ($+, -, \times, \div$) and  conditional statements. However, it is  not applicable to our case  due to the use of a max operation in the definition. 
    \end{enumerate}
    \textit{As mentioned above, we follow the first approach though we have} to develop new techniques to analyze our setting. Here, we choose to analyze $u^*_x(\alpha)$ via indirectly analyzing $f_{\boldsymbol{x}}(\alpha, \boldsymbol{w})$, which is shown in some cases to admit piecewise polynomial structure. To do that, we have to develop the following:
    \begin{enumerate}
        \item A connection between number of discontinuities and local maxima {\color{\COMMENTCOLOR} of the dual utility function $u^*_{\boldsymbol{x}}(\alpha)$}, and  {\color{\COMMENTCOLOR} the learning-theoretic complexity} of $\mathcal{U}$.
        \item An approach to upper-bound the number of discontinuities and local extrema of $u^*_{\boldsymbol{x}}(\alpha)$. This is done  using ideas from differential/algebraic geometry, and constrained optimization. We note that even the tools needed from differential geometry are not readily available, but we have to identify and develop those tools (e.g.\ monotonic curves and its properties, see Definition \ref{def:monotonic-curve-end-point-high-dim} and Lemma \ref{lm:monotonic-curve-property-high-dim}).
    \end{enumerate}

    That corresponds to the main contribution of our papers (Theorem \ref{thm:learning-guarantee-piecewise-constant}, \ref{thm:learning-guarantee-piecewise-poly}). We then demonstrate the applicability of our results to two concrete problems in hyperparameter tuning in machine learning (Section \ref{sec:applications}).

}



\section{Preliminaries and problem setting}  \label{sec:problem-setting}

{\color{\COLTCOMMENTCOLOR}
    \paragraph{Models with hyperparameters.} We introduce the data-driven model hyperparameter tuning framework for models with hyperparameters $\alpha$ and trainable parameters $\boldsymbol{w}$. Concretely, let $f: \cX \times \cA \times \cW \rightarrow  [0, H]$ be the \textit{parameter-dependent utility function }that represents the model's performance, where $f(\boldsymbol{x}, \alpha, \boldsymbol{w})$ measures the performance corresponding to problem instance $\boldsymbol{x}\in \cX$, parameters $\boldsymbol{w} \in \cW = [w_{\min}, w_{\max}]^d$, and hyperparameter $\alpha \in [\alpha_{\min}, \alpha_{\max}]$. {\color{\TODOCOLOR} Here each {\it problem instance}  $\boldsymbol{x}\in\cX$ in our hyperparameter tuning framework represents the complete training dataset for a single task, unlike single-task machine learning setting where $\boldsymbol{x} \in \cX$ typically represents a single example. For example, in the problem of tuning the interpolation parameter of neural activation functions (Section \ref{sec:application-neural-activation}), each problem instance $\boldsymbol{x} = (X, Y)$ consists of a set $X$ of $T$ examples, and the corresponding labels  $Y$.} We  define the \textit{utility function} $u_\alpha(\boldsymbol{x})$ quantifying the model's performance with hyperparameter $\alpha$ on problem instance $\boldsymbol{x}$:\looseness-1
    $$
    u_{\alpha}(\boldsymbol{x}) = \sup_{\boldsymbol{w} \in \cW}f(\boldsymbol{x}, \alpha, \boldsymbol{w}).
    $$
    This formulation can be interpreted as follows: for a given hyperparameter $\alpha$ and problem instance $\boldsymbol{x}$, we determine the optimal model parameters $\boldsymbol{w}$ that maximize performance. We assume access to an ERM oracle when defining the function $u^*_{\boldsymbol{x}}(\alpha) = \max_{\boldsymbol{w}} f_{\boldsymbol{x}}(\alpha, \boldsymbol{w})$. This is the important first step for a clean theoretical formulation, allowing $u^*_{\boldsymbol{x}}(\alpha)$ to have a deterministic behavior given a hyperparameter $\alpha$, and independent of the optimization technique used to train the network. 
}

\vspace*{0.1em}
{\color{\COLTCOMMENTCOLOR}
    \paragraph{The dual utility function.} For a fixed problem instance $\boldsymbol{x} \in \cX$, we can define the \textit{dual utility function} $u^*_{\boldsymbol{x}}: \cA \rightarrow [0, H]$, representing the performance of the network corresponding to the hyperparameter $\alpha$ and the best parameter $\boldsymbol{w}$, as follows: 
    $$
        u^*_{\boldsymbol{x}}(\alpha) = \sup_{\boldsymbol{w} \in \cW} f_{\boldsymbol{x}}(\alpha, \boldsymbol{w}),
    $$
    where $f_{\boldsymbol{x}}(\alpha, \boldsymbol{w}) := f(\boldsymbol{x}, \alpha, \boldsymbol{w})$ is called the \textit{\paramdependent}. 
}

{\color{\COLTCOMMENTCOLOR}
    \paragraph{Data-driven model hyperparameter tuning problem.} Let $\cU = \{u_{\alpha}: \cX \rightarrow [0, H] \mid \alpha \in \cA\}$ be the \textit{utility function class}. In the data-driven setting, we assume an application-specific problem distribution $\cD$ over the set of problem instances $\cX$. {\color{\TODOCOLOR} Here, $\cD$ represents a fixed but unknown distribution of over datasets or tasks. Our goal here is find a good hyperparameter $\hat{\alpha}$ given a small sample of tasks  drawn from $\cD$, that achieves nearly the same utility on average as the optimal hyperparameter ${\alpha^*} \in \argmax_{\alpha \in \cA}\bbE_{\boldsymbol{x} \sim \cD}[u_{\alpha}(\boldsymbol{x})]$. 
    Our approach is to bound the pseudo-dimension and the Rademacher complexity of the function class $\cU$, given the structure of the parameter-dependent dual function $f_{\boldsymbol{x}}(\alpha, \boldsymbol{w})$.} 

    In this work, we consider two situations where $f_{\boldsymbol{x}}(\alpha, \boldsymbol{w})$ is either \textit{piecewise constant} (Section \ref{sec:piecewise-constant-case}) or \textit{piecewise polynomial} (Section \ref{sec:piecewise-polynomial}). For the piecewise constant case, we assume that for any $\boldsymbol{x}$, there is a finite partition $\cP$ of the domain $\cA \times \cW$ consisting of connected components such that $f_{\boldsymbol{x}}(\alpha, \boldsymbol{w})$ remains constant over each connected component in $\cP$. For the second case, we assume that there are algebraic sets $Z_{h_{\boldsymbol{x}, i}} = \{(\alpha, \boldsymbol{w}) \in \cA \times \cW \mid h_{\boldsymbol{x}, i}(\alpha, \boldsymbol{w}) = 0\}$ partitioning the domain $\cA \times \cW$, where $h_{\boldsymbol{x}, i}(\alpha, \boldsymbol{w})$ is a polynomial in $\alpha$ and $ \boldsymbol{w}$. In each connected component of the partition, $f_{\boldsymbol{x}, i}$ is also a polynomial of $\alpha$ and $\boldsymbol{w}$. Note that, compared to the piecewise polynomial case, we do not have any assumption about the shape of ``boundaries" partitioning the domain of $f_{\boldsymbol{x}}(\alpha, \boldsymbol{w})$ in the piecewise constant case. 
}

\subsection{Oscillations and connection to pseudo-dimension}\label{sec:oscillations}
When the function class $\cU = \{u_{\rho}: \cX \rightarrow \bbR \mid \rho \in \bbR\}$ is parameterized by a real-valued index $\rho$, \citet{balcan2021much} propose a convenient way for bounding the pseudo-dimension of $\cH$, via bounding the \textit{oscillations} of the dual function $u^*_{\boldsymbol{x}}(\rho):= u_{\rho}(\boldsymbol{x})$ corresponding to any problem instance $\boldsymbol{x}$. In this section, we will recall the notions of oscillations and the connection of the pseudo-dimension of a function class with the oscillations of its dual functions. This tool is very helpful in our later analyses.
 
\begin{definition}[Oscillations, \citet{balcan2021much}] \label{def:oscillation}
    A function $h: \bbR \rightarrow \bbR$ has at most $B$ oscillations if for every $z \in \bbR$, the function $\rho \mapsto \bbI_{\{h(\rho) \geq z\}}$ is piecewise constant with at most $B$ discontinuities. 
\end{definition}
\noindent An illustration of the notion of oscillations can be found in \autoref{fig:oscillation-illustration}. Using the idea of oscillations, one can analyze the pseudo-dimension of parameterized function classes by alternatively analyzing the oscillations of their dual functions, formalized as follows. 

\begin{theorem}[\citet{balcan2021much}] \label{thm:oscillation-to-pdim}
Let $\cU = \{u_{\rho}: \cX \rightarrow \bbR \mid \rho \in \bbR\}$, of which each dual function $u^*_{\boldsymbol{x}}(\rho)$ has at most $B$ oscillations. Then $\Pdim(\cU) = \cO(\log B)$.
\end{theorem}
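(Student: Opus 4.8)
The plan is to bound the pseudo-dimension of $\cU$ by relating it to a combinatorial quantity that we can control via the oscillation bound. Recall that $\Pdim(\cU)$ is the largest $m$ such that there exist instances $\boldsymbol{x}_1, \dots, \boldsymbol{x}_m \in \cX$ and thresholds $z_1, \dots, z_m \in \bbR$ which are shattered: every one of the $2^m$ sign patterns $(\bbI_{\{u_\rho(\boldsymbol{x}_i) \geq z_i\}})_{i=1}^m$ is realized by some $\rho \in \bbR$. The key observation is that for each fixed $i$, the map $\rho \mapsto \bbI_{\{u^*_{\boldsymbol{x}_i}(\rho) \geq z_i\}}$ is, by the definition of $B$-oscillations, piecewise constant with at most $B$ discontinuities, hence partitions $\bbR$ into at most $B+1$ intervals on which this indicator is constant.

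The main step is a counting argument. Overlaying the $m$ such interval partitions (one per instance) subdivides $\bbR$ into at most $mB + 1$ cells, and on each cell the full vector $(\bbI_{\{u^*_{\boldsymbol{x}_i}(\rho) \geq z_i\}})_{i=1}^m$ is constant. Therefore the number of distinct sign patterns realizable as $\rho$ ranges over $\bbR$ is at most $mB + 1$. For shattering we need all $2^m$ patterns to appear, so a necessary condition is $2^m \leq mB + 1$. Taking logarithms, this forces $m \leq \log_2(mB+1) \leq \log_2 m + \log_2(B+1)$, and a standard manipulation (e.g., if $m \geq 2\log_2(B+1)$ then $\log_2 m \leq m/2$ for $m$ large, yielding $m/2 \leq \log_2(B+1)$, hence $m = \cO(\log B)$) gives the bound $\Pdim(\cU) = \cO(\log B)$.

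I expect the only genuine subtlety — not really an obstacle — to be the handling of the edge case $B = 0$ (constant oscillation behavior), where one must argue directly that at most $m+1$ patterns appear so $\Pdim(\cU) = \cO(1)$, and the bookkeeping to ensure the asymptotic $\cO(\log B)$ absorbs small values of $B$ cleanly; the inequality $2^m \leq mB+1$ should be massaged carefully to extract $m = \cO(\log B)$ without hidden dependence on $m$ on the right-hand side. The argument is otherwise a direct instantiation of the classical Sauer–Shelah-style counting: the $B$-oscillation condition is exactly what replaces a VC-type bound on the number of "threshold regions" per instance, and superimposing $m$ of them gives a polynomial-in-$m$ (indeed linear-in-$m$) count of distinguishable patterns, which competes with the exponential $2^m$ only up to $m = \cO(\log B)$.
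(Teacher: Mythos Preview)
The paper does not provide its own proof of this theorem; it is stated as a known result cited from \citet{balcan2021much} and used as a black box. Your argument is correct and is precisely the standard proof one would expect for this statement: the oscillation bound limits each indicator $\rho \mapsto \bbI_{\{u^*_{\boldsymbol{x}_i}(\rho) \ge z_i\}}$ to at most $B+1$ constant intervals, overlaying $m$ such partitions yields at most $mB+1$ cells and hence at most $mB+1$ distinct sign patterns, and the shattering requirement $2^m \le mB+1$ forces $m = \cO(\log B)$.
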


\begin{figure}
    \centering
    \includegraphics[width=1\linewidth]{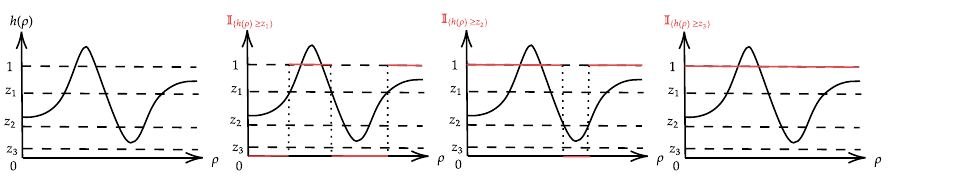}
    \caption{This figure demonstrates the oscillation property for a function $h: \bbR \rightarrow \bbR$. The oscillation of a function $h$ is defined as the maximum number of discontinuities in the function $\mathbb{I}_{\{h(\rho) \geq z\}}$, as the threshold $z$ varies. {\color{\COMMENTCOLOR}The figure shows} several graphs of $\mathbb{I}_{\{h(\rho) \geq z\}}$ corresponding to different choices of the threshold $z$. We can observe that when $z = z_1$, the function $\bbI_{\{h(\rho) \geq z\}}$ exhibits the highest number of discontinuities, which is $4$. This number of discontinuities is also the maximum for any choice of $z$. Therefore, we conclude that $h$ has $4$ oscillations.}
    \label{fig:oscillation-illustration}
\end{figure}

\section{Oscillations of piecewise continuous functions} \label{sec:oscillation-results}
In this section, we establish useful tools for establishing an upper bound on the number of oscillations in a piecewise continuous function based on various structural properties---the number of discontinuities, count of local extrema and monotonicity. By Theorem \ref{thm:oscillation-to-pdim}, these tools will imply upper-bounds on the pseudo-dimension of the corresponding function classes via analyzing the piecewise continuous structure of their dual functions. 

\begin{lemma} \label{lm:local-extrema-to-oscillation} 
    Let $h: \bbR \rightarrow \bbR$ be a piecewise continuous function which has at most $B_1$ discontinuity points, and has at most $B_2$ 
    local maxima. 
    Then $h$ has at most $\cO(B_1 + B_2)$ oscillations.
\end{lemma}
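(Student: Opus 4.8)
The plan is to fix an arbitrary threshold $z \in \bbR$ and bound the number of discontinuities of the indicator function $g_z(\rho) := \bbI_{\{h(\rho) \geq z\}}$ by $\cO(B_1 + B_2)$, uniformly in $z$; by Definition \ref{def:oscillation} this is exactly what is needed. A discontinuity of $g_z$ is a point $\rho_0$ where $g_z$ changes value, i.e.\ $h$ crosses the level $z$ (in a broad sense, possibly by jumping over it). I would classify each such crossing as one of two types: (i) it happens \emph{at a discontinuity point of $h$} — there are at most $B_1$ of these; or (ii) it happens \emph{within a single piece on which $h$ is continuous}. The whole game is to bound the number of type-(ii) crossings using the $B_2$ bound on local maxima.

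For type-(ii) crossings, consider one maximal open interval $I$ on which $h$ is continuous. On $I$, the set $\{h \geq z\}$ is a union of (relatively) closed subintervals, and $g_z$ restricted to $I$ has a discontinuity exactly at each endpoint of these subintervals that lies in the interior of $I$ (a transition from the region $\{h < z\}$ to $\{h \ge z\}$ or back). I claim each such "entry–exit" pair of a subinterval $[a,b] \subseteq I$ of the superlevel set forces a local maximum of $h$ in $(a,b)$: indeed $h(a) = h(b) = z$ (by continuity and the fact that $a,b$ are interior transition points), $h > z$ somewhere just inside (or $h(a)=h(b)=z$ with $h \ge z$ on $[a,b]$ still yields an interior maximizer of $h$ on $[a,b]$ by compactness, and that point is a local max of $h$ since $h \ge z$ on a neighborhood within the piece and... — here one must be slightly careful), so by the extreme value theorem $h$ attains an interior maximum on $[a,b]$ which is a genuine local maximum of $h$. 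Distinct superlevel subintervals on the same piece give distinct local maxima, and the bound of $B_2$ is over all of $\bbR$, so summing over all continuous pieces, the total number of type-(ii) transition points of the form "$\{<z\}\to\{\ge z\}\to\{<z\}$" is at most $2B_2$ (each local max accounts for at most an entry and an exit). Transitions that are not of this "bump" form — e.g.\ a piece where $h \geq z$ on a half-open initial segment — have their non-matched endpoint abut a discontinuity of $h$ or an endpoint of the piece, which we already charged in type (i) (or there are at most $\cO(B_1)$ pieces total since $h$ has $B_1+1$ continuous pieces).

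The main obstacle is handling the boundary/degenerate cases cleanly: level sets of a merely-continuous function can be complicated (e.g.\ $h \equiv z$ on a whole subinterval, or $h$ oscillating so that a superlevel set is an infinite union of intervals). The clean statement of Lemma \ref{lm:local-extrema-to-oscillation} implicitly requires that within each piece $h$ is well-behaved enough that $\{h \ge z\}$ has finitely many components — which is guaranteed once each piece has finitely many local maxima (hence finitely many local minima between them, so $h$ is piecewise monotone on each piece, and a piecewise-monotone continuous function crosses any level finitely often). So I would first argue: finitely many local maxima on a piece $\Rightarrow$ the piece splits into $\cO(B_2^{(I)})$ maximal monotone subintervals, where $B_2^{(I)}$ is the number of local maxima on $I$; on each monotone subinterval $g_z$ has at most one discontinuity; summing the monotone-subinterval counts over all pieces gives $\cO(B_1 + B_2)$ total. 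This monotonicity reduction both sidesteps the pathologies and makes the charging argument transparent, so I would organize the final proof around it rather than around superlevel-set components.
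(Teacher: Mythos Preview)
Your proposal is correct and follows essentially the same approach as the paper: both bound the number of level-crossings of $h$ on each continuous piece by the number of local extrema there, then sum over the $\cO(B_1)$ pieces. The paper frames the per-piece bound as a contradiction argument (if $h(\rho)=z$ had $2E_i+3$ roots on a piece with $E_i$ local maxima, the extreme value theorem would force at least $2E_i+2$ local extrema between consecutive roots), which is exactly the dual of your monotone-subinterval decomposition.
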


\begin{proof} 
    We show that in each interval where $h$ is continuous, { we can bound for any $z$, the number of solutions of $h(\alpha) = z$} using the number of 
    local maxima of $h$. Aggregating the number of solutions across all continuous intervals of $h$ yields the desired result. 

    For any $z \in \bbR$, consider the function $g(\alpha) = \bbI_{\{h(\alpha) \geq z\}}$. By definition, on any interval over which $h$ is continuous, any discontinuity point of $g(\alpha)$ is a root of the equation $h(\alpha) = z$. Therefore, it suffices to give an upper-bound on the number of roots that the equation $h(\alpha) = z$ can have across all the intervals where $h$ is continuous. 

    \noindent Let $\alpha_1 < \alpha_2 < \dots < \alpha_N$ 
    be the discontinuity points of $h$, where $N \leq B_1$ from assumption. For convenience, let $\alpha_0 = -\infty$ and $\alpha_{N + 1} = \infty$. For any $i = 1, \dots, N$, consider an interval $I_i = (\alpha_i, \alpha_i + 1)$ over which the function $h$ is continuous. Assume that there are $E_i$ 
    local maxima of the function $h$ {\color{blue} in} the interval $I_i$, meaning that there are at most $2E_i + 1$  
    local extrema. We now claim that there are at most $2E_i + 2$ roots of $h(\alpha) = z$ {\color{blue} in} $I_i$. We proceed by contradiction: assume that $\alpha^*_1 < \alpha^*_2 < \dots < \alpha^*_{2E_i + 3}$ are $2E_i + 3$ roots of the equation $h(\alpha) = z$, and there is no other root in between. We have the following claims:
    \begin{itemize}
        \item Claim 1: there is at least one 
        local extremum of $h$ {\color{blue} in} $(\alpha^*_j, \alpha^*_{j + 1})$. Since $h$ has a finite number of local extrema, meaning that $h$ cannot be constant over $[\alpha^*_j, \alpha^*_{j + 1}]$. Therefore, there exists some $\alpha' \in (\alpha^*_j, \alpha^*_{j + 1})$ such that $h(\alpha') \neq z $, and note that $z =h(\alpha^*_j) = h(\alpha^*_{j + 1})$. Since $h$ is continuous over $[\alpha^*_j, \alpha^*_{j + 1}]$, from extreme value theorem (\autoref{thm:extreme-value}), $h$ (when restricted to $[\alpha^*_j, \alpha^*_{j + 1}]$) reaches minima and maxima over $[\alpha^*_j, \alpha^*_{j + 1}]$. However, since there exists $\alpha'$ such that $h(\alpha') \neq z$, then $h$ has to achieve minima or maxima in the interior $(\alpha^*_j, \alpha^*_{j + 1})$. That is also a local extremum of $h$. 
        \item Claim 2: there are at least $2E_i + 2$ local extrema {\color{blue} in} $(\alpha^*_1, \alpha^*_{E_i + 2})$. This claim follows directly from Claim 1.\looseness-1
    \end{itemize}
    Claim 2 leads to a contradiction. Therefore, there are at most $2E_i + 2$ roots {\color{blue} in } the interval $I_i$. which implies there are $\sum_{i = 0}^{N}2E_i + 2N$ roots in the intervals $I_i$ for $i = 1, \dots, N$. Note that $\sum_{i = 0}^{N}E_i \leq B_2$, $N \leq B_1$ by assumption, and each discontinuity point of $h$ could also be discontinuity point of $g$, we conclude that there are at most $\cO(B_1 + B_2)$ discontinuity points for $g$, for any $z$.
\end{proof}

From Lemma \ref{lm:local-extrema-to-oscillation} 
and Theorem \ref{thm:oscillation-to-pdim}, we have the following result which allows us to bound the pseudo-dimension of a function class $\cH$ by bounding the number of discontinuity points and local extrema  of any function in its dual function class $\cH^*$.
\begin{lemma} \label{lm:local-extrema-to-pdim}
    Consider a real-valued function class $\cU = \{u_{\alpha}: \cX \rightarrow \bbR \mid \alpha \in \bbR\}$, of which each dual function $u^*_{\boldsymbol{x}}(\alpha)$ is piecewise continuous, with at most $B_1$ discontinuities and $B_2$ local maxima. Then $\Pdim(\cU) = \cO(\log(B_1 + B_2))$.
\end{lemma}

{Notice that Lemma \ref{lm:local-extrema-to-oscillation} does not apply in the case where the function $h$ may have constant pieces, as they correspond to infinitely many local maxima. To handle this more generally, we introduce the following definition.
    \begin{definition}[$B$-monotonicity] \label{def:b-monotonicity}
        A function $h: \bbR \rightarrow \bbR$ is said to be $B$-monotonic if its domain can be partitioned into at most $B$ intervals such that on each interval: either (a) $h$ is strictly monotonic and continuous, or (b) $h$ is a constant function. 
    \end{definition}

While a bounded number of discontinuity points and local maxima implies $B$-monotonicity, the converse may not be true. We now show how to bound the number of oscillations of $B$-monotonic functions.

    \begin{lemma} \label{lm:monotonic-to-oscillation} 
        Any $B$-monotonic function $h: \bbR \rightarrow \bbR$ has $\cO(B)$ oscillations.
    \end{lemma}
    
    \begin{proof}
        Suppose $I$ be an interval over which $h$ is piecewise constant. Then the function $g_z : \rho \mapsto \bbI_{\{h(\rho) \geq z\}}$ is also constant over $I$ for any $z\in\bbR$ (either constant zero or constant one throughout $I$ depending on $z$). On the other hand, if $h$ is strictly monotonic and continuous over $I$, then $g_z$ is piecewise constant over $I$ with at most two pieces. Thus, $g_z$ has at most $2B$ discontinuities for any $z$, or $h$ has $\cO(B)$ oscillations by Definition \ref{def:oscillation}.
    \end{proof}

We conclude with the following corollary (immediate from Lemma \ref{lm:monotonic-to-oscillation} and Theorem \ref{thm:oscillation-to-pdim}) for the special case of piecewise constant functions.

\begin{corollary} \label{lm:piecewise-constant-to-pdim}
    Consider a real-valued function class $\cU = \{u_{\alpha}: \cX \rightarrow \bbR \mid \alpha \in \bbR\}$, of which each dual function $u^*_{\boldsymbol{x}}(\alpha)$ is piecewise constant with at most $B$ discontinuities. Then $\Pdim(\cU) = \cO(\log B)$.
\end{corollary}
}

\section{Piecewise constant \paramdependent} \label{sec:piecewise-constant-case}

{\color{\COLTCOMMENTCOLOR}
    We first examine a simple case where $f_{\boldsymbol{x}}(\alpha, \boldsymbol{w})$ is piecewise constant. Concretely, we assume there exists a partition {\color{\COMMENTCOLOR} $\cP_{\boldsymbol{x}} = \{R_{\boldsymbol{x}, 1}, \dots, R_{\boldsymbol{x}, N}\}$} of the domain $\cA \times \cW$ of $f_{\boldsymbol{x}}$, where each $R_{\boldsymbol{x}, i}$ is a  connected component (Definition \ref{def:connected-component}), and the function  $f_{\boldsymbol{x}}(\alpha, \boldsymbol{w})$ restricted on $R_{\boldsymbol{x}, i}$ is constant, i.e.,  $f_{\boldsymbol{x}, i}(\alpha, \boldsymbol{w}) = c_{\boldsymbol{x}, i}$ for any $(\alpha, \boldsymbol{w}) \in R_{\boldsymbol{x}, i}$. Consequently, we can rewrite $u^*_{\boldsymbol{x}}(\alpha)$ as follows:
}
$$
    u^*_{\boldsymbol{x}}(\alpha) = \sup_{\boldsymbol{w} \in \cW} f_{\boldsymbol{x}}(\alpha, \boldsymbol{w}) = \max_{i \in  \{1, \dots, N\}} \sup_{\boldsymbol{w}:(\alpha, \boldsymbol{w}) \in R_{\boldsymbol{x}, i}}f_{\boldsymbol{x}, i}(\alpha, \boldsymbol{w}) = \max_{i \in \{1, \dots, N\}: \exists \boldsymbol{w} \text{ s.t. } (\alpha, \boldsymbol{w}) \in R_{\boldsymbol{x}, i}} c_{\boldsymbol{x}, i}.
$$


We first show Lemma \ref{lm:piecewise-constant}, which asserts that $u^*_{\boldsymbol{x}}(\alpha)$ is a piecewise constant function and provides an upper bound for the number of discontinuities in $u^*_{\boldsymbol{x}}(\alpha)$.

\begin{figure}[t]
    \centering
    \includegraphics[width=0.45\linewidth]{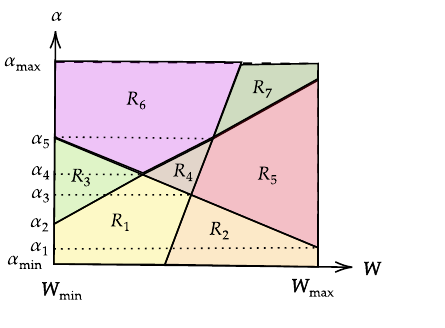}
    \caption{A demonstration of the proof idea for \autoref{lm:piecewise-constant}: We begin by partitioning the domain $\cA$ of the dual utility function $u^*_{\boldsymbol{x}}(\alpha)$ into intervals. This partitioning is formed using two key points for each connected component $R$ in the partition $\cP_{\boldsymbol{x}}$ of the domain $\cA \times \cW$ of $f_{\boldsymbol{x}}(\alpha, \boldsymbol{w})$: $\alpha_{R, \inf} = \inf_{\alpha}\{\alpha: \exists \boldsymbol{w}, (\alpha, \boldsymbol{w}) \in R\}$ and $\alpha_{R, \sup} = \sup_{\alpha}\{\alpha: \exists \boldsymbol{w}, (\alpha, \boldsymbol{w}) \in R\}$. Given that $\cP$ contains $N$ elements, the number of such points is $\cO(N)$. We demonstrate that the dual utility functions $u^*_{\boldsymbol{x}}$ remain constant over each interval defined by these points.} 
    \label{fig:piecewise-constant}
\end{figure}

 \begin{lemma} \label{lm:piecewise-constant} 
    Assume that the piece functions {\color{\COMMENTCOLOR} $f_{\boldsymbol{x}, i}(\alpha, \boldsymbol{w}) = c_{\boldsymbol{x}, i}$ are constant for all $i \in [N]$ and all problem instances $\boldsymbol{x}$}. Then $u^*_{\boldsymbol{x}}(\alpha)$ has $\cO(N)$ discontinuity points, partitioning $\cA$ into $\cO(N)$ intervals. In each interval, $u^*_{\boldsymbol{x}}(\alpha)$ is a constant function.
 \end{lemma}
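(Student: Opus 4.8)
The plan is to carve the hyperparameter interval $\cA$ into finitely many subintervals on which the set of "active" regions — those $R_{\boldsymbol{x},i}$ containing some point with first coordinate $\alpha$ — does not change, and hence on which $u^*_{\boldsymbol{x}}(\alpha) = \max_{i:\, \exists \boldsymbol{w},\, (\alpha,\boldsymbol{w})\in R_{\boldsymbol{x},i}} c_{\boldsymbol{x},i}$ is constant. First I would, for each connected region $R := R_{\boldsymbol{x},i}$ in the partition $\cP_{\boldsymbol{x}}$, define the two scalars $\alpha_{R,\inf} = \inf\{\alpha : \exists \boldsymbol{w},\, (\alpha,\boldsymbol{w})\in R\}$ and $\alpha_{R,\sup} = \sup\{\alpha : \exists \boldsymbol{w},\, (\alpha,\boldsymbol{w})\in R\}$, i.e. the endpoints of the projection of $R$ onto the $\alpha$-axis. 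Since $\cP_{\boldsymbol{x}}$ has $N$ regions, this yields a set $T$ of at most $2N$ breakpoints; together with $\alpha_{\min},\alpha_{\max}$ they partition $\cA$ into $\cO(N)$ open subintervals.

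The key step is to show $u^*_{\boldsymbol{x}}$ is constant on each such open subinterval $J$. The projection of a connected set $R$ onto $\bbR$ is connected, hence an interval with endpoints $\alpha_{R,\inf},\alpha_{R,\sup}$; therefore for every $\alpha$ in the open interval $(\alpha_{R,\inf},\alpha_{R,\sup})$ there exists $\boldsymbol{w}$ with $(\alpha,\boldsymbol{w})\in R$, and for $\alpha\notin[\alpha_{R,\inf},\alpha_{R,\sup}]$ no such $\boldsymbol{w}$ exists. Since $J$ contains no breakpoint, for each fixed $i$ the interval $J$ lies either entirely inside $(\alpha_{R_{\boldsymbol{x},i},\inf},\alpha_{R_{\boldsymbol{x},i},\sup})$ or entirely outside $[\alpha_{R_{\boldsymbol{x},i},\inf},\alpha_{R_{\boldsymbol{x},i},\sup}]$; consequently the set of indices $i$ that are "active" at $\alpha$ (i.e. admit some $\boldsymbol{w}$ with $(\alpha,\boldsymbol{w})\in R_{\boldsymbol{x},i}$) is the same for all $\alpha\in J$. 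Call this index set $A_J$. Then for all $\alpha\in J$ we have $u^*_{\boldsymbol{x}}(\alpha) = \max_{i\in A_J} c_{\boldsymbol{x},i}$, a quantity independent of $\alpha$, so $u^*_{\boldsymbol{x}}$ is constant on $J$. Discontinuities of $u^*_{\boldsymbol{x}}$ can therefore occur only at the $\cO(N)$ breakpoints in $T\cup\{\alpha_{\min},\alpha_{\max}\}$, giving the stated bound.

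I expect the main subtlety to be the boundary behavior at the breakpoints themselves — whether an endpoint $\alpha_{R,\inf}$ or $\alpha_{R,\sup}$ is attained by $R$ depends on whether $R$ is closed in that direction, and the supremum defining $u^*_{\boldsymbol{x}}$ is over an open-or-closed slice $\{\boldsymbol{w}:(\alpha,\boldsymbol{w})\in R\}$ whose attainment is not guaranteed. However, this does not affect the count: the endpoints are a finite set of measure zero already included among the $\cO(N)$ breakpoints, so any pathological value $u^*_{\boldsymbol{x}}$ takes there merely contributes (at most) extra discontinuity points already accounted for. One should also note that $u^*_{\boldsymbol{x}}(\alpha)$ is well-defined (finite) on all of $\cA$ because every $\alpha\in\cA$ lies in the projection of at least one region, since $\cP_{\boldsymbol{x}}$ covers $\cA\times\cW$. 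Assembling these observations gives that $u^*_{\boldsymbol{x}}$ is piecewise constant with $\cO(N)$ pieces and $\cO(N)$ discontinuities, as claimed.
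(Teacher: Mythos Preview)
Your proposal is correct and follows essentially the same approach as the paper: define the $2N$ breakpoints $\alpha_{R,\inf},\alpha_{R,\sup}$ over all regions, partition $\cA$ accordingly, and argue that on each resulting open subinterval the set of active regions (and hence the maximum of the corresponding constants) is fixed. Your write-up is in fact slightly more careful than the paper's, explicitly invoking that the projection of a connected set is an interval and addressing the boundary-attainment subtlety.
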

{\color{\COMMENTCOLOR}
    \begin{proof}
        The proof idea of Lemma \ref{lm:piecewise-constant} is demonstrated in \autoref{fig:piecewise-constant}.  For each connected set $R_{\boldsymbol{x}, i}$ corresponding to a piece function $f_{\boldsymbol{x}, i}(\alpha, \boldsymbol{w}) = c_{\boldsymbol{x}}, i$, let 
        $$
            \alpha_{R_{\boldsymbol{x}, i}, \inf} = \inf_{\alpha} \{\alpha: \exists \boldsymbol{w}, (\alpha, \boldsymbol{w}) \in R_{\boldsymbol{x}, i}\}, \quad \alpha_{R_{\boldsymbol{x}, i}, \sup} = \sup_{\alpha} \{\alpha: \exists \boldsymbol{w}, (\alpha, \boldsymbol{w}) \in R_{\boldsymbol{x}, i} \}. 
        $$
        There are $N$ connected components, and therefore $2N$ such points. Reordering those points and removing duplicate points as $\alpha_{\min} = \alpha_0 < \alpha_1 < \alpha_2 < \dots < \alpha_t = \alpha_{\max}$, where $t = \cO(N)$ we claim that for any interval $I_i = (\alpha_{i}, \alpha_{i + 1})$ where $i = 0, \dots, t - 1$, the function $u^*_{\boldsymbol{x}}$ remains constant.
        
        \noindent Consider  any interval $I_i$. By the above construction  of $\alpha_i$, for any $\alpha \in I_i$, there exists a \textit{fixed} set of regions $\mathbf{R}_{\boldsymbol{x}, I_i} = \{R_{\boldsymbol{x}, I_i, 1}, \dots, R_{\boldsymbol{x}, I_i, n}\} \subseteq \cP_{\boldsymbol{x}} = \{R_{\boldsymbol{x}, 1}, \dots, R_{\boldsymbol{x}, N}\}$, such that for any connected set $R \in \mathbf{R}_{\boldsymbol{x}, I_i}$, there exists $\boldsymbol{w}$ such that $(\alpha, \boldsymbol{w}) \in R$. Besides, for any $R \not \in \mathbf{R}_{\boldsymbol{x}, I_i}$, there does not exist $\boldsymbol{w}$ such that $(\alpha, \boldsymbol{w}) \in R$. This implies that for any $\alpha \in I_i$, we can write $u^*_{\boldsymbol{x}}(\alpha)$ as 
        $$
            u^*_{\boldsymbol{x}}(\alpha) = \sup_{\boldsymbol{w} \in \cW} f_{\boldsymbol{x}}(\alpha, \boldsymbol{w}) = \sup_{R \in \mathbf{R}_{\boldsymbol{x}, I_i}} \sup_{\boldsymbol{w}: (\alpha, \boldsymbol{w}) \in R}f_{\boldsymbol{x}}(\alpha, \boldsymbol{w}) = \max_{c \in \boldsymbol{C}_{\boldsymbol{x}, I_i}} c,
        $$
        where $\boldsymbol{C}_{\boldsymbol{x}, I_i} = \{c_{R_{\boldsymbol{x}, j}} \mid R_{\boldsymbol{x}, j} \in \mathbf{R}_{\boldsymbol{x}, I_i}\}$ contains the constant value that $f_{\boldsymbol{x}}(\alpha, \boldsymbol{w})$ takes over $R$. Since the set $\boldsymbol{C}_{\boldsymbol{x}, I_i}$ is fixed, $u^*_{\boldsymbol{x}}(\alpha)$ remains constant over $I_i$.
        
        \noindent Hence, we conclude that over any interval $I_i = (\alpha_i, \alpha_{i + 1})$, for $i = 1, \dots, t - 1$, the function $u^*_{\boldsymbol{x}}(\alpha)$ remains constant. Therefore, there are only the points $\alpha_i$, for $i = 0, \dots, t - 1$, at which the function $u^*_{\boldsymbol{x}}$ may not be continuous. Since $t = \cO(N)$, we have the conclusion.
    \end{proof}
    
}

\noindent By combining Lemma \ref{lm:piecewise-constant} and Corollary \ref{lm:piecewise-constant-to-pdim}, we have the following result, which establishes learning guarantees for the utility function class $\cU$ when
$f_{\boldsymbol{x}}(\alpha, \boldsymbol{w})$ admits piecewise constant structure.

\begin{theorem} \label{thm:learning-guarantee-piecewise-constant} 
        Consider the utility function class $\cU = \{u_{\alpha}: \cX \rightarrow [0, H] \mid \alpha \in \cA\}$. Assume that for any $\boldsymbol{x} \in \cX$ there is a partition $\cP_{\boldsymbol{x}} = \{R_{\boldsymbol{x}, 1}, \dots, R_{\boldsymbol{x}, N}\}$ of $\cA \times \cW$, over each of which $f_{\boldsymbol{x}, i}$ remains constant. Then for any distribution $\cD$ over $\cX$, and any $\delta \in (0, 1)$, with probability at least $1 - \delta$ over the draw of $S = \{\boldsymbol{x}_1, \dots, \boldsymbol{x}_m\} \sim \cD^m$, we have that
        $$
            \abs{\bbE_{\boldsymbol{x} \sim \cD}[u_{\hat{\alpha}_{S}}(\boldsymbol{x})] - \bbE_{\boldsymbol{x} \sim \cD}[u_{\alpha^*}(\boldsymbol{x})]} = \cO\left(\sqrt{\frac{\log(N/\delta)}{m}}\right).
        $$
        Here, $\hat{\alpha}_S \in \argmin_{\alpha \in \cA}\frac{1}{m}\sum_{i = 1}^mu_{\alpha}(\boldsymbol{x}_i)$, and $\alpha^* \in \max_{\alpha \in \cA}\bbE_{\boldsymbol{x} \sim \cD}[u_{\alpha}(\boldsymbol{x})]$.
    \end{theorem}
\begin{proof}
    From Lemma \ref{lm:piecewise-constant}, we know that any dual utility function $u^*_{\boldsymbol{x}}$ is piecewise constant and has at most $\cO(N)$ discontinuities. Combining with Corollary \ref{lm:piecewise-constant-to-pdim}, we conclude that $\Pdim(\cU) = \cO(\log(N))$. Finally, a standard learning theory result gives us the final guarantee. 
\end{proof}

\begin{remark}
        In many applications, the partition of $f_{\boldsymbol{x}}(\alpha, \boldsymbol{w})$ into connected components is typically defined by {\color{\COMMENTCOLOR} $M$} boundary functions {\color{\COMMENTCOLOR} $h_{\boldsymbol{x}, 1}(\alpha, \boldsymbol{w}), \dots, h_{\boldsymbol{x}, M}(\alpha, \boldsymbol{w})$}. These boundary functions are often polynomials in $d + 1$ variables with a bounded degree $\Delta$. For such cases, we can establish an upper bound on the number of connected components created by these boundary functions, represented as $\bbR^{d + 1} - \cup_{i = 1}^{{\color{\COMMENTCOLOR} M}} \{(\alpha, \boldsymbol{w}) \mid h_{\boldsymbol{x}, i}(\alpha, \boldsymbol{w}) = 0\}$, using only $\Delta$ and $d$. This bound serves as a crucial intermediate step in applying Theorem \ref{thm:learning-guarantee-piecewise-constant}. For a more detailed explanation, refer to Lemma \ref{thm:warren-connected-components-polynomials}. 
    \end{remark}

\section{Piecewise polynomial \paramdependent} \label{sec:piecewise-polynomial}

\paragraph{Problem setting.} In this section, we examine the case where $f_{\boldsymbol{x}} (\alpha, \boldsymbol{w})$ exhibits a piecewise polynomial structure. Concretely, the domain $\cA \times \cW = [\alpha_{\min}, \alpha_{\max}] \times [w_{\min}, w_{\max}]^d$ is partitioned into $N$ connected components by $M$ algebraic sets 
$
    Z_{h_{\boldsymbol{x}, j}} = \{(\alpha, \boldsymbol{w}) \in \cA \times \cW \mid h_{\boldsymbol{x}, j}(\alpha, \boldsymbol{w}) = 0\},
$
for $j = 1, \dots, M$. Here, each \textit{boundary function} $h_{\boldsymbol{x}, j}(\alpha, \boldsymbol{w})$ is a polynomial in $\alpha$ and $\boldsymbol{w}$ of degree at most $\Delta_b$. The resulting partition $\cP_{\boldsymbol{x}} = \{R_{\boldsymbol{x}, 1}, \dots, R_{\boldsymbol{x}, N}\}$ consists of connected components $R_{\boldsymbol{x}, i}$, each formed by a connected component of $\cA \times \cW - \cup_{j = 1}^MZ_{h_{\boldsymbol{x}, j}}$ and its adjacent boundaries (Definition \ref{def:adjacent_boundary}). Within each connected component $R_{\boldsymbol{x}, i}$, the function $f_{\boldsymbol{x}}(\alpha, \boldsymbol{w})$ is a polynomial $f_{\boldsymbol{x}, i}(\alpha, \boldsymbol{w})$ of degree at most $\Delta_p$. We call $f_{\boldsymbol{x}, i}$ the \textit{piece function}. In this setting, the dual utility function $u^*_{\boldsymbol{x}}(\alpha)$ can then be written as:
$$
    u^*_{\boldsymbol{x}}(\alpha) = \sup_{\boldsymbol{w} \in \cW}f_{\boldsymbol{x}}(\alpha, \boldsymbol{w}) = \max_{i \in \{1, \dots, N\}}\sup_{\boldsymbol{w}: (\alpha, \boldsymbol{w}) \in R_{\boldsymbol{x}, i}}f_{\boldsymbol{x}, i}(\alpha, \boldsymbol{w}) = \max_{i \in \{1, \dots, N\}}u^*_{\boldsymbol{x}, i}(\alpha),
$$
where $u^*_{\boldsymbol{x}, i}(\alpha) = \sup_{\boldsymbol{w}: (\alpha, \boldsymbol{w}) \in R_{\boldsymbol{x}, i}}f_{\boldsymbol{x}, i}(\alpha, \boldsymbol{w})$.

{
    \begin{figure}[ht!]
        \centering
        \subfigure[The piecewise structure of $u^*_{\boldsymbol{x}}({\alpha})$ and piecewise polynomial surface of $f_{\boldsymbol{x}}({\alpha}, \boldsymbol{w})$ in sheer view.]{
            \includegraphics[width=0.46\textwidth]{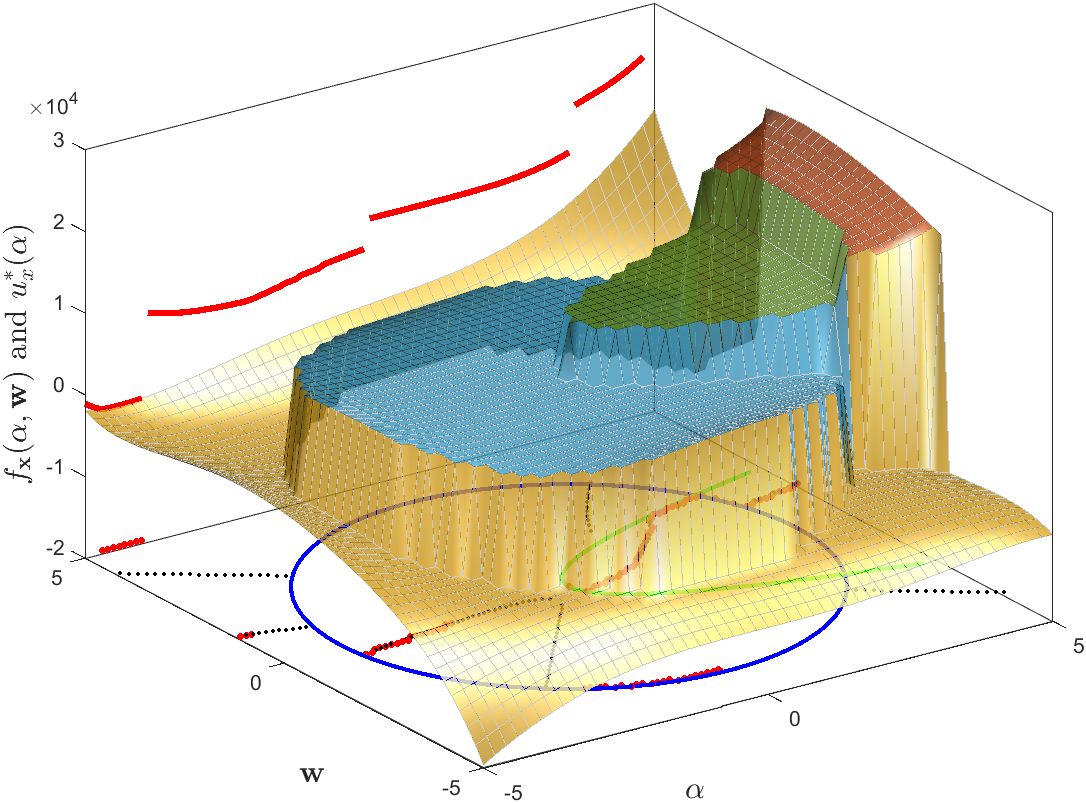}
            \label{fig:demonstration-1}
        }
        \hfill  
        \subfigure[Removing the surface $f_{\boldsymbol{x}}({\alpha}, \boldsymbol{w})$ for better view of $u^*_{\boldsymbol{x}}({\alpha})$, the boundaries, and the derivative curves.]{
            \includegraphics[width=0.46\textwidth]{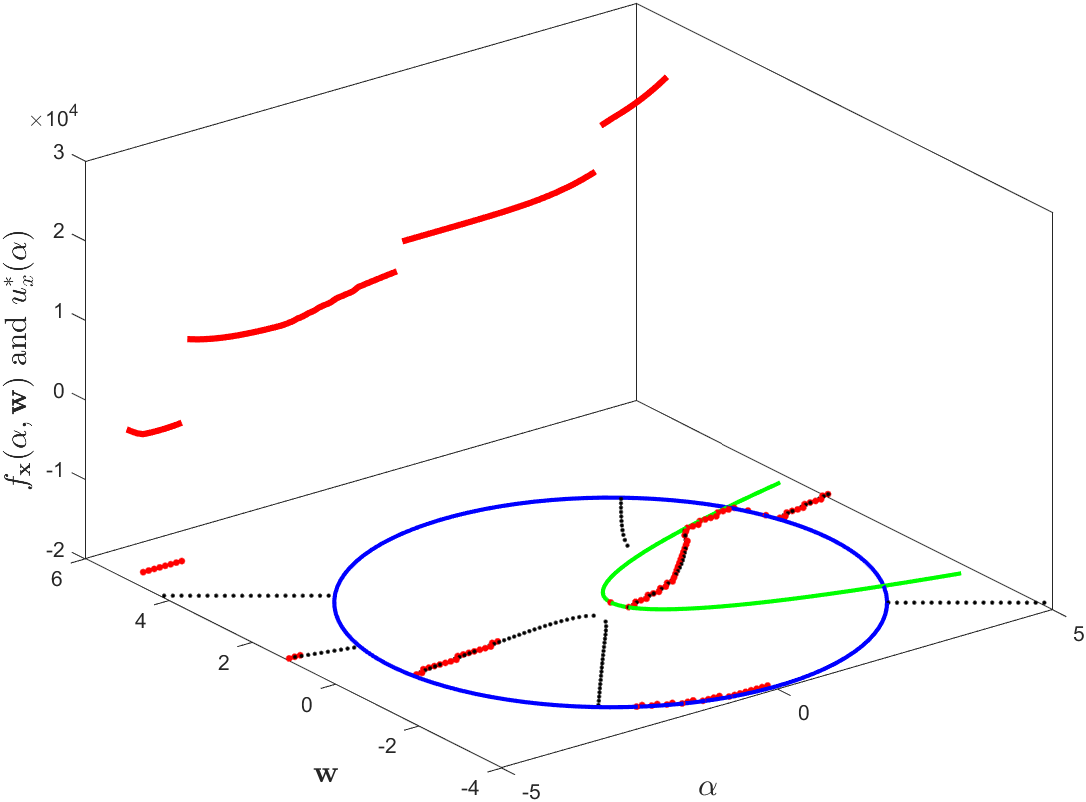}
            \label{fig:demonstration-1-simplified}
        }
        \caption{ A demonstration of the proof idea for Theorem \ref{thm:learning-guarantee-piecewise-poly} in 2D ($\boldsymbol{w} \in \bbR$). Here, the domain of $f^*_{\boldsymbol{x}}({\alpha}, \boldsymbol{w})$ is partitioned into four regions by two boundaries: a circle (blue line) and a parabola (green line). In each region $i$, the function $f_{\boldsymbol{x}}({\alpha}, \boldsymbol{w})$ is a polynomial $f_{\boldsymbol{x}, i}({\alpha}, \boldsymbol{w})$, of which the derivative curve $\frac{\partial f_{\boldsymbol{x}, i}}{\partial \boldsymbol{w}} = 0$ is demonstrated by the black dot in the plane of $({\alpha}, \boldsymbol{w})$. The value of $u^*_{\boldsymbol{x}}({\alpha})$ is demonstrated in the red line, and the red dots in the plane $({\alpha}, \boldsymbol{w})$ corresponds to the position where $f_{\boldsymbol{x}}({\alpha},\boldsymbol{w}) = u^*_{\boldsymbol{x}}({\alpha})$. We can see that it occurs in either the derivative curves or in the boundary. Our goal is to leverage this property to control the number of discontinuities and local maxima of $u^*_{\boldsymbol{x}}({\alpha})$, which can be converted to the generalization guarantee of the utility function class $\cU$.}
        \label{fig:demonstration}
    \end{figure}
}

\subsection{Hyperparameter tuning with a single parameter}
We provide some intuition for our novel proof techniques by first considering a simpler setting. We first consider the case where there is a single parameter and only one piece function. { That is, we assume that $d=1$, $N = 1$, and $M = 0$. Since there is only one piece in this case, we abuse the notations and use interchangeably $u^*_{\boldsymbol{x}}$ and $f_{\boldsymbol{x}}$ for $u^*_{\boldsymbol{x}, 1}$ and $f_{\boldsymbol{x}, 1}$, respectively. This means $f_{\boldsymbol{x}}$ is now a polynomial function of $\alpha, \boldsymbol{w}$ of degree at most $\Delta_p$, instead of admitting a piecewise polynomial structure.}

We first present a structural result for the dual function class $\cU^*$, which establishes that any function $u^*_{\boldsymbol{x}}$ in { the dual utility function class} $\cU^*$ is piecewise continuous with $O(\Delta_p^2)$ pieces. Furthermore, we show that there are  $O(\Delta_p^3)$ oscillations in $u^*_{\boldsymbol{x}}$ which implies a bound on the pseudo-dimension of $\cU^*$ using results in Section \ref{sec:oscillation-results}.

Our proof approach is summarized as follows. We note that the supremum over $\boldsymbol{w}\in\cW$ in the definition of $u^*_{\boldsymbol{x}}$ can only be achieved at a domain boundary or at a point $(\alpha,\boldsymbol{w})$ that { satisfies $k_{\boldsymbol{x}}(\alpha, \boldsymbol{w}) = \frac{\partial f_{\boldsymbol{x}}(\alpha, \boldsymbol{w})}{\partial \boldsymbol{w}}=0$, which defines an algebraic curve}. We partition this algebraic curve into { \it monotonic arcs} \citep{guibas1993combinatorics,diatta2014computation}, which intersect $\alpha=\alpha_0$ at most once for any $\alpha_0$. Intuitively, a point of discontinuity of $u^*_{\boldsymbol{x}}$ can only occur when the set of monotonic arcs corresponding to a fixed value of $\alpha$ changes as $\alpha$ is varied, which corresponds to $\alpha$-extreme points of the monotonic arcs. { We use a direct consequence of Bezout's theorem (Corollary \ref{cor:bezout-consequence})} to upper bound these extreme points of { $k_{\boldsymbol{x}}(\alpha, \boldsymbol{w}) =0$} to obtain an upper bound on the number of pieces of $u^*_{\boldsymbol{x}}$. Next, we seek to upper bound the number of local extrema of $u^*_{\boldsymbol{x}}$ to bound its oscillating behavior within the continuous pieces. To this end, we need to examine the behavior of $u^*_{\boldsymbol{x}}$ along the algebraic curve $h_{\boldsymbol{x}}(\alpha, \boldsymbol{w}) =0$ and use the Lagrange's multiplier theorem { (Theorem \ref{thm:LMs})} to express the locations of the extrema as intersections of algebraic varieties (in $\alpha,\boldsymbol{w}$ and the Lagrange multiplier $\lambda$). Another application of Bezout's theorem { (Corollary \ref{cor:bezout-consequence})} gives us the desired upper bound on the number of local extrema of $u^*_{\boldsymbol{x}}$.

        


\begin{lemma} \label{lm:1-dim-single-piece}
    Let $d= 1$, $N = 1, M = 0$. That is, there is a single parameter, a single piece function, and no boundary functions. Assume that $\cA= [\alpha_{\min}, \alpha_{\max}] $ and $\cW = [w_{\min}, w_{\max}]$. 
    Then
    \begin{enumerate}[label=(\alph*)]
        \item The hyperparameter domain $\cA$ can be partitioned into  $\cO(\Delta_p^2)$ intervals such that $u^*_{\boldsymbol{x}}$ is a continuous function over any interval in the partition.
        \item $u^*_{\boldsymbol{x}}$ is $\cO(\Delta_p^2)$-monotonic (Definition \ref{def:b-monotonicity}). 
    \end{enumerate} 
\end{lemma}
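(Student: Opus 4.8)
The plan is to follow the outline sketched in the ``single parameter'' discussion: reduce the computation of $u^*_{\boldsymbol{x}}(\alpha) = \sup_{w \in [w_{\min}, w_{\max}]} f_{\boldsymbol{x}}(\alpha, w)$ to understanding a $1$-dimensional algebraic curve in the $(\alpha, w)$-plane, and then count the relevant critical features of this curve using Bezout-type bounds. First I would observe that since $M = 0$ and $N = 1$, the function $f_{\boldsymbol{x}}(\alpha, w)$ is a single bivariate polynomial of degree at most $\Delta_p$ on the box $\cA \times \cW$. For each fixed $\alpha$, the supremum over $w$ is attained either at one of the two endpoints $w \in \{w_{\min}, w_{\max}\}$, or at an interior point where $\frac{\partial f_{\boldsymbol{x}}}{\partial w}(\alpha, w) = 0$. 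The endpoint contributions give two univariate polynomials $g_{\min}(\alpha) := f_{\boldsymbol{x}}(\alpha, w_{\min})$ and $g_{\max}(\alpha) := f_{\boldsymbol{x}}(\alpha, w_{\max})$, each of degree at most $\Delta_p$, which are globally continuous and contribute $O(\Delta_p)$ local maxima and no discontinuities each. The interior contribution is governed by the algebraic curve $Z := \{(\alpha, w) : \frac{\partial f_{\boldsymbol{x}}}{\partial w}(\alpha, w) = 0\}$, a plane curve defined by a polynomial of degree at most $\Delta_p - 1$ in two variables.

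The next step is to control how $u^*_{\boldsymbol{x}}$ restricted to $Z$ behaves as $\alpha$ varies. I would decompose $Z \cap (\cA \times \cW)$ into \emph{monotonic arcs} — maximal connected pieces of $Z$ on which $\alpha$ is monotonic, so that each arc meets any vertical line $\alpha = \alpha_0$ at most once. The breakpoints separating these arcs are the points of $Z$ where the tangent is vertical (i.e., $\frac{\partial}{\partial w}$ of the defining polynomial of $Z$ vanishes, so $\frac{\partial f_{\boldsymbol{x}}}{\partial w} = \frac{\partial^2 f_{\boldsymbol{x}}}{\partial w^2} = 0$), together with singular points of $Z$ and the points where $Z$ meets the boundary of the box; by Bezout's theorem the first set has size $O(\Delta_p^2)$ (intersection of curves of degrees $\le \Delta_p - 1$ and $\le \Delta_p - 2$), the singular points are $O(\Delta_p^2)$, and the boundary intersections are $O(\Delta_p)$, so $Z$ splits into $O(\Delta_p^2)$ monotonic arcs. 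On each arc, $w$ is a continuous function of $\alpha$ over some subinterval of $\cA$, so the arc contributes a continuous candidate function $\alpha \mapsto f_{\boldsymbol{x}}(\alpha, w(\alpha))$ on that subinterval. Projecting all arc-endpoints to the $\alpha$-axis, together with the (finitely many) roots of the discriminant-type polynomials, partitions $\cA$ into $O(\Delta_p^2)$ intervals on each of which $u^*_{\boldsymbol{x}}$ is the pointwise maximum of a \emph{fixed} finite collection (the two endpoint polynomials plus the arcs active on that interval) of continuous functions; hence $u^*_{\boldsymbol{x}}$ is continuous on each such interval, giving part (a).

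For part (b), within a fixed interval $u^*_{\boldsymbol{x}}$ is a max of continuous functions; a local maximum of the max is either a local maximum of one of the constituent functions, or a crossing point between two of them. The local maxima of the endpoint polynomials number $O(\Delta_p)$ total. The local maxima of an arc function $\alpha \mapsto f_{\boldsymbol{x}}(\alpha, w(\alpha))$ occur where $\frac{d}{d\alpha} f_{\boldsymbol{x}}(\alpha, w(\alpha)) = 0$ subject to $\frac{\partial f_{\boldsymbol{x}}}{\partial w}(\alpha, w) = 0$; by the Lagrange multiplier formulation this is the vanishing of a system of polynomials in $(\alpha, w, \lambda)$ of bounded degree, and eliminating variables and invoking Bezout again bounds the number of such critical points over the \emph{whole} curve $Z$ by $O(\Delta_p^2)$. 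The crossings between two constituent functions correspond to the boundary-function structure of the upper envelope; across all $O(\Delta_p^2)$ intervals and the $O(1)$ active functions per interval these also sum to $O(\Delta_p^2)$. Adding the contributions yields $O(\Delta_p^2)$ local maxima for $u^*_{\boldsymbol{x}}$, establishing part (b).

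The main obstacle I anticipate is the bookkeeping in the envelope argument: making precise that the collection of ``active'' arc-functions is constant on each of the $O(\Delta_p^2)$ subintervals (so that $u^*_{\boldsymbol{x}}$ really is continuous there and not merely upper/lower semicontinuous), and ensuring the local-maxima count is not inflated by the max operation — in particular ruling out that the pointwise maximum of two smooth curves creates super-linearly many new local maxima within a single interval. Handling this cleanly requires carefully choosing the breakpoint set to include all intersection $\alpha$-values of pairs of candidate functions (again $O(\Delta_p^2)$ many by Bezout, since each pair of candidates is a bounded-degree algebraic relation once $w$ is eliminated), after which on each refined subinterval the identity of the maximizing candidate is constant and the analysis reduces to that of a single smooth function. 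The algebraic-geometry inputs (Bezout's theorem, finiteness of critical points, the monotonic-arc decomposition) are the conceptual heart, but they are standard once the right systems are written down; the delicate part is the combinatorial/topological argument that glues the per-arc analysis into a global statement about $u^*_{\boldsymbol{x}}$.
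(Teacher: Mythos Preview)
Your approach is essentially the paper's: decompose the critical curve $Z=\{\partial_w f_{\boldsymbol{x}}=0\}$ into $\alpha$-monotonic arcs, count arc endpoints via Bezout on $\{\partial_w f_{\boldsymbol{x}}=0,\ \partial_w^2 f_{\boldsymbol{x}}=0\}$ to get the $O(\Delta_p^2)$ partition for part (a), and for part (b) use the Lagrangian to bound local extrema of $f_{\boldsymbol{x}}$ along $Z$ globally by $O(\Delta_p^2)$.

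The one place you diverge is the handling of crossings between constituent functions in part (b). Your worry there is legitimate, but your passing claim of ``$O(1)$ active functions per interval'' is wrong --- there can be up to $\Delta_p+1$ active candidates (the $\Delta_p-1$ arcs plus two endpoints) --- and your suggested fix of refining the partition by all pairwise crossings would push the interval count to $O(\Delta_p^3)$ or worse. The paper sidesteps this entirely with a cleaner observation (its Corollary~\ref{cor:max-local-maxima-condition}, a consequence of the directional-derivative formula for a pointwise max): if $\bar\alpha$ is a local maximum of $\max_i g_i$ and every active $g_i$ has nonzero derivative there, then the max has a strictly positive directional derivative, contradicting maximality. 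Hence every local maximum of $u^*_{\boldsymbol{x}}$ is already a local maximum of some constituent, and crossings contribute nothing new. With that lemma in hand your crossing bookkeeping is unnecessary and the $O(\Delta_p^2)$ bound follows directly from the global Lagrangian count you already sketched.
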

\proof 
\noindent (a)  Denote { $k_{\boldsymbol{x}}(\alpha, {w}) = \frac{\partial f_{\boldsymbol{x}}(\alpha, {w})}{\partial {w}}$}. From assumption, $f_{\boldsymbol{x}}(\alpha, {w})$ is a polynomial of $\alpha$ and ${w}$, therefore it is differentiable everywhere in the compact domain $[\alpha_{\min}, \alpha_{\max}] \times [w_{\min}, w_{\max}]$. { Consider any $\alpha_0 \in [\alpha_{\min}, \alpha_{\max}]$, we have {\color{\COMMENTCOLOR}  $\{(\alpha, w)\mid \alpha = \alpha_0\} \cap ([\alpha_{\min}, \alpha_{\max}] \times [w_{\min}, w_{\max}])$} is an intersection of a hyperplane and a compact set, hence it is also compact.} Thus Fermat's interior extremum theorem { (Lemma \ref{lm:fermat-theorem})} applies and, for any $\alpha_0$, $f_{\boldsymbol{x}}(\alpha_0, {w})$ attains the local maxima ${w}$ either in ${w}_{\min}, {w}_{\max}$, or for {${w} \in (w_{\min}, w_{\max})$} such that { $k_{\boldsymbol{x}}(\alpha_0, {w}) = 0$}. Note that from assumption, $f_{\boldsymbol{x}}(\alpha, {w})$ is a polynomial of degree at most $\Delta_p$ in $\alpha$ and ${w}$. This implies { $k_{\boldsymbol{x}}(\alpha, {w})$} is a polynomial of degree at most $\Delta_p - 1$.

Denote { $C_{\boldsymbol{x}} = V(k_{\boldsymbol{x}})$} the zero set of { $k_{\boldsymbol{x}}$ in $R:=\cA\times\cW$}.  If $C_{\boldsymbol{x}}$ has any components consisting of axis-parallel straight lines $\alpha=\alpha_1$, we include the corresponding discontinuities (later) via the intersections of $C_{\boldsymbol{x}}$ with the boundary lines $\boldsymbol{w} = w_{\min}$, $w_{\max}$. Therefore, assume  that $C_{\boldsymbol{x}}$ does not have such components. For any $\alpha_0$,  $C_{\boldsymbol{x}}$ intersects the line $\alpha = \alpha_0$ in at most $\Delta_p  - 1$ points by Bezout's theorem (Corollary \ref{cor:bezout-plane}). This implies that, for any $\alpha$, there are at most $\Delta_p + 1$ candidate values of ${w}$ which can possibly maximize $f_{\boldsymbol{x}}(\alpha, {w})$, which can be either $w_{\min}, w_{\max},$ or on some point in $C_{\boldsymbol{x}}$. We can decompose $C_{\boldsymbol{x}}$ into { \it monotonic arcs} using well-known techniques from algebraic geometry \citep{guibas1993combinatorics,diatta2014computation}. We then define the \textit{candidate arc set} $\cC: \cA \rightarrow \cM_\alpha(C_{\boldsymbol{x}})$ as the function that maps $\alpha_0 \in \cA$ to the set of all maximal $\alpha$-monotonic arcs of $C_{\boldsymbol{x}}$ (\autoref{def:monotonic-curve-high-dim-revised}, informally arcs that intersect any line $\alpha=\alpha_0$ at most once) that intersect with $\alpha = \alpha_0$. By the argument above, we have $\abs{\cC(\alpha)} \leq \Delta_p + 1$ for any $\alpha$.

We now have the following claims: (1) $\cC$ is a piecewise constant function, and (2) any point of discontinuity of $u^*_{\boldsymbol{x}}$ must be a point of discontinuity of $\cC$. For (1), we will show that $\cC$ is piecewise constant, with the piece boundaries contained in the set of $\alpha$-extreme points\footnote{An $\alpha$-extreme point of an algebraic curve $C$ is a point $p = (\alpha, W)$ such that there is an open neighborhood $N$ around $p$ for which $p$ has the smallest or largest $\alpha$-coordinate among all points $p' \in N$ on the curve.} of $C_{\boldsymbol{x}}$ and the intersection points of $C_{\boldsymbol{x}}$ with boundary lines $\boldsymbol{w} = w_{\min}$, $w_{\max}$. 
Indeed, for any interval $I = (\alpha_1, \alpha_2) \subseteq \cA$, if there is no $\alpha$-extreme point of $C_{\boldsymbol{x}}$ in the interval, then the set of arcs $\cC(\alpha)$ is fixed over $I$ by \autoref{def:monotonic-curve-high-dim-revised}. Next, we will prove (2) via an equivalent statement: assume that $\cC$ is continuous over an interval $I \subseteq \cA$, we want to prove that $u^*_{\boldsymbol{x}}$ is also continuous over $I$. Note that if $\cC$ is continuous over $I$, then $u^*_{\boldsymbol{x}}(\alpha)$ involves a maximum over a fixed set of $\alpha$-monotonic arcs of $C_{\boldsymbol{x}}$, and the straight lines ${w} = w_{\min}$, $w = w_{\max}$. Since $f_{\boldsymbol{x}}$ is continuous along these arcs, so is the maximum $u^*_{\boldsymbol{x}}$ (Lemma \ref{lm:max-of-continuous}).

The above claim implies that the number of discontinuity points of $\cC$ upper-bounds the number of discontinuity points of $u^*_{\boldsymbol{x}}(\alpha)$. Using a well-known result for algebraic curves (e.g. \cite{cheng2010topology}), the number of $\alpha$-extreme points is $\cO(\Delta_p^2)$.
Moreover, there are $\cO(\Delta_p)$ intersection points between $C_{\boldsymbol{x}}$ and the boundary lines { $\boldsymbol{w} = w_{\min}$ and $\boldsymbol{w} = w_{\max}$}. Thus, the total discontinuities of $\cC$ are $\cO(\Delta_p^2)$, giving an upper bound on the number of discontinuity points of $u^*_{\boldsymbol{x}}$.

\noindent (b) Consider any interval $I\subset \cA$ over which the function $\cC$ is continuous. In particular, this means that there is no $\alpha$-extreme point corresponding to any $\alpha\in I$.
By \autoref{prop:max-local-maxima-condition} and \autoref{prop:max-b-monotonic}, 
 it suffices to bound {the $B$-monotonicity }
 of $f_{\boldsymbol{x}}$ along the algebraic curve $C_{\boldsymbol{x}}$ (i.e.\ along a fixed set of monotonic arcs of $C_{\boldsymbol{x}}$) and the straight lines { $\boldsymbol{w} = w_{\min}$ and  $\boldsymbol{w} = w_{\max}$}. 

{\color{\COMMENTCOLOR} To bound the number of elements} of the set of local maxima of $f_{\boldsymbol{x}}$ along the algebraic curve $C_{\boldsymbol{x}}$, consider the Lagrangian
\[ 
    \cL(\alpha, {w}, \lambda) = f_{\boldsymbol{x}}(\alpha, {w}) + \lambda k_{\boldsymbol{x}}(\alpha, w).
\]
\noindent From the Lagrange's multiplier theorem, any local maxima of $f_{\boldsymbol{x}}$ along the algebraic curve $C_{\boldsymbol{x}}$ is also a critical point of $\cL$, which satisfies the following equations
\begin{equation*} 
    \begin{aligned}
        \frac{\partial \cL}{\partial \alpha} &= \frac{\partial f_{\boldsymbol{x}}}{\partial \alpha} + \lambda \frac{\partial k_{\boldsymbol{x}}}{\partial \alpha}= \frac{\partial f_{\boldsymbol{x}}}{\partial \alpha} + \lambda \frac{\partial^2 f_{\boldsymbol{x}}}{\partial \alpha\partial w}=0, \\
        \frac{\partial \cL}{\partial w} &= \frac{\partial f_{\boldsymbol{x}}}{\partial w} + \lambda \frac{\partial k_{\boldsymbol{x}}}{\partial w}= \frac{\partial f_{\boldsymbol{x}}}{\partial w} + \lambda \frac{\partial^2 f_{\boldsymbol{x}}}{\partial w^2}= 0, \\
        \frac{\partial \cL}{ \partial \lambda} &= k_{\boldsymbol{x}} = \frac{\partial f_{\boldsymbol{x}}}{\partial w}= 0.
    \end{aligned}
\end{equation*}
\noindent Plugging $\frac{\partial f_{\boldsymbol{x}}}{\partial w}= 0$ into the second equation above, we get that either $\lambda=0$ or $\frac{\partial^2 f_{\boldsymbol{x}}}{\partial w^2}= 0$. In the former case, the first equation implies $\frac{\partial f_{\boldsymbol{x}}}{\partial \alpha}=0$. Thus, we consider two cases for critical points of $\cL$. Let's suppose $(\tilde{\alpha}, \tilde{w})$ is a local maximum of $f_{\boldsymbol{x}}$ along $C_{\boldsymbol{x}}$. There must exist a $\lambda$ such that $(\tilde{\alpha}, \tilde{w}, \lambda)$ satisfies the Lagrangian equations above.


\textbf{Case $\lambda=0$.}  { $\frac{\partial f_{\boldsymbol{x}}(\tilde{\alpha}, \tilde{w})}{\partial \boldsymbol{w}} = 0$, $\frac{\partial f_{\boldsymbol{x}}(\tilde{\alpha}, \tilde{w})}{\partial \alpha} = 0$}. For this to be true, $\tilde{\alpha}, \tilde{w}$ must be at the intersection of the algebraic curves $\frac{\partial f_{\boldsymbol{x}}({\alpha}, {w})}{\partial \boldsymbol{w}} = 0$, $\frac{\partial f_{\boldsymbol{x}}({\alpha}, {w})}{\partial \alpha} = 0$.  By Bezout's theorem on the plane { (Corollary \ref{cor:bezout-plane}),} these algebraic curves intersect in at most $\Delta_p^2$ points, unless the polynomials $\frac{\partial f_{\boldsymbol{x}}({\alpha}, {w})}{\partial w},\frac{\partial f_{\boldsymbol{x}}({\alpha}, {w})}{\partial \alpha}$ have a common factor. In this case, we can write $\frac{\partial f_{\boldsymbol{x}}({\alpha}, {w})}{\partial w}=g(\alpha,w)g_1(\alpha,w)$ and $\frac{\partial f_{\boldsymbol{x}}({\alpha}, {w})}{\partial \alpha}=g(\alpha,w)g_2(\alpha,w)$ where $g=\text{gcd}\left(\frac{\partial f_{\boldsymbol{x}}}{\partial w},\frac{\partial f_{\boldsymbol{x}}}{\partial \alpha}\right)$ and $g_1,g_2$ have no common factors. Now for any point $(\tilde{\alpha}, \tilde{w})$ on the curve $g(\alpha,w)=0$, we have both  $\frac{\partial f_{\boldsymbol{x}}(\tilde{\alpha}, \tilde{w})}{\partial w}= 0,\frac{\partial f_{\boldsymbol{x}}(\tilde{\alpha}, \tilde{w})}{\partial \alpha}=0$. Therefore, for any two points $(\alpha_1,w_1)$ and $(\alpha_2,w_2)$ on the curve $g(\alpha,w)=0$, $f_{\boldsymbol{x}}(\alpha_1,w_1)=f_{\boldsymbol{x}}(\alpha_2,w_2)$. Thus, $f_{\boldsymbol{x}}$ is constant along $g(\alpha,w)=0$. Recall that we are computing the local maxima on an interval $I$, corresponding to a fixed set of monotonic arcs of $k_{\boldsymbol{x}}$ (and therefore also $g$, which is a factor of $k_{\boldsymbol{x}}$). 
By Bezout's theorem (Corollary \ref{cor:bezout-plane}), $g_1(\alpha,w)=0,g_2(\alpha,w)=0$ intersect in at most $\text{deg}(g_1)\text{deg}(g_2)\le \Delta_p^2$ points (since they do not have any common factors), which are the only other candidate points for which $(\tilde{\alpha},\tilde{w},0)$ satisfies the Lagrangian. Thus, the number of local maxima of $u^*_{\boldsymbol{x}}$  that correspond to this case (i.e.\ across all monotonic arcs not corresponding to $g$) is $\cO(\Delta_p^2)$.

\textbf{Case  $\lambda\ne 0$.} $\frac{\partial f_{\boldsymbol{x}}(\tilde{\alpha}, \tilde{w})}{\partial \boldsymbol{w}}= 0,\frac{\partial^2 f_{\boldsymbol{x}}(\tilde{\alpha}, \tilde{w})}{\partial \boldsymbol{w}^2}= 0$. This  essentially corresponds to the $\alpha$-extreme points computed above (see e.g.\ \citep{milenkovic2006approximate}), and do not occur over any interval $I$ considered here.

\noindent Similarly, the equations $f_{\boldsymbol{x}}(\alpha, w_{\min})=0$ and $f_{{\boldsymbol{x}}}(\alpha, w_{\max})=0$ also have at most $\Delta_p$ solutions each. Putting together, by \autoref{prop:max-b-monotonic}, we conclude that  $u^*_{\boldsymbol{x}}$ is $\cO(\Delta_p^2)$-monotonic. 
\qed

\begin{theorem} \label{thm:1-dim-single-piece}
     $\Pdim(\cU^*) = \cO(\log \Delta_p)$.
\end{theorem}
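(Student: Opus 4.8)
The plan is to combine the structural result just established (Lemma~\ref{lm:1-dim-single-piece}) with the generic oscillation-to-pseudo-dimension machinery of Section~\ref{sec:oscillation-results}; the theorem then follows with essentially no further work. Concretely, Lemma~\ref{lm:1-dim-single-piece} tells us that for every problem instance $\boldsymbol{x}$, the dual utility function $u^*_{\boldsymbol{x}}(\alpha)$ is piecewise continuous: by part (a), the domain $\cA$ splits into $\cO(\Delta_p^2)$ intervals on each of which $u^*_{\boldsymbol{x}}$ is continuous, so $u^*_{\boldsymbol{x}}$ has at most $B_1 = \cO(\Delta_p^2)$ discontinuity points; and by part (b), $u^*_{\boldsymbol{x}}$ has at most $B_2 = \cO(\Delta_p^2)$ local maxima. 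Feeding these two bounds into Corollary~\ref{lm:local-extrema-to-pdim} (which itself rests on Lemma~\ref{lm:local-extrema-to-oscillation} together with Theorem~\ref{thm:oscillation-to-pdim}) yields $\Pdim(\cU^*) = \cO(\log(B_1 + B_2)) = \cO(\log \Delta_p^2) = \cO(\log \Delta_p)$, as claimed, since $\log \Delta_p^2 = 2\log\Delta_p$.

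The only thing to check carefully when carrying this out is that the hypotheses of Corollary~\ref{lm:local-extrema-to-pdim} are literally met: it requires that each dual function be piecewise continuous with finitely many discontinuities and finitely many local maxima, which is exactly the content of parts (a) and (b) of Lemma~\ref{lm:1-dim-single-piece}. Note in particular that $u^*_{\boldsymbol{x}}$ is genuinely piecewise polynomial here (not piecewise constant, the degenerate case handled separately by Lemma~\ref{lm:piecewise-constant-to-pdim}), so the finite local-maxima bound of part (b) is the active ingredient rather than vacuous. I would also remark that the constants hidden in the $\cO(\cdot)$ of parts (a) and (b) depend only on $\Delta_p$ (with $d=1$, $N=1$, $M=0$ fixed) and not on $\boldsymbol{x}$, so the resulting pseudo-dimension bound is uniform over the whole class $\cU^*$.

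There is no real obstacle in the present statement: all the difficulty has already been absorbed into Lemma~\ref{lm:1-dim-single-piece} — the appeal to Fermat's interior extremum theorem, the decomposition of the zero set $C_{\boldsymbol{x}} = V(h_{\boldsymbol{x}})$ into maximal $\alpha$-monotonic arcs, and the two Bézout-type counts (for the $\alpha$-extreme points of $C_{\boldsymbol{x}}$ and for the Lagrange critical points). Given that lemma, Theorem~\ref{thm:1-dim-single-piece} is a one-line corollary obtained by substituting the counts $B_1, B_2 = \cO(\Delta_p^2)$ into Corollary~\ref{lm:local-extrema-to-pdim}.
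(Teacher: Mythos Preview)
Your proposal is correct and follows essentially the same route as the paper: invoke Lemma~\ref{lm:1-dim-single-piece} to obtain $\cO(\Delta_p^2)$ discontinuities and $\cO(\Delta_p^2)$ local maxima for every dual function, then apply the oscillation-to-pseudo-dimension machinery (Corollary~\ref{lm:local-extrema-to-pdim}, equivalently Lemma~\ref{lm:local-extrema-to-oscillation} together with Theorem~\ref{thm:oscillation-to-pdim}) to conclude $\Pdim(\cU^*)=\cO(\log\Delta_p)$. The paper's own proof is the same one-liner; your citation of Corollary~\ref{lm:local-extrema-to-pdim} is in fact the cleaner reference, since the paper's text cites Lemma~\ref{lm:piecewise-constant-to-pdim} at that step, which appears to be a typo.
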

\proof 
From \autoref{lm:1-dim-single-piece} and \autoref{lm:monotonic-to-oscillation}, we conclude that $u^*_{\boldsymbol{x}}$ has at most $\cO(\Delta_p^2)$ oscillations for any $u^*_{\boldsymbol{x}} \in \cU^*$. Therefore, using \autoref{thm:oscillation-to-pdim}, we conclude that $\Pdim(\cU^*) = \cO(\log \Delta_p)$. \qed 

{ 
    
    \paragraph{Challenges of generalizing the one-dimensional parameter and single region setting above to high-dimensional parameters and multiple regions.} Recall that in the simple setting above, we assume that $f_{\boldsymbol{x}}(\alpha, \boldsymbol{w})$ is a polynomial in the whole domain $R:=[\alpha_{\min}, \alpha_{\max}] \times [w_{\min}, w_{\max}]$. In this case, our approach is to characterize the manifold on which the optimal solution of $\max_{\boldsymbol{w}: (\alpha, \boldsymbol{w}) \in R}f_{\boldsymbol{x}}(\alpha, \boldsymbol{w})$ lies, as $\alpha$ varies. We then use algebraic geometry tools to upper bound the number of discontinuity points and local extrema of $u^*_{\boldsymbol{x}}(\alpha) = \max_{\boldsymbol{w}: (\alpha, \boldsymbol{w}) \in R}f_{\boldsymbol{x}}(\alpha, \boldsymbol{w})$, leading to a bound on the pseudo-dimension of the utility function class $\cU$ by using our proposed tools in Section \ref{sec:oscillation-results}. However, to generalize this idea to high-dimensional parameters and multiple regions is much more challenging due to the following issues: (1) handling the analysis of multiple pieces while accounting for polynomial boundary functions is tricky as the $\boldsymbol{w}^*$ maximizing $f_{\boldsymbol{x}}(\alpha, \boldsymbol{w})$ can switch between pieces as $\alpha$ is varied, (2) characterizing the optimal solution $\max_{\boldsymbol{w}: (\alpha, \boldsymbol{w}) \in R}f_{\boldsymbol{x}}(\alpha, \boldsymbol{w})$ is not trivial and typically requires additional assumptions to ensure that a general position property is achieved, and care needs to be taken to make sure that the assumptions are not too strong, (3) generalizing the monotonic curve notion to high-dimensions is not trivial and requires a much more complicated analysis invoking tools from differential geometry, and (4) controlling the number of discontinuities and local maxima of $u^*_{\boldsymbol{x}}$ over the high-dimensional monotonic curves requires more sophisticated techniques. 
}

\subsection{Main results}

We begin with the following regularity assumption on the set of boundary functions.

\begin{assumption} \label{asmp:regularity}  
    Given any dual utility functions $u^*_{\boldsymbol{x}}(\alpha)$ with corresponding boundary functions $h_{\boldsymbol{x}, j}$, for $j = 1, \dots, M$, and piece functions $f_{\boldsymbol{x}, i}$, for $i = 1, \dots, M$, satisfying the following property: for any set of $S$ boundary functions $h_1, \dots, h_S$ chosen from the set of $M$ boundaries $\{h_{\boldsymbol{x}, 1}, \dots, h_{\boldsymbol{x}, M}\}$, and any piece function $f$ chosen from the set of $N$ piece functions $\{f_{\boldsymbol{x}, 1}, \dots, f_{\boldsymbol{x}, N}\}$, we have the following assumptions:
    \begin{enumerate}
        

        \item Extended linearly independent constraints qualification (ELICQ): The Jacobian $J_{\boldsymbol{h}}(\alpha, \boldsymbol{w})$ has full row rank when evaluated at $(\alpha, \boldsymbol{w}) \in \boldsymbol{h}^{-1}(\boldsymbol{0})$. Moreover, if $S \leq d$, we assume that $J_{\boldsymbol{h}}(\boldsymbol{w})$ has full row rank hen evaluated at $(\alpha, \boldsymbol{w}) \in \boldsymbol{h}^{-1}(\boldsymbol{0})$.  
        
        \item Non-degeneracy (ND): the polynomial function $P(\alpha, \boldsymbol{w}, \boldsymbol{\lambda}) = \det(J_{(\boldsymbol{h}, \nabla_{\boldsymbol{w}}L)}(\boldsymbol{w}, \boldsymbol{\lambda}))$ is not identically zero on any $(d+1)$-dimensional irreducible component of the real algebraic set $Z_{\boldsymbol{h}} = \{(\alpha, \boldsymbol{w}, \boldsymbol{\lambda}) \in \bbR^{S + d + 1} \mid  \boldsymbol{h}(\alpha, \boldsymbol{w}, \boldsymbol{\lambda}) = \boldsymbol{0}\}$. Here, 
        $L(\alpha, \boldsymbol{w}, \boldsymbol{\lambda}) = f(\alpha, \boldsymbol{w}) + \boldsymbol{\lambda}^\top \boldsymbol{h}(\alpha, \boldsymbol{w})$. 
    \end{enumerate}   
\end{assumption}

{ 
    \begin{remark}
        We  discuss below in more detail the various conditions in Assumption \ref{asmp:regularity}. 
        \begin{enumerate}

            \item The EICLQ assumption consists of two parts. The LICQ part is a standard assumption in parametric optimization literature (or in constraint optimization in general, see e.g., \citep{still2018lectures,nocedal1999numerical}) and corresponds to assuming that $J_{\boldsymbol{h}}(\boldsymbol{w})$ has full row rank when evaluated at $(\alpha, \boldsymbol{w}) \in \boldsymbol{h}^{-1}(\boldsymbol{0})$, for $S \leq d$. The second part, assuming that the Jacobian $J_{\boldsymbol{h}}(\alpha, \boldsymbol{w})$ has full row rank when evaluated at $(\alpha, \boldsymbol{w}) \in h^{-1}(\boldsymbol{0})$, essentially says that $\boldsymbol{0}$ is a regular value of $\boldsymbol{h}$. This assumption is directly implied by LICQ when $S \leq d$. For $S \geq d + 1$ (where LICQ is inapplicable as $\rank(J_{\boldsymbol{h}}(\boldsymbol{w})) \leq d$) this assumption implies that the solutions that satisfy $\boldsymbol{h}(\alpha, \boldsymbol{w}) = \boldsymbol{0}$ are isolated points for $S = d + 1$, or the solution set is empty for $S > d + 1$. We note that this assumption is relatively mild since Sard's theorem asserts that the set of a regular value of a smooth map ($\boldsymbol{h}$ in this case) has Lebesgue measure $\boldsymbol{0}$ in the co-domain. 

            \item The ND assumption is also  very mild. Roughly speaking, it requires that there is at least one point $(\alpha, \boldsymbol{w}, \boldsymbol{\lambda})$ satisfying $\boldsymbol{h}(\alpha, \boldsymbol{w}) = 0$, where $P(\alpha, \boldsymbol{w}, \boldsymbol{\lambda})$ is non-zero. {\color{\COMMENTAPRIL} Intuitively, we use this assumption to show that the intersection of derivative curves with boundaries can be decomposed into a bounded number of monotonic curves.
            } 
        \end{enumerate}
    \end{remark}
}

{
    \begin{remark}
        We conjecture that our results hold without needing Assumption \ref{asmp:regularity}. 
        The idea is to slightly perturb the boundary functions to make the system well-behaved, without changing the underlying geometry of the problem too much. We also employ a similar idea in the proof of Theorem \ref{thm:learning-guarantee-piecewise-poly}. 
    \end{remark}

}

\begin{figure}
    \centering
    \includegraphics[width=1.0\linewidth]{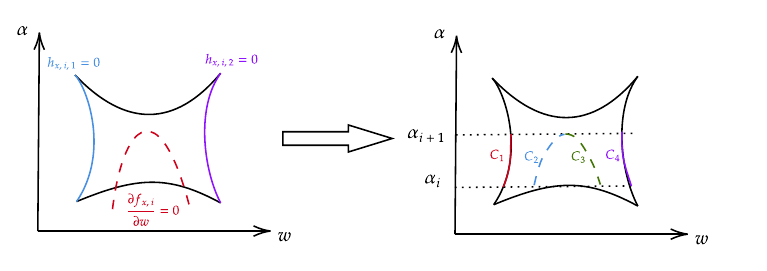}
    \caption{A simplified illustration for the proof idea of Theorem \ref{thm:learning-guarantee-piecewise-poly} where $\boldsymbol{w} \in \mathbb{R}$. Here, our goal is to analyze the number of discontinuities and local maxima of $u^*_{\boldsymbol{x}, i}(\alpha)$. The idea is to partition the hyperparameter space $\mathcal{A}$ into intervals such that over each interval, the function $u^*_{\boldsymbol{x}, i}(\alpha)$ is the pointwise maximum of $f_{\boldsymbol{x}, i}(\alpha, \boldsymbol{w})$ along some fixed set of ``monotonic curves'' $\mathcal{C}$ (curves that intersect $\alpha=\alpha_0$ at most once for any $\alpha_0$).  $u^*_{\boldsymbol{x}, i}(\alpha)$ is continuous over such interval; this implies that the interval endpoints contain all discontinuities of $u^*_{\boldsymbol{x}, i}(\alpha)$. In this example, over the interval $(\alpha_i, \alpha_{i + 1})$, we have $u^*_{\boldsymbol{x},i}(\alpha) = \max_{C_i}\{f_{\boldsymbol{x}, i}(\alpha, \boldsymbol{w}): (\alpha, \boldsymbol{w}) \in C_i\}$. Then, we can show that over such an interval, any local maximum of $u^*_{\boldsymbol{x}, i}(\alpha)$ is a local extremum of $f_{\boldsymbol{x}, i}(\alpha, \boldsymbol{w})$ along a monotonic curve $C \in \mathcal{C}$. Finally, we bound the number of points used for partitioning and local extrema using tools from algebraic and differential geometry.}
    \label{fig:piecewise-poly-proof-illustration}
\end{figure}

\begin{theorem}  \label{thm:learning-guarantee-piecewise-poly}
    Consider the utility function class $\cU = \{u_{\alpha}: \cX \rightarrow [0, H] \mid \alpha \in [\alpha_{\min}, \alpha_{\max}]\}$. Assume that the \paramdependent \,
    $f_{\boldsymbol{x}}(\alpha, \boldsymbol{w})$ admits piecewise  polynomial structure with $N$ piece functions $f_{\boldsymbol{x}, i}$ (for $i = 1, \dots, N$) and $M$ boundaries functions $h_{\boldsymbol{x}, j}$ (for $j = 1, \dots, M$) satisfying Assumption \ref{asmp:regularity}. Then for any distribution $\cD$ over $\cX$, for any $\delta \in (0, 1)$, with probability at least $1 - \delta$ over the draw of $S = \{\boldsymbol{x}_1, \dots, \boldsymbol{x}_m\} \sim \cD^m$, we have 
    $$
        \abs{\bbE_{\boldsymbol{x} \sim \cD}[u_{\hat{\alpha}_{{\text{ERM}}}}(\boldsymbol{x})] - \bbE_{\boldsymbol{x} \sim \cD}[u_{\alpha^*}(\boldsymbol{x})]} = \cO\left(\sqrt{\frac{\log N + d \log (\Delta M) + \log(1/\delta)}{m}}\right).
    $$
    Here, $\hat{\alpha}_S \in \argmin_{\alpha \in \cA}\frac{1}{m}\sum_{i = 1}^mu_{\alpha}(\boldsymbol{x}_i)$, $\alpha^* \in \max_{\alpha \in \cA}\bbE_{\boldsymbol{x} \sim \cD}[u_{\alpha}(\boldsymbol{x})]$, and $\Delta = \max\{\Delta_p, \Delta_d\}$ is the maximum degree of piece functions $f_{\boldsymbol{x}, i}$ and boundaries $h_{\boldsymbol{x}, j}$.
\end{theorem}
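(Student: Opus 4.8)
The plan is to bound $\Pdim(\cU)$ through Corollary \ref{lm:local-extrema-to-pdim}: it suffices to exhibit, for every fixed instance $\boldsymbol{x}$, bounds $B_1$ on the number of discontinuities and $B_2$ on the number of local maxima of the dual utility function $u^*_{\boldsymbol{x}}(\alpha) = \max_{i\in[N]} u^*_{\boldsymbol{x},i}(\alpha)$, and then invoke the standard uniform-convergence guarantee for $[0,H]$-valued classes of bounded pseudo-dimension to get the stated rate (absorbing $\log(1/\delta)$). Since $u^*_{\boldsymbol{x}}$ is a pointwise maximum of finitely many functions, every discontinuity (resp.\ local maximum) of $u^*_{\boldsymbol{x}}$ is a discontinuity (resp.\ local maximum) of some $u^*_{\boldsymbol{x},i}$ (as in \autoref{prop:point-wise-maxima-local-maxima}), so it is enough to analyze a single piece $u^*_{\boldsymbol{x},i}(\alpha) = \sup_{\boldsymbol{w}:(\alpha,\boldsymbol{w})\in R_{\boldsymbol{x},i}} f_{\boldsymbol{x},i}(\alpha,\boldsymbol{w})$ and multiply the per-piece bounds by $N$ at the end.

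Fix a piece $i$. For each fixed $\alpha$ the supremum is attained either at an interior $\boldsymbol{w}$-critical point of the polynomial $f_{\boldsymbol{x},i}$, or on a face of $R_{\boldsymbol{x},i}$ cut out by a subset of $S\le d+1$ of the boundary polynomials $h_1,\dots,h_S$, at a KKT point solving $\nabla_{\boldsymbol{w}} f_{\boldsymbol{x},i} + \sum_{j}\lambda_j\nabla_{\boldsymbol{w}} h_j=0$ together with $h_1=\dots=h_S=0$ --- precisely the zero set of the polynomial map $k=(k_1,\dots,k_{d+S})$ of Assumption \ref{asmp:regularity} in the $(1+d+S)$ variables $(\alpha,\boldsymbol{w},\boldsymbol{\lambda})$. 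Under Assumption \ref{asmp:regularity}, $\boldsymbol 0$ is a regular value of $k$, so $k^{-1}(\boldsymbol 0)$ is a smooth $1$-manifold, and $\boldsymbol 0$ is also a regular value of $\overline k$, which pins down the $\alpha$-extreme points of this manifold (points whose tangent is orthogonal to the $\alpha$-axis) as exactly the locus $\det J_{k,(\boldsymbol{w},\boldsymbol{\lambda})}=0$ on $k^{-1}(\boldsymbol 0)$, i.e.\ the zero set of $\overline k$. Bezout's theorem applied to $\overline k=\boldsymbol 0$ bounds the number of such $\alpha$-extreme points for a fixed active set by $(\Delta+1)^{\cO(d)}$; summing over the $\cO(M^{d+1})$ choices of active sets, together with the $\cO(\Delta M)$ intersections of $k^{-1}(\boldsymbol 0)$ with the region boundaries, bounds the total by $(\Delta M)^{\cO(d)}$.

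Next I decompose $k^{-1}(\boldsymbol 0)$ into maximal $\alpha$-monotonic arcs (\autoref{def:monotonic-curve-high-dim-revised}), each meeting any slice $\alpha=\alpha_0$ at most once, whose decomposition points are exactly the $\alpha$-extreme points and boundary intersections counted above. Partitioning $\cA$ at the $\alpha$-coordinates of these points, over each resulting subinterval the family of monotonic arcs meeting the slice $\alpha=\alpha_0$ is fixed, so --- mirroring \autoref{lm:1-dim-single-piece} --- $u^*_{\boldsymbol{x},i}(\alpha)$ is the maximum of $f_{\boldsymbol{x},i}$ restricted to a fixed finite family of arcs, hence continuous; thus all discontinuities of $u^*_{\boldsymbol{x},i}$ lie among the partition endpoints, giving the per-piece discontinuity bound $(\Delta M)^{\cO(d)}$ and $B_1=N\cdot(\Delta M)^{\cO(d)}$. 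For local maxima, on each subinterval any local maximum of $u^*_{\boldsymbol{x},i}$ is a local maximum of $f_{\boldsymbol{x},i}$ along one of the arcs (\autoref{cor:max-local-maxima-condition}, \autoref{prop:point-wise-maxima-local-maxima}); adjoining the Lagrange stationarity condition of $f_{\boldsymbol{x},i}$ restricted to the curve produces one further polynomial system whose solutions are again bounded by $(\Delta M)^{\cO(d)}$ via Bezout, after disposing of the degenerate common-factor case (along which $f_{\boldsymbol{x},i}$ is locally constant and contributes no local maxima) exactly as in \autoref{lm:1-dim-single-piece}. Summing over arcs, active sets, and the $N$ pieces gives $B_2=N\cdot(\Delta M)^{\cO(d)}$, so Corollary \ref{lm:local-extrema-to-pdim} yields $\Pdim(\cU)=\cO(\log(B_1+B_2))=\cO(\log N + d\log(\Delta M))$, and plugging this into the uniform-convergence bound completes the proof.

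The main obstacle is the second and third steps: showing the KKT locus is a genuine smooth $1$-manifold and decomposing it cleanly into monotonic arcs in dimension $d>1$. Unlike the single-parameter case, the maximizer $\boldsymbol{w}^*(\alpha)$ can jump between pieces and between faces with different active-constraint sets as $\alpha$ varies; the monotonic-curve notion and its at-most-one-intersection property must be developed in higher dimensions via transversality/Sard-type arguments --- which is exactly what Assumption \ref{asmp:regularity} and Theorem \ref{thm:sard} are engineered to supply --- and the Bezout bookkeeping for $\overline k$, which carries the extra Jacobian-determinant equation whose degree grows with $d$, must be tracked carefully so as to land the stated $d\log(\Delta M)$ dependence rather than a worse bound.
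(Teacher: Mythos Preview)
Your outline captures the core geometric machinery correctly (KKT loci as smooth $1$-manifolds, monotonic-arc decomposition, Bezout counts), and your observation that Assumption~\ref{asmp:regularity} already forces $\boldsymbol 0$ to be a regular value of $k$ itself is sound. However, you attempt to carry out the whole argument \emph{directly} under Assumption~\ref{asmp:regularity}, and this is where there is a genuine gap. Assumption~\ref{asmp:regularity} only controls the system $\overline k=(k,\det J_{k,(\boldsymbol w,\boldsymbol\lambda)})$, which governs the $\alpha$-extreme points of the KKT curve. It says \emph{nothing} about the much larger Lagrange system (the $(2d+2S+1)$-equation system in the extra multipliers $\boldsymbol\theta,\boldsymbol\gamma$) whose solutions are the local extrema of $f_{\boldsymbol x,i}$ along the curve. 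Your appeal to ``disposing of the degenerate common-factor case \dots\ exactly as in Lemma~\ref{lm:1-dim-single-piece}'' does not transfer: in that lemma the common-factor argument involves two bivariate polynomials and reduces to $\nabla f=0$ along a planar curve, which forces $f$ to be constant there; in the high-dimensional constrained setting the degenerate components of the $(2d+2S+1)$-system do not have such a clean characterization, and you cannot invoke Bezout without first knowing the intersection is zero-dimensional.

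The paper resolves this not by a direct argument but by an approximation step you have omitted entirely. It first proves the discontinuity/local-maxima bounds under a strictly stronger Assumption~\ref{asmp:regularity-complicated} (which \emph{does} include regularity of the local-extrema Lagrange system); this is Theorem~\ref{thm:multi-dim-multi-piece}, yielding $\Pdim(\cV)=\cO(\log N+d\log(\Delta M))$ for the corresponding class $\cV$. It then shows (Lemma~\ref{lm:assumption-relaxation}, via Warren's Lemma~\ref{lm:warren-finitely-singular}) that for any $u^*_{\boldsymbol x}$ satisfying only Assumption~\ref{asmp:regularity} one can perturb each piece function by an arbitrarily small linear term $\tau(\alpha+\sum_z w_z)$ so that the perturbed class $\cV$ satisfies Assumption~\ref{asmp:regularity-complicated} and $\|u^*_{\boldsymbol x}-v^*_{\boldsymbol x}\|_\infty\le 2\tau C$. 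Finally, the guarantee is transferred from $\cV$ back to $\cU$ via empirical Rademacher complexity (Lemma~\ref{lm:balcan-refined-icml} and Lemma~\ref{lm:pdim-to-rademacher}) rather than via a pseudo-dimension bound on $\cU$ itself. This perturbation-then-transfer structure is the crux of how Assumption~\ref{asmp:regularity} suffices, and it is missing from your plan.
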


\subsubsection{Technical overview}
In this section, we will first go through the general idea of the proof for Theorem \ref{thm:learning-guarantee-piecewise-poly}, and its main steps. The proof consists of three main steps. \textbf{Step 1}: we first bound the number of possible discontinuities and local extrema of $v^*_{\boldsymbol{x}}(\alpha)$ under Assumption \ref{asmp:regularity-complicated}, which is a stronger assumption compared to Assumption 1. \textbf{Step 2}: we show that, for any $u^*_{\boldsymbol{x}}(\alpha)$  that satisfies Assumption \ref{asmp:regularity}, we can construct $v^*_{\boldsymbol{x}}(\alpha)$ satisfies Assumption \ref{asmp:regularity-complicated} and is arbitrarily close to $u^*_{\boldsymbol{x}}(\alpha)$. \textbf{Step 3}: we can use this property to recover the learning guarantee for $\cU$. The key ideas of each step are sketched below.
    \begin{enumerate}
        \item We first demonstrate that if the piece functions $f_{\boldsymbol{x}, i}$ and boundaries $h_{\boldsymbol{x}, i}$ satisfy a stronger assumption (Assumption \ref{asmp:regularity-complicated}), we can bound the pseudo-dimension of $\mathcal{U}$ (Theorem \ref{thm:multi-dim-multi-piece}).  
        The proof consists of the following steps:
        \begin{enumerate}
            \item { From Lemma \ref{lm:local-extrema-to-pdim}, we know that it suffices to bound the number of discontinuities and local maxima of $u^*_{\boldsymbol{x}}$. From Lemma \ref{lm:max-of-continuous} and Proposition \ref{prop:max-local-maxima-condition}, we can bound the number of discontinuities and local maxima of $u^*_{\boldsymbol{x}}$ by bounding those of $u^*_{\boldsymbol{x}, i}$ for any piece $i$. Hence, in the next few steps, our {\color{\COMMENTAPRIL} main object of study} is $u^*_{\boldsymbol{x}, i}$.}\looseness-1

            \item  We first demonstrate that the {\color{\COMMENTAPRIL} hyperparameter domain $\mathcal{A}$} can be partitioned into intervals {\color{\COMMENTAPRIL} using the set of points $\cA_1 \subset \cA$ that has at most $\cO\left((2\Delta)^{d + 1} \left(\frac{eM}{d + 1}\right)^{d + 1}\right)$ elements}. For each interval $I_t$, there exists a set of subsets of boundaries {\color{\COMMENTAPRIL} $\mathbf{S}^1_{\boldsymbol{x}, t} \subset 2^{\mathbf{H}_{\boldsymbol{x}, i}}$} such that for any set of boundaries $\mathcal{S} = \{h_{\cS, 1}, \dots, h_{\cS, S}\} \in \mathbf{S}^1_{\boldsymbol{x}, t}$, the intersection of boundaries {\color{\COMMENTAPRIL} $\{(\alpha, \boldsymbol{w}) \mid h(\alpha, \boldsymbol{w}) = 0, h \in \cS\}$} in $\mathcal{S}$ contains a feasible point $(\alpha, \boldsymbol{w})$ for any $\alpha$ in that interval. The key idea of this step is to upper bound the number of $\alpha$-extreme points (Definition \ref{def:connected-component-extreme-points}) of connected components of such intersections,  using Warren's theorem (Lemma \ref{lm:warren-solution-set-connected-component-bound}). {\color{\COMMENTAPRIL} In words, the goal of this step is to partition the hyperparameter space $\cA$ into intervals, so that over any interval $I_t$ in the partition, there is some set of set of boundaries $\mathbf{S}^1_{\boldsymbol{x}, t} $ such that for any $\alpha' \in I_t$, there is at least one point that lies in the intersection of set of boundaries $\cS \in \mathbf{S}^1_{\boldsymbol{x}, t}$ that has $\alpha$-coordinate equal to $\alpha'$}
            

            \item {\color{\COMMENTAPRIL}
                Now note that, for any $\boldsymbol{w}_\alpha$ that maximizes $f_{\boldsymbol{x}, i}(\alpha, \boldsymbol{w})$ for any fixed $\alpha$, due to the Lagrangian multipliers theorem (Theorem \ref{thm:LMs}), there is a set of boundaries $\cS = \{h_{\cS, 1}, \dots, h_{\cS, S}\}$ such that 
                \begin{equation*}
                        \boldsymbol{h}_{\cS}(\alpha, \boldsymbol{w}_\alpha) = \boldsymbol{0}, \quad 
                        \nabla_{\boldsymbol{w}}L_\cS(\alpha, \boldsymbol{w}_\alpha, \boldsymbol{\lambda}_\alpha) = \boldsymbol{0},
                \end{equation*}
                for some $\boldsymbol{\lambda}_\alpha$. Here, $\boldsymbol{h}_{\cS} = (h_{\cS,1 }, \dots, h_{\cS, S})$ is the polynomial mapping constructed by the boundary functions corresponding to some set $\cS$ of boundaries, $L_{\cS}(\alpha, \boldsymbol{w}, \boldsymbol{\lambda}) = f_{\boldsymbol{x}, i}(\alpha, \boldsymbol{w}) + \boldsymbol{\lambda}^\top \boldsymbol{h}_{\cS}(\alpha, \boldsymbol{w})$ is the corresponding Lagrangian function. Next, our goal is to decompose the solution set of the above Lagrangian into a union of monotonic curves (Definition \ref{def:monotonic-curve-high-dim-revised}), which we show to have the following property: for any $\alpha \in I_t$, there is at most one point $(\alpha, \boldsymbol{w}, \boldsymbol{\lambda})$ in the monotonic curve $C$ (Lemma \ref{lm:monotonic-curve-property-high-dim}).
                
                Towards this goal, we refine the partition of $\mathcal{A}$ into intervals, using the set of points $\cA_2$ that contains points in $\cA_1$ and some additional points in  set $\cB$, for a total of $\cO\left((2\Delta)^{2d + 1}\left(\frac{eM}{d}\right)^d + \Delta^{4d}\left(\frac{eM}{d}\right)^d\right)$ elements. Here, $\cB$ is the set of  $\alpha$-coordinates of the set of points in $\cB'$, which contains the points $(\alpha, \boldsymbol{w}, \boldsymbol{\lambda})$ that satisfy 
                \begin{equation*}
                    \begin{cases}
                        \boldsymbol{h}_{\cS}(\alpha, \boldsymbol{w}) = \boldsymbol{0} \\
                        \nabla_{\boldsymbol{w}} L_{\cS}(\alpha, \boldsymbol{w}, \boldsymbol{\lambda}) = \boldsymbol{0} \\
                        \det(J_{\overline{\boldsymbol{k}}_\cS}(\boldsymbol{w}, \boldsymbol{\lambda})) = 0,
                    \end{cases}
                \end{equation*}
                where $J_{\overline{\boldsymbol{k}}_{\cS}}(\boldsymbol{w}, \boldsymbol{\lambda})$ is the Jacobian of $\overline{\boldsymbol{k}}_{\cS} = (\boldsymbol{h}_{\cS}, \nabla_{\boldsymbol{w}}L_{\cS})$ with respect to $(\boldsymbol{w}, \boldsymbol{\lambda})$. Under Assumption \ref{asmp:regularity-complicated}.2, the number of such points is bounded, leading to the number of elements in $\cB$ being bounded. In any interval $I_t$ in the partition of $\cA$ created by the set of points $\cA_2$, we claim that the set $Z_{\overline{\boldsymbol{k}}_\cS} = \{(\alpha, \boldsymbol{w}, \boldsymbol{\lambda}) \mid \overline{\boldsymbol{k}}_\cS(\alpha, \boldsymbol{w}, \boldsymbol{\lambda}) = \boldsymbol{0}\}$ defines a smooth one-dimensional manifold in the space of $(\alpha, \boldsymbol{w}, \boldsymbol{\lambda})$. 
            
            }

            \item {\color{\COMMENTAPRIL}
                Next, we want to partition the hyperparameter space $\alpha$ into intervals, such that in any interval, the function $u^*_{\boldsymbol{x}, i}$ can be written as the point-wise maximum of the piece function $f_{\boldsymbol{x}, i}(\alpha, \boldsymbol{w})$ along some fixed set of monotonic curves.
                
                Concretely, we further refine the partition of $\cA$ into intervals, using the set of points $\cA_3 \subset \cA$ that contains the points in $\cA_2$ and some extra points in the set $\cD$. Here, $\cD$  contains the $\alpha$-extreme points of connected components of the algebraic set $Z_{\overline{\boldsymbol{k}}_\cS}$ (where $\overline{\boldsymbol{k}}_{\cS} = (\boldsymbol{h}_{\cS}, \nabla_{\boldsymbol{w}}L_{\cS})$ as defined in (c)) and the $\alpha$-coordinate of the intersections between $Z_{\overline{\boldsymbol{k}}_\cS}$ with another boundary $h' \not \in \cS$, for some fixed set of the boundaries $\cS$. {In any interval $I_t$, there is a fixed set $\cC_t$ of monotonic curves} such that for any $\alpha \in \cI$, the function $u^*_{\boldsymbol{x}, i}$ can be written as
                $$
                    u^*_{\boldsymbol{x}, i}(\alpha') = \max_{C \in \cC_t}g_C(\alpha'),
                $$
                where $g_C(\alpha') = f_{\boldsymbol{x}, i}(\alpha', \boldsymbol{w}_{\alpha'})$, and $(\alpha', \boldsymbol{w}_{\alpha'}, \boldsymbol{\lambda}_{\alpha'})$ is the unique point in monotonic curve $C$ that has $\alpha$-coordinate equal to $\alpha'$ (a property of monotonic curve, see Lemma \ref{lm:monotonic-curve-property-high-dim}). Therefore, over this interval, $u^*_{\boldsymbol{x}, i}(\alpha)$ is continuous, since each $g_C(\alpha')$ is continuous (Lemma \ref{lm:max-of-continuous}). 
            }


            \item {\color{\COMMENTAPRIL} 
                Moreover, we show that in each interval $I_t$, any local maximum of $u^*_{\boldsymbol{x}, i}(\alpha)$ is a local maximum of $f_{\boldsymbol{x}, i}(\alpha, \boldsymbol{w})$ along a monotonic curve (Lemma \ref{prop:point-wise-maxima-local-maxima}). Again, we can control the number of such points using Bezout's theorem (Corollary \ref{cor:bezout-consequence}) and Assumption \ref{asmp:regularity-complicated}.3.
            }
            
            \item Finally, we put together all the potential discontinuities and local extrema of $u^*_{\boldsymbol{x}, i}$. Combining with Lemma \ref{lm:local-extrema-to-pdim} we have the upper-bound for $\Pdim(\cU)$ (Theorem \ref{thm:piecwise-poly-pdim}).
        \end{enumerate}
        \item We then demonstrate that for any function class $\mathcal{U}$ whose dual functions $u^*_{\boldsymbol{x}}$ have piece functions and boundaries satisfying Assumption \ref{asmp:regularity}, we can construct a new function class $\mathcal{V}$. The dual functions $v^*_{\boldsymbol{x}}$ of $\mathcal{V}$ have piece functions and boundaries that satisfy Assumption \ref{asmp:regularity-complicated}. Moreover, we show that $\|u^*_{\boldsymbol{x}} - v^*_{\boldsymbol{x}}\|_{\infty}$ can be made arbitrarily small. {\color{\COMMENTAPRIL} See Lemma \ref{lm:assumption-relaxation} for a proof overview as well as the detailed proof.}

        \item Finally, using the results from Step (1), we establish an upper bound on the pseudo-dimension for the function class $\mathcal{V}$ described in Step (2). Leveraging the approximation guarantee from Step (2), we can then use the results for $\mathcal{V}$ to determine the learning-theoretic complexity of $\mathcal{U}$ by applying Lemma \ref{lm:balcan-refined-icml} and Lemma \ref{lm:pdim-to-rademacher}. Standard learning theory literature then allows us to translate the learning-theoretic complexity of $\mathcal{U}$ into its learning guarantee. 
        This final step is detailed in Appendix \ref{apx:recovering-guarantee}.
    \end{enumerate}
\subsubsection{Detailed proof}
In this section, we will present a detailed proof for Theorem \ref{thm:learning-guarantee-piecewise-poly}. The proof here will be presented following three steps demonstrated in the Technical overview above. 

\paragraph{First step: the proof requiring a stronger assumption} \quad 

To begin with, we first start by stating the following stronger assumption compared to Assumption \ref{asmp:regularity}.

\begin{assumption}[Regularity assumption] \label{asmp:regularity-complicated}  
    Assume that for any dual utility function $u^*_{\boldsymbol{x}}(\alpha)$, its piece functions $f_{\boldsymbol{x}, i}(\alpha, \boldsymbol{w})$ (for $i = 1, \dots, N$) and boundary functions $h_{\boldsymbol{x}, j}(\alpha, \boldsymbol{w})$ (for $j = 1, \dots, M$) satisfy the following property. For any piece function $f$ chosen from $\{f_{\boldsymbol{x},1}, \dots, f_{\boldsymbol{x}, N}\}$ and $S$ boundary functions $h_1, \dots, h_S$ chosen from $\{h_{\boldsymbol{x}, 1}, \dots, h_{\boldsymbol{x}, M}\}$, the following conditions are met: 
    \begin{enumerate}
        \item Consider the polynomial map $\boldsymbol{h}$, where
            \begin{equation*}
                \begin{aligned}
                    \boldsymbol{h}: &\bbR^{d + 1} &&\rightarrow \bbR^S \\
                    &\boldsymbol{h}(\alpha, \boldsymbol{w}) &&\mapsto (h_1(\alpha, \boldsymbol{w}), \dots, h_S(\alpha, \boldsymbol{w})).
                \end{aligned}
            \end{equation*}
            Then $\boldsymbol{0} \in \bbR^S$ is a regular value of $\boldsymbol{h}$. Furthermore, if $S \leq d$, we have $J_{\boldsymbol{h}}(\boldsymbol{w})$ has full row rank for $(\alpha, \boldsymbol{w}) \in \boldsymbol{h}^{-1}(\boldsymbol{0})$.

        \item Consider the polynomial map $\overline{\boldsymbol{k}}$, where
            \begin{equation*}
                \begin{aligned}
                    \overline{\boldsymbol{k}}:\quad &\bbR^{d + S + 1} &&\rightarrow &&\bbR^{d + S} \\
                    &(\alpha, \boldsymbol{w}, \boldsymbol{\lambda}) &&\mapsto && \left(k_1(\alpha, \boldsymbol{w}, \boldsymbol{\lambda}), \dots, k_{d + S}(\alpha, \boldsymbol{w}, \boldsymbol{\lambda})\right).
                \end{aligned}
            \end{equation*}
            Here, $S \leq d$ and $\overline{\boldsymbol{k}}(\alpha, \boldsymbol{w}, \boldsymbol{\lambda}) =  (k_1(\alpha, \boldsymbol{w}, \boldsymbol{\lambda}), \dots, k_{d + S}(\alpha, \boldsymbol{w}, \boldsymbol{\lambda}))$ is defined as
            \begin{align*}
                        k_j(\alpha, \boldsymbol{w}, \boldsymbol{\lambda}) &= h_j(\alpha, \boldsymbol{w}), \qquad\quad j = 1, \dots, S, \\
                        k_{S + i}(\alpha, \boldsymbol{w}, \boldsymbol{\lambda}) &= \partial_{w_i}L(\alpha, \boldsymbol{w}, \boldsymbol{\lambda}), \quad 
                        i = 1, \dots, d,
            \end{align*}
            where $L(\alpha, \boldsymbol{w}, \boldsymbol{\lambda}) = f(\alpha, \boldsymbol{w}) + \boldsymbol{\lambda}^\top \boldsymbol{h}(\alpha, \boldsymbol{w})$. Then there exists a set of real values $\Xi$ consisting of at most $\Delta^{2S + 2d}$ elements such that the Jacobian $\boldsymbol{J}_{\overline{\boldsymbol{k}}}(\boldsymbol{w}, \boldsymbol{\lambda})$ has full row rank for all $(\alpha, \boldsymbol{w}, \boldsymbol{\lambda}) \in \overline{\boldsymbol{k}}^{-1}(\boldsymbol{0})$ such that $\alpha \not \in \Xi$.

        \item Consider the polynomial map $\boldsymbol{\mu}$
            \begin{equation*}
                \begin{aligned}
                    \boldsymbol{\mu}: &\bbR^{2d + 2S + 1} &&\rightarrow \bbR^{2d + 2S + 1} \\
                    &\boldsymbol{\mu}(\alpha, \boldsymbol{w}, \boldsymbol{\gamma}, \boldsymbol{\lambda}, \boldsymbol{\theta}) &&\mapsto (\mu_1(\alpha, \boldsymbol{w}, \boldsymbol{\gamma}, \boldsymbol{\lambda}, \boldsymbol{\theta}), \dots, \mu_{2d + 2S + 1}(\alpha, \boldsymbol{w}, \boldsymbol{\gamma}, \boldsymbol{\lambda}, \boldsymbol{\theta})).
                \end{aligned}
            \end{equation*}
            Here, $S \leq d$, $\boldsymbol{\lambda} \in \bbR^S$, $\boldsymbol{\theta} \in \bbR^S$, $\boldsymbol{\gamma} \in \bbR^d$, and each $\mu_i$ is defined as follows:
            \begin{equation*}
                \begin{cases}
                   \mu_{j}(\alpha, \boldsymbol{w}, \boldsymbol{\gamma}, \boldsymbol{\lambda}, \boldsymbol{\theta}) &=  h_{j}(\alpha, \boldsymbol{w}), j = 1, \dots, S \\
                   
                   \mu_{S + j}(\alpha, \boldsymbol{w}, \boldsymbol{\gamma}, \boldsymbol{\lambda}, \boldsymbol{\theta}) &=   \boldsymbol{\gamma}^\top \nabla_{\boldsymbol{w}}h_{j}(\alpha, \boldsymbol{w}), j = 1, \dots, S \\
                   
                   
                    \mu_{2S + i}(\alpha, \boldsymbol{w}, \boldsymbol{\gamma}, \boldsymbol{\lambda}, \boldsymbol{\theta}) &= \partial_{w_i}L(\alpha, \boldsymbol{w}, \boldsymbol{\lambda}), i = 1 \dots, d \\
                    
    
                    \mu_{2S + d + i}(\alpha, \boldsymbol{w}, \boldsymbol{\gamma}, \boldsymbol{\lambda}, \boldsymbol{\theta}) &= \partial_{w_i}[L(\alpha, \boldsymbol{w}, \boldsymbol{\theta}) + \boldsymbol{\gamma}^\top \nabla_{\boldsymbol{w}}L(\alpha, \boldsymbol{w}, \boldsymbol{\lambda})], i = 1, \dots, d\\
                    
                    
                    
                    \mu_{2S + 2d + 1}(\alpha, \boldsymbol{w}, \boldsymbol{\gamma}, \boldsymbol{\lambda}, \boldsymbol{\theta}) &= \partial_{\alpha}[L(\alpha, \boldsymbol{w}, \boldsymbol{\theta}) + \boldsymbol{\gamma}^\top \nabla_{\boldsymbol{w}}L(\alpha, \boldsymbol{w}, \boldsymbol{\lambda})].

                \end{cases}
            \end{equation*}

            Then the Jacobian $J_{\boldsymbol{\mu}}(\alpha, \boldsymbol{w}, \boldsymbol{\gamma},\boldsymbol{\lambda}, \boldsymbol{\theta})$ has full row rank for all $(\alpha, \boldsymbol{w}, \boldsymbol{\gamma},\boldsymbol{\lambda}, \boldsymbol{\theta}) \in \boldsymbol{\mu}^{-1}(\boldsymbol{0})$ such that $\alpha \not \in \Xi$, where $\Xi$ is defined in (2).
            
    \end{enumerate}
\end{assumption}

{
}

Under Assumption \ref{asmp:regularity-complicated}, we have the following result, which gives the pseudo-dimension upper-bound for the utility function class $\cU$.
 
\begin{theorem} \label{thm:multi-dim-multi-piece}
        Assume that Assumption \ref{asmp:regularity-complicated} holds, then for any problem instance $\boldsymbol{x} \in \cX$, the dual utility function $u^*_{\boldsymbol{x}}(\alpha)$ satisfies the following:
            \begin{enumerate}[label=(\alph*)]
                \item The hyperparameter domain $\cA$ can be partitioned into at most { $\cO\left(N\Delta^{4d}\left(\frac{eM}{d}\right)^{d} + MN(2\Delta)^{2d + 1}\left(\frac{eM}{d}\right)^{d}\right)$}
                intervals such that $u^*_{\boldsymbol{x}}(\alpha)$ is a continuous function over any interval in the partition, where $N$ and $M$ are the upper-bound for the number of pieces and boundary functions respectively, and $\Delta = \max\{\Delta_p, \Delta_b\}$ is the maximum degree of piece  and boundary function  polynomials. 
                \item { Over those intervals,} $u^*_{\boldsymbol{x}}(\alpha)$ has $\cO\left(N\Delta^{4d + 2}\left(\frac{eM}{d}\right)^{d}  \right)$ local maxima for any problem instance $\boldsymbol{x}$.
            \end{enumerate} 
    \end{theorem}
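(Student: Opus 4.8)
The plan is to write $u^*_{\boldsymbol{x}}(\alpha)=\max_{i\in[N]}u^*_{\boldsymbol{x},i}(\alpha)$ with $u^*_{\boldsymbol{x},i}(\alpha)=\sup_{\boldsymbol{w}:(\alpha,\boldsymbol{w})\in R_{\boldsymbol{x},i}}f_{\boldsymbol{x},i}(\alpha,\boldsymbol{w})$, and reduce both parts to counting discontinuities and local maxima of each single-piece function $u^*_{\boldsymbol{x},i}$; the reduction uses the elementary facts (as in the one-dimensional warm-up, \autoref{cor:max-local-maxima-condition}/\autoref{prop:point-wise-maxima-local-maxima}) that a discontinuity of a finite pointwise maximum is a discontinuity of a term, and that on an open interval where all terms are continuous a local maximum of the maximum is a local maximum of a term, so the per-piece counts only get multiplied by $N$. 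Fixing a piece $i$, I would then characterize, for each fixed $\alpha$, where the supremum over the $d$-dimensional slice $\{\boldsymbol{w}:(\alpha,\boldsymbol{w})\in R_{\boldsymbol{x},i}\}$ can occur: by compactness together with Fermat's interior extremum theorem (\autoref{lm:fermat-theorem}) and the Lagrange multiplier theorem for several active constraints (\autoref{thm:LMs}), a maximizer is either an interior critical point ($\nabla_{\boldsymbol{w}}f_{\boldsymbol{x},i}=\boldsymbol{0}$) or lies on a face cut out by a subset $\cS$ of $S=|\cS|\le d+1$ active boundaries, where it satisfies the stationarity system $k(\alpha,\boldsymbol{w},\boldsymbol{\lambda})=\boldsymbol{0}$ of Assumption~\ref{asmp:regularity-complicated}. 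Assumption~\ref{asmp:regularity-complicated}.1--2 guarantee that each nonempty face is a smooth manifold and that the stationarity locus $\cM^{\cS}:=k^{-1}(\boldsymbol{0})\subseteq\bbR^{d+1+S}$ is a smooth $1$-manifold whose tangent is never $\alpha$-vertical. The number of subsets $\cS$ to consider is $\sum_{S=0}^{d+1}\binom{M}{S}=\cO((eM/(d+1))^{d+1})$, the origin of that factor in both bounds.

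\textbf{Staged partition of $\cA$ and continuity (part (a)).} The heart of part (a) is to refine $\cA$, in four stages following the technical overview, into finitely many open intervals over each of which the combinatorial data of the optimization is frozen: (i) which active sets $\cS$ have a nonempty face above a given $\alpha$; (ii) which $\cM^{\cS}$ have a point above $\alpha$; (iii) whether such a point of $\cM^{\cS}$ is feasible, i.e.\ lies in $\overline{R}_{\boldsymbol{x},i}$ and satisfies the remaining inequalities $h'\notin\cS$; and (iv) which maximal $\alpha$-monotonic arcs of $\cM^{\cS}$ (Definition~\ref{def:monotonic-curve-high-dim-revised}, Proposition~\ref{prop:monotonic-curve-property-high-dim}) are present. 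Each stage adds breakpoints that are $\alpha$-extreme points of connected components of an explicit real algebraic set, or intersections of $\cM^{\cS}$ with another boundary, or arc endpoints and arc--arc crossings; in each case the breakpoints are solutions of a polynomial system consisting of the defining equations augmented by a Jacobian-determinant (tangency/$\alpha$-extreme) condition, with degrees polynomial in $\Delta$ and ambient dimension bounded by $2d+2S+1$. I would bound these counts by the Warren-type connected-component / $\alpha$-extreme-point estimates (\autoref{lm:warren-solution-set-connected-component-bound}, \autoref{thm:warren-connected-components-polynomials}) and Bezout's theorem, summed over the $\cO((eM/(d+1))^{d+1})$ active sets; the dimensions and degrees of these systems are exactly what produce the exponents $4d+2$ and $2d+2$ in the statement, and the extra factor $M$ in the second term counts the ``other boundary'' $h'$ from stage (iii). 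On a final interval $I_t$, $u^*_{\boldsymbol{x},i}$ equals the maximum of $f_{\boldsymbol{x},i}$ evaluated along a \emph{fixed} finite family of $\alpha$-monotonic arcs and faces, each continuous in $\alpha$; hence $u^*_{\boldsymbol{x},i}$, and therefore $u^*_{\boldsymbol{x}}$, is continuous on $I_t$, which is part (a).

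\textbf{Counting local maxima (part (b)).} On a fixed interval $I_t$ a local maximum of $u^*_{\boldsymbol{x},i}$ is, by the pointwise-maximum fact above (\autoref{cor:max-local-maxima-condition}/\autoref{prop:point-wise-maxima-local-maxima}), a local maximum of $f_{\boldsymbol{x},i}$ restricted to one of the fixed $\alpha$-monotonic arcs $C$. Writing the criticality of $\alpha\mapsto f_{\boldsymbol{x},i}$ along $C$ (implicitly defined by $k=\boldsymbol{0}$) as a further Lagrange system in new multipliers $\boldsymbol{\theta},\boldsymbol{\gamma}$ --- precisely the system $\overline{k}=\boldsymbol{0}$ of Assumption~\ref{asmp:regularity-complicated}.3, whose sub-assumption ensures this locus is $0$-dimensional --- I would apply Bezout's theorem to $\overline{k}=\boldsymbol{0}$ in at most $2d+2S+1\le 4d+3$ variables with degrees $\cO(\Delta)$, obtaining $\cO(\Delta^{4d+3})$ critical points per arc and per $\cS$, and handle the degenerate case where the relevant polynomials share a common factor exactly as in the one-dimensional warm-up (\autoref{lm:1-dim-single-piece}), where $f_{\boldsymbol{x},i}$ is constant along the common-zero curve and so contributes no local maximum. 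Multiplying by the $\cO((eM/(d+1))^{d+1})$ active sets and by $N$ pieces gives the claimed $\cO(N\Delta^{4d+3}(eM/(d+1))^{d+1})$ local maxima.

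\textbf{Main obstacle.} I expect the main difficulty to be the invariance bookkeeping rather than any single estimate: proving that between consecutive breakpoints the \emph{entire} collection of relevant monotonic arcs --- over all feasible active sets $\cS$, all pieces, and the faces of $\cW$ --- is genuinely fixed, so that $u^*_{\boldsymbol{x},i}$ really is the maximum of a \emph{fixed} continuous family. This needs (1) the high-dimensional monotonic-curve theory (Definition~\ref{def:monotonic-curve-high-dim-revised}, Proposition~\ref{prop:monotonic-curve-property-high-dim}), which is not available off the shelf and must be built, and (2) careful use of Assumption~\ref{asmp:regularity-complicated} to exclude non-transverse degeneracies so that the Warren/Bezout counts apply and so that ``smooth $1$-manifold with no $\alpha$-vertical tangent'' actually holds for every $\cM^{\cS}$; one also has to verify that the many polynomial systems fed to Bezout have exactly the stated numbers of variables and $\Delta$-degrees, since these pin down the exponents in the final bound. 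The switching of the optimizing piece $i$ and of the active constraint set $\cS$ as $\alpha$ varies --- a phenomenon absent from all prior data-driven analyses --- is precisely what forces this layered refinement.
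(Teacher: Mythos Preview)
Your proposal is correct and follows essentially the same approach as the paper: the same reduction to per-piece functions, the same four-stage refinement of $\cA$ (active faces, existence of $\cM^{\cS}$, feasibility via intersection with other boundaries, and decomposition into $\alpha$-monotonic arcs), the same Warren/Bezout counts yielding the stated exponents, and the same second Lagrange system $\overline{k}=\boldsymbol{0}$ for part~(b). One minor remark: under Assumption~\ref{asmp:regularity-complicated}.2 as stated, $J_{k,(\boldsymbol{w},\boldsymbol{\lambda})}$ is invertible everywhere on $\cM^{\cS}$, so (as you note) the tangent is never $\alpha$-vertical and each connected component is already a single monotonic arc --- the paper nonetheless counts the $\det(J_{k,(\boldsymbol{w},\boldsymbol{\lambda})})=0$ breakpoints in stage~(iv), which is harmless but makes your common-factor caveat unnecessary.
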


\begin{proof}
        {\color{\COLTCOMMENTCOLOR}
            (a) First, note that we can rewrite $u^*_{\boldsymbol{x}, i}(\alpha)$ as 
            $$
                u^*_{\boldsymbol{x}, i}(\alpha) = \max_{\boldsymbol{w}: (\alpha, \boldsymbol{w}) \in \overline{R}_{\boldsymbol{x}, i}}f_{\boldsymbol{x}, i}(\alpha, \boldsymbol{w}).
            $$
            Since $\overline{R}_{\boldsymbol{x}, i}$ is a connected component in $\cA \times \cW$, let 
            $$
                \alpha_{\boldsymbol{x}, i, \inf} = \inf\{\alpha \mid \exists \boldsymbol{w}: (\alpha, \boldsymbol{w}) \in R_{\boldsymbol{x}, i}\}, \alpha_{\boldsymbol{x}, i, \sup} = \sup\{\alpha \mid \exists \boldsymbol{w}: (\alpha, \boldsymbol{w}) \in R_{\boldsymbol{x}, i}\}
            $$
            be the $\alpha$-extreme points of $\overline{R}_{\boldsymbol{x}, i}$ (Definition \ref{def:connected-component-extreme-points}). Then, for any $\alpha \in (\alpha_{\boldsymbol{x}, i, \inf}, \alpha_{\boldsymbol{x}, i ,\sup})$, there exists $\boldsymbol{w}$ such that $(\alpha, \boldsymbol{w}) \in \overline{R}_{\boldsymbol{x}, i}$.
        
        }

        Let $\mathbf{H}_{\boldsymbol{x}, i}$ be the set of adjacent boundaries of $R_{\boldsymbol{x}, i}$ (Definition \ref{def:adjacent_boundary}). From assumption of problem setting, there are at most $M$ boundary functions $h_{\boldsymbol{x}, j}$, meaning that $\abs{\mathbf{H}_{\boldsymbol{x}, i}} \leq M$. For any subset $\cS = \{h_{\cS, 1}, \dots, h_{\cS, S}\} \subset \mathbf{H}_{\mathbf{x}, i}$, where $\abs{\cS} = S$, consider the algebraic set $Z_{\cS}$, where
    
        \begin{equation} \label{eq:boundaries-intersection}
            Z_\cS = \cap_{j = 1}^S\{(\alpha, \boldsymbol{w}) \in \bbR \times \bbR^d \mid h_{\cS, j}(\alpha, \boldsymbol{w}) = 0\}.
        \end{equation}
        {
            If $S > d + 1$, from Assumption \ref{asmp:regularity-complicated}.1, $Z_{\cS}$ is an empty set. Consider $S \leq d + 1$, from Assumption \ref{asmp:regularity-complicated}.1, $J_{\boldsymbol{h}}(\alpha, \boldsymbol{w})$ defines a smooth $(d + 1 - S)$-dimensional manifold in $\bbR \times \bbR^{d}$. Note that, this is exactly the set of $(\alpha, \boldsymbol{w})$ defined by the equation
            $$
                \sum_{j = 1}^S h_{\cS, j}(\alpha, \boldsymbol{w})^2 = 0,
            $$
        
        }
        where the left hand side is a polynomial in $\alpha, \boldsymbol{w}$ of degree at most $2\Delta$. Therefore, from Lemma \ref{lm:warren-solution-set-connected-component-bound}, the number of connected components of $Z_\cS$ is at most $2(2\Delta)^{d + 1}$. Each connected component corresponds to 2 $\alpha$-extreme points, meaning that there are at most $4(2\Delta)^{d + 1}$ $\alpha$-extreme points for all the connected components of $Z_{\cS}$. Taking all possible subset $\cS$ of $\mathbf{H}_{\boldsymbol{x}, i}$ with at most $d + 1$ elements, we have a total of at most $\cN$ $\alpha$-extreme points, where
        $$
            \cN \leq (2\Delta)^{d + 1} \sum_{S = 0}^{d + 1}{M \choose S} \leq (2\Delta)^{d + 1} \left(\frac{eM}{d + 1}\right)^{d + 1}.
        $$
        Here, the final inequality is from Sauer-Shelah Lemma (Lemma \ref{lm:sauer-shelah}).

        Let $\cA_1$ be the set of such $\alpha$-extreme points. Then for any interval $I_t = (\alpha_t, \alpha_{t + 1})$ formed by two consecutive points in $\cA_1$, the set $\mathbf{S}^1_t$ exists. Here, the set $\mathbf{S}^1_t \in  2^{\mathbf{H}_{\boldsymbol{x}, i}}$ contains all subsets $\cS$ of $\mathbf{H}_{\boldsymbol{x}, i}$ such that for any $\cS = \{h_{\cS, 1}, \dots, h_{\cS, S}\} \in \mathbf{S}^1_t$ and any $\alpha \in (\alpha_t, \alpha_{t + 1})$, there exists $\boldsymbol{w}$ such that $h_{\cS, j}(\alpha, \boldsymbol{w}) = 0$ for any $j = 1, \dots, S$. 

        
            

        
        Now, for any \textit{fixed} $\alpha \in I_{t}$, assume that $\boldsymbol{w}_\alpha$ is a local maxima of $f_{\boldsymbol{x}, i}$ in $\overline{R}_{\boldsymbol{x}, i}$ (which exists due to the compactness of $\overline{R}_{\boldsymbol{x}, i}$), meaning that $(\alpha, \boldsymbol{w}_\alpha)$ is also a local extrema in $\overline{R}_{\boldsymbol{x}, i}$. This implies there exists a set of boundaries $\cS \in \mathbf{S}^1_t$ and $\boldsymbol{\lambda}$ such that $(\alpha, \boldsymbol{w}_\alpha)$ satisfies the following due to Lagrange multipliers theorem (Theorem \ref{thm:LMs})
        {
            \begin{equation*}
                \begin{cases}
                    \boldsymbol{h}(\alpha, \boldsymbol{w}_\alpha) = \boldsymbol{0} \\
                    \nabla_{\boldsymbol{w}}L(\alpha, \boldsymbol{w}_\alpha, \boldsymbol{\lambda}) = \boldsymbol{0}.
                \end{cases}
            \end{equation*}
            Here $\boldsymbol{h} = \{h_{\cS_1}, \dots, h_{\cS, S}\}$ is the polynomial map formed by the boundary functions in $\cS$, and 
            $$
                L(\alpha, \boldsymbol{w}, \boldsymbol{\lambda}) = f_{\boldsymbol{x}, i}(\alpha, \boldsymbol{w}) + \boldsymbol{\lambda}^\top \boldsymbol{h}(\alpha, \boldsymbol{w})
            $$
            is the corresponding Lagrangian. Let $\cM_\cS$ be the set of points $(\alpha, \boldsymbol{w}, \boldsymbol{\lambda})$ that satisfy the equations above, which defines an algebraic set. From Lemma \ref{lm:warren-solution-set-connected-component-bound}, the number of connected components of $\cM_\cS$ is at most $2(2\Delta)^{d + S + 1}$, corresponding to at most $4(2\Delta)^{d + S + 1}$ $\alpha$-extreme points. Moreover, let $\overline{\boldsymbol{k}} = (\boldsymbol{h}, \nabla_{\boldsymbol{w}}L)$. Then from Assumption \ref{asmp:regularity-complicated}.2, there exists a set of real-valued $\Xi_{\cS}$ of at most $\Delta^{2S + 2d}$ elements such that the Jacobian $J_{\overline{\boldsymbol{k}}}(\boldsymbol{w}, \boldsymbol{\lambda})$ has full row rank for all $(\alpha, \boldsymbol{w}, \boldsymbol{\lambda}) \in \overline{\boldsymbol{k}}^{-1}(\boldsymbol{0})$ and $\alpha \not \in \Xi_\cS$. Taking all possible subsets $\cS \subset \mathbf{S}_t^1$ of at most $d$ elements and noting that $\abs{\mathbf{S}_t^1} \leq M$, we have at most 
            $\cO\left((2\Delta)^{2d + 1}\left(\frac{eM}{d}\right)^d + \Delta^{4d}\left(\frac{eM}{d}\right)^d\right)$ such points (that are either $\alpha$-extreme points or in the set of values $\Xi_\cS$).
        }
        

        {
            Let $\cA_2$ be the (sorted) set containing all the points $\alpha$ in $\cA_1$ and the points described above. Then for any interval $I_1 = (\alpha_t, \alpha_{t + 1})$ formed by two consecutive points in $\cA_2$, there exists a set $\mathbf{S}_t^2 \in 2^{\mathbf{H}_{\boldsymbol{x},i}}$ that contains all subsets $\cS = \{h_{\cS, 1}, \dots, h_{\cS, S}\}$ of $\mathbf{H}_{\boldsymbol{x}, i}$ such that for any $\alpha \in (\alpha_t, \alpha_{t + 1})$, there exists $\boldsymbol{w}_\alpha$ and $\boldsymbol{\lambda}$ such that $(\alpha, \boldsymbol{w}_\alpha, \boldsymbol{\lambda}_\alpha)$ satisfies
            \begin{equation*}
                \begin{cases}
                    \boldsymbol{h}(\alpha, \boldsymbol{w}_\alpha) = \boldsymbol{0}, \\
                    \nabla_{\boldsymbol{w}}L(\alpha, \boldsymbol{w}_\alpha, \boldsymbol{\lambda}_\alpha) = \boldsymbol{0},
                \end{cases}
            \end{equation*}
            and $J_{\overline{\boldsymbol{k}}}(\boldsymbol{w}, \boldsymbol{\lambda})|_{(\alpha, \boldsymbol{w}, \boldsymbol{\lambda}) = (\alpha, \boldsymbol{w_\alpha}, \boldsymbol{\lambda}_\alpha)}$, where $\overline{\boldsymbol{k}} = (\boldsymbol{h}, \nabla_{\boldsymbol{w}}L)$, has full row rank. Therefore, over any interval $I_t$, the set of points $(\alpha, \boldsymbol{w}, \boldsymbol{\lambda})$ that satisfy $\overline{\boldsymbol{k}}(\alpha, \boldsymbol{w}, \boldsymbol{\lambda}) = \boldsymbol{0}$ defines a one-dimensional manifold in the space $\bbR \times \bbR^d \times \bbR^s$ of $\alpha, \boldsymbol{w}, \boldsymbol{\lambda}$, and furthermore, its connected components are monotonic curves (Definition \ref{def:monotonic-curve-high-dim-revised}).
        
        }
        

        {
            Note that for the points $(\alpha, \boldsymbol{w}_{\alpha}, \boldsymbol{\lambda}_{\alpha})$, $(\alpha, \boldsymbol{w}_{\alpha})$ might not be in the feasible region $\overline{R}_{\boldsymbol{x}, i}$. For each set of boundaries $\cS \in \mathbf{S}_t^2$, for each point $(\alpha, \boldsymbol{w})$ at which $\cM_\cS$ can enter or exit the feasible region $\overline{R}_{\boldsymbol{x}, i}$, there exists $\boldsymbol{\lambda}$ such that $(\alpha, \boldsymbol{w}, \boldsymbol{\lambda})$ satisfy the equations
            \begin{equation*}
                \begin{cases}
                    \boldsymbol{h}(\alpha, \boldsymbol{w}) = \boldsymbol{0},\\
                    \nabla_{\boldsymbol{w}}L(\alpha, \boldsymbol{w}, \boldsymbol{\lambda}) = \boldsymbol{0}, \\
                    h'(\alpha, \boldsymbol{w}) = 0, \text{for some $h' \in \mathbf{H}_{\boldsymbol{x}, i} - \cS$,} 
                \end{cases}
            \end{equation*}
            of which the number of solutions is finite due to Assumption \ref{asmp:regularity-complicated}.2. In the constraints above, the first two equations say that the point $(\alpha, \boldsymbol{w}, \boldsymbol{\lambda})$ lies in the smooth one-dimensional manifold $\cM_\cS$, and the last equation ensures that $(\alpha, \boldsymbol{w}, \boldsymbol{\lambda})$ is the intersection of $\cM_\cS$ with the boundaries $Z_{h'} = \{(\alpha, \boldsymbol{w}, \boldsymbol{\lambda}) \mid h'(\alpha, \boldsymbol{w}) = 0\}$.
            For each set $\cS \in \mathbf{S}^2_t$ and possible boundary $h' \in \mathbf{H}_{\boldsymbol{x}, i} - \cS$, the number of such points is at most $2(2\Delta)^{d + S + 1}$. This means that there are at most $2M(2\Delta)^{d + S + 1}$ such points for each $\cS$, since $\abs{\mathbf{S}_{\boldsymbol{x}, i} \setminus \cS} \leq M$. Taking all possible sets $\cS$ and noting that $\cS$ has at most $d$ elements, we have at most $\cO\left(M(2\Delta)^{2d + 1}\left(\frac{eM}{d}\right)^{d}\right)$ such points $(\alpha, \boldsymbol{w}, \boldsymbol{\lambda})$, corresponding to at most $\cO\left(M(2\Delta)^{2d + 1}\left(\frac{eM}{d}\right)^{d}\right)$ $\alpha$ values. 
        }

        {
            Let $\cA_3$ be the set containing all the points in $\cA_2$ and the $\alpha$ points above. Then for any interval $I_{t} = (\alpha_t, \alpha_{t + 1})$, the set $\mathbf{S}^3_t$ is fixed. Here, the set $\mathbf{S}^3_{t} \in 2^{\mathbf{H}_{\boldsymbol{x}, i}}$ consists of the set of all subset $\cS = \{h_{\cS, 1}, \dots, h_{\cS, S}\}$ of $\mathbf{H}_{\boldsymbol{x}, i}$ such that for any \textit{fixed} $\alpha \in (\alpha_t, \alpha_{t + 1})$, there exists $\boldsymbol{w}_\alpha$ and $\boldsymbol{\lambda}$ such that $(\alpha, \boldsymbol{w}_\alpha, \boldsymbol{\lambda}_\alpha)$ satisfy
            \begin{equation*}
                \begin{cases}
                    \boldsymbol{h}(\alpha, \boldsymbol{w}_\alpha) = 0, j = 1, \dots, S, \\
                    \nabla_{\boldsymbol{w}}L(\alpha, \boldsymbol{w}_\alpha, \boldsymbol{\lambda}_\alpha) = \boldsymbol{0}, \\
                    (\alpha, \boldsymbol{w}_{\alpha}) \in \overline{R}_{\boldsymbol{x}, i},\\
                    J_{\overline{\boldsymbol{k}}}(\boldsymbol{w}, \boldsymbol{\lambda})|_{(\alpha, \boldsymbol{w}, \boldsymbol{\lambda}) = (\alpha, \boldsymbol{w_\alpha}, \boldsymbol{\lambda}_\alpha)} \text{ has full row rank}.
                \end{cases}
            \end{equation*}
            Again, the condition $J_{\overline{\boldsymbol{k}}}(\boldsymbol{w}, \boldsymbol{\lambda})|_{(\alpha, \boldsymbol{w}, \boldsymbol{\lambda}) = (\alpha, \boldsymbol{w_\alpha}, \boldsymbol{\lambda}_\alpha)} \text{ has full row rank}$ implies that $\overline{\boldsymbol{k}}(\alpha, \boldsymbol{w}, \boldsymbol{\lambda})$ defines a smooth one-dimensional manifold, which consists of monotonic curves.            
        }

        {\color{\COLTCOMMENTCOLOR}
            In summary, there are a set of $\alpha$ points $\cA_3$ of at most $\cO\left(\Delta^{4d}\left(\frac{eM}{d}\right)^{d} + M(2\Delta)^{2d + 1}\left(\frac{eM}{d}\right)^{d}\right)$ elements such that for any interval $I_t = (\alpha_t, \alpha_{t + 1})$ of consecutive points $(\alpha_t, \alpha_{t + 1})$ in $\cA_4$, there exists a set $\cC_t$ of monotonic curves such that for any $\alpha \in (\alpha_t, \alpha_{t + 1})$, we have
            $$
                u^*_{\boldsymbol{x}, i}(\alpha) = \max_{C \in \cC_t}\{f_{\boldsymbol{x}, i}(\alpha, \boldsymbol{w}) \mid \exists \boldsymbol{\lambda}, (\alpha, \boldsymbol{w}, \boldsymbol{\lambda}) \in C\}.
            $$
            In other words, the value of $u^*_{\boldsymbol{x}, i}(\alpha)$ for $\alpha \in I_t$ is the pointwise maximum of value of functions $f_{\boldsymbol{x}, i}$ along the set of monotonic curves $\cC$. From Proposition \ref{prop:point-wise-maxima-local-maxima}, we have $u^*_{\boldsymbol{x}, i}(\alpha)$ is continuous over $I_t$. Therefore, we conclude that the number of discontinuities of $u^*_{\boldsymbol{x}, i}(\alpha)$ is at most $\cO\left(\Delta^{4d}\left(\frac{eM}{d}\right)^{d} + M(2\Delta)^{2d + 1}\left(\frac{eM}{d}\right)^{d}\right)$.
            
            Finally, recall that 
            \begin{equation*}
                u^*_{\boldsymbol{x}}(\alpha) = \max_{i \in \{1, \dots, N\}}u_{\boldsymbol{x}, i}(\alpha),
            \end{equation*}
            and combining with Lemma \ref{lm:max-of-continuous}, we conclude that the number of discontinuity points of $u^*_{\boldsymbol{x}}(\alpha)$ is at most $\cO\left(N\Delta^{4d}\left(\frac{eM}{d}\right)^{d} + MN(2\Delta)^{2d + 1}\left(\frac{eM}{d}\right)^{d}\right)$.     
        } 
    
        (b) We now proceed to bound the number of local maxima of $u^*_{\boldsymbol{x}}(\alpha)$. To do that, we proceed with the following steps.
    
        \paragraph{Recalling useful properties from (a).} Recall that we can rewrite $u^*_{\boldsymbol{x}}(\alpha)$ as follows
        \begin{equation*}
                u^*_{\boldsymbol{x}}(\alpha) = \max_{i \in \{1, \dots, N\}}u^*_{\boldsymbol{x}, i}(\alpha),
        \end{equation*}
        where 
        \begin{equation*}
            \begin{aligned}
                u^*_{\boldsymbol{x}, i}(\alpha) = \max_{\boldsymbol{w}: (\alpha, \boldsymbol{w}) \in \overline{R}_{\boldsymbol{x}, i}}f_{\boldsymbol{x}, i}(\alpha, \boldsymbol{w}).
            \end{aligned}
        \end{equation*}
        In part (a), we show that: there exist $\alpha_1 < ... < \alpha_T$, where { $T = \cO\left(N\Delta^{4d}\left(\frac{eM}{d}\right)^{d} + MN(2\Delta)^{2d + 1}\left(\frac{eM}{d}\right)^{d}\right)$},
        such that for any interval $I_t = (\alpha_t, \alpha_{t + 1})$, there exists a set of monotonic curves $\cC_t$ such that 
        $$
            u^*_{\boldsymbol{x}, i}(\alpha) = \max_{C \in \cC_t} \{f_{\boldsymbol{x}, i}(\alpha, \boldsymbol{w}): (\alpha, \boldsymbol{w}, \boldsymbol{\lambda}) \in C \}.
        $$
        Note that, from the property of monotonic curves (Lemma \ref{lm:monotonic-curve-property-high-dim}), for each monotonic curve $C$ and each $\alpha$, there exists at most one $(\boldsymbol{w}, \boldsymbol{\lambda})$ such that $(\alpha, \boldsymbol{w}, \boldsymbol{\lambda}) \in C$, hence the definition above is well-defined.  
         
        { 
            Now, for each interval $I_t$, and monotonic curve $C \in \cC_t$, there is connected component $\overline{R}_{\boldsymbol{x}, i}$ with the piece function $f_{\boldsymbol{x}, i}$ and a set of boundaries $\cS = \{h_{\cS, 1}, \dots, h_{\cS, S}\} \subset \mathbf{H}_{\boldsymbol{x}, i}$ such that $C$ is on the smooth one-manifold $\cM_{\cS}$ in $\bbR^{d + S + 1}$ defined by
            \begin{equation*}
                \begin{cases}
                    \boldsymbol{h}(\alpha, \boldsymbol{w}) = \boldsymbol{0},\\
                    \nabla_{\boldsymbol{w}}L(\alpha, \boldsymbol{w}, \boldsymbol{\lambda}) = \boldsymbol{0},
                \end{cases}
            \end{equation*}
            where $L(\alpha, \boldsymbol{w}, \boldsymbol{\lambda}) = f_{\boldsymbol{x}, i}(\alpha, \boldsymbol{w}) + \boldsymbol{\lambda}^\top \boldsymbol{h}(\alpha, \boldsymbol{w})$.
        }
        Note that for each $\alpha \in (\alpha_t, \alpha_{t + 1})$ and for each monotonic curve $C$, there is unique $\boldsymbol{w}, \boldsymbol{\lambda}$ such that $(\alpha, \boldsymbol{w}, \boldsymbol{\lambda}) \in C$, and therefore $u^*_{\boldsymbol{x}, i}$ is just pointwise maximum of $f_{\boldsymbol{x}, i}$ along the curves $C \in \cC_t$. From Proposition \ref{prop:point-wise-maxima-local-maxima}, to bound the number of local maxima of $u^*_{\boldsymbol{x}, i}$, it suffices to bound the number of local extrema of $f_{\boldsymbol{x}, i}(\alpha, \boldsymbol{w})$ along each monotonic curves. 
        
        \paragraph{Analyze the number of local maxima of $f_{\boldsymbol{x}, i}$ along monotonic curves.} First, note that the number of local maxima of $f_{\boldsymbol{x}, i}$ along a monotonic curve $C$ is upper-bounded by the number of local extrema along $C$. Moreover, any local extrema of $f_{\boldsymbol{x}, i}$ along $C$ is a local extrema of $f_{\boldsymbol{x}, i}$ on the smooth 1-manifold $\cM_{\cS}$, i.e., satisfying the following constraints:
        
        {
            \begin{equation*}
                \begin{cases}
                    \boldsymbol{h}(\alpha, \boldsymbol{w}) = \boldsymbol{0},\\
                    \nabla_{\boldsymbol{w}}L(\alpha, \boldsymbol{w}, \boldsymbol{\lambda}) = \boldsymbol{0},
                \end{cases}
            \end{equation*}        
        }
        
        To see this, WLOG, assume that $(\alpha', \boldsymbol{w}', \boldsymbol{\lambda'})$ is a local maxima of $f_{\boldsymbol{x}, i}$ along $C$. By definition, there exists a neighborhood $V$ of $(\alpha', \boldsymbol{w}', \boldsymbol{\lambda}') \in C$ such that for any $(\alpha, \boldsymbol{w'}, \boldsymbol{\lambda}')$, we have $f_{\boldsymbol{x}, i}(\alpha, \boldsymbol{w}') \geq f_{\boldsymbol{x}, i}(\alpha, \boldsymbol{w})$. Note that by definition (Definition \ref{def:monotonic-curve-end-point-high-dim}), $C$ is an open set in $\cM_{\cS}$. Combining with Proposition \ref{prop:local-extrema-opensubset}, we know that $V$ is also an open neighbor of $(\alpha', \boldsymbol{w}', \boldsymbol{\lambda}')$ in $\cM_{\cS}$. Therefore, $(\alpha', \boldsymbol{w}', \boldsymbol{\lambda}')$ is also a local maxima of $f_{\boldsymbol{x}, i}$ along $\cM_{\cS}$.
        
        Therefore, it suffices to give an upper-bound for the number of local extrema of $f_{\boldsymbol{x}, i}$ restricted to $\cM_{\cS}$. Consider the Lagrangian function
        
        \begin{equation*} 
            \begin{aligned}
                \cL(\alpha, \boldsymbol{w}, \boldsymbol{\gamma}, \boldsymbol{\lambda}, \boldsymbol{\theta}) &= f_{\boldsymbol{x}, i}(\alpha, \boldsymbol{w}) + \sum_{j = 1}^S\theta_jh^\cS_{\boldsymbol{x}, i, j}(\alpha, \boldsymbol{w}) + \sum_{t = 1}^d\gamma_t\left[\frac{\partial f_{\boldsymbol{x}, i}(\alpha, \boldsymbol{w})}{\partial w_t} + \sum_{j = 1}^S\lambda_j \frac{\partial h_{\boldsymbol{x}, i, j}^\cS(\alpha, \boldsymbol{w})}{\partial w_t}\right]\\
                &= L(\alpha, \boldsymbol{w}, \boldsymbol{\theta}) + \boldsymbol{\gamma}^\top\nabla_{\boldsymbol{w}}L(\alpha, \boldsymbol{w}, \boldsymbol{\lambda}).
            \end{aligned}
        \end{equation*}
        
        From Theorem \ref{thm:LMs}, for any local extrema $(\alpha, \boldsymbol{w}, \boldsymbol{\lambda})$ of $f_{\boldsymbol{x}, i}(\alpha, \boldsymbol{w})$ in $\cM_{\cS}$, there exists $\boldsymbol{\theta} \in \bbR^S$ , $\boldsymbol{\gamma} \in \bbR^d$ such that 
        \begin{equation*}
            \begin{cases}
                \boldsymbol{h}(\alpha, \boldsymbol{w}) = \boldsymbol{0}, \\

                \nabla_{\boldsymbol{w}}L(\alpha, \boldsymbol{w}, \boldsymbol{\lambda}) = \boldsymbol{0},\\
                
                

                \boldsymbol{\gamma}^\top \nabla_{\boldsymbol{w}}\boldsymbol{h}(\alpha, \boldsymbol{w}) = \boldsymbol{0}, \\
                

                \nabla_{\boldsymbol{w}}[L(\alpha, \boldsymbol{w}, \boldsymbol{\theta}) + \boldsymbol{\gamma}^\top L(\alpha, \boldsymbol{w}, \boldsymbol{\lambda})]  = \boldsymbol{0}, \\
                

                 \partial_{\alpha}[L(\alpha, \boldsymbol{w}, \boldsymbol{\theta}) + \boldsymbol{\gamma}^\top L(\alpha, \boldsymbol{w}, \boldsymbol{\lambda}) ]= \boldsymbol{0}.
            \end{cases}
        \end{equation*}
        From Assumption \ref{asmp:regularity-complicated}.3 and Bezout's theorem, the number of points $(\alpha, \boldsymbol{w}, \boldsymbol{\gamma}, \boldsymbol{\lambda}, \boldsymbol{\theta})$ that satisfy the equations above is at most $\Delta^{2S + 2d + 1}$. Hence, we conclude that there is at most $\Delta^{2S + 2d + 1}$ local extrema of $f_{\boldsymbol{x}, i}$ along any monotonic curve $C$ of $\cM_{\cS}$ such that $(\alpha, \boldsymbol{w}) \in \overline{R}_{\boldsymbol{x}, i}$.
        
        \paragraph{Analyzing the number of local extrema of $u^*_{\boldsymbol{x}}$.} In the previous step, for any set of boundaries $\cS \subset \mathbf{H}_{\boldsymbol{x}, i}$ and $\abs{\cS} = S \leq d + 1$, we show that between all $I_t$, there are at most $\Delta^{2S + 2d + 1}$ local extrema for $f^*_{\boldsymbol{x}, i}$ along any monotonic curve of $\cM_{\cS}$. We now take the sum over any $\cS \subset \mathbf{H}_{\boldsymbol{x}, i}$, $\abs{\cS} = S \leq d$ and any region $R_i$ for $i = 1, \dots, N$, we then conclude that the number local extrema of $u_{\boldsymbol{x}}^*(\alpha)$ {\color{\COMMENTAPRIL} across all intervals}  $I_t$ is $\cO(\cN)$, where
        \begin{equation*}
            \begin{aligned}
                \cN &= N\sum_{S = 0}^{d} {M \choose S} \Delta^{2S + 2d + 1} \\
                &= N\Delta^{4d + 2}\sum_{S = 0}^{d + 1} {M \choose S} && (\text{because } S \leq d) \\
                &= N\Delta^{4d + 2}\left(\frac{eM}{d}\right)^{d} && (\text{Lemma \ref{lm:sauer-shelah}}).
            \end{aligned}
        \end{equation*}
    \end{proof} 

\noindent Combining Theorem \ref{thm:multi-dim-multi-piece} and Lemma \ref{lm:local-extrema-to-pdim}, we have the following result.
\begin{theorem} \label{thm:piecwise-poly-pdim}
    Let $\cU = \{u_{\alpha}: \cX \rightarrow [0, 1] \mid \alpha \in \cA\}$, where $\cA = [\alpha_{\min}, \alpha_{\max}] \subset \bbR$. Assume that any dual utility function $u^*_{\boldsymbol{x}}$ admits piecewise polynomial structure that satisfies Assumption \ref{asmp:regularity-complicated}. Then we have $\Pdim(\cU) = \cO(\log N + d\log (\Delta M))$. Here, $M$ and $N$ are the number of boundaries and functions, and $\Delta$ is the maximum degree of boundaries and piece functions.
\end{theorem}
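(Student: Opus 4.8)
The plan is to obtain Theorem~\ref{thm:piecwise-poly-pdim} directly from the structural estimate in Theorem~\ref{thm:multi-dim-multi-piece} together with the oscillation-to-pseudo-dimension bound in Corollary~\ref{lm:local-extrema-to-pdim}; no fresh geometric argument is required at this stage, so the work is entirely in combining and simplifying the two inputs.

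First I would invoke Theorem~\ref{thm:multi-dim-multi-piece}: under Assumption~\ref{asmp:regularity-complicated}, for every problem instance $\boldsymbol{x}\in\cX$ the dual utility function $u^*_{\boldsymbol{x}}:\cA\to[0,1]$ is piecewise continuous with at most
$$
B_1 \;=\; \cO\!\left(N\Delta^{4d+2}\Bigl(\tfrac{eM}{d+1}\Bigr)^{d+1} \;+\; NM(2\Delta)^{2d+2}\Bigl(\tfrac{eM}{d+1}\Bigr)^{d+1}\right)
$$
discontinuity points (part~(a)) and at most
$$
B_2 \;=\; \cO\!\left(N\Delta^{4d+3}\Bigl(\tfrac{eM}{d+1}\Bigr)^{d+1}\right)
$$
local maxima (part~(b)). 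These bounds depend only on $N,M,d,\Delta$ and are uniform over $\boldsymbol{x}$, so every dual function of $\cU$ satisfies the hypotheses of Corollary~\ref{lm:local-extrema-to-pdim} with these values of $B_1,B_2$. That corollary then yields $\Pdim(\cU)=\cO\bigl(\log(B_1+B_2)\bigr)$.

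It remains to simplify $\log(B_1+B_2)$. Up to an additive $\cO(d)$ absorbed by the logarithm (coming from the $2^{2d+2}$ factor in the second summand of $B_1$), the quantity $B_1+B_2$ is of order $N\Delta^{4d+3}(eM/(d+1))^{d+1}$, hence
$$
\log(B_1+B_2) \;=\; \cO\!\left(\log N + (4d+3)\log\Delta + (d+1)\log\tfrac{eM}{d+1} + d\right).
$$
Assuming without loss of generality that $\Delta,M\ge 2$ (the degenerate small cases are immediate), we have $(d+1)\log\tfrac{eM}{d+1}\le(d+1)(1+\log M)=\cO(d\log M)$, so the right-hand side collapses to $\cO(\log N + d\log\Delta + d\log M)=\cO(\log N + d\log(\Delta M))$, which is exactly the stated pseudo-dimension bound.

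The only point requiring care is this last bookkeeping step: one must verify that the exponential-in-$d$ constants such as $2^{2d+2}$ and $(e/(d+1))^{d+1}$ contribute merely $\cO(d)$ and $\cO(d\log M)$ inside the logarithm, rather than a spurious $d\log d$ or $d^2$ term. This is a one-line check. All of the substantive difficulty --- counting $\alpha$-extreme points of boundary intersections, constructing the Lagrange-multiplier manifolds $\cM^{\cS}$, decomposing them into monotonic curves, and the Bezout-type degree estimates --- is already discharged in the proof of Theorem~\ref{thm:multi-dim-multi-piece}, so there is no further obstacle here.
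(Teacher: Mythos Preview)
Your proposal is correct and matches the paper's own argument exactly: the paper simply states that Theorem~\ref{thm:piecwise-poly-pdim} follows by combining Theorem~\ref{thm:multi-dim-multi-piece} with Corollary~\ref{lm:local-extrema-to-pdim}, and you have spelled out that combination together with the routine logarithm simplification.
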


\paragraph{Second step: Relaxing Assumption \ref{asmp:regularity-complicated} to Assumption \ref{asmp:regularity}} \quad

In this section, we show how we can relax  Assumption \ref{asmp:regularity-complicated} to our main Assumption \ref{asmp:regularity}. In particular, we show that for any dual utility function $u^*_{\boldsymbol{x}}$ that satisfies Assumption \ref{asmp:regularity}, we can construct a function $v^*_{\boldsymbol{x}}$ such that: (1) The piecewise structure of $v^*_{\boldsymbol{x}}$ satisfies Assumption \ref{asmp:regularity-complicated}, and (2) $\|u^*_{\boldsymbol{x}} - v^*_{\boldsymbol{x}}\|_{\infty}$ can be \textit{arbitrarily} small. This means that, for a utility function class $\cU$, we can construct a new function class $\cV$ of which each dual function $v^*_{\boldsymbol{x}}$ satisfies Assumption \ref{asmp:regularity-complicated}. We then can establish a pseudo-dimension upper-bound for $\cV$ using Theorem \ref{thm:multi-dim-multi-piece}, and then recover the learning guarantee for $\cU$ using Lemma \ref{lm:pdim-to-rademacher}.

{
    We will now proceed via a sequence of claims towards establishing the reduction. First, we claim that under Assumption \ref{asmp:regularity}, we have the following regularity condition.
    \begin{proposition}
        Under Assumption \ref{asmp:regularity}, the Jacobian $J_{\boldsymbol{\mu}}(\boldsymbol{w}, \boldsymbol{\gamma})$ of the mapping $\boldsymbol{\mu} = (\mu_1, \dots, \mu_{2S})$ for $S \leq d$, where
        $$
            \begin{cases}
                \mu_j(\alpha, \boldsymbol{w}, \boldsymbol{\gamma}) = h_j(\alpha, \boldsymbol{w}), \text{ for $j = 1, \dots, S$}, \\
                \mu_{S + j}(\alpha, \boldsymbol{w}, \boldsymbol{\gamma}) = \boldsymbol{\gamma}^\top \nabla_{\boldsymbol{w}}h_j(\alpha, \boldsymbol{w}), \text{ for $j = 1, \dots, S$},
            \end{cases}
        $$
        has full row rank when evaluated at $(\alpha, \boldsymbol{w}, \boldsymbol{\gamma})$ such that $\boldsymbol{\mu}(\alpha, \boldsymbol{w}, \boldsymbol{\gamma}) = \boldsymbol{0}$.
    \end{proposition}
    \proof 
        Note that the Jacobian $J_{\boldsymbol{\mu}}(\boldsymbol{w}, \boldsymbol{\gamma})$ has the form
        $$
            J_{\boldsymbol{\mu}}(\boldsymbol{w},\boldsymbol{\gamma}) = \begin{bmatrix}
                J_{1\boldsymbol{w}} & J_{1\boldsymbol{\gamma}} \\
                J_{2\boldsymbol{w}} & J_{2\boldsymbol{\gamma}}
            \end{bmatrix}.
        $$
        Essentially, in order to show $J_{\boldsymbol{\mu}}(\boldsymbol{w}, \boldsymbol{\gamma})$ is full row rank, we will use that $J_{1\boldsymbol{w}}$ and $J_{2\boldsymbol{\gamma}}$ are both full row rank,  which follows from Assumption \ref{asmp:regularity}.1.
        
        In more detail, here $\boldsymbol{h} = (h_1, \dots, h_{S})$ and
        $$
            J_{1\boldsymbol{w}} = J_{\boldsymbol{h}}(\boldsymbol{w})_{S \times d}, J_{1\boldsymbol{\gamma}} = \boldsymbol{0}_{S \times d},
        $$
        $$
            J_{2\boldsymbol{w}} = \begin{bmatrix}
            \boldsymbol{\gamma}^\top H_{h_1}(\boldsymbol{w}) \\
            \vdots \\
            \boldsymbol{\gamma}^\top H_{h_S}(\boldsymbol{w}) 
        \end{bmatrix}_{S \times d}, J_{2\boldsymbol{\gamma}} = J_{\boldsymbol{h}}(\boldsymbol{w})_{S\times d},
        $$
        where $H_{h_j}(\boldsymbol{w})$ is the Hessian of $h_j$ w.r.t. $\boldsymbol{w}$. Now, in order to prove that $J_{\boldsymbol{\mu}}(\boldsymbol{w}, \boldsymbol{\gamma})$ has full row rank,  suppose that there are  real coefficients $\delta_1, \dots, \delta_{2S}$ such that 
        $$
            \sum_{j = 1}^{2S}\delta_j J_j = \boldsymbol{0},
        $$
        where $J_j$ is the $j^{th}$ row of $J_{\boldsymbol{\mu}}(\boldsymbol{w}, \boldsymbol{\gamma})$. Restricting on the columns corresponding to $\boldsymbol{\gamma}$, we have
        $$
            \sum_{j = S + 1}^{2S} \delta_j J_{\boldsymbol{h}}(\boldsymbol{w})_j = \boldsymbol{0}.
        $$
        From Assumption \ref{asmp:regularity}.1, we have $J_{\boldsymbol{h}}(\boldsymbol{w})$ has full row rank for any $(\alpha, \boldsymbol{w})$ that satisfies $\boldsymbol{h}(\alpha, \boldsymbol{w}) = \boldsymbol{0}$. It means $\delta_j = 0$ for $j = S + 1, \dots, 2S$. Combining with $\sum_{j = 1}^{2S}\delta_j J_j = 0$, we have
        $$
            \sum_{j = 1}^{S}\delta_j J_j = 0.
        $$
        Again, using the fact that $J_{\boldsymbol{h}}(\boldsymbol{w})$ has full row rank, we also claim that $\delta_j = 0$ for $j = 1, \dots, S$. Therefore, $\delta_j = 0$ for $j = 1, \dots, 2S$, meaning that $J_{\boldsymbol{\mu}}(\boldsymbol{w}, \boldsymbol{\gamma})$ has full row rank. 
    \qed

    \noindent We now present the notion of algebraic dimension from the literature.

    \begin{definition}[Stratification, real algebraic dimension, \citealp{bochnak2013real}]
        Let $S$ be a semi-algebraic set in $\bbR^n$. Then $S$ can be decomposed to finite disjoint union of smooth, connected manifolds $S_i$, called \textit{strata}, i.e., 
        $$
            S = \cup_{i = 1}^pS_i,
        $$
        where $S_i$ is a smooth manifold. The \textit{algebraic dimension} of $S$ is the maximum dimension of the smooth manifolds in the stratification of $S$, i.e., $\mathrm{Adim}(S) = \max_{i = 1}^p \dim(S_i)$.
         
    \end{definition}

    \noindent We establish the following straightforward connection between the regular values of a function defined on a semi-algebraic set and the algebraic dimension of its domain.

    \begin{proposition} \label{prop:polymap-semi-algebraic-domain-regular}
        Let $\boldsymbol{f}: S \rightarrow \bbR^n$ be a polynomial map, where $S$ is a semi-algebraic set. Then the set of non-regular values of $\boldsymbol{f}$ has Lebesgue measure $0$ in $\bbR^n$. Assume that $\mathrm{Adim}(S) = n$, then given a regular value $\boldsymbol{v}$ of $\boldsymbol{f} = (f_1, \dots, f_n)$, $\boldsymbol{f}^{-1}(\boldsymbol{v})$ only contains isolated points in $S$. 
    \end{proposition}

    \proof
        Consider a stratification 
        $
            S = \cup_{i = 1}^pS_i.
        $
        For any strata $S_i$, consider the restricted mapping $\boldsymbol{f}|_{S_i}: S_i \rightarrow \bbR^n$. From Sard's theorem, the set $C_{i} \subset \bbR^n$ of non-regular values of $\boldsymbol{f}|_{S_i}$ has Lebesgue measure $0$ in $\bbR^n$. Therefore, the set of non-regular value $C = \cup_{i = 1}^pC_i$ of $f$, which is a finite union of sets with Lebesgue measure $0$, also has Lebesgue measure $0$ in $\bbR^n$.

        Now, given a regular value $\boldsymbol{v} \in \bbR^n$, and consider the restriction $\boldsymbol{f}|_{S_i}: S_i \rightarrow \bbR^n$, for any strata $S_i$. If $S_i$ is a smooth $k$-dimensional manifold for $k < n$, then $f^{-1}(\boldsymbol{v})$ must be empty. To see that, consider the differential map $d\boldsymbol{f}_x: T_xS_i \rightarrow \bbR^n$ (since $T_{f(x)}\bbR^n = \bbR^n$), which can never be a surjective (since $\dim(S_i) < n$). Therefore, $\boldsymbol{f}^{-1}(\boldsymbol{v})$ has to be an empty so that the surjective condition holds vacuously. If $S_i$ is a smooth $n$-dimensional manifold, then $\boldsymbol{f}^{-1}(\boldsymbol{v})$ is smooth $0$-dimensional manifold in $S_i$, which means that it contains only isolated points. 
    \qed 

        

    Finally, the following supporting result uses Bezout's theorem to bound the number of non-singular points corresponding to the set $\Xi$ in Assumption \ref{asmp:regularity-complicated}.

    \begin{proposition} \label{prop:regular-perturbation-algebraic}
        Consider a piece function $f_{\boldsymbol{x}, i}$ and a set of $S$ boundary functions $\cS = \{h_1, \dots, h_S\}$. Let $L(\alpha, \boldsymbol{w}, \boldsymbol{\lambda}) = f_{\boldsymbol{x}, i}(\alpha, \boldsymbol{w}) + \lambda^\top \boldsymbol{h}(\alpha, \boldsymbol{w})$, where $\boldsymbol{h} = (h_1, \dots, h_S)$, and consider the mapping $\boldsymbol{\mu} = (\mu_1, \dots, \mu_{S + d})$
        $$
            \begin{cases}
                \mu_j(\alpha, \boldsymbol{w}, \boldsymbol{\lambda}) = h_j(\alpha, \boldsymbol{w}), \text{ for $j = 1, \dots, S$},\\
                \mu_{S + z}(\alpha, \boldsymbol{w}, \boldsymbol{\lambda}) = \partial_{w_z}L(\alpha, \boldsymbol{w}, \boldsymbol{\lambda}), \text{ for $z = 1, \dots, d$}.
            \end{cases}
        $$
        Under Assumption \ref{asmp:regularity}, we can choose $\boldsymbol{\tau} \in \bbR^d$ such that the set $\Xi$ of parameters $\alpha$ for which the system 
        $$
            \begin{cases}
                \boldsymbol{h}(\alpha, \boldsymbol{w}) = \boldsymbol{0}, \\
                \nabla_{\boldsymbol{w}}L(\alpha, \boldsymbol{w}, \boldsymbol{\lambda}) - \boldsymbol{\tau} = 0
            \end{cases}
        $$
        has a solution $(\boldsymbol{w}, \boldsymbol{\lambda})$ where the Jacobian $J_{\boldsymbol{\mu}}(\boldsymbol{w}, \boldsymbol{\lambda})$ is singular, is finite. Moreover
        \begin{enumerate}
            \item Given such $\boldsymbol{\tau}$, the set $\Xi$ has at most $\Delta^{2S + 2d}$ elements. Here, $\Delta$ is the maximum degree of the piece functions $f_{\boldsymbol{x}, i}$ and boundary functions $h_{j}$.
            \item The set of $T_{i, \cS}$ of $\boldsymbol{\tau}$, which does not satisfy the above, has Lebesgue measure 0 in $\bbR^d$.  
        \end{enumerate}
        
    \end{proposition}

    \proof Let $Z_{\boldsymbol{h}} = \{(\alpha, \boldsymbol{w}, \boldsymbol{\lambda}) \in \bbR^{S + d + 1} \mid \boldsymbol{h}(\alpha, \boldsymbol{w}, \boldsymbol{\lambda}) = \boldsymbol{0}\}$, which is an algebraic set. From Assumption \ref{asmp:regularity}.1, since $J_{\boldsymbol{h}}(\boldsymbol{w}, \boldsymbol{\lambda})$ has full rank for any $(\alpha, \boldsymbol{w}, \boldsymbol{\lambda}) \in Z_{\boldsymbol{h}}$, $Z_{\boldsymbol{h}}$ has algebraic dimension $(d + S + 1) - S = d + 1$ (or more concretely, $Z_{\boldsymbol{h}}$ defines a smooth $(d + 1)$-dimensional manifold in $\bbR^{d + S + 1}$).

    Next, consider $Z_{\det(J_{\boldsymbol{\mu}}(\boldsymbol{w}, \boldsymbol{\lambda}))} = \{(\alpha, \boldsymbol{w}, \boldsymbol{\lambda}) \in \bbR^{S + d + 1} \mid \det(J_{\boldsymbol{\mu}}(\boldsymbol{w}, \boldsymbol{\lambda})) = 0\}$. Since each entry of $J_{\boldsymbol{\mu}}(\boldsymbol{w}, \boldsymbol{\lambda})$ is a polynomial, therefore $\det(J_{\boldsymbol{\mu}}(\boldsymbol{w}, \boldsymbol{\lambda}))$ is also a polynomial, and $Z_{\det(J_{\boldsymbol{\mu}}(\boldsymbol{w}, \boldsymbol{\lambda}))}$ is an algebraic set. Let $W = Z_{\boldsymbol{h}} \cap Z_{\det(J_{\boldsymbol{\mu}}(\boldsymbol{w}, \boldsymbol{\lambda}))}$, which is also an algebraic set. From Assumption \ref{asmp:regularity}.2, the set $W$ has (algebraic) dimension at most $d$.

    Now, consider the mapping $P: W  \rightarrow \bbR^d$, where $P(\alpha, \boldsymbol{w}, \boldsymbol{\lambda}) = \nabla_{\boldsymbol{w}}L(\alpha, \boldsymbol{w}, \boldsymbol{\lambda})$, which maps a point $(\alpha, \boldsymbol{w}, \boldsymbol{\lambda})$ in the algebraic set $W$ to a point in $\bbR^d$. From Proposition \ref{prop:polymap-semi-algebraic-domain-regular}, the set of non-regular values of $P$ has Lebesgue measure $0$ in $\bbR^d$. Let $\boldsymbol{\tau} \in \bbR^d$ be a regular value of $P$. From Proposition \ref{prop:polymap-semi-algebraic-domain-regular}, $P^{\boldsymbol{-1}}(\boldsymbol{\tau})$ only contains isolated points in $W$. Now, note that $W = Z_{\boldsymbol{h}} \cap Z_{\det(J_{\boldsymbol{\mu}}(\boldsymbol{w}, \boldsymbol{\lambda}))}$, and $J_{\boldsymbol{\mu}}(\boldsymbol{w}, \boldsymbol{\lambda})$ is a polynomial of degree at most $\Delta^{S + d}$. From Bezout's theorem (Corollary \ref{cor:bezout-consequence}), we have $\abs{P^{-1}(\boldsymbol{\tau})} \leq \Delta^{2S + 2d}$.

    Therefore, any other solution $(\alpha, \boldsymbol{w}, \boldsymbol{\lambda})$ that satisfies $\boldsymbol{h}(\alpha, \boldsymbol{w}) = \boldsymbol{0}$ and $\nabla_{\boldsymbol{w}}L(\alpha, \boldsymbol{w}, \boldsymbol{\lambda}) - \boldsymbol{\tau}= \boldsymbol{0}$ has to have $\det(J_{\boldsymbol{\mu}}(\alpha, \boldsymbol{w}, \boldsymbol{\lambda})) \neq 0$, or $J_{\boldsymbol{\mu}}(\alpha, \boldsymbol{w}, \boldsymbol{\lambda})$ is non-singular, which concludes the proof. 
    \qed

}

   \noindent We now present the main claim in this section, which says that for any function $u_{\boldsymbol{x}}^*(\alpha)$ that satisfies Assumption \ref{asmp:regularity}, we can construct a function $v_{\boldsymbol{x}}^*(\alpha)$ that satisfies Assumption  \ref{asmp:regularity-complicated} and that $\|u^*_{\boldsymbol{x}} -v^*_{\boldsymbol{x}}\|_{\infty}$ can be arbitrarily small.
    
    \begin{lemma} \label{lm:assumption-relaxation}
        Let $u^*_{\boldsymbol{x}}$ be a dual utility function of a utility function class $\cU$. Assume that the piecewise polynomial structure of $u^*_{\boldsymbol{x}}$ satisfies Assumption \ref{asmp:regularity}, then we can construct the function $v^*_{\boldsymbol{x}}$ such that $v^*_{\boldsymbol{x}}$ has piecewise polynomial structures that satisfies Assumption \ref{asmp:regularity-complicated}, and $\|u^*_{\boldsymbol{x}} - v^*_{\boldsymbol{x}}\|_{\infty}$ can be arbitrarily small.
    \end{lemma}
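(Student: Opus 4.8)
The plan is to obtain $v^*_{\boldsymbol{x}}$ from $u^*_{\boldsymbol{x}}$ by a generic, arbitrarily small polynomial perturbation of bounded degree applied to the piece functions $f_{\boldsymbol{x},i}$ and boundary functions $h_{\boldsymbol{x},j}$, and then to show that (i) a generic such perturbation makes every regularity condition of Assumption \ref{asmp:regularity-complicated} hold, while (ii) the induced change in the dual utility function tends to $0$ with the perturbation size. Concretely, I would fix a degree $D$ (bounded by the orders of the derivatives that occur in Assumption \ref{asmp:regularity-complicated}) and replace each $h_{\boldsymbol{x},j}$ by $\tilde h_{\boldsymbol{x},j}:=h_{\boldsymbol{x},j}+q_j$ and each $f_{\boldsymbol{x},i}$ by $\tilde f_{\boldsymbol{x},i}:=f_{\boldsymbol{x},i}+p_i$, where $p_i,q_j$ are generic polynomials of degree at most $D$ all of whose coefficients have magnitude at most $\zeta$; using the \emph{same} $q_j$ for $h_{\boldsymbol{x},j}$ wherever it appears keeps the perturbed cells $\tilde R_{\boldsymbol{x},i}$ (obtained by replacing each sign constraint on $h_{\boldsymbol{x},j}$ by the same constraint on $\tilde h_{\boldsymbol{x},j}$) a partition of $\cA\times\cW$. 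Define $v^*_{\boldsymbol{x}}$ to be the dual utility function of the resulting perturbed \paramdependent. The perturbation changes neither the number $N$ of pieces nor the number $M$ of boundaries, and leaves the maximum degree at $\max\{\Delta,D\}$.

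For step (i), each of the three parts of Assumption \ref{asmp:regularity-complicated} is required for each piece function $f_{\boldsymbol{x},i}$ and each of the finitely many subsets $\cS$ of at most $d+1$ adjacent boundaries; because the piecewise polynomial structure of $u^*_{\boldsymbol{x}}$ is finite (this is where the hypothesis that $u^*_{\boldsymbol{x}}$ satisfies Assumption \ref{asmp:regularity} enters, to bound the degrees and exclude degenerate configurations), this is a finite list of conditions. I would establish them one at a time by Lemma \ref{lm:warren-finitely-singular}, treating each freshly introduced perturbation scalar (a coefficient of some $p_i$ or $q_j$) as the free real number in that lemma: starting from the trivially regular empty system, one appends the coordinate functions of the relevant map — $\overline h_{\cS}$ for condition~1, the map $k$ for condition~2, and the map $\overline k$ for condition~3 — one component at a time, discarding at each step the finitely many bad parameter values. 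The perturbation family is rich enough: low-degree perturbations of the $h_{\boldsymbol{x},j}$ independently move the values of $h_{\boldsymbol{x},j}$ and $\partial h_{\boldsymbol{x},j}/\partial w_\ell$, and low-degree perturbations of the $f_{\boldsymbol{x},i}$ independently move $\partial f_{\boldsymbol{x},i}/\partial w_\ell$, $\partial f_{\boldsymbol{x},i}/\partial\alpha$ and all the second partials of $f_{\boldsymbol{x},i}$ that appear in $k$ and $\overline k$, so a parametric-transversality argument (with Sard's theorem, Theorem \ref{thm:sard}, as the underlying fact) shows that for all choices of $(\{p_i\},\{q_j\})$ outside a finite union of measure-zero sets, Assumption \ref{asmp:regularity-complicated} holds for $v^*_{\boldsymbol{x}}$; in particular $\zeta$ can be taken as small as desired.

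For step (ii), fix $\alpha$ and let $\boldsymbol{w}^*\in R_{\boldsymbol{x},i}$ attain $u^*_{\boldsymbol{x}}(\alpha)=f_{\boldsymbol{x},i}(\alpha,\boldsymbol{w}^*)$. The symmetric difference $R_{\boldsymbol{x},i}\triangle\tilde R_{\boldsymbol{x},i}$ is contained in the union over $j$ of $\{(\alpha,\boldsymbol{w}):|h_{\boldsymbol{x},j}(\alpha,\boldsymbol{w})|\le\zeta\}$, which shrinks to $\bigcup_j\{h_{\boldsymbol{x},j}=0\}$ as $\zeta\to0$; since $\boldsymbol{w}^*$ lies in the closure of a full-dimensional component of $R_{\boldsymbol{x},i}$, there is a point $\boldsymbol{w}'$ in the interior of $\tilde R_{\boldsymbol{x},i}$ with $\|\boldsymbol{w}'-\boldsymbol{w}^*\|\to0$ as $\zeta\to0$. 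Each polynomial $f_{\boldsymbol{x},i}$ is Lipschitz on the compact domain $\cA\times\cW$ with a constant independent of the perturbation, so $v^*_{\boldsymbol{x}}(\alpha)\ge\tilde f_{\boldsymbol{x},i}(\alpha,\boldsymbol{w}')\ge f_{\boldsymbol{x},i}(\alpha,\boldsymbol{w}^*)-o_\zeta(1)=u^*_{\boldsymbol{x}}(\alpha)-o_\zeta(1)$; applying the same argument to a maximizer of the perturbed \paramdependent\ gives the reverse inequality, so $\|u^*_{\boldsymbol{x}}-v^*_{\boldsymbol{x}}\|_\infty=o_\zeta(1)$, which is arbitrarily small.

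The main obstacle I expect is step (i): verifying that a \emph{finite-dimensional} family of bounded-degree perturbations is genuinely rich enough to force all the Jacobians of the map $\overline k$ in Assumption \ref{asmp:regularity-complicated}.3 — which couple first and second partials of the $f_{\boldsymbol{x},i}$ and $h_{\boldsymbol{x},j}$ with the auxiliary multipliers $\boldsymbol{\lambda},\boldsymbol{\theta},\boldsymbol{\gamma}$ — to be simultaneously of full rank, and organizing the nested applications of Lemma \ref{lm:warren-finitely-singular} so that the exceptional parameter sets stay of measure zero after intersecting over all conditions and all subsets $\cS$. The approximation estimate in step (ii), by contrast, is routine given Lipschitz control on the compact domain and the vanishing of the region perturbation.
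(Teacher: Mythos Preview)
Your approach differs substantially from the paper's, and the paper's route is markedly simpler. The paper does \emph{not} perturb the boundary polynomials $h_{\boldsymbol{x},j}$ at all: it keeps the regions $R_{\boldsymbol{x},i}$ exactly as they are and perturbs only the piece functions, by the single-parameter linear term $f'_{\boldsymbol{x},i}(\alpha,\boldsymbol{w})=f_{\boldsymbol{x},i}(\alpha,\boldsymbol{w})+\tau\alpha+\tau\sum_{z=1}^d w_z$. With the boundaries fixed, your step~(ii) becomes a one-line estimate $|f_{\boldsymbol{x},i}-f'_{\boldsymbol{x},i}|\le \tau C$ uniformly on the compact domain, and the passage to $\|u^*_{\boldsymbol{x}}-v^*_{\boldsymbol{x}}\|_\infty\le 2\tau C$ is immediate; there is no need to track moving regions, symmetric differences, or find a nearby feasible $\boldsymbol{w}'$. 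For step~(i), the paper uses Assumption~\ref{asmp:regularity} as the \emph{starting} regularity: the first block of components of the map $\overline{k}$ from Assumption~\ref{asmp:regularity-complicated}.3 (those not involving $\partial f_{\boldsymbol{x},i}$) is already regular at $\boldsymbol{0}$ by hypothesis, and the remaining components $\overline{k}_{2S+1},\dots,\overline{k}_{2S+2d+1}$ each shift by exactly $\tau$ under the linear perturbation of $f_{\boldsymbol{x},i}$; Lemma~\ref{lm:warren-finitely-singular} is then applied to append these one at a time, yielding a positive $\tau^*$ below which every $\tau$ works.

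By contrast, you start the inductive application of Lemma~\ref{lm:warren-finitely-singular} from the empty system and perturb both $f$'s and $h$'s generically. If that argument went through it would not need Assumption~\ref{asmp:regularity} at all, and indeed your stated use of the hypothesis (``to bound the degrees and exclude degenerate configurations'') does not match what the assumption actually provides. So either you are proving a strictly stronger statement than the lemma claims, or you are not using the hypothesis where it is needed; in the paper's argument the hypothesis is precisely what guarantees the base-case regularity so that only the $f$-dependent components must be perturbed. Your own identified obstacle---showing that a finite-dimensional bounded-degree perturbation family is rich enough to decouple all the components of $\overline{k}$ in Assumption~\ref{asmp:regularity-complicated}.3---is real and is exactly what the paper sidesteps by leaning on Assumption~\ref{asmp:regularity} for the $h$-dependent block and using a tailored linear perturbation for the $f$-dependent block.
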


    \proof 
    {
        {\bf Proof overview.}}
    {
        First, notice that Assumption \ref{asmp:regularity} immediately implies Assumption \ref{asmp:regularity-complicated}.1. Therefore, given a dual utility function $u^*_{\boldsymbol{x}}$ of which the piecewise polynomial structure satisfies Assumption \ref{asmp:regularity}, the task now is to construct $v^*_{\boldsymbol{x}}$ such that $v^*_{\boldsymbol{x}}$ has a piecewise polynomial structure that satisfies Assumptions \ref{asmp:regularity-complicated}.2 and \ref{asmp:regularity-complicated}.3. 
    }

    {
        Given a set of boundary functions $\cS = \{h_1, \dots, h_S\}$ and a piece functions $f_{\boldsymbol{x}, i}$ that satisfies Assumption \ref{asmp:regularity}, we will show that there is a set $T_{\cS, i} \subset \bbR^d$ that has Lebesgue measure $0$ in $\bbR^d$ such that for any $\boldsymbol{\tau} \in \bbR^d - T_{\cS, i}$, if we subtract $\boldsymbol{\tau}^\top \boldsymbol{w}$ from the piece function $f_{\boldsymbol{x}, i}(\alpha, \boldsymbol{w})$, i.e., $f'_{\boldsymbol{x}, i}(\alpha, \boldsymbol{w}) = f_{\boldsymbol{x}, i}(\alpha, \boldsymbol{w}) - \boldsymbol{\tau}^\top \boldsymbol{w}$, then the collection of boundaries functions $\cS = \{h_1, \dots, h_S\}$ as well as the new piece function $f'_{\boldsymbol{x}, i}(\alpha, \boldsymbol{w})$ satisfy Assumption \ref{asmp:regularity-complicated}.   
    }

    {
        Then, given a set of boundary $\cS = \{h_1, \dots, h_S\}$, a new piece function $f'_{\boldsymbol{x}, i}$, and the set $T_{\cS, i}$, we further show that there exists a set $A_{T_{\cS, i}} \subset \bbR$ that has Lebesgue measure $0$ in $\bbR$ such that for any $a \in \bbR - A_{T_{\cS, i}}$, if we consider another new piece function $f''_{\boldsymbol{x}, i}(\alpha, \boldsymbol{w}) = f'_{\boldsymbol{x}, i}(\alpha, \boldsymbol{w}) - a\alpha$ (perturbing the original piece function $f_{\boldsymbol{x}, i}$ by $-\boldsymbol{\tau}^\top \boldsymbol{w} - a\alpha$), then the set of boundaries functions $\cS = \{h_1, \dots, h_S\}$ and the new piece function $f''_{\boldsymbol{x}, i}(\alpha, \boldsymbol{w})$ satisfies Assumption \ref{asmp:regularity-complicated}.2 and \ref{asmp:regularity-complicated}.3.
    }

    {
        Finally, for any piece $i$ and any set of boundary functions $\cS$, if we choose $(a, \boldsymbol{\tau}) \in (\bbR - A_{T_{\cS, i}}) \times \bbR^d - T_{\cS, i}$, we will have the new piecewise structure that satisfies Assumption \ref{asmp:regularity-complicated}. Furthermore, since $(\bbR - A_{T_{\cS, i}}) \times \bbR^d - T_{\cS, i}$ has full measure in $\bbR \times \bbR^d$, we can choose $(a, \boldsymbol{\tau})$ arbitrarily small. 
    }

    
    {\bf Technical details.} 
    
    
    We first calculate the Jacobian matrix $J_{\boldsymbol{\mu}}$. For convenience, let $L(\alpha, \boldsymbol{w}, \boldsymbol{\theta}) = f_{\boldsymbol{x}, i}(\alpha, \boldsymbol{w}) + \sum_{j = 1}^S \theta_j h_{\cS, j}(\alpha, \boldsymbol{w}) = f_{\boldsymbol{x}, i} + \boldsymbol{\theta}^\top \boldsymbol{h}$. Here, $\boldsymbol{h}(\alpha, \boldsymbol{w}) = (h_{\cS, 1}(\alpha, \boldsymbol{w}), \dots, h_{\cS, S}(\alpha, \boldsymbol{w}))$ is the mapping of considered boundaries.
    
    We first calculate the Jacobian matrix $J_{\boldsymbol{\mu}}(\alpha, \boldsymbol{w}, \boldsymbol{\gamma}, \boldsymbol{\lambda}, \boldsymbol{\theta})$. For convenience, we decompose $J_{\boldsymbol{\mu}}(\alpha, \boldsymbol{w}, \boldsymbol{\gamma}, \boldsymbol{\lambda}, \boldsymbol{\theta})$ into block matrices
    $$
        J_{\boldsymbol{\mu}}(\alpha, \boldsymbol{w}, \boldsymbol{\gamma}, \boldsymbol{\lambda}, \boldsymbol{\theta}) = \begin{bmatrix}
            J_{1 \alpha} & J_{1 \boldsymbol{w}} & J_{1\boldsymbol{\gamma}}  & J_{1\boldsymbol{\lambda}} & J_{1\boldsymbol{\theta}} \\
            J_{2 \alpha} & J_{2 \boldsymbol{w}} & J_{2\boldsymbol{\gamma}}  & J_{2\boldsymbol{\lambda}} & J_{2\boldsymbol{\theta}} \\
            J_{3 \alpha} & J_{3 \boldsymbol{w}} & J_{3\boldsymbol{\gamma}}  & J_{3\boldsymbol{\lambda}} & J_{3\boldsymbol{\theta}} \\
            J_{4 \alpha} & J_{4 \boldsymbol{w}} & J_{4\boldsymbol{\gamma}}  & J_{4\boldsymbol{\lambda}} & J_{4\boldsymbol{\theta}} \\
            J_{5 \alpha} & J_{5 \boldsymbol{w}} & J_{5\boldsymbol{\gamma}}  & J_{5\boldsymbol{\lambda}} & J_{5\boldsymbol{\theta}} \\
        \end{bmatrix},
    $$
    where the rows $J_1$, $J_2$, $J_3$, $J_4$, $J_5$ corresponds to $(\mu_1, \dots, \mu_S), (\mu_{S + 1}, \dots, \mu_{2S}), (\mu_{2S + 1}, \dots, \mu_{2S + d}),$
    
    $(\mu_{2S + d + 1} , \dots, \mu_{2S + 2d}), (\mu_{2d + 2S + 1})$ respectively. 
    
    For the first column (corresponding to $\alpha$), we have
    $$
        J_{1\alpha} = \begin{bmatrix}
            \partial_\alpha h_1 \\
            \vdots \\
            \partial_\alpha h_S
        \end{bmatrix}_{S \times 1}, 
        J_{2\alpha} = \begin{bmatrix}
            \sum_{t = 1}^d \gamma_t\partial^2_{\alpha w_t}h_1 \\
            \vdots \\
            \sum_{t = 1}^d \gamma_t\partial^2_{\alpha w_t}h_S
        \end{bmatrix}_{S \times 1},
        J_{3\alpha} = \begin{bmatrix}
            \partial^2_{\alpha, w_1} L(\alpha, \boldsymbol{w}, \boldsymbol{\lambda}) \\
            \vdots \\
            \partial^2_{\alpha, w_d} L(\alpha, \boldsymbol{w}, \boldsymbol{\lambda})
        \end{bmatrix}_{d \times 1}
    $$
    
    $$
        J_{4\alpha} = \begin{bmatrix}
            \partial^2_{\alpha w_1}(L(\alpha, \boldsymbol{w}, \boldsymbol{\theta}) + \boldsymbol{\gamma}^\top \nabla_{\boldsymbol{w}}L(\alpha, \boldsymbol{w}, \boldsymbol{\lambda})) \\
            \vdots \\
            \partial^2_{\alpha w_d}(L(\alpha, \boldsymbol{w}, \boldsymbol{\theta}) + \boldsymbol{\gamma}^\top \nabla_{\boldsymbol{w}}L(\alpha, \boldsymbol{w}, \boldsymbol{\lambda}))        
        \end{bmatrix}_{d \times 1},
        J_{5\alpha} = [\partial^2_{\alpha \alpha}(L(\alpha, \boldsymbol{w}, \boldsymbol{\theta}) + \boldsymbol{\gamma}^\top \nabla_{\boldsymbol{w}}L(\alpha, \boldsymbol{w}, \boldsymbol{\lambda}))]_{1 \times 1}
    $$
    
    For the second column (corresponding to $\boldsymbol{w}$), we have
    $$
        J_{1\boldsymbol{w}} = {J_{\boldsymbol{h}}(\boldsymbol{w})}_{S \times d}, 
        J_{2\boldsymbol{w}} = \begin{bmatrix}
            \boldsymbol{\gamma}^\top H_{h_1}(\boldsymbol{w}) \\
            \vdots \\
            \boldsymbol{\gamma}^\top H_{h_S}(\boldsymbol{w}) 
        \end{bmatrix}_{S \times d},
        J_{3\boldsymbol{w}} = {H_{L(\alpha, \boldsymbol{w}, \boldsymbol{\lambda})}(\boldsymbol{w})}_{d \times d},
    $$
    $$
    J_{4\boldsymbol{w}} = {H_{L(\alpha, \boldsymbol{w}, \boldsymbol{\theta}) + \boldsymbol{\gamma}^\top \nabla_{\boldsymbol{w}}L(\alpha, \boldsymbol{w}, \boldsymbol{\lambda})}(\boldsymbol{w})}_{d \times d}, 
    J_{5\boldsymbol{w}} = \nabla_{\boldsymbol{w}}[\partial_{\alpha}(L(\alpha, \boldsymbol{w}, \boldsymbol{\theta}) + \boldsymbol{\gamma}^\top\nabla_{\boldsymbol{w}}L(\alpha, \boldsymbol{w}, \boldsymbol{\lambda}))]_{1 \times d}.
    $$
    Here, $J_{\boldsymbol{h}}(\boldsymbol{w})$ is the Jacobian of $\boldsymbol{h}$ w.r.t. $\boldsymbol{w}$, and $H_{h}(\boldsymbol{w})$ is the Hessian of $h$ w.r.t. $\boldsymbol{w}$.
     
    For the third column (corresponding to $\boldsymbol{\gamma}$), we have
    $$
        J_{1\boldsymbol{\gamma}} = \boldsymbol{0}_{S \times d}, J_{2\boldsymbol{\gamma}} = J_{\boldsymbol{h}}(\boldsymbol{w})_{d \times S}, J_{3\boldsymbol{\gamma}} = \boldsymbol{0}_{d \times d},
    $$
    $$
        J_{4\boldsymbol{\gamma}} = H_{L(\boldsymbol{w}, \lambda)}(\boldsymbol{w})_{d \times d}, J_{5\boldsymbol{\gamma}} = \nabla_{\boldsymbol{w}}(\partial_{\alpha}L(\alpha, \boldsymbol{w}, \boldsymbol{\lambda}))_{1 \times d}.
    $$
    
    For the fourth column (corresponding to $\boldsymbol{\lambda}$), we have
    $$
        J_{1\boldsymbol{\lambda}} = \boldsymbol{0}_{S \times S}, J_{2\boldsymbol{\lambda}} = \boldsymbol{0}_{S \times S}, J_{3\boldsymbol{\lambda}} = J_{\boldsymbol{h}}(\boldsymbol{w})^\top_{d \times S},
    $$
    $$
        J_{4\boldsymbol{\lambda}} = \begin{bmatrix}
            H_{h_1}(\boldsymbol{w})\boldsymbol{\gamma}, \dots, H_{h_S}(\boldsymbol{w})\boldsymbol{\gamma}
        \end{bmatrix}_{d \times S}, J_{5\boldsymbol{\lambda}} = [\partial_{\alpha}\boldsymbol{\gamma}^\top \nabla_{\boldsymbol{w}}h_1, \dots, \partial_{\alpha}\boldsymbol{\gamma}^\top \nabla_{\boldsymbol{w}}h_S]_{1 \times S}.
    $$
    
    For the fifth column (corresponding to $\boldsymbol{\theta}$), we have
    $$
        J_{1\boldsymbol{\theta}} = \boldsymbol{0}_{S \times S}, J_{2\boldsymbol{\theta}}= \boldsymbol{0}_{S \times S}, J_{3\boldsymbol{\theta}} = \boldsymbol{0}_{d \times S}, J_{4\boldsymbol{\theta}} = J_{\boldsymbol{h}}(\boldsymbol{w})^\top_{d \times S}, J_{5\boldsymbol{\theta}} = [\partial_{\alpha}h_1, \dots, \partial_{\alpha}h_{S}]_{1 \times S}.
    $$

    {
        First, consider the mapping $\boldsymbol{\mu_1}(\alpha, \boldsymbol{w}, \boldsymbol{\lambda}) = (\boldsymbol{h}(\alpha, \boldsymbol{w}), \nabla_{\boldsymbol{w}}L(\alpha, \boldsymbol{w}, \boldsymbol{\lambda}))$, where $\boldsymbol{h} = (h_1, \dots, h_S)$ is a polynomial mapping formed by the set of considered boundaries, and $L(\alpha, \boldsymbol{w}, \boldsymbol{\lambda}) = f_{\boldsymbol{x}, i}(\alpha, \boldsymbol{w}) + \boldsymbol{\lambda}^\top \boldsymbol{h}(\alpha, \boldsymbol{w})$ is the corresponding Lagrangian. From Proposition \ref{prop:regular-perturbation-algebraic}, there exists the set $T_{\cS, i} \subset \bbR^d$ of Lebesgue measure $0$ such that for any $\boldsymbol{\tau} \in \bbR^d - T_{\cS, i}$, there is a set of values $\Xi_{i, \cS, \boldsymbol{\tau}}$ of size at most $\Delta^{2S + 2d}$ such that any solution $(\alpha, \boldsymbol{w}, \boldsymbol{\lambda})$ that satisfies the system
        \begin{equation*}
            \begin{cases}
                \boldsymbol{h}(\alpha, \boldsymbol{w}) = \boldsymbol{0}, \\
                \nabla_{\boldsymbol{w}}[f_{\boldsymbol{x}, i}(\alpha, \boldsymbol{w}) + \boldsymbol{\lambda}^\top \boldsymbol{h}(\alpha, \boldsymbol{w})] - \boldsymbol{\tau} = \boldsymbol{0}
            \end{cases}
        \end{equation*}
         will have $J_{\boldsymbol{\mu_1}}(\boldsymbol{w}, \boldsymbol{\lambda})$ non-singular, except for  solutions that have $\alpha \in \Xi_{i, \cS, \boldsymbol{\tau}}$. Therefore, choosing $f'_{\boldsymbol{x}, i}(\alpha, \boldsymbol{w}) = f_{\boldsymbol{x}, i}(\alpha, \boldsymbol{w}) - \boldsymbol{\tau}^\top \boldsymbol{w}$, and note that $J_{\boldsymbol{\mu_1}}(\boldsymbol{w}, \boldsymbol{\lambda}) \equiv J_{\boldsymbol{\mu'_1}}(\boldsymbol{w}, \boldsymbol{\lambda)}$ for $\boldsymbol{\mu}'_1 = (\boldsymbol{h}, \nabla_{\boldsymbol{w}}L(\alpha, \boldsymbol{w}, \boldsymbol{\lambda}) - \boldsymbol{\tau})$, we have constructed a new piece function $f'_{\boldsymbol{x}, i}$ such that $f'_{\boldsymbol{x}, i}$ and $h_1, \dots, h_S$ that satisfy Assumption \ref{asmp:regularity-complicated}.
    }
    
    {
        Now, we will show that for any $\boldsymbol{\tau}$ chosen as above, we will have the Jacobian $J_{\boldsymbol{\mu}_2'}(\boldsymbol{w}, \boldsymbol{\gamma}, \boldsymbol{\lambda}, \boldsymbol{\theta})$
        \begin{equation*}
            \begin{cases}
               (\mu_2')_{j}(\alpha, \boldsymbol{w}, \boldsymbol{\gamma}, \boldsymbol{\lambda}, \boldsymbol{\theta}) &=  h_{j}(\alpha, \boldsymbol{w}), j = 1, \dots, S \\
               
               (\mu_2')_{S + j}(\alpha, \boldsymbol{w}, \boldsymbol{\gamma}, \boldsymbol{\lambda}, \boldsymbol{\theta}) &=   \boldsymbol{\gamma}^\top \nabla_{\boldsymbol{w}}h_{j}, j = 1, \dots, S \\
               
               
                (\mu_2')_{2S + i}(\alpha, \boldsymbol{w}, \boldsymbol{\gamma}, \boldsymbol{\lambda}, \boldsymbol{\theta}) &= \partial_{w_i}L'(\alpha, \boldsymbol{w}, \boldsymbol{\lambda}), i = 1 \dots, d \\
                

                (\mu_2')_{2S + d + i}(\alpha, \boldsymbol{w}, \boldsymbol{\gamma}, \boldsymbol{\lambda}, \boldsymbol{\theta}) &= \partial_{w_i}[L'(\alpha, \boldsymbol{w}, \boldsymbol{\theta}) + \boldsymbol{\gamma}^\top \nabla_{\boldsymbol{w}}L'(\alpha, \boldsymbol{w}, \boldsymbol{\lambda})], i = 1, \dots, d\\
                
                
                

            \end{cases}
        \end{equation*}
        where $L'(\alpha, \boldsymbol{w}, \boldsymbol{\lambda}) = f'_{\boldsymbol{x}, i}(\alpha, \boldsymbol{w}) + \boldsymbol{\lambda}^\top \boldsymbol{h}(\alpha, \boldsymbol{w})$ is non-singular when evaluated at any solution $(\alpha, \boldsymbol{w}, \boldsymbol{\gamma}, \boldsymbol{\lambda}, \boldsymbol{\theta})$ of $\boldsymbol{\mu'}(\alpha, \boldsymbol{w}, \boldsymbol{\gamma}, \boldsymbol{\lambda}, \boldsymbol{\theta}) = \boldsymbol{0}$ such that $\alpha \not \in \Xi_{i, \cS, \boldsymbol{\tau}}$. First, recall that the form of  $J_{\boldsymbol{\mu}_2'}(\boldsymbol{w}, \boldsymbol{\gamma}, \boldsymbol{\lambda}, \boldsymbol{\theta})$ is 
        $$
             J_{\boldsymbol{\mu}'_2}(\boldsymbol{w}, \gamma, \boldsymbol{\lambda}, \boldsymbol{\theta}) = \begin{bmatrix}
                 (\boldsymbol{w}) & (\boldsymbol{\gamma}) & (\boldsymbol{\lambda}) & (\boldsymbol{\theta}) \\
                 {J_{\boldsymbol{h}}(\boldsymbol{w})}_{S \times d} &  \boldsymbol{0}_{S \times d} & \boldsymbol{0}_{S \times S} & \boldsymbol{0}_{S \times S} \\
                 J_{2 \boldsymbol{w}} & {J_{\boldsymbol{h}}(\boldsymbol{w})}_{S \times d}  & \boldsymbol{0}_{S \times S} & \boldsymbol{0}_{S \times S} \\
                 {H_{L(\alpha, \boldsymbol{w}, \boldsymbol{\lambda})}(\boldsymbol{w})}_{d \times d} &  \boldsymbol{0}_{d \times d}  & J_{\boldsymbol{h}}(\boldsymbol{w})^\top_{d \times S} & \boldsymbol{0}_{d \times S} \\
                 J_{4\boldsymbol{w}} & H_{L(\alpha, \boldsymbol{w}, \lambda)}(\boldsymbol{w})_{d \times d} & J_{4\boldsymbol{\lambda}} & J_{\boldsymbol{h}}(\boldsymbol{w})^\top_{d \times S} 
             \end{bmatrix}
        $$
        The important point is that: $(\alpha, \boldsymbol{w}, \boldsymbol{\gamma}, \boldsymbol{\lambda}, \boldsymbol{\theta})$ that satisfies $\boldsymbol{\mu'}(\alpha, \boldsymbol{w}, \boldsymbol{\gamma}, \boldsymbol{\lambda}, \boldsymbol{\theta}) = \boldsymbol{0}$ and $\alpha \not \in \Xi_{\cS, i, \boldsymbol{\tau}}$ also satisfies $\boldsymbol{\mu}'_1(\alpha, \boldsymbol{w}, \boldsymbol{\lambda}) = 0$ and 
        $$
            J_{\boldsymbol{\mu}'_1}(\boldsymbol{w}, \boldsymbol{\lambda}) = \begin{bmatrix}
                {J_{\boldsymbol{h}}(\boldsymbol{w})}_{S \times d} & \boldsymbol{0}_{S \times S} \\
                {H_{L(\alpha, \boldsymbol{w}, \boldsymbol{\lambda})}(\boldsymbol{w})}_{d \times d} & J_{\boldsymbol{h}}(\boldsymbol{w})^\top_{d \times S}
            \end{bmatrix}
        $$
        has full row rank. We will leverage this observation to show that $ J_{\boldsymbol{\mu}'_2}(\boldsymbol{w}, \gamma, \boldsymbol{\lambda}, \boldsymbol{\theta})$ also has full row rank. Now, let $\delta_1, \dots, \delta_{2S + 2d}$ be real coefficients such that
        $$
            \sum_{t = 1}^{2S + 2d} \delta_t \cdot J_t = 0,
        $$
        where $J_t$ is the $t^{th}$ row of the Jacobian $J_{\boldsymbol{\mu}'_2}( \boldsymbol{w}, \gamma, \boldsymbol{\lambda}, \boldsymbol{\theta})$. First, consider the column corresponding to $\boldsymbol{\gamma}$ and $\boldsymbol{\theta}$ of $J_{\boldsymbol{\mu}'_2}$, we have
        $$
            \sum_{t = S + 1}^{2S} \delta_t \cdot ((J_{\boldsymbol{h}}(\boldsymbol{w}))_t, \boldsymbol{0}) + \sum_{t = 2S + d + 1}^{2S + 2d} \delta_t \cdot ((H_{L(\alpha, \boldsymbol{w}, \boldsymbol{\lambda})}(\boldsymbol{w}))_t, (J_{\boldsymbol{h}}(\boldsymbol{w})^\top)_t) = 0.
        $$
        Notice that the above is exactly the rows of $J_{\boldsymbol{\mu}'_1}(\boldsymbol{w}, \boldsymbol{\lambda})$, which has full row rank. Therefore $\delta_t = 0$ for $t = S + 1, \dots, 2S, 2S + d + 1, \dots, 2S + 2d$. Consider this fact and the column corresponding to $\boldsymbol{w}, \boldsymbol{\lambda}$  of $J_{\boldsymbol{\mu}'_2}$, we have
        $$
            \sum_{t = 1}^{S} \delta_t \cdot ((J_{\boldsymbol{h}}(\boldsymbol{w}))_t, \boldsymbol{0}) + \sum_{t = 2S + 1}^{2S + d} \delta_t \cdot ((H_{L(\alpha, \boldsymbol{w}, \boldsymbol{\lambda})}(\boldsymbol{w}))_t, (J_{\boldsymbol{h}}(\boldsymbol{w})^\top)_t) = 0.
        $$
        Again, due to $J_{\boldsymbol{\mu}'_1}(\boldsymbol{w}, \boldsymbol{\lambda})$ having full rank, we have $\delta_t = 0$ for $t = 1, \dots, S, 2S + 1, \dots, 2S + d$. In summary, we have $\delta_t = 0$ for $t = 1, \dots, 2S + 2d$, which implies $J_{\boldsymbol{\mu}'_2}$ having full row rank.
    }

    {
        Now, we will show that there exists a set $A_{T_{\cS, i}} \subset \bbR$ with Lebesgue measure $0$ such that for any $a \in \bbR - A_{T_{\cS, i}}$, we have that the Jacobian $J_{\boldsymbol{\mu''}}(\alpha, \boldsymbol{w}, \boldsymbol{\gamma}, \boldsymbol{\lambda}, \boldsymbol{\theta})$  with $\mu''$ given by

        \begin{equation*}
            \begin{cases}
               \mu''_{j}(\alpha, \boldsymbol{w}, \boldsymbol{\gamma}, \boldsymbol{\lambda}, \boldsymbol{\theta}) &=  h_{j}(\alpha, \boldsymbol{w}), j = 1, \dots, S \\
               
               \mu''_{S + j}(\alpha, \boldsymbol{w}, \boldsymbol{\gamma}, \boldsymbol{\lambda}, \boldsymbol{\theta}) &=   \boldsymbol{\gamma}^\top \nabla_{\boldsymbol{w}}h_{j}, j = 1, \dots, S \\
               
               
                \mu''_{2S + i}(\alpha, \boldsymbol{w}, \boldsymbol{\gamma}, \boldsymbol{\lambda}, \boldsymbol{\theta}) &= \partial_{w_i}L''(\alpha, \boldsymbol{w}, \boldsymbol{\lambda}), i = 1 \dots, d \\
                

                \mu''_{2S + d + i}(\alpha, \boldsymbol{w}, \boldsymbol{\gamma}, \boldsymbol{\lambda}, \boldsymbol{\theta}) &= \partial_{w_i}[L''(\alpha, \boldsymbol{w}, \boldsymbol{\theta}) + \boldsymbol{\gamma}^\top \nabla_{\boldsymbol{w}}L''(\alpha, \boldsymbol{w}, \boldsymbol{\lambda})], i = 1, \dots, d\\
                
                
                
                \mu''_{2S + 2d + 1}(\alpha, \boldsymbol{w}, \boldsymbol{\gamma}, \boldsymbol{\lambda}, \boldsymbol{\theta}) &= \partial_{\alpha}[L''(\alpha, \boldsymbol{w}, \boldsymbol{\theta}) + \boldsymbol{\gamma}^\top \nabla_{\boldsymbol{w}}L''(\alpha, \boldsymbol{w}, \boldsymbol{\lambda})],

            \end{cases}
        \end{equation*}
        has full row rank when evaluated at the solution $(\alpha, \boldsymbol{w}, \boldsymbol{\gamma}, \boldsymbol{\lambda}, \boldsymbol{\theta})$ of $\boldsymbol{\mu}''(\alpha, \boldsymbol{w}, \boldsymbol{\gamma}, \boldsymbol{\lambda}, \boldsymbol{\theta}) = \boldsymbol{0}$ and $\alpha \not \in \Xi_{\cS, i}$. Here, $L''(\alpha, \boldsymbol{w}, \boldsymbol{\lambda}) = f''_{\boldsymbol{x}, i}(\alpha, \boldsymbol{w}) + \boldsymbol{\lambda}^\top \boldsymbol{h}(\alpha, \boldsymbol{w})$, and $f''_{\boldsymbol{x}, i}(\alpha, \boldsymbol{w}) = f'_{\boldsymbol{x}, i}(\alpha, \boldsymbol{w}) - a\alpha$ (e.g., perturbing $f_{\boldsymbol{x}, i}(\alpha, \boldsymbol{w}) - \boldsymbol{\tau}^\top \boldsymbol{w} - a\alpha$ to have $f''_{\boldsymbol{x}, i}(\alpha, \boldsymbol{w})$). The important point in this step is that: any solution $(\alpha, \boldsymbol{w}, \boldsymbol{\gamma}, \boldsymbol{\lambda}, \boldsymbol{\theta})$ of $\boldsymbol{\mu}''(\alpha, \boldsymbol{w}, \boldsymbol{\gamma}, \boldsymbol{\lambda}, \boldsymbol{\theta}) = \boldsymbol{0}$ also satisfies $\boldsymbol{\mu}'_2(\alpha, \boldsymbol{w}, \boldsymbol{\gamma}, \boldsymbol{\lambda}, \boldsymbol{\theta}) = \boldsymbol{0}$, and $J_{\boldsymbol{\mu}_{1:2S + 2d}''}(\boldsymbol{w}, \boldsymbol{\gamma}, \boldsymbol{\lambda}, \boldsymbol{\theta})$ is exactly $J_{\boldsymbol{\mu}'_2}(\boldsymbol{w}, \boldsymbol{\gamma}, \boldsymbol{\lambda}, \boldsymbol{\theta})$. Therefore, consider any interval $I_t = (\alpha_t, \alpha_{t + 1})$ where $\alpha_t, \alpha_{t + 1}$ are two consecutive points in $\Xi_{\cS, i}$, we have $J_{\boldsymbol{\mu}''_{1:2S+2d}}(\alpha, \boldsymbol{w}, \boldsymbol{\gamma}, \boldsymbol{\lambda}, \boldsymbol{\theta})$ has full row rank, since $J_{\boldsymbol{\mu}_{1:2S + 2d}''}(\boldsymbol{w}, \boldsymbol{\gamma}, \boldsymbol{\lambda}, \boldsymbol{\theta})$ has full row rank. Now, $\boldsymbol{\mu}''_{1:2S + 2d} = \boldsymbol{0}$ defines a smooth manifold $\cM$. Consider the mapping $P: \cM \rightarrow \bbR$, where $P = \boldsymbol{\mu}''_{2S + 2d + 1}$. From Sard's theorem \ref{thm:sard}, there exists the set $A_{T_{\cS, i}, t} \subset \bbR$ with Lebesgue measure 0 such that any $\alpha \in \bbR - A_{T_{\cS, i}, t}$ is a regular value of $P$. Let $A_{T_{\cS, i}} = \cap_{t}A_{T_{\cS, i}, t}$, which has Lebesgue measure $0$ in $\bbR$, we have for any $a \in \bbR - A_{T_{\cS, i}}$, we have $J_{\boldsymbol{\mu''}}(\alpha, \boldsymbol{w}, \boldsymbol{\gamma}, \boldsymbol{\lambda}, \boldsymbol{\theta})$ has full row rank when evaluated at the solution $(\alpha, \boldsymbol{w}, \boldsymbol{\gamma}, \boldsymbol{\lambda}, \boldsymbol{\theta})$ of $\boldsymbol{\mu}''(\alpha, \boldsymbol{w}, \boldsymbol{\gamma}, \boldsymbol{\lambda}, \boldsymbol{\theta}) = \boldsymbol{0}$ and $\alpha \not \in \Xi_{\cS, i}$. This is exactly Assumption \ref{asmp:regularity-complicated}.3.

        Now, to choose $(a, \boldsymbol{\tau})$ that works for any $f_{\boldsymbol{x}, i}$ and any set of boundary functions $\cS = \{h_1, \dots, h_S\}$, we simply choose $(a, \boldsymbol{\tau}) \in B = \cap_{i, \cS}[(\bbR - T_{\cS, i}) \times (\bbR^d - A_{T_{\cS, i}})]$, which is a full measure set in $\bbR \times \bbR^d$ since it is a finite intersection of full measure sets.
    }

    {
        Finally, we will construct the $v^*_{\boldsymbol{x}}$ that: (1) has the piecewise structure that satisfies Assumption \ref{asmp:regularity}, and (2) $\|u^*_{\boldsymbol{x}} - v^*_{\boldsymbol{x}}\|_\infty$ can be arbitrarily small. The construction is as follows:
    \begin{itemize}
        \item The set of boundary functions is the same as $u^*_{\boldsymbol{x}}: \{h_{\boldsymbol{x}, 1}, \dots, h_{\boldsymbol{x}, M}\}$. 
        
        \item In any connected components $R_{\boldsymbol{x}, i}$, the piece functions $f_{\boldsymbol{x}, i}(\alpha, \boldsymbol{w})$ is perturbed by an amount $- a \alpha - \boldsymbol{\tau}^\top \boldsymbol{w}$, i.e., 
        $$
            f_{\boldsymbol{x}, i}(\alpha, \boldsymbol{w}) \leftarrow f_{\boldsymbol{x}, i}(\alpha, \boldsymbol{w}) - a\alpha - \boldsymbol{\tau}^\top \boldsymbol{w},
        $$
            where, $(a, \boldsymbol{\tau})$ is chosen random from $B_{\epsilon} \cap B$. Here, $B_{\epsilon} = \{(a, \boldsymbol{\tau}) \mid \max\{a, \tau_1, \dots, \tau_d\} \leq \epsilon\}$ is the $\epsilon$-$\ell_\infty$ ball in $T \times A$. We note that $B_{\epsilon} \cap B$ is non-empty, since $B$ has full measure in $T \times A$, meaning that we can always choose $(\alpha, \boldsymbol{w})$, no matter how small $\epsilon$ is.
        \end{itemize}
        
        By constructing $v^*_{\boldsymbol{x}}$ as above, we have:
        
        \begin{itemize}
            \item The structure of $v^*_{\boldsymbol{x}}$ satisfies Assumption \ref{asmp:regularity}, and
            \item In any region $\overline{R}_{\boldsymbol{x}, i}$, we have
            $$
                \abs{f_{\boldsymbol{x}, i}(\alpha, \boldsymbol{w}) - f^{v^*}_{\boldsymbol{x}, i}(\alpha \boldsymbol{w})} = \abs{a\alpha + \boldsymbol{\tau}^\top \boldsymbol{w}} \leq \epsilon C,
            $$
            where $C = \max \{\abs{\alpha_{\min}}, \abs{\alpha_{\max}}, \abs{w_{\min}}, \abs{w_{\max}}\}.$
        \end{itemize}
        This implies 
        $$
             \max_{\boldsymbol{w}:(\alpha, \boldsymbol{w}) \in \overline{R}_{\boldsymbol{x}, i}}f_{\boldsymbol{x}, i}(\alpha, \boldsymbol{w}) - 2\epsilon C \leq \max_{\boldsymbol{w}: (\alpha, \boldsymbol{w}) \in \overline{R}_{\boldsymbol{x}, i}}f^{v^*}_{\boldsymbol{x}, i}(\alpha, \boldsymbol{w}) \leq \max_{\boldsymbol{w}:(\alpha, \boldsymbol{w}) \in \overline{R}_{\boldsymbol{x}, i}}f_{\boldsymbol{x}, i}(\alpha, \boldsymbol{w}) + 2\epsilon C,
        $$
        or 
        $$
            u^*_{\boldsymbol{x}, i}(\alpha) - 2\epsilon C \leq v^*_{\boldsymbol{x}, i}(\alpha) \leq u^*_{\boldsymbol{x}, i}(\alpha) + 2\epsilon C \Rightarrow \|u^*_{\boldsymbol{x}, i} - v^*_{\boldsymbol{x}, i}\|_{\infty} \leq 2\epsilon C.
        $$
        Thus $\|u^*_{\boldsymbol{x}} - v^*_{\boldsymbol{x}}\|_{\infty} \leq 2\epsilon C$, and since $\epsilon$ can be arbitrarily small, we have the desired conclusion.

        \qed 
    }

\paragraph{Recovering the guarantee under Assumption \ref{asmp:regularity}} \label{apx:recovering-guarantee} \quad

We now complete the formal proof for Theorem \ref{thm:learning-guarantee-piecewise-poly}. Let $\cU = \{u_\alpha: \cX \rightarrow [0, H] \mid \alpha \in \cA\}$ be a function class of which each dual utility $u^*_{\boldsymbol{x}}$ satisfies Assumption \ref{asmp:regularity}. From Lemma \ref{lm:assumption-relaxation}, there exists a function class $\cV = \{v_\alpha: \cX \rightarrow [0, H] \mid \alpha \in \cA\}$ such that for any problem instance $\boldsymbol{x}$, we have $\|u^*_{\boldsymbol{x}} - v^*_{\boldsymbol{x}}\|_\infty$ can be arbitrarily small, and any $v^*_{\boldsymbol{x}}$ satisfies Assumption \ref{asmp:regularity-complicated}. From Theorem \ref{thm:multi-dim-multi-piece}, we have $\Pdim(\cV) = \cO(\log N + d \log (\Delta M))$. From Lemma \ref{lm:pdim-to-rademacher}, we have $\mathscr{R}_m(\cV) = \cO\left(\frac{\Pdim(\cV)}{m}\right)$. From Lemma \ref{lm:balcan-refined-icml}, we have $\hat{\mathscr{R}}_{S}(\cU) = \cO\left(\sqrt{\frac{\log N + d \log(\Delta M)}{m}}\right)$, where $S \in \cX^m$. Finally, a standard result from learning theory  gives us the final claim.

\section{Applications} \label{sec:applications}
In this section, we demonstrate the application of our results to two specific hyperparameter tuning problems in deep learning. We note that the problem might be presented as analyzing a loss function class $\cL = \{\ell_{\alpha}: \cX \rightarrow [0, H] \mid \alpha \in \cA\}$ instead of utility function class $\cU = \{u_{\alpha}: \cX \rightarrow [0, H] \mid \alpha \in \cA\}$, but our results still hold, just by defining $u_{\alpha}(\boldsymbol{x}) = H - \ell_{\alpha}(\boldsymbol{x})$. { First, we establish bounds on the complexity of tuning the linear interpolation hyperparameter for activation functions, which is motivated by DARTS \citep{liu2018darts}}. Additionally, we explore the tuning of graph kernel parameters in Graph Neural Networks (GNNs). For both applications, our analysis encompasses both regression and classification problems.

\subsection{Data-driven tuning for interpolation of neural activation functions} \label{sec:application-neural-activation}

\paragraph{Problem setting.}
We consider a feed-forward neural network $f$ with $L$ layers. Let $W_i$ denote the number of parameters in the $i^{th}$ layer, and $W = \sum_{i=1}^L W_i$ the total number of parameters. Besides, we denote by $k_i$ the number of computational nodes in layer $i$, and let $k = \sum_{i = 1}^Lk_i$. At each node, we choose between two piecewise polynomial activation functions, $o_1$ and $o_2$. For an activation function $o(z)$, we call $z_0$ a \textit{breakpoint} where $o$ changes its behavior. For example, $0$ is a breakpoint of the ReLU activation function.  \citep{liu2018darts} proposed a simple method for selecting activation functions: during training, they define a general activation function $\sigma$ as a weighted combination of $o_1$ and $o_2$. While their framework is more general, allowing for multiple activation functions and layer-specific activation, we analyze a simplified version. The combined activation function is given by:\looseness-1
\begin{equation*}
    {\color{\COMMENTCOLOR} \sigma(x) = \alpha o_1(x) + (1-\alpha) o_2(x), }
\end{equation*}
where $\alpha\in [0,1]$ is the interpolation hyperparameter. This framework can express functions like the parametric ReLU, $\sigma(z)=\max \{0,z\}+\alpha\min\{0,z\}$, which empirically outperforms the regular ReLU (i.e., $\alpha=0$) \citep{he2015delving}.

\paragraph{Parametric regression.} In parametric regression, the final layer output is $g(\alpha,\boldsymbol{w},\boldsymbol{x})=\hat{y}\in\mathbb{R}^D$, where $\boldsymbol{w}\in\mathcal{W}\subset \mathbb{R}^{W}$ is the parameter vector and $\alpha$ is the architecture hyperparameter. The validation loss for a single example $(x,y)$ is $\|g(\alpha,\boldsymbol{w},x)-y\|^2$, and for $T$ examples, we define 
$$
\ell_{\alpha}((X,Y)) = \min_{\boldsymbol{w} \in \cW}\frac{1}{T}\sum_{(x,y)\in(X,Y)}\|g(\alpha,\boldsymbol{w},x)-y\|^2 = \min_{\boldsymbol{w} \in \cW}f((X, Y), \alpha, \boldsymbol{w}).
$$ 
With $\mathcal{X}$ as the space of $T$-example validation sets, we define the loss function class $\mathcal{L}^{\text{AF}} = \{\ell_{\alpha}: \mathcal{X} \rightarrow \mathbb{R} \mid \alpha \in [\alpha_{\min}, \alpha_{\max}]\}$. We aim to provide a learning-theoretic guarantee for $\mathcal{L}^{\text{AF}}$.

\begin{theorem} \label{thm:nas-pdim}
     Let $\cL^{\text{AF}}$ denote loss function class defined above, with activation functions $o_1,o_2$ having maximum degree $\Delta$ and maximum breakpoints $p$. Given a problem instance $\boldsymbol{x} = (X, Y)$, the dual loss function is defined as $\ell^*_{\boldsymbol{x}}(\alpha) := \min_{\boldsymbol{w} \in \cW} f(\boldsymbol{x}, \alpha, \boldsymbol{w}) = \min_{w \in \cW}f_{\boldsymbol{x}}(\alpha, \boldsymbol{w})$. Then, $f_{\boldsymbol{x}}(\alpha, \boldsymbol{w})$ admits piecewise polynomial structure with bounded pieces and boundaries. Further, 
     if the piecewise structure of $f_{\boldsymbol{x}}(\alpha, \boldsymbol{w})$ satisfies Assumption \ref{asmp:regularity}, then for any $\delta \in (0, 1)$, w.p. at least $1 - \delta$ over the draw of problem instances $\boldsymbol{x} \sim \cD^m$, where $\cD$ is some distribution over $\cX$, we have 
     $$
        \abs{\bbE_{\boldsymbol{x} \sim \cD} [\ell_{\hat{\alpha}_{{\text{ERM}}}}(\boldsymbol{x})] - \bbE_{\boldsymbol{x} \sim \cD} [\ell_{\alpha^*}(\boldsymbol{x})]} = \cO\left(\sqrt{\frac{L^2W\log \Delta + LW\log (Tpk) + \log(1/\delta)}{m}}\right).
     $$
\end{theorem}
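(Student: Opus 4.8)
The plan is to exhibit the piecewise polynomial structure of the \paramdependent\ $f_{\boldsymbol{x}}(\alpha,\boldsymbol{w}) = \frac1T\sum_{(x,y)\in(X,Y)}\|g(\alpha,\boldsymbol{w},x)-y\|^2$ explicitly, read off bounds on the number of pieces $N$, the maximal polynomial degree, and the number of boundary polynomials adjacent to any one piece $M$, and then invoke Theorem~\ref{thm:learning-guarantee-piecewise-poly} with parameter dimension $d = W$. First I would reduce to the abstract setting: since $\cL^{\text{AF}}$ is a loss class, set $u_\alpha(\boldsymbol{x}) = H - \ell_\alpha(\boldsymbol{x})$ (assuming the validation loss is uniformly bounded by $H$ on $\cW$), so the piece-dependent function becomes $H - f_{\boldsymbol{x}}$, which has exactly the same piecewise structure, and the $\min$ over $\boldsymbol{w}$ becomes the $\sup$ the framework expects. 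Then I would set up the layer-by-layer decomposition: the combined activation $\sigma(z) = \alpha o_1(z) + (1-\alpha)o_2(z)$ is, as a function of its scalar argument $z$, piecewise polynomial with at most $2p$ breakpoints and pieces that are polynomials of degree $\le\Delta$ in $z$ and degree $1$ in $\alpha$. A \emph{cell} of $\cA\times\cW$ is then specified by choosing, for every triple (layer, node, validation input $x\in X$), which polynomial piece of $\sigma$ is active; on such a cell every pre-activation, the network output $g$, and hence $f_{\boldsymbol{x}}$, is a genuine polynomial in $(\alpha,\boldsymbol{w})$, and the cell is carved out by the breakpoint-crossing equalities ``pre-activation $=$ breakpoint'', which supply the boundary polynomials.

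Next I would carry out the degree and counting bookkeeping. A routine induction on the layer index (with $\deg_i \le \Delta(\deg_{i-1}+1)+1$, where the $+1$'s account for the linear weights feeding the next layer and for the extra factor of $\alpha$ introduced at each activation) shows the post-activation of layer $i$ has total degree $\Delta^{\cO(i)}$ in $(\alpha,\boldsymbol{w})$, so $g$, the squared loss, and all breakpoint-crossing polynomials have degree $\Delta^{\cO(L)}$; hence the degree parameter fed into Theorem~\ref{thm:learning-guarantee-piecewise-poly} has logarithm $\cO(L\log\Delta)$. Layer $i$ contributes at most $2pk_iT$ breakpoint-crossing polynomials, and the active interval at each (node, input) pair is bounded by at most two of them, so any single cell is adjacent to $M = \cO(pkT)$ boundary polynomials. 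For $N$, I would refine the partition one layer at a time: layer $i$ adds $\cO(pk_iT)$ polynomial conditions of degree $\Delta^{\cO(L)}$ in the at most $W+1$ relevant variables $(\alpha,\boldsymbol{w}^{(1)},\dots,\boldsymbol{w}^{(i)})$, so by the Milnor--Thom / Warren-type bound on connected components of polynomial sign patterns (Lemma~\ref{thm:warren-connected-components-polynomials}) each existing cell is split into at most $(\mathrm{poly}(pk_iT,\Delta^{L}))^{\cO(W)}$ subcells; multiplying over the $L$ layers yields $\log N = \cO(WL^2\log\Delta + WL\log(pkT))$.

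Finally I would assemble the bound. Under Assumption~\ref{asmp:regularity} (a hypothesis of the theorem), Theorem~\ref{thm:learning-guarantee-piecewise-poly} applied with $d = W$ and the quantities above gives $\Pdim(\cL^{\text{AF}}) = \cO(\log N + W\log(\Delta^{\cO(L)} M))$; since $W\log(\Delta^{\cO(L)} M) = \cO(WL\log\Delta + W\log(pkT))$ is dominated by $\log N$, we obtain $\Pdim(\cL^{\text{AF}}) = \cO(WL^2\log\Delta + WL\log(pkT))$, and the usual uniform-convergence argument (exactly as in the recovery step of Theorem~\ref{thm:learning-guarantee-piecewise-poly}) converts this pseudo-dimension bound into the claimed high-probability generalization guarantee, with the $\log(1/\delta)$ term coming from the confidence level.

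The hard part will be the inductive bookkeeping of the piecewise structure through the depth of the network: one must make sure the polynomial degree grows only as $\Delta^{\cO(L)}$ and not faster despite the repeated composition and the coupling of $\alpha$ with $\boldsymbol{w}$, count the cells with exactly the right dependence on $W$ and $L$ (this is precisely where the $L^2W$ term originates --- one factor of $L$ from iterating over layers and one more from the degree blow-up sitting inside the logarithm), and verify that any one cell meets only $\cO(pkT)$ boundary polynomials so that the $d\log(\Delta M)$ term of Theorem~\ref{thm:learning-guarantee-piecewise-poly} is genuinely lower order. The remaining pieces --- the $\min$-to-$\sup$ reduction, boundedness of the loss, and checking that all piece and boundary functions are honest polynomials (which holds because a feed-forward network with polynomial activations performs no divisions) --- are routine.
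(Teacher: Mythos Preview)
Your proposal is correct and takes essentially the same approach as the paper: a layer-by-layer induction extending the argument of \citet{bartlett1998almost} to include the hyperparameter $\alpha$ establishes the piecewise polynomial structure of $f_{\boldsymbol{x}}$ and bounds $N$, $M$, and the degree, after which Theorem~\ref{thm:learning-guarantee-piecewise-poly} is invoked directly. One minor note: in Theorem~\ref{thm:learning-guarantee-piecewise-poly} the quantity $M$ is the \emph{total} number of boundary polynomials rather than the number adjacent to any one cell, but since the total here is also $\cO(Tpk)$ your bound is unaffected.
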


{  
    \proof 
    \textbf{Technical overview.\quad} Given a problem instance $(X, Y)$, the key idea is to establish the piecewise polynomial structure for the function $f_{(X, Y)}(\alpha, \boldsymbol{w})$ as a function of both the parameters $\boldsymbol{w}$ and the architecture hyperparameter $\alpha$, and then apply our main result Theorem \ref{thm:learning-guarantee-piecewise-poly}. We establish this structure by extending the inductive argument due to \citep{bartlett1998almost} which gives the piecewise polynomial structure of the neural network output as a function of the parameters $\boldsymbol{w}$ (i.e.\ when there are no hyperparameters) on any fixed collection of input examples. We also investigate the case where the network is used for classification tasks and obtain similar sample complexity bounds (Appendix \ref{apx:nas-binary-classification}).

    \noindent \textbf{Detailed proof.\quad} Let $x_1,\dots, x_T$ denote the fixed (unlabeled) validation examples from the \textit{fixed} validation dataset $(X,Y)$. We will show a bound $N$ on a partition of the combined parameter-hyperparameter space $\cW\times \bbR$, such that within each piece the function $f_{(X, Y)}(\alpha, \boldsymbol{w})$ is given by a fixed bounded-degree polynomial function in $\alpha,\boldsymbol{w}$ on the given fixed dataset $(X,Y)$, where the boundaries of the partition are induced by at most $M$ distinct polynomial threshold functions. This structure allows us to use our result Theorem \ref{thm:learning-guarantee-piecewise-poly} to establish a learning guarantee for the function class $\cL^{\text{AF}}$.

    The proof proceeds by an induction on the number of network layers $L$. For a single layer $L=1$, the neural network prediction at node $j\in [k_1]$ is given by
    $$
        \hat{y}_{ij}= \alpha o_1(\boldsymbol{w}_jx_i)+(1-\alpha)o_2(\boldsymbol{w}_jx_i),
    $$
     for $i\in[T]$. $\cW\times \bbR$ can be partitioned by $2Tk_1p$ affine boundary functions of the form $\boldsymbol{w}_jx_i-t_k$, where $t_k$ is a breakpoint of $o_1$ or $o_2$, such that $\hat{y}_{ij}$ is a fixed polynomial of degree at most $l+1$ in $\alpha,\boldsymbol{w}$ in any piece of the partition $\cP_1$ induced by the boundary functions. By Warren's theorem, we have $|\cP_1|\le 2\left(\frac{4eTk_1p}{W_1}\right)^{W_1}$.

    Now suppose the neural network function computed at any node  in layer $L\le r$ for some $r\ge 1$ is given by a piecewise polynomial function of $\alpha,\boldsymbol{w}$ with at most $|\cP_r|\le \prod_{q=1}^r2\left(\frac{4eTk_qp(\Delta + 1)^q}{W_q}\right)^{W_q}$ pieces, and at most $2Tp\sum_{q=1}^rk_q$ polynomial boundary functions with degree at most $(\Delta + 1)^r$. Let $j'\in[k_{r+1}]$ be a node in layer $r+1$. The node prediction is given by $\hat{y}_{ij'}= \alpha o_1(\boldsymbol{w}_{j'}\hat{y}_i)+(1-\alpha)o_2(\boldsymbol{w}_{j'}\hat{y}_i)$, where $\hat{y}_i$ denotes the incoming prediction to node $j'$ for input $x_i$. By inductive hypothesis, there are at most $2Tk_{r+1}p$ polynomials of degree at most $(\Delta+1)^r+1$ such that in each piece of the refinement of $\cP_r$ induced by these polynomial boundaries, $\hat{y}_{ij'}$ is a fixed polynomial with degree at most $(\Delta + 1)^{r+1}$. By Warren's theorem, the number of pieces in this refinement is at most $|\cP_{r+1}|\le \prod_{q=1}^{r+1}2\left(\frac{4eTk_qp(\Delta+1)^q}{W_q}\right)^{W_q}$.

    Thus $f_{(X, Y)}(\alpha,\boldsymbol{w})$ is piecewise polynomial with at most $2Tp\sum_{q=1}^Lk_q=2mpk$ polynomial boundary functions with degree at most $(\Delta+1)^{2L}$, and number of pieces at most $|\cP_{L}|\le \Pi_{q=1}^{L}2\left(\frac{4eTk_qp(\Delta+1)^q}{W_q}\right)^{W_q}$. Assume that the piecewise polynomial structure of $f_{(X, Y)}(\alpha, \boldsymbol{w})$ satisfies Assumption \ref{asmp:regularity}, then applying Theorem \ref{thm:learning-guarantee-piecewise-poly} and a standard  learning theory result gives us the final claim.
    
    \qed 
    \begin{remark} For completeness, we also consider an alternative setting where our task is classification instead of regression. See Appendix \ref{apx:nas-binary-classification} for full setting details and results.
    \end{remark}
}

\subsection{Data-driven hyperparameter tuning for graph polynomial kernels}
In this section, we demonstrate the applicability of our proposed results to tuning the hyperparameter of a graph kernel. Here, we consider the classification case and defer the regression setting to the Appendix.

\paragraph{Partially labeled graph instance.}
Consider a graph $\cG = (\cV, \cE)$, where $\cV$ and $\cE$ are sets of vertices and edges, respectively. Let $n = \abs{\cV}$ be the number of vertices. Each vertex in the graph is associated with a $d$-dimensional feature vector, and let $X \in \bbR^{n \times d}$ denote the matrix that contains all the vertices (as feature vectors) in the graph. We also have a set of indices $\cY_L \subset [n]$ of labeled vertices, where each vertex belongs to one of $C$ categories and $L = \abs{\cY_L}$ is the number of labeled vertices. Let $y \in [F]^{L}$ be the vector representing the true labels of labeled vertices, where the coordinate $y_l$ of $y$ corresponds to the label of vertex $l \in \cY_L$.
 
We want to build a model for classifying the remaining (unlabeled) vertices, which correspond to $\cY_U = [n] \setminus \cY_L$. A popular and effective approach for this is to train a graph convolutional network (GCN) \citep{kipf2017semi}. Along with the vertex matrix $X$, we are also given the distance matrix $\boldsymbol{\delta} = [\delta_{i, j}]_{(i, j) \in [n]^2}$ encoding the correlation between vertices in the graph. 
The adjacency matrix $A$ is given by a polynomial kernel of degree $\Delta$ and hyperparameter $\alpha > 0$
    $$
        A_{i, j} = (\delta(i, j) + \alpha)^\Delta.
    $$
Let $\Tilde{A} = A + I_{n}$, where $I_n$ is the identity matrix, and $\Tilde{D} = [\Tilde{D}_{i, j}]_{[n]^2}$ where 
$
\Tilde{D}_{i, j} = 0 \text{ if $i \neq j$, and }\Tilde{D}_{i, i} = \sum_{j = 1}^n\Tilde{A}_{i, j} \text{ for $i \in [n]$}
$. We then denote a problem instance $\boldsymbol{x} = (X, y, \boldsymbol{\delta}, \cY_L)$ and call $\cX$ the set of all problem instances.

\paragraph{Network architecture.} We consider a simple two-layer GCN $f$ \citep{kipf2017semi}, which takes the adjacency matrix $A$ and vertex  matrix $X$ as inputs and outputs $Z = f(X, A)$ of the form
\begin{equation*}
    \begin{aligned}
        Z = 
        \hat{A}\,\text{ReLU}(\hat{A}XW^{(0)})W^{(1)},
    \end{aligned}
\end{equation*}
where $\hat{A} =\Tilde{D}^{-1}\Tilde{A}$ is the row-normalized adjacency matrix, $W^{(0)} \in \bbR^{d \times d_0}$ is the weight matrix of the first layer, and $W^{(1)} \in \bbR^{d_0 \times F}$ is the hidden-to-output weight matrix. Here, $z_i$ is the $i^{th}$-row of $Z$ representing the score prediction of the model. The prediction $\hat{y}_i$ for vertex $i \in \cY_U$ is then computed from $Z$ as $\hat{y}_i = \max z_i$ which is the maximum coordinate of vector $z_i$.

\paragraph{Objective function and the loss function class.} We consider the 0-1 loss function corresponding to hyperparameter $\alpha$ and network parameters $\boldsymbol{w} = (\boldsymbol{w}^{(0)}, \boldsymbol{w}^{(1)})$ for given problem instance $\boldsymbol{x}$, 
$
    f_{\boldsymbol{x}}(\alpha, \boldsymbol{w}) = \frac{1}{\abs{\cY_L}}\sum_{i \in \cY_L}\mathbb{I}_{\{ \hat{y}_i \ne y_i\}}.
$
The dual loss function corresponding to hyperparameter $\alpha$ for instance $\boldsymbol{x}$ is given as
$
    \ell_{\alpha}(\boldsymbol{x}) = \max_{\boldsymbol{w}} f_{\boldsymbol{x}}(\alpha, \boldsymbol{w}),
$
and the corresponding loss function class is
$
\cL^{\text{GCN}} = \{l_{\alpha}: \cX \rightarrow [0, 1] \mid \alpha \in \cA\}.
$

To analyze the learning guarantee of $\cL^{\text{GCN}}$, we first show that any dual loss function $
\ell^*_{\boldsymbol{x}}(\alpha) := \ell_{\alpha}(\boldsymbol{x}) = \min_{\boldsymbol{w}}f_{\boldsymbol{x}}(\alpha, \boldsymbol{w})
$, $f_{\boldsymbol{x}}(\alpha, \boldsymbol{w})$ has a piecewise constant structure, where:
The pieces are bounded by rational functions of $\alpha$ and $\boldsymbol{w}$ with bounded degree and positive denominators. We bound the number of connected components created by these functions and apply Theorem \ref{thm:learning-guarantee-piecewise-constant} to derive our result.
\begin{theorem} \label{thm:gcn-classification-main-thm} \label{thm:pdim-GCN-classification} Let $\cL^{\text{GCN}}$ denote the loss function class defined above. Given a problem instance $\boldsymbol{x}$, the dual loss function is defined as $\ell^*_{\boldsymbol{x}}(\alpha) := \min_{\boldsymbol{w} \in \cW}f(\boldsymbol{x}, \alpha, \boldsymbol{w})) = \min_{\boldsymbol{w} \in \cW}f_{\boldsymbol{x}}(\alpha, \boldsymbol{w})$. Then $f_{\boldsymbol{x}}(\alpha, \boldsymbol{w})$ admits piecewise constant structure. Furthermore, for any $\delta \in (0, 1)$, w.p. at least $1 - \delta$ over the draw of problem instances $\boldsymbol{x} = (\boldsymbol{x}_1, \dots, \boldsymbol{x}_m) \sim \cD^m$, where $\cD$ is some problem distribution over $\cX$, we have 
$$
    \abs{\bbE_{\boldsymbol{x} \sim \cD} [\ell_{\hat{\alpha}_{{\text{ERM}}}}(\boldsymbol{x})] - \bbE_{\boldsymbol{x} \sim \cD} [\ell_{\alpha^*}(\boldsymbol{x})]} = \cO\left(\sqrt{\frac{d_0(d + F)\log nF\Delta + \log (1/\delta)}{m} }\right).
$$
\end{theorem}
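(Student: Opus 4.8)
\textbf{Proof plan for Theorem~\ref{thm:pdim-GCN-classification}.} The plan is to verify the hypothesis of Theorem~\ref{thm:learning-guarantee-piecewise-constant}: for a fixed instance $\boldsymbol{x} = (X, y, \boldsymbol{\delta}, \cY_L)$ I will show that $f_{\boldsymbol{x}}(\alpha, \boldsymbol{w})$ is piecewise constant over $\cA \times \cW$, where $\boldsymbol{w} = (\boldsymbol{w}^{(0)}, \boldsymbol{w}^{(1)})$ ranges over $\cW \subseteq \bbR^{d_0(d+F)}$ (with $W^{(0)} \in \bbR^{d\times d_0}$, $W^{(1)} \in \bbR^{d_0 \times F}$), with the pieces cut out by a polynomial arrangement of bounded size and degree; I will then bound the number of pieces $N$ using the remark following Theorem~\ref{thm:learning-guarantee-piecewise-constant} together with Lemma~\ref{thm:warren-connected-components-polynomials}, and conclude. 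The structural fact I lean on throughout is that all $\alpha$-dependence enters only through $A_{i,j} = (\delta(i,j) + \alpha)^\Delta$, so the entries of $\Tilde A$ are polynomials of degree $\le \Delta$ in $\alpha$, and that $\Tilde D_{ii}(\alpha) = \sum_j \Tilde A_{ij}(\alpha)$ is a \emph{strictly positive} degree-$\le\Delta$ polynomial because $\alpha > 0$ and $\delta(i,j) \ge 0$; positivity of these denominators is what lets me reduce sign comparisons of the network's rational outputs to sign comparisons of polynomials.

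First I would handle the hidden layer. The $(i,l)$ entry of $\Tilde A X W^{(0)}$ is $N_{il}(\alpha, \boldsymbol{w}^{(0)}) := \sum_{j,k}\Tilde A_{ij}(\alpha) X_{jk} W^{(0)}_{kl}$, a polynomial of degree $\le \Delta$ in $\alpha$ and degree $1$ in $\boldsymbol{w}^{(0)}$, so the $(i,l)$ entry of $\hat A X W^{(0)}$ is $N_{il}/\Tilde D_{ii}$ and the ReLU output is $H_{il} = \max\{0, N_{il}\}/\Tilde D_{ii}$, whose functional form is determined solely by $\mathrm{sign}(N_{il})$. The $nd_0$ polynomials $\{N_{il}\}_{i\in[n],\, l\in[d_0]}$, each of total degree $\le \Delta + 1$, partition the $(d_0 d + 1)$-dimensional space of $(\alpha, \boldsymbol{w}^{(0)})$ values into $N_1 = \cO\big((n d_0 \Delta)^{d_0 d + 1}\big)$ connected regions by Lemma~\ref{thm:warren-connected-components-polynomials}, and on each region the full sign pattern of the $N_{il}$, hence the (rational) form of every $H_{il}$, is fixed.

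Next I would handle the output layer inside one such region. There $Z_{if} = \sum_l (\hat A H)_{il} W^{(1)}_{lf}$ is a fixed rational function whose numerator $P_{if}(\alpha, \boldsymbol{w})$ has degree $\cO(n\Delta)$ in $\alpha$, degree $1$ in $\boldsymbol{w}^{(0)}$ and degree $1$ in $\boldsymbol{w}^{(1)}$, over the strictly positive common denominator $\Tilde D_{ii}\prod_j \Tilde D_{jj}$. The prediction $\hat y_i = \arg\max_f Z_{if}$, and hence the $0$--$1$ loss, stays constant as long as the sign of $Z_{if} - Z_{if'} = (P_{if} - P_{if'})/(\text{positive})$ is constant for all labeled $i \in \cY_L$ and all pairs $f \ne f'$; these are at most $L{F \choose 2}$ polynomials of total degree $\cO(n\Delta)$ in the full $(d_0(d+F)+1)$-dimensional space. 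Refining the region (crossed with the $\boldsymbol{w}^{(1)}$-box) by these polynomials via Lemma~\ref{thm:warren-connected-components-polynomials} yields $N_2 = \cO\big((L F^2 n \Delta)^{d_0(d+F)+1}\big)$ subregions, on each of which the hidden sign pattern and all argmax comparisons — and so $f_{\boldsymbol{x}}$ — are constant. Therefore $f_{\boldsymbol{x}}(\alpha, \boldsymbol{w})$ is piecewise constant with $N \le N_1 N_2$ pieces, and $\log N = \cO\big(d_0(d+F)\log(nF\Delta)\big)$ after using $L \le n$ and absorbing lower-order terms. Plugging this into Theorem~\ref{thm:learning-guarantee-piecewise-constant} (with the loss-to-utility conversion $u_\alpha = H - \ell_\alpha$, here $H = 1$) gives $\Pdim(\cL^{\text{GCN}}) = \cO(\log N)$ and the claimed generalization bound.

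The main obstacle I anticipate is organizing the partition so the output $\arg\max$ does not blow up the count exponentially: a naive enumeration of boundary polynomials would need a separate polynomial for the comparison $Z_{if} = Z_{if'}$ in each of the $2^{n d_0}$ hidden sign patterns. The two-stage argument above is precisely what circumvents this — first bound the number of hidden-layer regions, then add only $L{F \choose 2}$ comparison boundaries \emph{within} each region — keeping the total at $N_1 \cdot N_2$. The remaining work is careful bookkeeping: verifying the denominators stay positive on $\cA$ (so every relevant predicate is genuinely a polynomial-sign condition), checking that the output numerators have degree only $\mathrm{poly}(n, \Delta)$, and accounting for the restriction of the arrangement to the box $\cA\times\cW$, none of which affects the asymptotics.
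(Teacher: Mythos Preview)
Your proposal is correct and follows essentially the same two-stage approach as the paper's Lemma~\ref{lm:piecewise-constant-structure-gcn-classification}: partition first by the hidden-layer ReLU sign pattern, then within each region by the output-layer $\arg\max$ comparisons, bound both piece counts via Lemma~\ref{thm:warren-connected-components-polynomials}, and invoke Theorem~\ref{thm:learning-guarantee-piecewise-constant}. The only minor bookkeeping difference is your (more conservative) degree estimate of $\cO(n\Delta)$ for the output comparison polynomials, arising from clearing the $n$ distinct row-normalizers $\Tilde D_{jj}$, versus the paper's $2\Delta+6$; this does not affect the asymptotic $\log N$ or the final bound.
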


{
    \proof To prove \autoref{thm:gcn-classification-main-thm}, we first show that given any problem instance $\boldsymbol{x}$, the function $f(\boldsymbol{x}, \boldsymbol{w}; {\alpha}) = f_{\boldsymbol{x}}(\alpha, \boldsymbol{w})$ is a piecewise constant function, where the boundaries are rational threshold functions of $\alpha$ and $\boldsymbol{w}$. We then proceed to bound the number of rational functions and their maximum degrees, which can be used to give an upper-bound for the number of connected components, using \ref{thm:warren-connected-components-polynomials}. After giving an upper-bound for the number of connected components, we then use Theorem \ref{thm:learning-guarantee-piecewise-constant} to recover the learning guarantee for $\cU$.
    \begin{lemma} \label{lm:piecewise-constant-structure-gcn-classification}
        Given a problem instance $\boldsymbol{x} = (X, y, \boldsymbol{\delta}, \cY_L)$ that contains the vertices representation $X$, the label of labeled vertices, the indices of labeled vertices $\cY_L$, and the distance matrix $\boldsymbol{\delta}$, consider the function 
        $$
            f_{\boldsymbol{x}}(\alpha, \boldsymbol{w}) := f(\boldsymbol{x}, \alpha, \boldsymbol{w}) = \frac{1}{\abs{\cY_L}}\sum_{i \in \cY_L}\bbI_{\{\hat{y}_i \ne y_i\}}
        $$
        which measures the 0-1 loss corresponding to the GCN parameter $\boldsymbol{w}$, polynomial kernel parameter $\alpha$, and labeled vertices on problem instance $\boldsymbol{x}$. Then we can partition the space of $\boldsymbol{w}$ and $\alpha$ into 
        $$\cO\left(\left(\frac{(nF^2)(2\Delta + 6)}{1 + dd_0 + d_0F}\right)^{1 + dd_0 + d_0F}  (\Delta + 1)^{nd_0}\right)$$ 
        connected components, in each of which the function $f_{\boldsymbol{x}}(\alpha, \boldsymbol{w})$ is a constant function.
    \end{lemma}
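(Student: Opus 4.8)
The plan is to show that $f_{\boldsymbol{x}}(\alpha,\boldsymbol{w})$ is constant on each open cell of an arrangement of polynomial hypersurfaces in the $1+dd_0+d_0F$ real variables $(\alpha, W^{(0)}, W^{(1)})$, and then to count the cells with the Warren-type bound of Lemma~\ref{thm:warren-connected-components-polynomials}. The first observation is that $f_{\boldsymbol{x}}(\alpha,\boldsymbol{w})=\frac{1}{\abs{\cY_L}}\sum_{i\in\cY_L}\mathbb{I}\{\hat{y}_i\neq y_i\}$ depends on $(\alpha,\boldsymbol{w})$ only through the predicted labels $\hat{y}_i=\argmax_c z_{ic}$ for $i\in\cY_L$, and each $z_{ic}$ is continuous in $(\alpha,\boldsymbol{w})$ away from the kinks of the hidden ReLU layer. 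So I would partition $\cA\times\cW$ in two stages: first by the \emph{activation pattern} of the $nd_0$ hidden ReLU units, and then, within each activation region, by the \emph{argmax pattern} $(\argmax_c z_{ic})_{i\in\cY_L}$. On a relatively open cell of the common refinement every $\hat{y}_i$ is fixed, hence so is $f_{\boldsymbol{x}}$; this already proves the piecewise-constant claim, and the adjacent lower-dimensional boundary pieces can afterward be attached to form the closed connected sets $R_{\boldsymbol{x},i}$ without changing the count.

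Next I would identify the boundary functions and their degrees. The activation of hidden unit $k$ at vertex $j$ flips precisely on $\{(\hat{A}XW^{(0)})_{jk}=0\}$. Writing $\hat{A}=\tilde{D}^{-1}\tilde{A}$, the denominator $\tilde{D}_{jj}=\sum_l\tilde{A}_{jl}$ is a polynomial in $\alpha$ of degree at most $\Delta$ which is \emph{strictly positive} on $\cA$ (each $\tilde{A}_{jl}=(\delta(j,l)+\alpha)^{\Delta}+\mathbb{I}\{j=l\}>0$ for $\alpha>0$), so this sign equals that of the polynomial $(\tilde{A}XW^{(0)})_{jk}$, which has degree at most $\Delta$ in $\alpha$ and degree one in $W^{(0)}$; there are $nd_0$ such preactivation polynomials in the $1+dd_0$ variables $(\alpha,W^{(0)})$. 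Inside a fixed activation pattern each post-ReLU entry becomes an honest rational function $\tfrac{\Pi_{jk}(\tilde{A}XW^{(0)})_{jk}}{\tilde{D}_{jj}}$, so every $z_{ic}$ is rational in $(\alpha,\boldsymbol{w})$ with denominator a product of the positive $\tilde{D}$'s, and each argmax boundary $\{z_{ic}=z_{ic'}\}$ ($i\in\cY_L$, $c<c'\in[F]$) is the zero set of the polynomial obtained by clearing these positive denominators. Tracking the degree through the two applications of $\hat{A}$, the linear maps $W^{(0)},W^{(1)}$, and the denominator clearing — using that the outer factor $1/\tilde{D}_{ii}$ is irrelevant to an $\argmax$ over $c$ — bounds this polynomial's degree by $2\Delta+6$, and there are at most $\abs{\cY_L}\binom{F}{2}\le nF^2$ such boundaries per activation pattern.

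It then remains to count. The $nd_0$ preactivation polynomials (degree $\le\Delta+1$ in $1+dd_0$ variables) create at most $(\Delta+1)^{nd_0}$ activation regions, and within each one the $\le nF^2$ argmax boundary polynomials (degree $\le 2\Delta+6$ in $1+dd_0+d_0F$ variables) carve, by Lemma~\ref{thm:warren-connected-components-polynomials}, at most $\cO\!\Big(\big(\tfrac{nF^2(2\Delta+6)}{1+dd_0+d_0F}\big)^{1+dd_0+d_0F}\Big)$ connected components. Multiplying the two bounds gives the claimed estimate, and on each resulting set $f_{\boldsymbol{x}}(\alpha,\boldsymbol{w})$ is constant, which is exactly the assertion (this is then what feeds into Theorem~\ref{thm:learning-guarantee-piecewise-constant} to obtain the sample complexity bound).

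The main obstacle is the degree bookkeeping of the second step: one must verify that every row-normalization denominator is genuinely positive so that only numerator signs matter, that fixing the ReLU pattern really turns each hidden activation into a single polynomial rather than a piecewise object, and — most delicately — organize the clearing of the many distinct $\tilde{D}_{jj}$ denominators in the comparison $z_{ic}=z_{ic'}$ so that one lands on the stated degree bound; the $\argmax$-invariance under multiplication by the positive factor $1/\tilde{D}_{ii}$ is what lets one discard the outer normalization entirely. A minor secondary issue is ties in the $\argmax$, which occur only on the lower-dimensional argmax boundaries and therefore do not affect the count of cells.
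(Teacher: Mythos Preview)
Your proposal is correct and follows essentially the same approach as the paper: a two-stage partition first by the $nd_0$ ReLU activation signs (yielding the $\cO((\Delta+1)^{nd_0})$ factor) and then, within each activation region, by the $\cO(nF^2)$ pairwise argmax comparisons cleared to polynomials of degree $\le 2\Delta+6$, with Lemma~\ref{thm:warren-connected-components-polynomials} supplying the second factor. Your explicit remark that the outer $1/\tilde{D}_{ii}$ is common to all classes of vertex $i$ and hence drops out of the $\argmax$ is a nice clarification that the paper's proof leaves implicit.
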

    \begin{proof}
        First, recall that $Z = \text{GCN}(X, A) = \hat{A}\text{ReLU}(\hat{A}XW^{(0)})W^{(1)}$, where $\hat{A} =  \Tilde{D}^{-1}\Tilde{A}$ is the row-normalized adjacency matrix, and the matrices $\Tilde{A} = [\Tilde{A}_{i, j}] = A + I_n$ and $\Tilde{D} = [\Tilde{D}_{i, j}]$ are calculated as
        $$
        A_{i, j} = (\delta_{i, j} + \alpha)^\Delta, 
        $$
        $$
        \Tilde{D}_{i, j} = 0 \text{ if $i \neq j$, and }\Tilde{D}_{i, i} = \sum_{j = 1}^n\Tilde{A}_{i, j} \text{ for $i \in [n]$}.
        $$
        Here, recall that $\boldsymbol{\delta} = [\delta_{i, j}]$ is the distance matrix. We first proceed to analyze the output $Z$ step by step as follow:
        \begin{itemize}
            \item Consider the matrix $T^{(1)}= XW^{(0)}$ of size $n \times d_0$. It is clear that each element of $T^{(1)}$ is a polynomial of $W^{(0)}$ of degree at most $1$.
            \item Consider the matrix $T^{(2)} = \hat{A}T^{(1)}$ of size $n \times d_0$. We can see that each element of matrix $\hat{A}$ is a rational function of $\alpha$ of degree at most $\Delta$. Moreover, by definition, the  denominator of each rational function is strictly positive. Therefore, each element of matrix $T^{(2)}$ is a rational function of $W^{(0)}$ and $\alpha$ of degree at most $\Delta + 1$. 
            \item Consider the matrix $T^{(3)} = \text{ReLU}(T^{(2)})$ of size $n \times d_0$. By definition, we have
            $$
                T^{(3)}_{i, j} = \begin{cases}
                    T^{(2)}_{i, j}, & \text{ if $ T^{(2)}_{i, j} \geq 0$} \\
                    0, &\text{ otherwise}.
                \end{cases}
            $$
            This implies that there are $n \times d_0$ boundary functions of the form $\mathbb{I}_{T^{(2)}_{i, j} \geq 0}$ where $T^{(2)}_{i, j}$ is a rational function of $W^{(0)}$ and $\alpha$ of degree at most $\Delta + 1$ with strictly positive denominators. From \autoref{thm:warren-connected-components-polynomials}, the number of connected components given by those $n \times d_0$ boundaries are $\cO\left((\Delta + 1)^{n d_0}\right)$. In each connected component, the form of $T^{(3)}$ is fixed, in the sense that each element of $T^{(3)}$ is a rational function in $W^{(0)}$ and $\alpha$ of degree at most $\Delta + 1$.
            
            \item Consider the matrix $T^{(4)} = T^{(3)}W^{(1)}$. In connected components defined above, it is clear that each element of $T^{(4)}$ is either $0$ or a rational function in $W^{(0)}, W^{(1)}$, and $\alpha$ of degree at most $\Delta + 2$.
            \item Finally, consider $Z = \hat{A}T^{(4)}$. In each connected component defined above, we can see that each element of $Z$ is either $0$ or a rational function in $W^{(0)}, W^{(1)}$, and $\alpha$ of degree at most $\Delta + 3$. 
        \end{itemize}
        
        In summary, we proved above that the space of $\boldsymbol{w}$, $\alpha$ can be partitioned into $\cO((\Delta + 1)^{nd_0})$ connected components, over each of which the output $Z = \text{GCN}(X, A)$ is a matrix with each element is rational function in $W^{(0}$, $W^{(1)}$, and $\alpha$ of degree at most $\Delta + 3$. Now in each connected component $C$, each corresponding to a fixed form of $Z$, we will analyze the behavior of $f_{\boldsymbol{x}}(\alpha, \boldsymbol{w})$, where
        $$
            f_{\boldsymbol{x}}(\alpha, \boldsymbol{w}) = \frac{1}{\abs{\cY_L}}\sum_{i \in \cY_L}\bbI_{\hat{y}_i \ne y_i}.
        $$
        Here $\hat{y_i} = \argmax_{j \in {1, \dots, F}}Z_{i, j}$, assuming that we break tie arbitrarily but consistently. For any $F \geq j > k \geq 1$, consider the boundary function $\mathbb{I}_{Z_{i, j} \geq Z_{i, k}}$, where $Z_{i, j}$ and $Z_{i, k}$ are rational functions in $\alpha$ and $\boldsymbol{w}$ of degree at most $\Delta + 3$, and have strictly positive denominators. This means that the boundary function $\mathbb{I}_{Z_{i, j} \geq Z_{i, k}}$ can also equivalently rewritten as $\mathbb{I}_{\Tilde{Z}_{i, j} \geq 0}$, where $\Tilde{Z}_{i, j}$ is a polynomial in $\alpha$ and $\boldsymbol{w}$ of degree at most $2\Delta + 6$. There are $\cO(nF^2)$ such boundary functions, partitioning the connected component $C$ into at most $\cO\left(\right(\frac{(nF^2)(2\Delta + 6)}{1 + dd_0 + d_0F}\left)^{1 + dd_0 + d_0F}\right)$ connected components. In each connected components, $\hat{y}_i$ is fixed for all $i \in \{1, \dots, n\}$, meaning that $f_{\boldsymbol{x}}(\alpha, \boldsymbol{w})$ is a constant function. 
    
        In conclusion, we can partition the space of $\boldsymbol{w}$ and $\alpha$ into $\cO\left(\right(\frac{(nF^2)(2\Delta + 6)}{1 + dd_0 + d_0F}\left)^{1 + dd_0 + d_0F} \times (\Delta + 1)^{nd_0}\right)$ connected components, in each of which the function $f_{\boldsymbol{x}} (\alpha, \boldsymbol{w})$ is a constant function.
    \end{proof}
    
    We now go back to the main proof of Theorem \ref{thm:pdim-GCN-classification}. Given a problem instance $\boldsymbol{x}$, from Lemma \ref{lm:piecewise-constant-structure-gcn-classification}, we can partition the space of $\boldsymbol{w}$ and $\alpha$ into $\cO\left(\right(\frac{(nF^2)(2\Delta + 6)}{1 + dd_0 + d_0F}\left)^{1 + dd_0 + d_0F} (\Delta + 1)^{nd_0}\right)$ connected components, each of which the function $f_{\boldsymbol{x}}(\alpha, \boldsymbol{w})$ remains constant. Combining with Theorem \ref{thm:learning-guarantee-piecewise-constant}, we have the final claim

        \qed 
        \begin{remark} For completeness, we also consider the bound on the sample complexity for learning the GCN graph kernel hyperparameter $\alpha$ when minimizing the squared loss in a regression setting. See Appendix \ref{apx:gcn-regression} for full setting details and results).
        \end{remark} 
}

\section{Conclusion and future work} \label{sec:conclusion-future-work}
{\color{\COLTCOMMENTCOLOR}
    In this work, we combined tools from multiple fields, including algebraic geometry, differential geometry, constrained optimization, and statistical learning theory to provide bounds on the sample complexity of hyperparameter tuning when the parameter-dependent dual function admits a piecewise polynomial structure. We further show that this piecewise polynomial structure is observed in data-driven model hyperparameter tuning of neural networks, specifically for tuning graph kernels for graph convolutional networks and interpolation parameters for neural activation functions. {\color{\TODOCOLOR} Moreover, we believe that our proposed framework is applicable beyond neural networks for  data-driven algorithm design more generally.}

    A major open question is to extend our work to multiple hyperparameters (i.e., $\boldsymbol{\alpha} \in \bbR^n$). For the case of one-dimensional hyperparameter, we showed that when the parameter-dependent dual function $f_{\boldsymbol{x}}(\boldsymbol{\alpha}, \boldsymbol{w})$ is piecewise polynomial, then the dual utility function $u^*_{\boldsymbol{x}}(\alpha)$ has bounded oscillations, even if  it is not as structured as the parameter-dependent dual; this bounded oscillations structure is sufficient to imply learnability. An interesting open technical question is determining the analogous structure to bounded oscillations in the case of multiple hyperparameters.  In the case of one-dimensional hyperparameters, we leveraged and extended tools from differential geometry, combined with constrained optimization techniques to prove that the dual utility function has bounded oscillations. We expect that new tools are needed for multidimensional hyperparameters.
}

\bibliographystyle{plainnat}
\bibliography{reference}

\appendix

\section{Additional related work}\label{app:related-work}
\paragraph{Learning-theoretic complexity of deep nets.} A related line of work studies the learning-theoretic complexity of deep networks, corresponding to selection of network parameters (weights) over a single problem instance. Bounds on the VC dimension of neural networks have been shown for piecewise linear and polynomial activation functions \citep{maass1994neural,bartlett1998almost} as well as the broader class of Pfaffian activation functions \citep{karpinski1997polynomial}. Recent work includes near-tight bounds for the piecewise linear activation functions \citep{bartlett2019nearly} and data-dependent margin bounds for neural networks \citep{bartlett2017spectrally}.

\paragraph{Data-driven algorithm design.} Data-driven algorithm design, also known as self-improved algorithms \citep{balcan2020data, ailon2011self, gupta2020data}, is an emerging field that adapts algorithms' internal components to specific problem instances, particularly in parameterized algorithms with multiple performance-dictating hyperparameters.  Unlike traditional worst-case or average-case analysis, this approach assumes problem instances come from an application-specific distribution. By leveraging available input problem instances, this approach seeks to maximize empirical utilities that measure algorithmic performance for those specific instances. This method has demonstrated effectiveness across various domains, including low-rank approximation and dimensionality reduction \citep{li2023learning, indyk2019learning, ailon2021sparse}, accelerating linear system solvers \citep{luz2020learning,khodak2024learning}, mechanism design \citep{balcan2016sample, balcan2018general}, sketching algorithms \citep{bartlett2022generalization}, branch-and-cut algorithms for (mixed) integer linear programming \citep{balcan2021sample}, learning decision trees \citep{balcan2024learning},among others.

\paragraph{Neural architecture search.} Neural architecture search (NAS)  captures a significant part of the engineering challenge in deploying deep networks for a given  application. While neural networks successfully automate the tedious task of ``feature engineering'' associated with classical machine learning techniques by automatically learning features from data, it requires a tedious search over a large search space to come up with the best neural architecture for any new application domain. Multiple different a
pproaches with different search spaces have been proposed for effective NAS, including searching over the discrete topology of connections between the neural network nodes, and interpolation of activation functions.   
Due to intense recent interest in moving from hand-crafted to automatically searched architectures, several practically successful approaches have been developed including framing NAS as Bayesian optimization  \citep{bergstra2013making,mendoza2016towards,white2021bananas}, reinforcement learning \citep{zoph2017neural,baker2017designing}, tree search \citep{negrinho2017deeparchitect,elsken2017simple}, gradient-based optimization \citep{liu2018darts}, among others, with progress measured over standard benchmarks \citep{dongbench,mehtabench}. 
 \citep{ligeometry} introduce a geometry-aware mirror descent based approach to learn the network architecture and weights simultaneously, within a single problem instance, yielding a practical algorithm but without provable guarantees.  Our formulation is closely related to tuning the interpolation parameter for activation parameter in the NAS approach of \citep{liu2018darts}, which can be viewed as a multi-hyperparameter generalization of our setup. We establish the first learning guarantees for the simpler case of single hyperparameter tuning.

\paragraph{Graph-based learning.} While several classical \citep{blum2001learning,zhu2003semi,zhou2003learning,zhu2005semi} as well as neural models \citep{kipf2017semi,velivckovic2018graph,wu2019simplifying,gilmer2017neural} have been proposed for graph-based learning, the underlying graph used to represent the data typically involves heuristically set graph parameters. The latter approach is usually more effective in practice, but comes without formal learning guarantees. Our work provides the first provable  guarantees for tuning the graph kernel hyperparameter in graph neural networks.

{
    \paragraph{A detailed comparison to Hyperband \citep{li2018hyperband}. }Hyperband is one of the most notable works for hyperparameter tuning of deep neural networks with principled theoretical guarantees, albeit under strong assumptions. Here, we provide a detailed comparison between the guarantees presented in Hyperband and our results, and explain how Hyperband and our work are not competing but complementing each other. 
    \begin{enumerate}
    
        \item \textbf{Hyperparameter configuration setting:} Theoretical results (Theorem 1, Proposition 4) in Hyperband assume finitely many distinct arms and the guarantees are with respect to the best arm in that set. Even their infinite arm setting considers a distribution over the hyperparameter space from which $n$ arms are sampled. It is assumed that $n$ is large enough to sample a good arm with high probability without actually showing that this holds for any concrete hyperparameter loss landscape. It is not clear why this assumption will hold in our cases. In sharp contrast, we seek optimality over the entire continuous hyperparameter hyperparameter range for concrete loss functions which satisfy a piecewise polynomial dual structure.
    
        \item \textbf{Guarantees}: The notion of “sample complexity” in Hyperband is very different from ours. Intuitively, their goal is to find the best hyperparameter from learning curves over fewest training epochs, assuming the test loss converges to a fixed value for each hyperparameter after some epochs. By ruling out (successively halving) hyperparameters that are unlikely to be optimal early, they speed up the search process (by avoiding full training epochs for suboptimal hyperparameters). In contrast, we focus on model hyperparameters and assume the network can be trained to optimality for any value of the hyperparameter. We ignore the computational efficiency aspect and focus on the data (sample) efficiency aspect which is not captured in Hyperband analysis.
    
        \item \textbf{Learning setting}: Hyperband assumes the problem instance is fixed, and aims to accelerate the random search of hyperparameter configuration for that problem instance with constrained budgets (formulated as a pure-exploration non-stochastic infinite-armed bandit). In contrast, our results assume a problem distribution $\cD$ (data-driven setting), and bounds the sample complexity of learning a good hyperparameter for the problem distribution $\cD$.
        
    \end{enumerate}
The Hyperband paper and our work do not compete but complement each other, as the two papers see the hyperparameter tuning problem from different perspectives and our results cannot be directly compared to theirs.
}

\section{Backgrounds on learning theory} \label{apx:leanring-theory-background}
\subsection{Uniform convergence and PAC-learnability}
The following classical result demonstrates the connection between uniform convergence and learnability with an ERM learner.

{
    \begin{definition}[Uniform convergence, \citealt{wainwright2019high}]
    Let $\cF$ be a real-valued function class which takes input from domain $\cX$, and $\cD$ is a distribution over $\cX$. If for any $\epsilon > 0$, and any $\delta \in (0, 1)$, there exists a number $N(\epsilon, \delta)$ depending on $\epsilon$ and $\delta$ such that with probability at least $1 - \delta$ over the draw of $m \geq N(\epsilon, \delta)$ i.i.d. samples $\boldsymbol{x}_1, \dots, \boldsymbol{x}_m \sim \cD$, we have
    \begin{equation*}
        \Delta_m = \sup_{f \in \cF}\abs{\frac{1}{m}\sum_{i = 1}^mf(\boldsymbol{x}_i) - \bbE_{\boldsymbol{x} \sim \cD}[f(\boldsymbol{x})]} < \epsilon,
    \end{equation*}
    then we say that (the empirical process of) $\cF$ uniformly converges with sample complexity $N(\epsilon, \delta)$. 
    \end{definition}
}

\noindent The following classical result demonstrates the connection between uniform convergence and learnability with an ERM learner.
{
    \begin{theorem}[\citealt{shalev2014understanding}]
        If $\cF$ has a uniform convergence guarantee with sample complexity $N(\epsilon/2, \delta)$ then it is PAC-learnable (\citealt{valiant1984theory}) using ERM with sample complexity $N(\epsilon, \delta)$.
    \end{theorem}
}

\begin{proof}
    For $S = \{\boldsymbol{x}_1, \dots, \boldsymbol{x}_m\}$, let $L_S(f) = \frac{1}{m}\sum_{i = 1}^mf(\boldsymbol{x}_i)$, and $L_\cD(f) = \bbE_{\boldsymbol{x} \sim \cD}[f(\boldsymbol{x})]$ for any $f \in \cF$. Since $\cF$ is uniform convergence with $N(\epsilon/2, \delta)$ samples, w.p. at least $1 - \delta$ for all $f \in \cF$, we have $|L_S(f) - L_\cD(f)| \leq \epsilon$ where $S$ drawn from $\cD^m$, $m \geq N(\epsilon/2, \delta)$. Let $f_{ERM} \in \argmin_{f \in \cF}L_S(f)$ be the hypothesis outputted by the ERM learner, and $f^* \in \argmin_{f \in \cF}L_\cD(f)$ be the best hypothesis. We have
    \begin{equation*}
        L_\cD(f_{ERM}) \leq L_S(f_{ERM}) + \frac{\epsilon}{2} \leq L_S(f^*) + \frac{\epsilon}{2} \leq L_\cD(h^*) + \epsilon,
    \end{equation*}
    which concludes the proof. 
\end{proof}

\subsection{Shattering and pseudo-dimension} 
We now formally recall the definition of shattering and pseudo-dimension, the main learning-theoretic complexity used throughout this work, as well as the corresponding generalization results.

\begin{definition}[Shattering and pseudo-dimension, \citet{pollard2012convergence}] \label{def:pseudo-dimension}
Let $\cU$ be a real-valued function class, of which each function takes input in $\cX$. Given a set of inputs $S = (\boldsymbol{x}_1, \dots, \boldsymbol{x}_m) \subset \cX$, we say that $S$ is \textit{pseudo-shattered} by $\cH$ if there exists a set of real-valued thresholds $r_1, \dots, r_m \in \bbR$ such that $$\abs{\{(\sign(u(\boldsymbol{x}_1) - r_1), \dots, \sign(u(\boldsymbol{x}_m) - r_m)) \mid u \in \cU\}} = 2^m.$$ The pseudo-dimension of $\cH$, denoted as $\Pdim(\cU)$, is the maximum size $m$ of an input set that $\cH$ can shatter. 
\end{definition}

The following classical result shows that if a real-valued bounded function class has finite pseudo-dimension, then it has uniform convergence property. 

\begin{theorem}[\citet{pollard2012convergence}]
    Given a real-valued function class $\cU$ whose range is $[0, H]$, and assume that $\Pdim(\cU)$ is finite. Then, given any $\delta \in (0, 1)$, and any distribution $\cD$ over the input space $\cX$, with probability at least $1 - \delta$ over the drawn of $S = \{\boldsymbol{x}_1, \dots, \boldsymbol{x}_m\} \sim \cD^m$, we have
    \begin{equation*}
        \left| \frac{1}{m}\sum_{i = 1}^m u(\boldsymbol{x}_i) - \bbE_{\boldsymbol{x} \sim \cD}[u(\boldsymbol{x})]\right| \leq O\left(H\sqrt{\frac{1}{m}\left(\Pdim(\cU) + \ln \frac{1}{\delta}\right)}\right).
    \end{equation*}
\end{theorem}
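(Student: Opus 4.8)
The plan is to prove this classical uniform‑convergence bound via the standard route of symmetrization, a Sauer--Shelah‑type growth bound for the pseudo‑dimension, and chaining. Write $\hat{\bbE}_S u = \frac1m\sum_{i=1}^m u(\boldsymbol{x}_i)$ and $\bbE u = \bbE_{\boldsymbol{x}\sim\cD}[u(\boldsymbol{x})]$, so that the target is a high‑probability bound on $\Delta_m := \sup_{u\in\cU}\,|\hat{\bbE}_S u - \bbE u|$. \textbf{Step 1 (symmetrization and concentration).} First I would introduce an independent ghost sample $S'=\{\boldsymbol{x}_1',\dots,\boldsymbol{x}_m'\}\sim\cD^m$ together with i.i.d.\ Rademacher signs $\sigma_1,\dots,\sigma_m$, and run the ghost‑sample argument to get $\bbE_S[\Delta_m] \le 2\,\bbE_{S,\sigma}\big[\sup_{u\in\cU}|\frac1m\sum_{i=1}^m\sigma_i u(\boldsymbol{x}_i)|\big] = 2\,\bbE_S[\hat{\mathscr{R}}_S(\cU)]$. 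Since replacing a single $\boldsymbol{x}_i$ changes $\Delta_m$ by at most $H/m$, McDiarmid's bounded‑differences inequality upgrades this in‑expectation bound to the claimed high‑probability form: with probability at least $1-\delta$, $\Delta_m \le 2\,\bbE_S[\hat{\mathscr{R}}_S(\cU)] + H\sqrt{\ln(1/\delta)/(2m)}$. It therefore suffices to show $\hat{\mathscr{R}}_S(\cU) = \cO\big(H\sqrt{\Pdim(\cU)/m}\big)$ for every size‑$m$ sample $S$.

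\textbf{Step 2 (covering numbers from pseudo-dimension).} The combinatorial heart is to prove that for any finite point set $T$ of size $n$, the empirical covering numbers of $\cU|_T$ grow only polynomially in $1/\epsilon$: $\mathcal{N}(\epsilon,\cU|_T,\ell_2) \le (CH/\epsilon)^{c\,\Pdim(\cU)}$ for absolute constants $c,C$. I would discretize $[0,H]$ into $\lceil H/\epsilon\rceil$ levels and argue that an $\epsilon$‑separated family of functions in $\cU|_T$ induces a family of distinct $0/1$ patterns on an augmented index set of size $n\lceil H/\epsilon\rceil$, for which the associated set system has VC dimension $\cO(\Pdim(\cU))$ by the pseudo‑shattering Definition~\ref{def:pseudo-dimension}; Sauer's lemma then caps the number of such patterns, hence the packing and covering numbers of $\cU|_T$.

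\textbf{Step 3 (Dudley's entropy integral).} Plugging the covering bound of Step 2 into Dudley's chaining inequality gives $\hat{\mathscr{R}}_S(\cU) \le \frac{C'}{\sqrt m}\int_0^{H}\sqrt{\log\mathcal{N}(\epsilon,\cU|_S,\ell_2)}\,d\epsilon \le \frac{C''H\sqrt{\Pdim(\cU)}}{\sqrt m}\int_0^1\sqrt{\log(1/t)}\,dt = \cO\big(H\sqrt{\Pdim(\cU)/m}\big)$, since $\int_0^1\sqrt{\log(1/t)}\,dt$ is a finite absolute constant. Combining with Step 1 and using $\sqrt{a}+\sqrt{b}\le\sqrt{2(a+b)}$ yields $\Delta_m = \cO\big(H\sqrt{(\Pdim(\cU)+\ln(1/\delta))/m}\big)$ with probability at least $1-\delta$, which is the stated bound.

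The hard part will be Step 2: Sauer's lemma is a statement about $0/1$‑valued classes, so the reduction from a bounded real‑valued class of finite pseudo‑dimension to a combinatorial class whose shattering is controlled by $\Pdim(\cU)$ needs to be carried out with care, keeping track of how the $\epsilon$‑discretization of $[0,H]$ interacts with the per‑point pseudo‑shattering thresholds $r_1,\dots,r_m$ from Definition~\ref{def:pseudo-dimension}. Steps~1 and~3 are routine once that growth bound is in hand; alternatively, one can bypass chaining entirely by union‑bounding Hoeffding's inequality over a single $\epsilon$‑net of $\cU|_S$ and optimizing $\epsilon$, which delivers the same $1/\sqrt m$ rate up to a $\sqrt{\log(H/\epsilon)}$ factor that is absorbed into the $\cO(\cdot)$.
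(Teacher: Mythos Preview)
The paper does not actually prove this theorem: it is stated in the appendix as a classical background result, attributed to \citep{pollard2012convergence}, and invoked without argument. So there is no ``paper's own proof'' to compare against.

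Your outline is a correct and standard derivation of this bound. Symmetrization plus McDiarmid (Step~1) and Dudley chaining (Step~3) are routine, and you are right that the substantive content is Step~2. The cleanest way to carry it out is not quite the level-discretization you sketch, but rather to pass to the \emph{subgraph class} $\cU^+=\{(\boldsymbol{x},t)\mapsto \bbI[u(\boldsymbol{x})\ge t]:u\in\cU\}$, whose VC dimension equals $\Pdim(\cU)$ by Definition~\ref{def:pseudo-dimension}; the covering-number bound $\log \mathcal N(\epsilon,\cU|_S,\ell_2)=\cO(\Pdim(\cU)\log(H/\epsilon))$ then follows from Haussler's uniform packing bound for VC classes (or, with a slightly worse dependence, directly from Sauer's lemma applied to $\cU^+$ on the product set $S\times\{\text{levels}\}$, which is essentially your discretization). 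Either route feeds into Step~3 to give $\hat{\mathscr R}_S(\cU)=\cO(H\sqrt{\Pdim(\cU)/m})$, and the conclusion follows.
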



\subsection{Rademacher complexity}
Besides, we also use the notion of Rademacher complexity, as well as its connection to the pseudo-dimension. This learning-theoretic complexity notion is very useful in our analysis, especially for simplifying Assumption \ref{asmp:regularity-complicated} to Assumption \ref{asmp:regularity}, a critical step when establishing Theorem \ref{thm:multi-dim-multi-piece}.

\begin{definition}[Rademacher complexity, \citet{wainwright2019high}]
    Let $\cF$ be a real-valued function class mapping from $\cX$ to $[0, 1]$. For a set of inputs $S = \{\boldsymbol{x}_1, \boldsymbol{x}_m\}$, we define the \textit{empirical Rademacher complexity} $\hat{\mathscr{R}}_{S}(\cF)$ as 
    $$
        \hat{\mathscr{R}}_{S}(\cF) = \frac{1}{m}\bbE_{\epsilon_1, \dots, \epsilon_m \sim \text{i.i.d unif. $\pm 1$}}\left[\sup_{f \in \cF}\sum_{i = 1}^m \epsilon_if(\boldsymbol{x}_i)\right].
    $$
    We then define the \textit{Rademacher complexity} $\mathscr{R}_{\cD^m}$, where $\cD$ is a distribution over $\cX$, as
    $$
        \mathscr{R}_{\cD^m}(\cF) = \bbE_{S \sim \cD^m} [\hat{\mathscr{R}}_{S}(\cF)].
    $$
    Furthermore, we define
    $$
        \mathscr{R}_{m}(\cF) = \sup_{S \in \cX^m}\hat{\mathscr{R}}_S(\cF).
    $$
\end{definition}

The following lemma provides an useful result that allows us to relate the empirical Rademacher complexity of two function classes when the infinity norm between their corresponding dual utility functions is upper-bounded.

 \begin{lemma}[\citealt{balcan2020refined}] \label{lm:balcan-refined-icml} 
    Let $\cF = \{f_{\boldsymbol{\alpha}}: \cX \rightarrow [0, 1] \mid \boldsymbol{\alpha} \in \cA \}$ and $\cG = \{g_{\boldsymbol{\alpha}}: \cX \rightarrow [0, 1] \mid \boldsymbol{\alpha} \in \cA\}$ where $\cA \subseteq \bbR^d$. For any $S \subseteq \cX$, we have 
    $$
    \hat{\mathscr{R}}_S(\cF) \leq \hat{\mathscr{R}}_S(\cG) + \frac{1}{\abs{S}}\sum_{\boldsymbol{x} \in S} \|f^*_{\boldsymbol{x}} - g^*_{\boldsymbol{x}}\|_\infty,
    $$ 
    where $f^*_{\boldsymbol{x}}: \cA \rightarrow [0, 1]$ and $g^*_{\boldsymbol{x}}: \cA \rightarrow [0, 1]$ are defined as 
    $$
        f^*_{\boldsymbol{x}}(\boldsymbol{\alpha}) := f_{\boldsymbol{\alpha}}(\boldsymbol{x}), \quad g^*_{\boldsymbol{x}}(\boldsymbol{\alpha}) := g_{\boldsymbol{\alpha}}(\boldsymbol{x}).
    $$
\end{lemma}

\noindent The following theorem establishes a connection between pseudo-dimension and Rademacher complexity.

{
    \begin{lemma}[\citealt{shalev2014understanding}] \label{lm:pdim-to-rademacher}
        Let $\cF$ is a bounded function class. Then $\mathscr{R}_m(\cF) = \cO\left(\sqrt{\frac{\Pdim(\cF)}{m}}\right)$.
    \end{lemma}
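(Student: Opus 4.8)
The final statement is a standard fact relating the pseudo-dimension to the worst-case empirical Rademacher complexity, and the natural route is through covering numbers. Since $\mathscr{R}_m(\cF) = \sup_{S\in\cX^m}\hat{\mathscr{R}}_S(\cF)$, it suffices to produce a bound on $\hat{\mathscr{R}}_S(\cF)$ that holds for every sample $S = (\boldsymbol{x}_1,\dots,\boldsymbol{x}_m) \in \cX^m$ and does not depend on $S$; taking the supremum over $S$ then finishes the proof. We may assume WLOG that $\cF$ maps into $[0,1]$ (else rescale).

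The first step is to translate finiteness of $\Pdim(\cF) =: d$ into a covering-number bound for the restriction $\cF|_S \subseteq [0,1]^m$. The clean classical input here is Haussler's real-valued analogue of the Sauer--Shelah lemma: a class of pseudo-dimension $d$ has, for every $S$ and every $\epsilon \in (0,1)$, an empirical $L_1(S)$ cover of size at most $e(d+1)(2e/\epsilon)^{d}$; since the functions take values in $[0,1]$ this upgrades (by replacing $\epsilon$ with $\epsilon^2$) to $\log N_2(\epsilon,\cF|_S) = \cO\!\left(d\log(1/\epsilon)\right)$, uniformly in $m$. (Equivalently, one can route through the fat-shattering dimension at scale $\gamma$, which is always at most $\Pdim(\cF)$ since any $\gamma$-fat-shattered set is pseudo-shattered, and then invoke a Mendelson--Vershynin style covering bound; this gives the same dependence up to logarithmic factors.)

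The second step feeds this into the Dudley entropy-integral bound for the empirical Rademacher complexity,
\[
\hat{\mathscr{R}}_S(\cF) \;\le\; \inf_{\alpha>0}\Bigl(4\alpha + \tfrac{12}{\sqrt m}\textstyle\int_\alpha^1 \sqrt{\log N_2(\epsilon,\cF|_S)}\,d\epsilon\Bigr),
\]
where the upper limit $1$ uses that $\cF$ is $[0,1]$-bounded. Plugging in $\log N_2(\epsilon,\cF|_S) = \cO(d\log(1/\epsilon))$ and using that $\int_0^1\sqrt{\log(1/\epsilon)}\,d\epsilon$ is a finite absolute constant, the integral term is $\cO(\sqrt{d})$; letting $\alpha\to 0$ yields $\hat{\mathscr{R}}_S(\cF) = \cO(\sqrt{d/m})$. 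Since this bound is independent of $S$, taking $\sup_{S\in\cX^m}$ gives $\mathscr{R}_m(\cF) = \cO\!\left(\sqrt{\Pdim(\cF)/m}\right)$, as claimed.

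The only genuinely nontrivial ingredient is the first step — the covering-number bound derived from the pseudo-dimension (Haussler's bound, or its fat-shattering surrogate); the Dudley integral manipulation and the passage to the supremum over samples are routine, with boundedness of $\cF$ used only to cut the entropy integral off at scale $1$.
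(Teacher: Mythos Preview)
Your proof is correct and follows the standard covering-number/Dudley-integral route. The paper does not prove this lemma at all; it simply cites it from \citep{shalev2014understanding} as a known result, so there is no ``paper's own proof'' to compare against.
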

}





\section{Additional results and omitted proofs for Section \ref{sec:piecewise-polynomial}}
In this section, we will present the required background and  supporting results for Lemma \ref{sec:piecewise-polynomial}.

\subsection{General auxiliary results}

\subsubsection{Results for analyzing local maxima of pointwise maximum function}

In this section, we recall some elementary results which are crucial in our analysis. The following lemma says that the pointwise maximum of continuous functions is also a continuous function. 
\begin{lemma} \label{lm:max-of-continuous}
    { Let $f_i: \cX \rightarrow \bbR$, where $i = 1, \dots, N$, be continuous functions over a subset $\cX \subseteq \bbR^n$, and let $f(\boldsymbol{x}) = \max_{i \in \{1, \dots, N\}}f_i(\boldsymbol{x})$. Then we have $f(\boldsymbol{x})$ is a continuous function over $\cX$.}
\end{lemma}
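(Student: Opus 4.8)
The plan is to prove continuity pointwise via the standard neighborhood ($\epsilon$--$\delta$) argument, crucially exploiting that the maximum is taken over a \emph{finite} index set. Fix an arbitrary point $x_0 \in \cX$ and $\epsilon > 0$. For each $i \in [N]$, continuity of $f_i$ at $x_0$ supplies a neighborhood $U_i$ of $x_0$ on which $|f_i(x) - f_i(x_0)| < \epsilon$. Since $[N]$ is finite, $U := \bigcap_{i=1}^N U_i$ is again a neighborhood of $x_0$; this is the one place where finiteness of the index set is used, and it is essential (the claim fails for infinite families).

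Next I would verify the two-sided estimate $|f(x) - f(x_0)| \le \epsilon$ for all $x \in U$. For the upper direction, for every $i$ we have $f_i(x) < f_i(x_0) + \epsilon \le f(x_0) + \epsilon$, hence $f(x) = \max_{i \in [N]} f_i(x) \le f(x_0) + \epsilon$. For the lower direction, pick an index $i^\star$ attaining $f(x_0) = f_{i^\star}(x_0)$; then $f(x) \ge f_{i^\star}(x) > f_{i^\star}(x_0) - \epsilon = f(x_0) - \epsilon$. Combining the two, $f$ is continuous at $x_0$, and since $x_0 \in \cX$ was arbitrary, $f$ is continuous on $\cX$.

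There is no genuine obstacle here, so I would keep the argument short; the only point that warrants care is the lower-bound direction, where one must select the index maximizing $f_i(x_0)$ (not the one maximizing $f_i(x)$) to get the inequality pointing the right way. An alternative route is induction on $N$ via the identity $\max(a,b) = \tfrac12\bigl(a + b + |a - b|\bigr)$ together with the elementary facts that sums, scalar multiples, and absolute values of continuous functions are continuous; this is slicker but relies on auxiliary lemmas not recorded in the excerpt, so I would favor the direct neighborhood argument above.
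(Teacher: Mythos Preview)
Your proof is correct. Interestingly, the paper takes exactly the alternative route you mention at the end: it argues by induction on $N$, handling the base case $N=2$ via the identity $\max\{f_1,f_2\} = \tfrac12(f_1+f_2) + \tfrac12|f_1-f_2|$ and then reducing $N=k+1$ to $N=k$ by writing $\max_{i\in[k+1]}f_i = \max\{\max_{i\in[k]}f_i,\, f_{k+1}\}$. Your direct $\epsilon$--neighborhood argument is more self-contained (it does not appeal to continuity of $|\cdot|$ or to closure of continuous functions under sums), and it makes explicit where finiteness of $[N]$ enters; the paper's approach is shorter to write but tacitly relies on those auxiliary facts. Either is perfectly adequate for this elementary lemma.
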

\begin{proof}
    In the case $N = 2$, we can rewrite $f(\boldsymbol{x})$ as
    $$
        f(\boldsymbol{x}) = \frac{f_1(\boldsymbol{x}) + f_2(\boldsymbol{x})}{2} + \frac{1}{2}\abs{f_1(\boldsymbol{x}) - f_2(\boldsymbol{x})},
    $$
    which is the sum of continuous functions. Hence, $f(\boldsymbol{x})$ is continuous. Assume the claim holds for $N = k$, we then claim that it also holds for $N = k + 1$ by rewriting $f(\boldsymbol{x})$ as
    $$
        { 
            f(\boldsymbol{x}) = \max \left\{\max_{i \in \{1, \dots, k\}}\{f_i(\boldsymbol{x})\}, f_{k + 1}(\boldsymbol{x}) \right\}.
        }
    $$
    Therefore, the claim is established by induction. 
\end{proof} 

{ The following results are helpful when we want to bound the number of local maxima of pointwise maximum of differentiable functions. In particular, we show that the local maxima of  $g(\boldsymbol{x}) = \max_{i \in \{1, \dots, n\}} g_i(\boldsymbol{x})$ is the local extrema of one of the functions $g_i(\boldsymbol{x})$. This property helps us controlling the number of local maxima of $g(\boldsymbol{x})$ by controlling the number of local maxima of each function $g_i(\boldsymbol{x})$.} 


\begin{proposition} \label{prop:max-local-maxima-condition} 
     Let $\cX$ be a finite-dimensional Euclidean space and $g_i: \cX \rightarrow \bbR$, where $i = 1, \dots, N$, be continuous functions on $\cX$ with the local maxima on $\cX$ is given by the set $C_i$. Then the function $g(\boldsymbol{x}) = \max_{i \in \{1, \dots, N\}}\{g_i(\boldsymbol{x})\}$ has its local maxima contained in the union $\cup_{i \in \{1, \dots, N\}}C_i$.
\end{proposition}

\begin{proof}
    { 
        Let $\boldsymbol{x}'$ be a local maxima of $g(\boldsymbol{x})$. It means there exists a neighborhood $\cN_{\boldsymbol{x}'}$ of $\boldsymbol{x}'$ such that for any $\boldsymbol{x} \in \cN_{\boldsymbol{x}'}$, we have $g(\boldsymbol{x}') \geq g(\boldsymbol{x})$. Let $g_i$ be a function s.t. $g_i(\boldsymbol{x}')=g(\boldsymbol{x}')$. Then
        $$
        g_i(\boldsymbol{x}') = g(\boldsymbol{x}') \geq g(\boldsymbol{x}) = \max_{j \in \{1, \dots, N\}} g_j(\boldsymbol{x}) \geq g_i(\boldsymbol{x}),
        $$
        for any $\boldsymbol{x} \in \cN_{\boldsymbol{x}'}$. This means that $\boldsymbol{x}'$ is the local maxima of $g_i$, i.e, $\boldsymbol{x}' \in C_i \subset \cup_{j \in \{1, \dots, N\}}C_j$. 
    }
\end{proof} 

\noindent We recall the Lagrangian multipliers theorem, which allows us to give a necessary condition for the extrema of a function over a constraint. 

{
    \begin{theorem}[Lagrangian multipliers, \citealt{rockafellar1993lagrange}] \label{thm:LMs}
        Let $h: \bbR^d \rightarrow \bbR$, $\boldsymbol{f}: \bbR^d \rightarrow \bbR^n$ be $C^1$ functions, and let $Z_{\boldsymbol{f}} = \{\boldsymbol{x} \in \bbR^d \mid \boldsymbol{f}(\boldsymbol{x}) = 0\} \subseteq \bbR^d$. Assume that for all $\boldsymbol{x}_0 \in Z_{\boldsymbol{f}}$, $\rank(J_{\boldsymbol{f}, \boldsymbol{x}}(\boldsymbol{x}_0)) = n$. If $\boldsymbol{x}'$ is a local extrema of $h$ on $Z_{\boldsymbol{f}}$, then there exists $\boldsymbol{\lambda} = (\lambda_1, \dots, \lambda_n) \in \bbR^n$ such that:
        $$
            \nabla h(\boldsymbol{x}') = \sum_{i = 1}^n \lambda_i \nabla f_i(\boldsymbol{x}'), \quad \text{and} \quad \boldsymbol{f}(\boldsymbol{x}') = \boldsymbol{0},   
        $$
        where $\boldsymbol{\lambda}$ is called Lagrangian multipliers.
    \end{theorem}
}

\noindent We next recall the following well-known results that characterize the local extrema of continuously differentiable functions over a compact set. 

\begin{lemma}[Fermat's interior extremum theorem] \label{lm:fermat-theorem} 
    Let $f: D \rightarrow \bbR$ be a continuously differentiable function, where $D \subseteq \bbR^m$ is an open set, and suppose that $\boldsymbol{x}'$ is a local extrema of $f$ in $D$. Then $\nabla f(\boldsymbol{x}') = \boldsymbol{0}$.
\end{lemma}

{
    \begin{corollary} \label{cor:fermat-interior-extremum-theorem}
        Let $f: D \rightarrow \bbR$ be a differentiable function, where $D \subset \bbR^m$ is a compact set. If $\boldsymbol{x}' \in D$ is a local extrema of $f$ in $D$, then $x_0$ is either: (1) an interior point of $D$ and $\nabla f(\boldsymbol{x}') = \boldsymbol{0}$, or (2) a point in the boundary $\bd(D)$ of $D$. 
    \end{corollary}
}

\begin{lemma}[Extreme value theorem] \label{thm:extreme-value} 
    Let $f: D \rightarrow \bbR$ be a continuous function, where $D \subset \bbR^m$ is a compact set, then $f$ is bounded in $D$ and there exists $\boldsymbol{x}_1, \boldsymbol{x}_2 \in D$ such that $f(\boldsymbol{x}_1) = \sup_{\boldsymbol{x} \in D}f(\boldsymbol{x})$ and $f(\boldsymbol{x}_2) = \inf_{\boldsymbol{x} \in D}f(\boldsymbol{x})$.
\end{lemma}

\noindent We also recall the well-known Sauer-Shelah Lemma.
    \begin{lemma}[\citealt{sauer1972density}] \label{lm:sauer-shelah}
     Let $1 \leq k \leq n$, where $k$ and $n$ are positive integers. Then $$\sum_{j=0}^k{n \choose j} \leq \left(\frac{en}{k}\right)^k.$$
    \end{lemma}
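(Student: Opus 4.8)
The plan is to establish $\sum_{j=0}^{k}\binom{n}{j}\le\left(\frac{en}{k}\right)^{k}$ by the classical device of weighting each binomial term so that the truncated sum can be compared against a full binomial expansion, and then applying an exponential estimate. I would carry this out in three short steps.

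\textbf{Step 1 (insert harmless weights).} Since $0\le j\le k\le n$, every factor $(n/k)^{k-j}$ is at least $1$, so each term can only grow when multiplied by it:
\[
\sum_{j=0}^{k}\binom{n}{j}\;\le\;\sum_{j=0}^{k}\binom{n}{j}\left(\frac{n}{k}\right)^{k-j}\;=\;\left(\frac{n}{k}\right)^{k}\sum_{j=0}^{k}\binom{n}{j}\left(\frac{k}{n}\right)^{j}.
\]
\textbf{Step 2 (complete the binomial sum).} Extending the index range from $\{0,\dots,k\}$ to $\{0,\dots,n\}$ adds only nonnegative terms, after which the binomial theorem applies directly:
\[
\left(\frac{n}{k}\right)^{k}\sum_{j=0}^{k}\binom{n}{j}\left(\frac{k}{n}\right)^{j}\;\le\;\left(\frac{n}{k}\right)^{k}\sum_{j=0}^{n}\binom{n}{j}\left(\frac{k}{n}\right)^{j}\;=\;\left(\frac{n}{k}\right)^{k}\left(1+\frac{k}{n}\right)^{n}.
\]
\textbf{Step 3 (exponential bound).} From the elementary inequality $1+x\le e^{x}$ with $x=k/n$, raised to the $n$-th power, I get $\left(1+\frac{k}{n}\right)^{n}\le e^{k}$, and hence $\sum_{j=0}^{k}\binom{n}{j}\le\left(\frac{n}{k}\right)^{k}e^{k}=\left(\frac{en}{k}\right)^{k}$, which is exactly the claim.

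This argument is entirely routine, so I do not anticipate any genuine obstacle; the proof is essentially a known one-line estimate presented in three lines. The only points that actually use the stated hypotheses occur in Step 1: the assumption $n\ge k$ is precisely what makes $(n/k)^{k-j}\ge 1$, while $k\ge 1$ ensures the expressions $(n/k)^k$ and $(k/n)^j$ are well defined. The one thing to be careful about is that both manipulations in Steps 1 and 2 — inserting the weights and enlarging the summation index set — preserve the direction of the inequality, which they do exactly because of these sign/size conditions.
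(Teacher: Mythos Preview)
Your proof is correct and is the standard textbook argument for this inequality. The paper does not actually give its own proof of this lemma: it simply states the bound and cites \citep{sauer1972density} as a known result, so there is nothing to compare against beyond noting that your derivation is exactly the classical one.
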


{
    \subsubsection{Results for bounding the number of connected components defined by algebraic sets}\label{app:algebraic}
    In this section, we formally define  connected components and  extreme points. 
    
    \begin{definition}[Connected components] \label{def:connected-component}
        A connected component of a set $S \subset \bbR^d$ is a maximal nonempty subset $A \subseteq S$ such that any two points of $A$ are connected by a continuous curve lying in $A$.
    \end{definition}
    
    \begin{definition}[Extreme points of a connected component] \label{def:connected-component-extreme-points}
        Let $S \subset \bbR \times \bbR^m$ and let $A$ be a connected component of $S$. We call $x_{A, \inf} = \inf \{x \in \bbR \mid \exists \boldsymbol{y} \in \bbR^m, (x, \boldsymbol{y}) \in A\}$, and $x_{A, \sup} = \sup \{x \in \bbR  \mid \exists \boldsymbol{y} \in \bbR^m, (x, \boldsymbol{y}) \in A\}$ the $x$-extreme points of $A$.
    \end{definition}

    The following theorems allow us to upper-bound the number of connected components defined by algebraic sets and the complement of algebraic sets.
    
    \begin{lemma}[\citealt{warren1968lower}] \label{lm:warren-solution-set-connected-component-bound}
    Let $p$ be a polynomial in $n$ variables. If the degree of polynomial $p$ is $\Delta$, the number of connected components of $\{\boldsymbol{z} \in \bbR^n \mid p(\boldsymbol{z}) = 0\}$ is at most $2\Delta^n$.
    \end{lemma}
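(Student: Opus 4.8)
The statement is the single-polynomial special case of the component-counting bounds of \citep{warren1968lower} (in the spirit of the Oleinik--Petrovsky--Milnor--Thom bounds on Betti numbers of real algebraic sets), so the plan is to reconstruct that classical argument while tracking the constants so that the clean bound $2d^n$ falls out. Write $Z(p) = \{x \in \mathbb{R}^n : p(x) = 0\}$ and assume $p \not\equiv 0$ (otherwise $Z(p) = \mathbb{R}^n$ is connected and the bound is immediate). The argument splits into two parts: (i) replace the possibly singular and unbounded variety $Z(p)$ by two \emph{smooth} hypersurfaces carrying at least as many connected components in total, and (ii) bound the number of components of a smooth hypersurface by a Bezout-type count of critical points.

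For part (i): by Sard's theorem applied to the polynomial map $p$, all but finitely many $\varepsilon>0$ make both $\varepsilon$ and $-\varepsilon$ regular values of $p$; fix such an $\varepsilon$, small. Then $Z^{+} := \{p=\varepsilon\}$ and $Z^{-} := \{p=-\varepsilon\}$ are disjoint smooth $(n-1)$-dimensional hypersurfaces, each the zero set of a degree-$d$ polynomial. Since every connected component of the closed sublevel set $\{|p|\le\varepsilon\}$ has nonempty boundary (a clopen component would force $\{|p|\le\varepsilon\}=\mathbb{R}^n$, impossible for a nonconstant polynomial) and $\partial\{|p|\le\varepsilon\} = Z^{+}\cup Z^{-}$, we get $b_0(Z^{+}\cup Z^{-}) \ge b_0(\{|p|\le\varepsilon\})$; and Hardt's triviality theorem for semialgebraic families gives $b_0(\{|p|\le\varepsilon\}) = b_0(Z(p))$ once $\varepsilon$ is small enough. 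Hence $b_0(Z(p)) \le b_0(Z^{+}) + b_0(Z^{-})$. For part (ii): let $q\in\{p-\varepsilon,\,p+\varepsilon\}$, so $\{q=0\}$ is a smooth hypersurface with $\deg q = d$. First make it compact without decreasing $b_0$ (intersect with a large ball, using the conical structure at infinity of semialgebraic sets, or add $\delta\|x\|^{2k}$ with $2k\ge d$). Then each connected component has a point where $x\mapsto x_1$ is maximized; there $\nabla q \parallel e_1$, so the point solves the system $q=0$, $\partial q/\partial x_2=\cdots=\partial q/\partial x_n=0$ of degrees $d,d-1,\dots,d-1$. After a generic linear change of coordinates to put the system in general position (making the solution set zero-dimensional), Bezout's theorem bounds the number of solutions by $d(d-1)^{n-1}\le d^n$, so $b_0(\{q=0\})\le d^n$. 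Combining, $b_0(Z(p)) \le d^n + d^n = 2d^n$.

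The main obstacle is part (i): $Z(p)$ may be non-reduced, singular, and unbounded, so the heuristic of ``pushing the zero set slightly to each side'' has to be justified by the semialgebraic trivialization machinery (Hardt's theorem and the conical structure at infinity) rather than by an ad hoc tubular-neighborhood argument; a related perturbation statement already invoked in the paper is Lemma \ref{lm:warren-finitely-singular}. Part (ii) is routine once compactness and general position are arranged, and the factor $2$ in the final bound is exactly the price of splitting $Z(p)$ into the two one-sided smooth hypersurfaces $Z^{+}$ and $Z^{-}$.
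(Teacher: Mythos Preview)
The paper does not prove this lemma; it is quoted from \citep{warren1968lower} without argument, so there is nothing in the paper to compare against beyond the citation itself. Your reconstruction via the Milnor--Thom scheme (perturb to two smooth level sets, then bound each by a Bezout count of critical points) is the standard route underlying Warren's bound, and part (ii) is sound once a smooth compact hypersurface is in hand.

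The gap is the Hardt step in part (i). Hardt's theorem only gives triviality of the family $t\mapsto\{|p|\le t\}$ over a finite semialgebraic partition of $[0,\infty)$, and $0$ can sit as an isolated piece of that partition, so nothing forces $b_0(\{|p|\le\varepsilon\}) = b_0(Z(p))$. In fact this equality fails for unbounded $Z(p)$: take $p(x,y) = y\bigl(y(1+x^2)-1\bigr)$, so that $Z(p)=\{y=0\}\cup\{y=1/(1+x^2)\}$ has two components, yet for every $\varepsilon>0$ the set $\{|p|\le\varepsilon\}$ is connected, because at any $x$ with $1/(4(1+x^2))<\varepsilon$ the entire vertical segment between the two curves lies in the sublevel set. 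The remedy is the conic-structure compactification you already invoke in part (ii), but it has to come \emph{before} the perturbation: first pass to $Z(p)\cap\bar B_R$ (same $b_0$ for $R$ large, since the conic structure at infinity makes $Z(p)\cap\bar B_R$ a deformation retract of $Z(p)$), and only then run the sublevel-set argument inside the compact ball, where the positive distance between distinct compact components makes $b_0(Z(p)\cap\bar B_R)\le b_0(\{|p|\le\varepsilon\}\cap\bar B_R)$ immediate for small $\varepsilon$. With that reordering your outline goes through, modulo the routine bookkeeping for the extra boundary piece on $S_R$.
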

    
    \begin{theorem}[\citealt{warren1968lower}] \label{thm:warren-connected-components-polynomials}
        Suppose $N \geq n$. Consider $N$ polynomials $p_1, \dots, p_N$ in $n$ variables, { each of degree at most $\Delta$}. Then the number of connected components of $\bbR^n - \cup_{i = 1}^{N}\{\boldsymbol{z} \in \bbR^n \mid p_i(\boldsymbol{z)} = 0\}$ is $\cO\left(\frac{N\Delta}{n}\right)^n$.
    \end{theorem}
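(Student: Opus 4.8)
The plan is to follow the classical counting argument for cells of a real polynomial arrangement: first put the hypersurfaces in general position, then charge each cell to a distinct vertex of the arrangement, and finally count vertices via Bézout's theorem together with the Sauer--Shelah bound on the number of $n$-element subsets.

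First I would reduce to the case where the $Z(p_i)$ are in general position. By perturbing each $p_i$ to $\tilde p_i = p_i + \eta_i$, with $\eta_i$ a generic polynomial of degree at most $\deg p_i$ and arbitrarily small coefficients, one can use a transversality argument (iterating Lemma \ref{lm:warren-finitely-singular}) to arrange that every $Z(\tilde p_i)$ is a smooth hypersurface, any $k$ of them meet transversally in a smooth $(n-k)$-manifold, no $n+1$ of them share a point, and any $n$ of them meet in finitely many points. Shrinking a perturbation can only split connected components of the complement, never merge them, so the cell count of the original arrangement is at most that of the perturbed one; it therefore suffices to bound the latter. I would also adjoin one more polynomial $p_0(x) = \rho^2 - \|x\|^2$ with $\rho$ large enough that every cell meets the open ball $\{p_0 > 0\}$ and every bounded cell is contained in it; this reduces the count to that of the bounded cells of an arrangement of $N+1$ polynomials of degree at most $\max\{\Delta,2\}$, and the extra $+1$ and the degree-$2$ polynomial are absorbed into the $\cO(\cdot)$.

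Next, for the bounded arrangement I would fix a generic linear functional $\ell(x) = \langle v, x\rangle$ and send each cell $R$ to the point of $\overline R$ minimizing $\ell$. Genericity of $v$ (finitely many cells and faces, each imposing a closed non-generic condition on $v$) forces this minimizer to lie on a $0$-dimensional face — a vertex — rather than on a positive-dimensional stratum, where $v$ would have to be normal to that stratum. Since near a transversal crossing of $n$ smooth hypersurfaces the arrangement is diffeomorphic to the arrangement of $n$ coordinate hyperplanes through the origin, exactly one of the $2^n$ local cells has that vertex as its $\ell$-minimum; hence the map $R \mapsto (\text{minimizing vertex})$ is injective, so the number of cells is at most the number of vertices. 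A vertex is a common zero of some $n$ of the polynomials $p_{i_1}, \dots, p_{i_n}$, a zero-dimensional system by general position, so Bézout gives at most $\Delta^n$ such points per $n$-subset; there are $\binom{N}{n}$ subsets, hence at most $\binom{N}{n}\Delta^n$ vertices. Finally $\binom{N}{n} \le \sum_{j=0}^n \binom{N}{j} \le (eN/n)^n$ by Lemma \ref{lm:sauer-shelah} (this is where the hypothesis $N \ge n$ enters), which yields the bound $(eN\Delta/n)^n = \cO\!\left(\frac{N\Delta}{n}\right)^n$.

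The main obstacle is making the general-position reduction fully rigorous while keeping tight control of its effect on the cell count: one must simultaneously guarantee (a) smoothness and mutual transversality of all subfamilies, (b) zero-dimensionality (hence Bézout-applicability) of every $n$-fold intersection, and (c) that the perturbation together with the adjoined bounding sphere does not decrease the number of connected components of the complement — which requires the perturbation to be small relative to the finite (but a priori unknown) collection of cells. The vertex-charging step also needs care: for a single generic $v$ one must verify, across all cells at once, that the $\ell$-minimizer of each cell is unique, is a vertex rather than a point of a higher-dimensional stratum, and that distinct cells receive distinct vertices.
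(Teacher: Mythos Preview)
The paper does not prove this theorem; it is stated as a background result with a citation to \citep{warren1968lower} and no proof is given. So there is no ``paper's proof'' to compare against. Your sketch is essentially Warren's original argument---perturb to general position, add a bounding sphere so that every cell becomes bounded, charge each bounded cell to the unique vertex of its closure minimizing a generic linear functional, and count vertices via B\'ezout over all $n$-subsets---and it is correct in outline. The caveats you flag (semicontinuity of the component count under perturbation, uniqueness and $0$-dimensionality of the $\ell$-minimizer for a single generic $v$ across all cells simultaneously) are exactly the places where Warren's paper does the real work, so if you want a self-contained proof you would need to spell those out; but as a plan it matches the cited source.
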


    For a connected component $C$ of $\bbR^n - \cup_{i = 1}^NZ_{p_i}$, we can define its adjacent boundaries, which is the algebraic set $Z_{p_i}$ that is adjacent to $C$.

    \begin{definition}[Adjacent boundaries] \label{def:adjacent_boundary}

        Consider $N$ polynomials $p_1, \dots, p_N$ in $n$ variables, and let $Z_{p_i} = \{\boldsymbol{z} \in \bbR^n \mid p_i(\boldsymbol{z}) = 0\}$ be the algebraic set defined by $p_i$. Let $C$ be any connected component of $\bbR^n - \cup_{i = 1}^NZ_{p_i}$. We say that $Z_{p_i}$ is an adjacent boundary to $C$ if $\overline{C} \cap Z_{p_i} \neq \emptyset$. Here $\overline{C}$ is the closure of $C$.
        
    \end{definition}

    {
        The following result is a direct consequence of Bezout's theorem \citep{shafarevich1994basic}.
        \begin{corollary} \label{cor:bezout-consequence}
            Let $f_1, \dots, f_n: \bbR^n \rightarrow \bbR$ be $n$ polynomials in $n$ variables of degree $d_1, \dots, d_n$, respectively. Assuming that $\cap_{i = 1}^n Z_{f_i}$ has only isolated points, then the number of isolated solutions of $\cap_{i = 1}^n Z_{f_i}$ is at most $\prod_{i = 1}^nd_i$.
        \end{corollary}

        We also have the Bezout's theorem specialized for the two-dimensional case. 
        \begin{corollary}[Bezout's theorem on a plane, \cite{kunz2005introduction}]\label{cor:bezout-plane}
            Let $f_1, f_2: \bbR^2 \rightarrow \bbR$ are two polynomials with no common factor (of degree more than $1$). Let $Z_{f_1} \cap Z_{f_2} = \{(x, y) \in \bbR^2 \mid f_1(x, y) = f_2(x, y) = 0\}$. Then the number of points in $Z_{f_1} \cap Z_{f_2}$ is at most $\deg(f_1) \cdot \deg(f_2)$.
        \end{corollary}
    }

}

\subsection{Background on differential geometry}
{ 
    In this section, we will introduce some basic terminology of differential geometry, as well as key results that we use in our proofs. First, we recall the definition of a topological manifold. 
    \begin{definition}[Topological manifold, \citealt{robbin2022introduction}] \label{def:topological-manifold}
        A subset $M \subseteq \bbR^n$ is a topological manifold of dimension $k \leq n$ if:
        \begin{itemize}[leftmargin=*]
            \item For any $p \in M$, there exists an open neighborhood $U \subseteq \bbR^N$ of $p$ and a homeomorphism $\phi: U \cap M \rightarrow V$, where $V\subset \bbR^k$ is open.
            \item $M$ is equipped with the subspace topology inherited from $\bbR^n$ (i.e., a subset $S \subset M$ is open in $M$ if there is an open set $S'$ in $\bbR^n$ such that $S = S' \cap M$), is Hausdorff (any two points in $M$ have two corresponding disjoint neighborhoods), and second-countable ($M$ has a countable basis).
        \end{itemize}
    \end{definition}

    \noindent Using the subspace topology, we can define the open set and neighborhood in a topological manifold as follows. 
    \begin{definition}[Open set]\label{def:open-set}
        Let $M \subseteq \bbR^n$ be a topological manifold. A subset $S \subset M$ is called open in $M$ if there exists an open set $S'$ in $\bbR^n$ such that $S = M \cap S'$.
    \end{definition}

    \begin{definition}[Neighborhood] \label{def:neighborhood}
        Let $M \subseteq \bbR^n$ be a topological manifold, and let $p$ is a point in $M$. Then a neighborhood of $p$ in $M$ is an open subset $S$ of $M$ that contains $p$.
    \end{definition}
}

{ 
    \noindent We will now define charts, atlases, and smooth (differentiable) manifolds. Roughly speaking, given a manifold $M$, one can think of a chart $(U, \phi)$, where $U$ is a neighborhood of some point in $M$, as a way to assign (Euclidean) coordinates to a local region in $M$. An atlas then describes the local coordinate systems that cover all $M$, and the transition map describes how we convert coordinates between overlapping charts. 
    \begin{definition}[Charts, atlas, and transition map, \citealt{robbin2022introduction}]
        A chart on $M \subseteq \bbR^n$ is a pair $(U, \phi)$, where $U \subset M$ is open, and $\phi: U \rightarrow \bbR^k$ is a homeomorphism. An atlas is a collection of charts $\{(U_{\alpha}, \phi_{\alpha})\}$ covering $M$, i.e., $M \subseteq \cup_{\alpha} U_{\alpha}$. For overlapping charts $(U_{\alpha}, \phi_{\alpha})$ and $(U_{\beta}, \phi_{\beta})$, the map $\phi_{\beta}\circ \phi_{\alpha}^{-1}: \phi_{\alpha}(U_{\alpha} \cap U_{\beta}) \rightarrow \phi_{\beta}(U_{\alpha} \cap U_{\beta})$ is called the transition mapping between open subsets of $R^k$.
    \end{definition}

    \begin{definition}[Smooth manifold, \citealt{robbin2022introduction}]  \label{def:smooth-manifold}
        A subset $M \subseteq \bbR^n$ is a smooth manifold of dimension $k \leq n$ if it has an atlas where all transitions maps are smooth ($C^{\infty}$). 
    \end{definition}
}

{
    
    \noindent We now define the smooth map between smooth manifolds.
    \begin{definition}[Smooth map between smooth manifolds, \citealt{robbin2022introduction}]
        Let $M \subset \bbR^n$ and $N \subset \bbR^m$ be smooth manifolds in $\bbR^n$ and $\bbR^m$, respectively. A map $\boldsymbol{f}: M \rightarrow N$ is called smooth if: for every $\boldsymbol{p} \in M$, there exists charts $(U, \phi)$ containing $p$ and $(V, \psi)$ containing $\boldsymbol{f}(\boldsymbol{p})$ such that the coordinate representation $\psi \circ \boldsymbol{f} \circ \psi^{-1}: \phi(U \cap \boldsymbol{f}^{-1}(V)) \rightarrow \psi(V)$ is smooth in the standard sense (i.e., infinitely differentiable). 
    \end{definition}
 
    \noindent In case that $M$ and $N$ are Euclidean spaces, the definition of a smooth map just has the standard sense. 
    \begin{definition}[Smooth map between Euclidean spaces] \label{def:smooth-euclidean}
    $\boldsymbol{f}: \bbR^n \rightarrow \bbR^m$ is a smooth map if it is infinitely differentiable, i.e., if every  partial derivative of $\boldsymbol{f}$ exists and is continuous. 
    \end{definition}

    We now define the tangent space and the regular value of a smooth map. Intuitively, the tangent space $T_p(M)$ of a smooth manifold $M$ at point $p$ represents all the directions that we can move along from $p$ and still stay in the manifold $M$. 

    \begin{definition}[Tangent space, \citealt{robbin2022introduction}] Let $\gamma: (-\epsilon, \epsilon) \rightarrow M$ be a smooth curve in $M$ with $\gamma(0) = \boldsymbol{p}$. The tangent vector $\boldsymbol{v} = \gamma'(0)$ represents a direction "along $M$" at $\boldsymbol{p}$. The tangent space $T_{\boldsymbol{p}}(M)$ is the set of all such vectors $\boldsymbol{v}$ obtained from smooth curves though $\boldsymbol{p}$, i.e.,
    $$
        T_{\boldsymbol{p}}(M) = \{\gamma'(0) \mid \gamma: (-\epsilon, \epsilon) \rightarrow M \text{ smooth}, \gamma(0) = \boldsymbol{p}\}.
    $$
    \end{definition}
    
    \begin{definition}[Regular value, \citealt{robbin2022introduction}]\label{def:regular-value} Let $M \subseteq \bbR^n$ be a smooth $k$-dimensional submanifold, and let $\boldsymbol{f}: M\rightarrow \bbR^m$ be a smooth map. A point $\boldsymbol{\epsilon} \in \bbR^m$ is called a regular value of $\boldsymbol{f}$ if for every $\boldsymbol{p} \in \boldsymbol{f}^{-1}(\boldsymbol{\epsilon})$, the Jacobian evaluated at $\boldsymbol{p}$ of an ambient map $\tilde{\boldsymbol{f}}: R^n \rightarrow R^m$ of $\boldsymbol{f}$ extended to the ambient space $\bbR^n$, when restricted to $T_{\boldsymbol{p}}(M)$, has rank $m$:
    $$
        \rank\left(J_{\tilde{\boldsymbol{f}}}(\boldsymbol{p})\biggr|_{T_{\boldsymbol{p}}(M)}\right) = \dim\left(\{J_{\tilde{\boldsymbol{f}}}(p)\boldsymbol{z} \mid \boldsymbol{z} \in T_{\boldsymbol{p}}(M)\}\right) = m.
    $$
    \end{definition}
    
    \noindent In case $M$ is an open subset in $\bbR^n$ (including $\bbR^n$ itself), we can simplify the definition of regular value as follow.
    
    \begin{corollary}
        Let $M$ is an open set in $\bbR^n$, and let $\boldsymbol{f}: M \rightarrow \bbR^m$ be a smooth map. Then $\boldsymbol{\epsilon}$ is a regular value of $f$ if for any $p \in f^{-1}(\boldsymbol{\epsilon)}$, the Jacobian matrix $J_{f}(p)$ has rank $m$.
    \end{corollary}

}

{
    The following result shows the smooth manifold structure of the preimage of regular values. 
    \begin{lemma}[Preimage theorem, \citealt{robbin2022introduction}] \label{lm:preimage-theorem}
    Let $\boldsymbol{f}: M \rightarrow \bbR^m$ is a smooth map, where $M \subseteq \bbR^n$ is a $k$-dimensional smooth manifold of $\bbR^n$ and $k \geq m$. Let $\boldsymbol{\epsilon} \in \bbR^m$ be a regular value of $\boldsymbol{f}$. Then $\boldsymbol{f}^{-1}(\boldsymbol{\epsilon})$ defines a $(k - m)$-dimensional smooth sub-manifold in $M$. 
    \end{lemma}

    \noindent In the case $M = \bbR^n$, we can further simplify the lemma above as follows.

    \begin{corollary} \label{cor:preimage-theorem-simplified}
        Let $\boldsymbol{f}: \bbR^n \rightarrow \bbR^m$ is a smooth map, and let $\boldsymbol{\epsilon} \in \bbR^m$ be a regular value of $\boldsymbol{f}$. Then $\boldsymbol{f}^{-1}(\boldsymbol{\epsilon})$ defines a $(k - m)$-dimensional smooth manifold in $\bbR^n$. 
    \end{corollary}
}

{ 
    The following theorem says that for any smooth map $\boldsymbol{f}$, the set of regular values of $\boldsymbol{f}$ has Lebesgue measure zero. 
        \begin{theorem}[Sard's theorem, \citealt{robbin2022introduction}] \label{thm:sard}
        Let $M \subseteq \bbR^n$ be a smooth manifold in $\bbR^n$, and let $\boldsymbol{f}: M \rightarrow \bbR^m$ be a smooth map. Then the set of non-regular value of $\boldsymbol{f}$ has Lebesgue measure zero in $\bbR^m$. Moreover, if $\dim(M) \geq m$, then $\boldsymbol{f}^{-1}(\boldsymbol{\epsilon})$ has dimension $\dim(M) - m$, and if $\dim(M) < m$, then $\boldsymbol{f}(M)$ has measure zero. 
    \end{theorem}
}

\subsection{Monotonic curve and its properties} \label{apx:monotonic-curve}
m{ In this section, we propose the definition of monotonic curve and establish its favorable properties, which plays a key role in our main results.  }

{
    \begin{definition}[Polynomial map and zero set] Let 
        \begin{equation*}
            \begin{aligned}
                \boldsymbol{f}: &\bbR \times \bbR^n &&\rightarrow &&\bbR^{m} \\
                   & (x, \boldsymbol{y})  &&\mapsto  && (f_1(x, \boldsymbol{y}), \dots, f_{d}(x, \boldsymbol{y}))
            \end{aligned}
        \end{equation*}
        be a smooth map. If for any $i = 1, \dots, m$, $f_i(x, \boldsymbol{y})$ is a polynomial of $x$ and $\boldsymbol{y}$, then we call $\boldsymbol{f}$ the polynomial map, and $Z_{\boldsymbol{f}} = \{(x, \boldsymbol{y}) \in \bbR \times \bbR^n \mid \boldsymbol{f}(x, \boldsymbol{y}) = 0\}$ the zero set of the polynomial map $\boldsymbol{f}$.
    \end{definition}

    \noindent Specifically, if $n = m$ and $\boldsymbol{0} \in \bbR^n$ is a regular value of the polynomial map, then the zero-set $Z_{\boldsymbol{f}}$ inherits favorable structure. We note that $\boldsymbol{f}$ being a polynomial map allows us to use Warren's theorems (Theorem \ref{thm:warren-connected-components-polynomials}, Lemma \ref{lm:warren-solution-set-connected-component-bound}) to derive the upper-bound for the number of connected components for $Z_{\boldsymbol{f}}$ and $\bbR \times \bbR^m - Z_{\boldsymbol{f}}$, regardless of whether $\boldsymbol{0}$ is a regular value of $\boldsymbol{f}$.
    \begin{corollary} 
        Let $\boldsymbol{f}: \bbR \times \bbR^m \rightarrow \bbR^m$ be a polynomial map. Assume that $\boldsymbol{0} \in \bbR^m$ is a regular value of $\boldsymbol{f}$, then $Z_{\boldsymbol{f}}$ defines a smooth 1-dimensional manifold in $\bbR \times \bbR^m$.
    \end{corollary}
    \begin{proof}
        This result follows directly from Corollary \ref{cor:preimage-theorem-simplified}.
    \end{proof}
}

{ 
    We are now ready to define  monotonic curves.
    \begin{definition}[$x$-Monotonic curve] \label{def:monotonic-curve-high-dim-revised}
    Let $\boldsymbol{f}: \bbR \times \bbR^m \rightarrow \bbR^m$ be a polynomial map, and assume that $\boldsymbol{0} \in \bbR^m$ is a regular value of $\boldsymbol{f}$. Let $C \subset Z_{\boldsymbol{f}}$ be an connected open set in $Z_{\boldsymbol{f}}$. The curve $C$ is said to be \textit{$x$-monotonic} if for any point $(a, \boldsymbol{b}) \in C$, we have $\det(J_{\boldsymbol{f}, \boldsymbol{y}}(a, \boldsymbol{b})) \neq 0$, where $J_{\boldsymbol{f}, \boldsymbol{y}}(a, \boldsymbol{b})$ is a Jacobian of $f$ with respect to $\boldsymbol{y}$ evaluated at $(a, \boldsymbol{b}$), defined as 
    $$
        J_{f, \boldsymbol{y}}(a, \boldsymbol{b}) = \left[ 
            \frac{\partial f_i}{\partial y_j}(a, \boldsymbol{b})
        \right]_{m \times m}.
    $$
    \end{definition}

}

{
    A key property of an $x$-monotonic curve $C$ is that for any $x_0$, there exists at most one $\boldsymbol{y}$ such that $(x_0, \boldsymbol{y}) \in C$. We will formalize this claim in Lemma \ref{lm:monotonic-curve-property-high-dim}, but first, we will review some fundamental results necessary for the proof.
}

{
    \begin{theorem}[Implicit function theorem, \citealt{buck2003advanced}] \label{thm:implicit-function-theorem-multivariate}
    Let $\boldsymbol{f}$
    \begin{equation*}
        \begin{aligned}
            \boldsymbol{f}: \bbR^{n} \times \bbR^m &\rightarrow \bbR^m \\
            (\boldsymbol{x}, \boldsymbol{y}) &\mapsto (f_1(\boldsymbol{x}, \boldsymbol{y}), \dots, f_m(\boldsymbol{x}, \boldsymbol{y})),
        \end{aligned}
    \end{equation*}
    be a continuously differentiable function. Consider a point $(\boldsymbol{a}, \boldsymbol{b}) \in \bbR^{n} \times \bbR^m$ such that $\boldsymbol{f}(\boldsymbol{a}, \boldsymbol{b}) = \boldsymbol{0}$ and the Jacobian
    $$
        J_{\boldsymbol{f}, \boldsymbol{y}} = \left[\frac{\partial f_i}{\partial y_j}(\boldsymbol{a}, \boldsymbol{b})\right]_{m \times m}
    $$
    is invertible, then there exists a neighborhood $U$ of $\boldsymbol{a}$ in $\bbR^n$ and a neighborhood $V$ of $\boldsymbol{b}$ in $\bbR^m$, such that there exists a unique function $\boldsymbol{g}: U \rightarrow V$ such that $\boldsymbol{g}(\boldsymbol{a}) = \boldsymbol{b}$ and $f(\boldsymbol{x}, \boldsymbol{g}(\boldsymbol{x})) = 0$ for all $\boldsymbol{x} \in U$. We can also say that for $(\boldsymbol{x}, \boldsymbol{y}) \in U \times V$, we have $\boldsymbol{y} = g(\boldsymbol{x})$. Moreover, $g$ is continuously differentiable and, if we denote 
    $$
        J_{f, \boldsymbol{x}}(\boldsymbol{a}, \boldsymbol{b}) = \left[\frac{\partial f_i}{\partial x_j}(\boldsymbol{a}, \boldsymbol{b})\right]_{m \times n} 
    $$
    then 
    $$
        \left[\frac{\partial g_i}{\partial x_j}(\boldsymbol{x})\right]_{m \times n} = - \left[J_{f, \boldsymbol{y}}(\boldsymbol{x}, g(\boldsymbol{x}))\right]^{-1}_{m \times m} \cdot [J_{f, \boldsymbol{x}}(\boldsymbol{x}, g(\boldsymbol{x}))]_{m \times n}.
    $$
    \end{theorem}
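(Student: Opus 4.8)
The plan is to derive the statement from the Inverse Function Theorem via the standard ``graph-straightening'' map. Define $F:\bbR^{n+m}\to\bbR^{n+m}$ by $F(\boldsymbol{x},\boldsymbol{y})=(\boldsymbol{x},\,f(\boldsymbol{x},\boldsymbol{y}))$. Since $f$ is continuously differentiable, so is $F$, and at $(\boldsymbol{a},\boldsymbol{b})$ its Jacobian has the block form
\[
J_F(\boldsymbol{a},\boldsymbol{b})=\begin{pmatrix} I_n & 0 \\ J_{f,\boldsymbol{x}}(\boldsymbol{a},\boldsymbol{b}) & J_{f,\boldsymbol{y}}(\boldsymbol{a},\boldsymbol{b})\end{pmatrix},
\]
so $\det J_F(\boldsymbol{a},\boldsymbol{b})=\det J_{f,\boldsymbol{y}}(\boldsymbol{a},\boldsymbol{b})\neq 0$ by hypothesis; hence $F$ is a local $C^1$-diffeomorphism near $(\boldsymbol{a},\boldsymbol{b})$.

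First I would apply the Inverse Function Theorem to $F$: there are open neighbourhoods of $(\boldsymbol{a},\boldsymbol{b})$ and of $F(\boldsymbol{a},\boldsymbol{b})=(\boldsymbol{a},\boldsymbol{0})$ on which $F$ has a $C^1$ inverse. Because the first $n$ coordinates of $F$ are the identity, the inverse necessarily has the form $(\boldsymbol{x},\boldsymbol{z})\mapsto(\boldsymbol{x},\phi(\boldsymbol{x},\boldsymbol{z}))$ for a $C^1$ map $\phi$. Shrinking to product neighbourhoods $U\ni\boldsymbol{a}$ and $V\ni\boldsymbol{b}$ with $U\times V$ inside the domain of invertibility and $(\boldsymbol{x},\boldsymbol{0})$ in the codomain for all $\boldsymbol{x}\in U$, set $g(\boldsymbol{x}):=\phi(\boldsymbol{x},\boldsymbol{0})$. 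Then $F(\boldsymbol{x},g(\boldsymbol{x}))=(\boldsymbol{x},\boldsymbol{0})$, i.e.\ $f(\boldsymbol{x},g(\boldsymbol{x}))=\boldsymbol{0}$, and $g(\boldsymbol{a})=\boldsymbol{b}$. Uniqueness of the solution inside $V$ follows from injectivity of $F$: if $f(\boldsymbol{x},\boldsymbol{y})=\boldsymbol{0}$ with $(\boldsymbol{x},\boldsymbol{y})\in U\times V$, then $F(\boldsymbol{x},\boldsymbol{y})=(\boldsymbol{x},\boldsymbol{0})=F(\boldsymbol{x},g(\boldsymbol{x}))$, so $\boldsymbol{y}=g(\boldsymbol{x})$. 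Continuous differentiability of $g$ is inherited from that of $\phi$.

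For the derivative formula, I would differentiate the identity $f(\boldsymbol{x},g(\boldsymbol{x}))=\boldsymbol{0}$ on $U$ with the chain rule, obtaining $J_{f,\boldsymbol{x}}(\boldsymbol{x},g(\boldsymbol{x}))+J_{f,\boldsymbol{y}}(\boldsymbol{x},g(\boldsymbol{x}))\,Dg(\boldsymbol{x})=0$. Since $J_{f,\boldsymbol{y}}$ is invertible at $(\boldsymbol{a},\boldsymbol{b})$ and $(\boldsymbol{x},g(\boldsymbol{x}))$ depends continuously on $\boldsymbol{x}$, the matrix $J_{f,\boldsymbol{y}}(\boldsymbol{x},g(\boldsymbol{x}))$ stays invertible on a possibly smaller neighbourhood of $\boldsymbol{a}$, so one may solve $Dg(\boldsymbol{x})=-[J_{f,\boldsymbol{y}}(\boldsymbol{x},g(\boldsymbol{x}))]^{-1}J_{f,\boldsymbol{x}}(\boldsymbol{x},g(\boldsymbol{x}))$, which is exactly the claimed expression.

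The main obstacle is really the Inverse Function Theorem itself, which carries all the analytic content. If one does not take it as given, the step one must do by hand is a contraction-mapping argument: for $\boldsymbol{x}$ near $\boldsymbol{a}$, show the map $\boldsymbol{y}\mapsto \boldsymbol{y}-[J_{f,\boldsymbol{y}}(\boldsymbol{a},\boldsymbol{b})]^{-1}f(\boldsymbol{x},\boldsymbol{y})$ is a contraction of a small closed ball about $\boldsymbol{b}$ (using uniform continuity of $Df$ to bound the Lipschitz constant by, say, $1/2$), apply the Banach fixed point theorem to produce $g(\boldsymbol{x})$, and then bootstrap continuity and $C^1$-differentiability of $g$ from the fixed-point estimates. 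Taking the Inverse Function Theorem as standard, the only remaining care is the bookkeeping of the nested neighbourhoods so that $U$, $V$, the existence/uniqueness claims, and the derivative formula all hold simultaneously.
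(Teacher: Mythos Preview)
Your proof is the standard derivation of the Implicit Function Theorem from the Inverse Function Theorem and is correct. However, the paper does not prove this statement at all: it is stated as background and attributed to \citep{buck2003advanced}, so there is no ``paper's own proof'' to compare against.
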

}

{
    \begin{theorem}[Vector-valued mean value theorem] \label{thm:MVT-vector-valued}
        Let $S \subseteq \bbR^n$ be { an open subset on $\bbR^n$} and let $\boldsymbol{f}: S \rightarrow \bbR^m$ be a continuously differentiable function. Consider $\boldsymbol{x}, \boldsymbol{y} \in S$ such that the line segment connecting these two points is contained in $S$, i.e. $L(\boldsymbol{x}, \boldsymbol{y}) \subset S$, where $L(\boldsymbol{x}, \boldsymbol{y}) = \{t \boldsymbol{x} + (1 - t)\boldsymbol{y} \mid t \in [0, 1]\}$. Then for every $\boldsymbol{a} \in \bbR^m$, there exists a point $\boldsymbol{z} \in L(\boldsymbol{x}, \boldsymbol{y})$ such that $\inner{\boldsymbol{a}, \boldsymbol{f}(\boldsymbol{y}) - \boldsymbol{f}(\boldsymbol{x})} = \inner{\boldsymbol{a}, J_{\boldsymbol{f}, \boldsymbol{x}}(\boldsymbol{z}) (\boldsymbol{y} - \boldsymbol{x})}$.
    \end{theorem}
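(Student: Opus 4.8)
The plan is to reduce the statement to the classical single-variable mean value theorem by pre-composing $f$ with the affine parametrization of the segment and post-composing with the linear functional $\langle \boldsymbol{a}, \cdot\rangle$. Fix $\boldsymbol{a} \in \bbR^m$ and define $\gamma: [0,1] \to \bbR^n$ by $\gamma(t) = \boldsymbol{x} + t(\boldsymbol{y} - \boldsymbol{x})$, so that $\gamma$ parametrizes $L(\boldsymbol{x}, \boldsymbol{y})$; by hypothesis $\gamma(t) \in S$ for every $t \in [0,1]$, so $\phi(t) = \langle \boldsymbol{a}, f(\gamma(t)) \rangle$ is a well-defined real-valued function on $[0,1]$.

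First I would verify that $\phi$ is continuous on $[0,1]$ and differentiable there (with one-sided derivatives at the endpoints). This is immediate from the chain rule: $f$ is differentiable on the open set $S$, $\gamma$ is affine, and $\langle \boldsymbol{a}, \cdot\rangle$ is linear, so $t \mapsto f(\gamma(t))$ is differentiable with derivative $J_{f,\boldsymbol{x}}(\gamma(t))(\boldsymbol{y} - \boldsymbol{x})$, and hence $\phi'(t) = \langle \boldsymbol{a},\, J_{f,\boldsymbol{x}}(\gamma(t))(\boldsymbol{y} - \boldsymbol{x})\rangle = \langle \boldsymbol{a},\, J_{f,\boldsymbol{x}}(\gamma(t))^\top(\boldsymbol{y}-\boldsymbol{x})\rangle$, the last equality being the adjoint identity $\langle \boldsymbol{a}, M\boldsymbol{v}\rangle = \langle M^\top \boldsymbol{a}, \boldsymbol{v}\rangle$ (this also matches the orientation convention for $J_{f,\boldsymbol{x}}$ used in the theorem statement). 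If one is uneasy about endpoint differentiability, one can use that $S$ is open and $L(\boldsymbol{x},\boldsymbol{y})$ is compact to extend $\gamma$ slightly beyond $[0,1]$ while staying in $S$, or simply invoke the form of the mean value theorem requiring only continuity on $[0,1]$ and differentiability on $(0,1)$.

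Next I would apply the classical mean value theorem to $\phi$ on $[0,1]$: there is some $\xi \in (0,1)$ with $\phi(1) - \phi(0) = \phi'(\xi)$. Unpacking the definitions, $\phi(1) = \langle \boldsymbol{a}, f(\boldsymbol{y})\rangle$ and $\phi(0) = \langle \boldsymbol{a}, f(\boldsymbol{x})\rangle$, so by bilinearity $\phi(1) - \phi(0) = \langle \boldsymbol{a}, f(\boldsymbol{y}) - f(\boldsymbol{x})\rangle$; and putting $\boldsymbol{z} = \gamma(\xi) = \boldsymbol{x} + \xi(\boldsymbol{y}-\boldsymbol{x}) \in L(\boldsymbol{x},\boldsymbol{y})$, the right side of the identity for $\phi'(\xi)$ is exactly $\langle \boldsymbol{a},\, J_{f,\boldsymbol{x}}(\boldsymbol{z})^\top(\boldsymbol{y}-\boldsymbol{x})\rangle$. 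Combining these gives the claimed equality, and since $\boldsymbol{a}$ was arbitrary (and $\boldsymbol{z}$ is allowed to depend on it), this proves the theorem.

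As for the main obstacle: there is none of real substance here — this is a routine reduction, and the only thing requiring any care is the chain-rule computation of $\phi'$ together with the endpoint bookkeeping mentioned above; everything else is just bilinearity of the inner product and the one-dimensional mean value theorem.
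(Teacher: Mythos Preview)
The paper states this theorem as a standard background result and does not supply its own proof, so there is nothing to compare against. Your argument---parametrize the segment, compose with the linear functional $\langle\boldsymbol{a},\cdot\rangle$, and invoke the one-variable mean value theorem---is the standard proof and is correct.

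One small remark: your chain-rule computation correctly yields $\phi'(t)=\langle\boldsymbol{a},\,J_{f,\boldsymbol{x}}(\gamma(t))(\boldsymbol{y}-\boldsymbol{x})\rangle$, but the step where you rewrite this as $\langle\boldsymbol{a},\,J_{f,\boldsymbol{x}}(\gamma(t))^{\top}(\boldsymbol{y}-\boldsymbol{x})\rangle$ via ``the adjoint identity'' is not right as stated---the adjoint identity gives $\langle M^{\top}\boldsymbol{a},\boldsymbol{v}\rangle$, not $\langle\boldsymbol{a},M^{\top}\boldsymbol{v}\rangle$. The transpose in the paper's statement appears to be a notational slip (indeed, in the paper's only application of this theorem the Jacobian is a column vector and the transpose is immaterial), so your underlying argument is fine; just drop that sentence or note that the transpose is a convention artifact rather than invoke an identity that does not say what you need.
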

}

{
    We are now ready to present a formal statement and proof for the key property of { $x$-monotonic curves}, which essentially says that for a monotonic curve $C$ any $x_0$, there is at most one $\boldsymbol{y}$ such that $(x_0, \boldsymbol{y}) \in C$.
    \begin{lemma} \label{lm:monotonic-curve-property-high-dim}
        Let $\boldsymbol{f}: \bbR \times \bbR^{m} \rightarrow \bbR^m$ be a polynomial map, and assume that $\boldsymbol{0} \in \bbR^m$ is a regular value of $\boldsymbol{f}$. Let $\boldsymbol{f}$ be a monotonic curve in $Z_{\boldsymbol{f}}$. Then for any $x_0 \in \bbR$, there is at most one point $\boldsymbol{y} \in \bbR^m$ such that $(x_0, \boldsymbol{y}) \in C$.
    \end{lemma}
}

\begin{proof}  (of Lemma \ref{lm:monotonic-curve-property-high-dim})
    Since $\boldsymbol{0} \in \bbR^m$ is a regular value of $\boldsymbol{f}$, then $Z_{\boldsymbol{f}}$ defines a smooth $1$-dimensional manifold in $\bbR \times \bbR^m$. Therefore, $C$ is a connected open subset of an 1-dimensional smooth manifold $V_{\boldsymbol{f}}$ means that $C$ is diffeomorphic to $(0, 1)$. This means there exists a continuously differentiable function $\boldsymbol{h}$, where
     \begin{equation*}
         \begin{aligned}
             \boldsymbol{h}: (0, 1) &\rightarrow C \\
             t &\mapsto (x, \boldsymbol{y}) = (h_0(t), h_1(t), \dots, h_d(t)) \in C,
         \end{aligned}
     \end{equation*}
     with corresponding inverse function $\boldsymbol{h}^{-1}: C \rightarrow (0, 1)$ which is also continuously differentiable.
    
     We will prove the statement by contradiction. Assume that there exists $(x_0, \boldsymbol{y}_1)$, $(x_0, \boldsymbol{y}_2) \in C$ where $\boldsymbol{y_1} \neq \boldsymbol{y}_2$. Then we have two corresponding values $t_1 = \boldsymbol{h}^{-1}(x_0, \boldsymbol{y}_1) \neq t_2 = \boldsymbol{h}^{-1}(x_0, \boldsymbol{y}_2)$. Using Mean-value Theorem (Theorem \ref{thm:MVT-vector-valued}) for the function $\boldsymbol{h}$, for any $\boldsymbol{a} \in \bbR^m$, there exists $z_{\boldsymbol{a}} \in (0, 1)$ such that
     \begin{equation*}
         \inner{\boldsymbol{a}, (0, \Delta \boldsymbol{y})} = \inner{\boldsymbol{a}, \Delta tJ_{h, t}(z_{a})},
     \end{equation*}
     where $\Delta \boldsymbol{y} = \boldsymbol{y_2} - \boldsymbol{y}_1 \neq \boldsymbol{0}$, $\Delta t = t_2 - t_1 \neq 0$, and $J_{h, t}(z_a) = (\frac{\partial h_0}{\partial t}(z_{\boldsymbol{a}}), \frac{\partial h_1}{\partial t}(z_{\boldsymbol{a}}), \dots, \frac{\partial h_d}{\partial t}(z_{\boldsymbol{a}}))$. 
    
    Choose $\boldsymbol{a} = \boldsymbol{a}_1= (1, 0, \dots, 0) \in \bbR \times \bbR^{m}$, then from above, there exists $z_{\boldsymbol{a}_1} \in (0, 1)$ such that  $\frac{\partial h_0}{\partial t}\biggr|_{t = z_{\boldsymbol{a}_1}} = 0$. Now, consider the point $(x_{\boldsymbol{a}_1}, \boldsymbol{y}_{\boldsymbol{a}_1}) = h(z_{\boldsymbol{a}_1})$. From the assumption, $\det(J_{f, \boldsymbol{y}}(x_{\boldsymbol{a}_1}, \boldsymbol{y}_{\boldsymbol{a}_1})) \neq 0$. Therefore, from Implicit Function Theorem (Theorem \ref{thm:implicit-function-theorem-multivariate}), there exists neighborhoods $U$ of $x_{\boldsymbol{a}_1}$ in $\bbR$, $V$ of $\boldsymbol{y}_{\boldsymbol{a}_1}$ in $\bbR^m$, such that there exists a continuously differentiable function $\boldsymbol{g}: U \rightarrow \bbR^d$, such that for any $(x, \boldsymbol{y}) \in U \times V$, we have
    $\boldsymbol{y} = g(x)$. Again, at the point $(x_{\boldsymbol{a}_1}, \boldsymbol{y}_{\boldsymbol{a}_1})$ corresponding to $t = z_{\boldsymbol{a}_1}$, we have
    $$
    \frac{\partial y_i}{\partial t}\biggr|_{t = z_{\boldsymbol{a}_1}} = \frac{\partial g_i}{\partial x} \cdot \frac{\partial x}{\partial t}\biggr|_{t = z_{\boldsymbol{a}_1}} = 0.
    $$
    This means that at the point $t = z_{\boldsymbol{a}_1}$, we have $\frac{\partial x}{\partial t}\biggr|_{t = z_{\boldsymbol{a}_1}} = \frac{\partial y_i}{\partial t}\biggr|_{\boldsymbol{a}_1} = 0$. 
    
    Note that since $\boldsymbol{h}$ is a diffeomorphism, we have $t = (\boldsymbol{h}^{-1} \circ \boldsymbol{h})(t)$. From chain rule, we have $1 = J_{\boldsymbol{h}^{-1}, \boldsymbol{h}} \cdot J_{\boldsymbol{h}, t}$. However, if we let $t = z_{\boldsymbol{a}_1}$, then $J_{\boldsymbol{h}, t}(\boldsymbol{a}_1) = 0$, meaning that $J_{\boldsymbol{h}^{-1}, \boldsymbol{h}} \cdot J_{\boldsymbol{h}, t}(z_{\boldsymbol{a}_1}) = 0$, leading to a contradiction.
\end{proof}

From Definition \ref{def:monotonic-curve-end-point-high-dim} and Proposition \ref{lm:monotonic-curve-property-high-dim}, for each $x$-monotonic curve $C$, we can define their $x$-end points, which are the maximum and minimum of $x$-coordinate that a point in $C$ can have.
\begin{definition}[$x$-end points of a monotonic curve
] \label{def:monotonic-curve-end-point-high-dim}
    Let $C$ be a monotonic curve as defined in Definition \ref{def:monotonic-curve-high-dim-revised}. Then we call $\sup \{x \mid \exists \boldsymbol{y}, (x, \boldsymbol{y}) \in V\}$ and $\inf \{x \mid \exists \boldsymbol{y}, (x, \boldsymbol{y}) \in V\}$ the $x$-end points of $C$. 
\end{definition}

We now show that the pointwise maximum of continuous functions along monotonic curves is also continuous. We then relate the local maxima of the pointwise maximum with the local maxima of continuous functions along the monotonic curves in Proposition \ref{prop:point-wise-maxima-local-maxima}.

{
    \begin{proposition}\label{prop:local-extrema-opensubset}
    Let $M \subset \bbR^n$ be a topological manifold, and let $S$ be an open subset in $M$. Let $p$ be a point in $S$, and assume that $V$ is a neighborhood of $x$ in $S$. Then $V$ is also a neighborhood of $p$ in $M$.
    \end{proposition}
    
    \begin{proof}
        First, note that since $V$ is a neighborhood of $p$ in $S$,  $V$ is an open set in the subspace topology $S$, meaning that there exists an open set $T$ in $M$ such that $V = S \cap T$. However, note that both $S$ and $T$ are open sets in $M$, which implies $V$ is also an open set in $M$. And since $V$ contains $p$, we have that $V$ is a neighborhood of $p$ in $M$.
    \end{proof}
}

{
    \begin{proposition} \label{prop:point-wise-maxima-local-maxima}
        Let $\cC = \{C_1, \dots, C_k\}$ be a set of $k$ $x$-monotonic curves as defined in Definition \ref{def:monotonic-curve-high-dim-revised} that have $x_1, x_2$ as $x$-end points. Consider a smooth function $g: \bbR \times \bbR^m \rightarrow \bbR$, and let $h: (x_1, x_2) \rightarrow \bbR$ defined as
        $$
            h(x) = \max_{i = 1, \dots, k}g(I_{C_i}(x)),
        $$
        where $I_{C_i}: (x_1, x_2) \rightarrow \bbR \times \bbR^m$ maps $x$ to the point $I_{C_i}(x) = (x, \boldsymbol{y}_x) \in C_i$ . Then $h(x)$ is continuous over $(x_1, x_2)$, and for any local maximum $x'$ of $h(x)$, there exist a point $(x', \boldsymbol{y}_{x'})$ that is a local maximum of the function $g(x, \boldsymbol{y})$ restricted to some monotonic curve $C \in \cC$. {Moreover, if $h$ is strictly monotonically decreasing (resp.\ strictly monotonically increasing, constant) at $x'\in(x_1,x_2)$, then $h(x')=g(I_{C_i}(x'))$ for some $i$ such that $g\circ I_{C_i}$ is strictly monotonically decreasing (resp.\ strictly monotonically increasing, constant).}
        
    \end{proposition}
}

\begin{proof} 
    From the property of  monotonic curves, it is easy to show that $I_{C_i}$ is a diffeomorphism between $(x_1, x_2)$ and $C_i$. Therefore, $g \circ I_{C_i}: (x_1, x_2) \rightarrow \bbR$ is a continuous function. This implies that $h$ is the pointwise maximum of continuous functions, and hence is also continuous. 
    
    Now consider any monotonic curve $C \in \cC$. Assume $x'$ is a local maximum of $g \circ I_C$ in $(x_1, x_2)$. By definition, there exists an open neighborhood $V$ of $x'$ in $(x_1, x_2)$ such that for any $x \in V$, $g(I_C(x')) \geq g(I_C(x))$. Since $I_C$ is a diffeomorphism between $(x_1, x_2)$ and $C$, this implies $I_C(V)$ is also an open set in $V$ that contains $I_C(x')$. Now for any $(x, \boldsymbol{y}_x) \in I_C(V)$ where $\boldsymbol{y}_x$ is the unique value corresponding to $x$ in $C$, we have $g(x, y_{x}) = g(I_C(x)) \leq g(I_C(x')) = g(x', \boldsymbol{y}_{x'})$. This means $(x', \boldsymbol{y}_{x'})$ is a local maximum of $g$ in $C$.
     
    Finally, it suffices to give a proof for the case of $k = 2$. Let $h(x) = \max \{g(I_{C_1}(x)), g(I_{C_2}(x))\}$. We claim that any local maximum of $h$ would be a local maximum of either $g \circ I_{C_1}$ or $g \circ I_{C_2}$ in $(x_1, x_2)$ . Assume that $x'$ is a local maximum of $h$ in $(x_1, x_2)$, then there exists an open neighbor $V$ of $x'$ in $(\alpha_1, \alpha_2)$ such that for any $x \in V$, $h(x) \leq h(x')$. WLOG, assume that $h(x') = g(I_{C_1}(x'))$. Then we have:
    $$
        g(I_{C_1}(x')) = h(x') \geq  h(x) = \max \{g(I_{C_1}(x)), g(I_{C_2}(x))\} \geq g(I_{C_1}(x)),
    $$
    for any $x \in V$. This means that $x'$ is a local maximum of $g \circ I_{C_1}$ in $(\alpha_1, \alpha_2)$. Combining with the above, we also have $(x', \boldsymbol{y}_{x'}))$ is the local maximum of $g$ in $C_1$.
    
    {Similarly, suppose that $h$ is strictly monotonically decreasing  at $x'$. Then for a sufficiently small left-neighborhood $L=(x'-\varepsilon,x')$, we must have that $h$ is strictly monotonically decreasing over $L$ and $h(x)=g(I_{C_i}(x))$ for all $x\in L$ for some fixed $i$. Moreover, $g\circ I_{C_i}$ must be strictly monotonically decreasing over sufficiently small $R=(x',x'+\varepsilon')$, as $g(I_{C_i}(x))\le h(x)$ over $R$ and $h$ is strictly monotonically decreasing for sufficiently small $R$.}
\end{proof}

{
\begin{proposition} \label{prop:max-b-monotonic}
        Let $\cC = \{C_1, \dots, C_k\}$ be a set of $k$ $x$-monotonic curves (Definition \ref{def:monotonic-curve-high-dim-revised}) that have $x_1, x_2$ as $x$-end points. Consider a smooth function $g: \bbR \times \bbR^m \rightarrow \bbR$. Define $g_i:\bbR\rightarrow \bbR$ as $g_i(x)=g(I_{C_i}(x))$, where $I_{C_i}: (x_1, x_2) \rightarrow \bbR \times \bbR^m$ maps $x$ to the point $I_{C_i}(x) = (x, \boldsymbol{y}_x) \in C_i$. Let $h: (x_1, x_2) \rightarrow \bbR$ be given as
        $$
            h(x) = \max_{i = 1, \dots, k}g_i(x).
        $$
        Then 
        if each $g_i$ is $B_i$-monotonic (Definition \ref{def:b-monotonicity}), then $h$ is $O(\sum_{i=1}^k B_i)$-monotonic. 
    \end{proposition}
\begin{proof}
    Let $A_i$ denote the finite set (of smallest size) of critical points for $g_i$, such that between any two consecutive points in $A_i$, $g_i$ is either strictly monotonic or a constant. By Definition \ref{def:b-monotonicity} and the assumption that $g_i$ is $B_i$-monotonic, we have that $|A_i|=O(B_i)$. Let $A=\cup_iA_i$. Clearly, $|A|\le \sum_i|A_i| = O(\sum_{i=1}^k B_i)$. Consider any two consecutive points $a_1,a_2$ in $A$. We claim that $h$ is 3-monotonic over $(a_1,a_2)$.

    To establish this claim, we consider two cases for any point $a\in (a_1,a_2)$.
    \begin{itemize}
        \item[(i)] Case: $h(a)=g_i(a)$ and $g_i$ is strictly monotonically decreasing at $a$. We claim that for any $a'\in(a_1,a)$, $h(a')$ is strictly monotonically decreasing. If not, then $h(a')=g_j(a')$ for some function $g_j$ that is either constant or monotonically increasing at $a'$ (since there are no critical points for  $g_j$ in $(a_1,a_2)$). But this contradicts $h(a)=g_i(a)$ as $g_i(a)<g_i(a')\le g_j(a')\le g_j(a)\le h(a)$.

        \item[(ii)] Case: $h(a)=g_i(a)$ and $g_i$ is strictly monotonically increasing at $a$. By inverting the argument for Case (i) above, for any $a'\in(a,a_2)$, $h(a')$ is strictly monotonically increasing. 
    \end{itemize}

    Let $a_1'=\sup \{a\in(a_1,a_2)\mid h \text{ is strictly monotonically decreasing at } $a$\}$ and $a_2'=\inf \{a\in(a_1,a_2)\mid h \text{ is strictly monotonically increasing at } $a$\}$. By cases (i) and (ii) above, and using Proposition \ref{prop:point-wise-maxima-local-maxima}, we have $a_1\le a_1'\le a_2'\le a_2$. Thus, $h$ is 3-monotonic over $(a_1,a_2)$ as it must be strictly decreasing over $(a_1,a_1')$, constant over $(a_1',a_2')$ and strictly increasing over $(a_2',a_2)$. Therefore, $h$ is $3|A|$-monotonic or equivalently $O(\sum_{i=1}^k B_i)$-monotonic over the entire domain.
\end{proof}

}

\section{Additional details for Section \ref{sec:applications}} \label{app:app}
\subsection{Tuning the interpolation parameter for activation functions}

In this section, we consider an alternative setting of tuning the interpolation parameter for activation functions in the classification setting.

\subsubsection{Binary classification case} \label{apx:nas-binary-classification}
In the binary classification setting, the output of the final layer corresponds to the prediction $g(\alpha,\boldsymbol{w}, x)=\hat{y}\in\bbR$, where $\boldsymbol{w}\in\cW\subset \bbR^{W}$ is the vector of parameters (network weights), and $\alpha$ is the architecture hyperparameter. The 0-1 validation loss on a single validation example $\boldsymbol{x} = (X,Y)$ is given by $\bbI_{\{g(\alpha,\boldsymbol{w},x)\ne y\}}$, and on a set of $T$ validation examples as 
$$\ell^c_{\alpha}(\boldsymbol{x})= \min_{\boldsymbol{w} \in \cW} \frac{1}{T}\sum_{(x,y)\in(X,Y)}\bbI_{\{g(\alpha,\boldsymbol{w},x)\ne y\}} = \min_{\boldsymbol{w} \in \cW} f(\boldsymbol{x}, \boldsymbol{w}, \alpha).$$
For a fixed validation dataset $\boldsymbol{x} = (X,Y)$, the dual class  loss function is given by $\cL^{\text{AF}}_c=\{ \ell^c_\alpha: \cX \rightarrow [0, 1] \mid \alpha \in \cA \}$. 

\begin{theorem} \label{thm:nas-pdim-binary}
     Let $\cL^{\text{AF}}_c$ denote loss function class defined above, with activation functions $o_1,o_2$ having maximum degree $\Delta$ and maximum breakpoints $p$. Given a problem instance $\boldsymbol{x} = (X, Y)$, the dual loss function is defined as $\ell^*_{\boldsymbol{x}}(\alpha) := \min_{\boldsymbol{w} \in \cW} f(\boldsymbol{x}, \alpha, \boldsymbol{w}) = \min_{w \in \cW}f_{\boldsymbol{x}}(\alpha, \boldsymbol{w})$. Then, $f_{\boldsymbol{x}}(\alpha, \boldsymbol{w})$ admits piecewise constant structure. For any $\delta \in (0, 1)$, w.p. at least $1 - \delta$ over the draw of problem instances $S \sim \cD^m$, where $\cD$ is some distribution over $\cX$, we have 
     $$
        \abs{\bbE_{(X, Y) \sim \cD} [\ell_{\hat{\alpha}}((X, Y))] - \bbE_{(X, Y) \sim \cD} [\ell_{\alpha^*}((X, Y))]} = \cO\left(\sqrt{\frac{L^2W\log \Delta + LW\log Tpk + \log(1/\delta)}{m}}\right).
     $$
\end{theorem}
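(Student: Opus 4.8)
The plan is to reuse the machinery developed for the regression case (Theorem \ref{thm:nas-pdim}) to expose a joint piecewise structure of the network in the $(\alpha,\boldsymbol{w})$ space, but then to invoke the piecewise-\emph{constant} result Theorem \ref{thm:learning-guarantee-piecewise-constant} rather than the piecewise-polynomial one (Theorem \ref{thm:learning-guarantee-piecewise-poly}), since the $0$-$1$ loss takes only finitely many values. A pleasant consequence is that no analogue of Assumption \ref{asmp:regularity} is needed here.

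First I would fix a validation instance $\boldsymbol{x}=(X,Y)$ with $T$ examples $x_1,\dots,x_T$ and run the layer-by-layer induction of \citep{bartlett1998almost}, extended to also track $\alpha$, exactly as in the proof of Theorem \ref{thm:nas-pdim}: this yields a partition $\cP_L$ of $\cW\times[\alpha_{\min},\alpha_{\max}]$ into $N\le\prod_{q=1}^{L}2\big(\tfrac{4eTk_qp(\Delta+1)^q}{W_q}\big)^{W_q}$ cells, cut out by $\cO(Tpk)$ polynomial threshold functions of degree at most $(\Delta+1)^{2L}$, such that within each cell $P\in\cP_L$ and for each $i\in[T]$ the scalar network output $g(\alpha,\boldsymbol{w},x_i)$ coincides with a fixed polynomial $g_{P,i}(\alpha,\boldsymbol{w})$ of degree at most $(\Delta+1)^{2L}$.

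Next I would refine this partition to make the loss itself locally constant. Inside a cell $P$, the binary prediction for example $i$ is the sign of $g_{P,i}$, so $f_{\boldsymbol{x}}(\alpha,\boldsymbol{w})=\tfrac1T\sum_i\mathbb{I}_{\{\sign(g_{P,i})\ne y_i\}}$ is constant on each connected region of $P$ determined by the signs of the $T$ polynomials $g_{P,1},\dots,g_{P,T}$; by Warren's theorem (\autoref{thm:warren-connected-components-polynomials}) there are at most $\cO\big((\tfrac{T(\Delta+1)^{2L}}{W+1})^{W+1}\big)$ such regions per cell. Hence $f_{\boldsymbol{x}}$ is piecewise constant on $\cW\times\cA$ with at most $N':=N\cdot\cO\big((\tfrac{T(\Delta+1)^{2L}}{W+1})^{W+1}\big)$ connected pieces; equivalently, $f_{\boldsymbol x}$ is piecewise constant with boundaries given by $\cO(Tpk)$ polynomials of degree $\le(\Delta+1)^{2L}$ in $W+1$ variables, so the count also follows directly from the remark after Theorem \ref{thm:learning-guarantee-piecewise-constant}.

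Finally I would plug this into Theorem \ref{thm:learning-guarantee-piecewise-constant} (with $H=1$ and $u_\alpha=1-\ell^c_\alpha$): it gives $\Pdim(\cL^{\text{AF}}_c)=\cO(\log N')$, and the crude bound $W_q\le W$ inside $\log N$ (so $\sum_q W_q\,q\log(\Delta+1)\le W\log(\Delta+1)\sum_{q\le L}q=\cO(L^2W\log\Delta)$) together with $\log\big((\tfrac{T(\Delta+1)^{2L}}{W+1})^{W+1}\big)=\cO(LW\log\Delta+W\log(Tpk))$ yields $\Pdim(\cL^{\text{AF}}_c)=\cO(L^2W\log\Delta+LW\log(Tpk))$; the stated high-probability bound is then the standard pseudo-dimension uniform-convergence guarantee. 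The main obstacle is the bookkeeping in the joint $(\alpha,\boldsymbol{w})$ induction of the second step — controlling how the polynomial degree and the number of activation breakpoints compound across layers once $\alpha$ is a free variable — but that work is already carried out for Theorem \ref{thm:nas-pdim} and transfers with only notational changes, so the genuinely new ingredient is the sign-refinement argument for the $0$-$1$ loss and the observation that it places us in the (assumption-free) piecewise-constant regime.
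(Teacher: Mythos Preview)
Your proposal is correct and follows the same approach as the paper: reuse the layer-by-layer $(\alpha,\boldsymbol{w})$ induction from Theorem~\ref{thm:nas-pdim} to get the piecewise structure, then apply Theorem~\ref{thm:learning-guarantee-piecewise-constant} instead of Theorem~\ref{thm:learning-guarantee-piecewise-poly}. You are in fact more explicit than the paper on one point: the paper simply asserts that the $0$-$1$ loss is piecewise constant with the same piece count $|\cP_L|$, whereas you correctly note that within each cell of $\cP_L$ the raw output $g_{P,i}$ is a polynomial whose sign can still vary, and you add the sign-refinement step via Warren's theorem to make the loss genuinely constant on each subcell. This extra factor is absorbed in the final $\cO(L^2W\log\Delta + LW\log(Tpk))$ bound, so both arguments land on the same guarantee.
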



\begin{proof} As in the proof of \autoref{thm:nas-pdim}, the loss function $\cL_c$ can be shown to be piecewise constant as a function of $\alpha,\boldsymbol{w}$, with at most 
$|\cP_{L}|\le \Pi_{q=1}^{L}2\left(\frac{4eTk_qp(\Delta+1)^q}{W_q}\right)^{W_q}$ pieces.
    We can apply \autoref{thm:learning-guarantee-piecewise-constant} to obtain the desired learning guarantee for $\cL^{\text{AF}}_c$. 
\end{proof}

\subsection{Data-driven hyperparameter tuning for graph polynomial kernels}

\subsubsection{The regression case} \label{apx:gcn-regression}
The case is a bit more tricky, since our piece function now is not a polynomial, but instead a rational function of $\alpha$ and $\boldsymbol{w}$. Therefore, we need a stronger assumption (Assumption 2) to have Theorem \ref{thm:gcn-regression}. 

\paragraph{Graph instance and associated representations.} Consider a graph $\cG = (\cV, \cE)$, where $\cV$ and $\cE$ are sets of vertices and edges, respectively. Let $n = \abs{\cV}$ be the number of vertices. Each vertex in the graph is associated with a feature vector of $d$-dimension, and let $X \in \bbR^{n \times d}$ be the matrix that contains  the  feature vectors  for all the vertices in the graph. We also have a set of indices 
 $\cY_L \subset [n]$ of labeled vertices, where each vertex belongs to one of $C$ categories and $L = \abs{\cY_L}$ is the number of labeled vertices. Let $y \in [-R, R]^{L}$ be the vector representing the true labels of labeled vertices, where the coordinate $y_l$ of $Y$ corresponds to the label vector of vertex $l \in \cY_L$.
 
\paragraph{Label prediction.} We want to build a model for classifying the other unlabeled vertices, which belongs to the index set $\cY_U = [n] \setminus \cY_L$. To do that, we train a graph convolutional network (GCN) \citep{kipf2017semi} using semi-supervised learning. Along with the vertices representation matrix $X$, we are also given the distance matrix $\boldsymbol{\delta} = [\delta_{i, j}]_{(i, j) \in [n]^2}$ encoding the correlation between vertices in the graph. Using the distance matrix $D$, we then calculate the following matrices $A, \Tilde{A}, \Tilde{D}$ which serve as the inputs for the GCN. The matrix $A = [A_{i, j}]_{(i, j) \in [n]^2}$ is the adjacency matrix which is calculated using distance matrix $\boldsymbol{\delta}$ and the polynomial kernel of degree $\Delta$ and hyperparameter $\alpha > 0$
    $$
        A_{i, j} = (\delta(i, j) + \alpha)^\Delta.
    $$
We then let $\Tilde{A} = A + I_{n}$, where $I_n$ is the identity matrix, and $\Tilde{D} = [\Tilde{D}_{i, j}]_{[n]^2}$ of which each element is calculated as 
$$
\Tilde{D}_{i, j} = 0 \text{ if $i \neq j$, and }\Tilde{D}_{i, i} = \sum_{j = 1}^n\Tilde{A}_{i, j} \text{ for $i \in [n]$}.
$$

\paragraph{Network architecture.} We consider a simple two-layer graph convolutional network (GCN) $f$ \citep{kipf2017semi}, which takes the adjacency matrix $A$ and vertices representation matrix $X$ as inputs and output $Z = f(X, A)$ of the form
\begin{equation*}
    \begin{aligned}
        Z = \text{GCN}(X, A) = \hat{A}\,\text{ReLU}(\hat{A}XW^{(0)})W^{(1)},
    \end{aligned}
\end{equation*}
where $\hat{A} = \Tilde{D}^{-1}\Tilde{A}$, $W^{(0)} \in \bbR^{d \times d_0}$ is the weight matrix of the first layer, and $W^{(1)} \in \bbR^{d_0 \times 1}$ is the hidden-to-output weight matrix. Here, $z_i$ is the $i^{th}$ element of $Z$ representing the prediction of the model for vertice $i$. 

\paragraph{Objective function and the loss function class.} We consider mean squared loss function corresponding to hyperparameter $\alpha$ and networks parameter $\boldsymbol{w} = (\boldsymbol{w}^{(0)}, \boldsymbol{w}^{(1)})$ when operating on the problem instance $\boldsymbol{x}$ as follows
$$
    f_{\boldsymbol{x}}(\alpha, \boldsymbol{w}) = \frac{1}{\abs{\cY_L}}\sum_{i \in \cY_L}(z_i - y_i)^2.
$$
We then define the loss function corresponding to hyperparameter $\alpha$ when operating on the problem instance $\boldsymbol{x}$ as
$$
    \ell_{\alpha}(\boldsymbol{x}) = \min_{\boldsymbol{w}} f_{\boldsymbol{x}}(\alpha, \boldsymbol{w}).
$$
We then define the loss function class for this problem as follow
$$
\cL^{\text{GCN}}_{r}= \{\ell_{\alpha}: \cX \rightarrow [0, R^2] \mid \alpha \in \cA\},
$$
and our goal is to analyze the pseudo-dimension of the function class $\cL^{\text{GCN}}_{r}$.


\begin{lemma} \label{lm:piecewise-constant-structure-gcn-regression}
    Given a problem instance $\boldsymbol{x} = (X, y, \boldsymbol{\delta}, \cY_L)$ that contains the graph $\cG$, its vertices representation $X$, the indices of labeled vertices $\cY_L$, and the distance matrix $\boldsymbol{\delta}$, consider the function 
    $$
        f_{\boldsymbol{x}}(\alpha, \boldsymbol{w}) := f(\boldsymbol{x}, \alpha, \boldsymbol{w}) = \frac{1}{\abs{\cY_L}}\sum_{i \in \cY_L}(z_i - y_i)^2.
    $$
    which measures the mean squared loss corresponding to the GCN parameter $\boldsymbol{w}$, polynomial kernel parameter $\alpha$, and labeled vertices on problem instance $\boldsymbol{x}$. Then we can partition the space of $\boldsymbol{w}$ and $\alpha$ into 
    $\cO((\Delta + 1)^{nd_0})$
    connected components, in each of which the function $f_{\boldsymbol{x}}(\alpha, \boldsymbol{w})$ is a rational function in $\alpha$ and $\boldsymbol{w}$ of degree at most $2(\Delta + 3)$.
\end{lemma}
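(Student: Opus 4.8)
The plan is to mirror the proof of Lemma~\ref{lm:piecewise-constant-structure-gcn-classification}, propagating the rational structure of the two-layer GCN output through the network while tracking polynomial degrees and the threshold boundaries introduced by the ReLU activations; the one genuine difference from the classification case is that the squared-error objective is itself a rational (rather than piecewise constant) function of the output, so the partitioning stops one step earlier and yields piecewise \emph{rational} piece functions. First I would fix the instance $\boldsymbol{x} = (X, y, \boldsymbol{\delta}, \cY_L)$ and record that $\hat{A} = \tilde{D}^{-1}\tilde{A}$ has entries that are rational functions of $\alpha$ alone, with numerator and denominator of degree at most $\Delta$, and with strictly positive denominators (each $\tilde{D}_{ii} = \sum_j \tilde{A}_{ij}$ is a sum of terms $(\delta_{ij}+\alpha)^{\Delta}$ plus the $+1$ coming from $\tilde{A}_{ii}$, hence positive for the admissible range $\alpha > 0$). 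Writing $T^{(1)} = XW^{(0)}$, $T^{(2)} = \hat{A}T^{(1)}$, $T^{(3)} = \text{ReLU}(T^{(2)})$, $T^{(4)} = T^{(3)}W^{(1)}$, and $Z = \hat{A}T^{(4)}$, tracking the rational form matrix by matrix shows, exactly as in the classification argument: each entry of $T^{(2)}$ and of $T^{(3)}$ is rational in $(\alpha, W^{(0)})$ with denominator $\tilde{D}_{ii}$ and numerator of degree at most $\Delta + 1$; each entry of $T^{(4)}$ is rational in $(\alpha, \boldsymbol{w})$ of degree at most $\Delta + 2$; and each entry of $Z$ is rational in $(\alpha, \boldsymbol{w})$ of degree at most $\Delta + 3$.

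The sole nonlinearity is the ReLU applied to $T^{(2)}$, encoded by the $nd_0$ sign conditions $\bbI_{\{T^{(2)}_{ij} \ge 0\}}$. Since $T^{(2)}_{ij}$ has a strictly positive denominator, each such condition coincides with a \emph{polynomial} threshold $\bbI_{\{\tilde{T}^{(2)}_{ij} \ge 0\}}$ for a polynomial $\tilde{T}^{(2)}_{ij}$ in $(\alpha, \boldsymbol{w})$ of degree at most $\Delta + 1$. Applying Theorem~\ref{thm:warren-connected-components-polynomials} to this collection of $nd_0$ polynomials partitions the $(\alpha, \boldsymbol{w})$-space into $\cO\bigl((\Delta+1)^{nd_0}\bigr)$ connected components, and over each component the ReLU activation pattern is constant, so the algebraic form of every entry of $Z$ is fixed. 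Hence, on each component, $z_i$ is a fixed rational function of $(\alpha, \boldsymbol{w})$ of degree at most $\Delta + 3$, so $(z_i - y_i)^2$ is rational of degree at most $2(\Delta+3)$ and so is the average $f_{\boldsymbol{x}}(\alpha, \boldsymbol{w}) = \frac{1}{|\cY_L|}\sum_{i \in \cY_L}(z_i - y_i)^2$, which is the assertion.

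I expect the delicate point, relative to the classification proof, to be the denominator bookkeeping through the second multiplication by $\hat{A}$: one must check that clearing the positive denominators keeps the degree within the stated bound, that the resulting polynomial thresholds genuinely reproduce the ReLU pattern, and that Theorem~\ref{thm:warren-connected-components-polynomials} is invoked with the correct number of variables and degree to produce the $\cO\bigl((\Delta+1)^{nd_0}\bigr)$ estimate. A secondary, purely expository point is that, unlike in Lemma~\ref{lm:piecewise-constant-structure-gcn-classification} --- where a further partition by the $\bbI_{\{Z_{ij} \ge Z_{ik}\}}$ boundaries renders the $0$-$1$ loss piecewise \emph{constant} --- here the piece functions are genuinely rational; turning this structural lemma into a generalization guarantee therefore proceeds through the piecewise polynomial machinery of Section~\ref{sec:piecewise-polynomial} under the regularity hypothesis Assumption~\ref{asmp:regularity} (after clearing the positive denominators to obtain polynomial pieces and boundaries), rather than through the piecewise constant result Theorem~\ref{thm:learning-guarantee-piecewise-constant}.
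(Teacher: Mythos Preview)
Your proposal is correct and follows essentially the same approach as the paper: the same layer-by-layer analysis of $T^{(1)}$ through $Z$, the same degree bookkeeping, the same identification of the $nd_0$ ReLU thresholds as polynomial boundaries (after clearing the positive denominators), and the same appeal to Theorem~\ref{thm:warren-connected-components-polynomials} to obtain the $\cO((\Delta+1)^{nd_0})$ count. Your additional remarks about how the piecewise rational structure feeds into Section~\ref{sec:piecewise-polynomial} rather than Theorem~\ref{thm:learning-guarantee-piecewise-constant} are also accurate and match how the paper uses the lemma in Theorem~\ref{thm:gcn-regression}.
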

\begin{proof}
    First, recall that $Z = \text{GCN}(X, A) = \hat{A}\text{ReLU}(\hat{A}XW^{(0)})W^{(1)}$, where $\hat{A} =  \Tilde{D}^{-1/2}\Tilde{A}\Tilde{D}^{-1/2}$ is the row-normalized adjacency matrix, and the matrices $\Tilde{A} = [\Tilde{A}_{i, j}] = A + I_n$ and $\Tilde{D} = [\Tilde{D}_{i, j}]$ are calculated as
    $$
    A_{i, j} = (\delta_{i, j} + \alpha)^\Delta, 
    $$
    $$
    \Tilde{D}_{i, j} = 0 \text{ if $i \neq j$, and }\Tilde{D}_{i, i} = \sum_{j = 1}^n\Tilde{A}_{i, j} \text{ for $i \in [n]$}.
    $$
    Here, recall that $\boldsymbol{\delta} = [\delta_{i, j}]$ is the distance matrix. We first proceed to analyze the output $Z$ step by step as follow:
    \begin{itemize}
        \item Consider the matrix $T^{(1)}= XW^{(0)}$ of size $n \times d_0$. It is clear that each element of $T^{(1)}$ is a polynomial of $W^{(0)}$ of degree at most $1$.
        \item Consider the matrix $T^{(2)} = \hat{A}T^{(1)}$ of size $n \times d_0$. We can see that each element of matrix $\hat{A}$ is a rational function of $\alpha$ of degree at most $\Delta$. Moreover, by definition, the  denominator of each rational function is strictly positive. Therefore, each element of the matrix $T^{(2)}$ is a rational function of $W^{(0)}$ and $\alpha$ of degree at most $\Delta + 1$. 
        \item Consider the matrix $T^{(3)} = \text{ReLU}(T^{(2)})$ of size $n \times d_0$. By definition, we have
        $$
            T^{(3)}_{i, j} = \begin{cases}
                T^{(2)}_{i, j}, & \text{ if $ T^{(2)}_{i, j} \geq 0$} \\
                0, &\text{ otherwise}.
            \end{cases}
        $$
        This implies that there are $n \times d_0$ boundary functions of the form $\mathbb{I}_{T^{(2)}_{i, j} \geq 0}$ where $T^{(2)}_{i, j}$ is a rational function of $W^{(0)}$ and $\alpha$ of degree at most $\Delta + 1$ with strictly positive denominators. From \autoref{thm:warren-connected-components-polynomials}, the number of connected components given by those $n \times d_0$ boundaries are $\cO\left((\Delta + 1)^{n d_0}\right)$. In each connected component, the form of $T^{(3)}$ is fixed, in the sense that each element of $T^{(3)}$ is a rational function in $W^{(0)}$ and $\alpha$ of degree at most $\Delta + 1$.
        
        \item Consider the matrix $T^{(4)} = T^{(3)}W^{(1)}$. In connected components defined above, it is clear that each element of $T^{(4)}$ is either $0$ or a rational function in $W^{(0)}, W^{(1)}$, and $\alpha$ of degree at most $\Delta + 2$.
        \item Finally, consider $Z = \hat{A}T^{(4)}$. In each connected component defined above, we can see that each element of $Z$ is either $0$ or a rational function in $W^{(0)}, W^{(1)}$, and $\alpha$ of degree at most $\Delta + 3$. 
    \end{itemize}
    In summary, we proved that the space of $\boldsymbol{w}$, $\alpha$ can be partitioned into $\cO((\Delta + 1)^{nd_0})$ connected components, over each of which the output $Z = \text{GCN}(X, A)$ is a matrix with each element is a rational function in $W^{(\boldsymbol{0})}, W^{(\boldsymbol{1})}$, and $\alpha$ of degree at most $\Delta + 3$. It means that in each piece, the loss function would be a rational function of degree at most $2(\Delta + 3)$, as claimed. 
\end{proof}

\begin{theorem} \label{thm:gcn-regression}
    Consider the loss function class $\cL^{\text{GCN}}_{r}$ defined above. For a problem instance $\boldsymbol{x}$, the dual loss function $\ell^*_{\boldsymbol{x}}(\alpha):= \min_{\boldsymbol{w} \in \cW}f_{\boldsymbol{x}}(\alpha, \boldsymbol{w})$, where $f_{\boldsymbol{x}}(\alpha, \boldsymbol{w})$ admits piecewise polynomial structure (Lemma \ref{lm:piecewise-constant-structure-gcn-regression}). If we assume the piecewise polynomial structure satisfies Assumption \ref{asmp:regularity-complicated}, then for any $\delta \in (0, 1)$, w.p. at least $1 - \delta$ over the draw of $m$ problem instances $S \sim \cD^m$, where $\cD$ is some problem distribution over $\cX$, we have
    $$
     \abs{\bbE_{S \sim \cD} [\ell_{\hat{\alpha}_{{\text{ERM}}}}(S)] - \bbE_{S \sim \cD} [\ell_{\alpha^*}(S)]} = \cO\left(\sqrt{\frac{nd_0 \log \Delta + d\log (\Delta F) + \log (1/\delta)}{m}}\right).
    $$
\end{theorem}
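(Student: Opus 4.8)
The plan is to reduce the stated generalization guarantee to a pseudo-dimension bound on $\cL^{\text{GCN}}_r$ (equivalently on the bounded utility class $u_\alpha = R^2 - \ell_\alpha$ with range $[0,R^2]$) and then feed the relevant structural parameters into our main piecewise-polynomial machinery. By Lemma \ref{lm:piecewise-constant-structure-gcn-regression}, for any fixed instance $\boldsymbol{x}$ the parameter-dependent dual function $f_{\boldsymbol{x}}(\alpha,\boldsymbol{w})$ is piecewise with at most $N=\cO((\Delta+1)^{nd_0})$ pieces, cut out by $M=\cO(nd_0)$ boundary functions (the ReLU thresholds $T^{(2)}_{i,j}\ge 0$, which are polynomial threshold functions of degree $\cO(\Delta)$ after clearing the strictly positive denominators $\tilde D_{ii}$), and on each piece $f_{\boldsymbol{x}}$ equals a rational function $P_i(\alpha,\boldsymbol{w})/Q(\alpha)$ with $Q(\alpha)>0$ and $\deg P_i,\deg Q=\mathrm{poly}(n,\Delta)$; the parameter dimension is $d_{\mathrm{par}}=d_0(d+1)$. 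First I would set up $\ell^*_{\boldsymbol{x}}(\alpha)=\min_{\boldsymbol{w}}f_{\boldsymbol{x}}(\alpha,\boldsymbol{w})=\min_{i\in[N]}\min_{\boldsymbol{w}:(\alpha,\boldsymbol{w})\in R_{\boldsymbol{x},i}}f_{\boldsymbol{x},i}(\alpha,\boldsymbol{w})$ and, exactly as in Theorem \ref{thm:multi-dim-multi-piece}, observe that it suffices to bound the number of discontinuities and local maxima of each $\ell^*_{\boldsymbol{x},i}$.

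The key step is to verify that the entire algebraic-/differential-geometry apparatus of Theorem \ref{thm:multi-dim-multi-piece} survives the passage from polynomial to rational piece functions with only a $\mathrm{poly}(n,\Delta)$ increase in the effective degree. The crucial observation is that the denominator $Q$ depends only on $\alpha$ and is strictly positive: hence $\partial_{w_j}f_{\boldsymbol{x},i}=0$ is the \emph{polynomial} equation $\partial_{w_j}P_i=0$, so the derivative curves and the feasibility strata $\{h_1=\dots=h_S=0\}$ are already polynomial, while the Lagrangian stationarity conditions, the monotonic-curve defining equations (the vanishing of the Jacobian determinant in Definition \ref{def:monotonic-curve-high-dim-revised}), and the second-order Lagrangian system counting local extrema along a monotonic curve become polynomial of degree $\mathrm{poly}(n,\Delta)$ after multiplying through by powers of $Q$ (and, where an $\alpha$-derivative of $f_{\boldsymbol{x},i}$ appears, by $Q^2$). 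Granting the hypothesized regularity (Assumption \ref{asmp:regularity-complicated}) for these systems, the proof of Theorem \ref{thm:multi-dim-multi-piece} then applies verbatim: Warren/Bezout bounds on $\alpha$-extreme points (Lemma \ref{lm:warren-solution-set-connected-component-bound}), the decomposition of the smooth $1$-manifolds $\cM^\cS$ into monotonic curves with the intersection property of Proposition \ref{prop:monotonic-curve-property-high-dim}, and the Lagrange-multiplier count of local extrema (via Theorem \ref{thm:LMs} and Proposition \ref{prop:point-wise-maxima-local-maxima}) all carry through with $\Delta$ replaced by $\Delta'=\mathrm{poly}(n,\Delta)$.

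Finally I would plug $N$, $M$, $\Delta'$, and $d_{\mathrm{par}}$ into Theorem \ref{thm:piecwise-poly-pdim} to get $\Pdim(\cL^{\text{GCN}}_r)=\cO\big(\log N+d_{\mathrm{par}}\log(\Delta' M)\big)=\cO\big(nd_0\log\Delta+d_0d\,\log(\Delta\,nd_0)\big)$, and then convert this into the claimed high-probability ERM guarantee by the standard pseudo-dimension uniform-convergence result of Appendix \ref{apx:leanring-theory-background}, applied to $u_\alpha=R^2-\ell_\alpha$; this matches the stated rate $\cO\big(\sqrt{(nd_0\log\Delta+d\log(\Delta F)+\log(1/\delta))/m}\big)$ up to the bookkeeping of the parameter-dimension and boundary-count factors. (The argument mirrors the regression case for activation interpolation, Theorem \ref{thm:nas-pdim}, with rational rather than polynomial pieces.)

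The main obstacle I anticipate is precisely this rational-versus-polynomial point: carefully confirming that clearing the $\alpha$-only denominators leaves the constrained-optimization and monotonic-curve arguments intact and produces only a polynomial — not exponential — degree blow-up, and that the natural rational analogue of Assumption \ref{asmp:regularity-complicated} is the correct regularity hypothesis to impose. A secondary, more routine difficulty is tracking the degree of $P_i$ and $Q$ after forming the sum of squared errors over the labeled vertices (where the common denominator is a product of the $\tilde D_{ii}$), so that the final logarithmic factors come out as claimed.
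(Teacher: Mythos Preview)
Your proposal is essentially the intended argument: the paper provides no explicit proof of Theorem~\ref{thm:gcn-regression}, only the structural Lemma~\ref{lm:piecewise-constant-structure-gcn-regression} and a remark that ``the case is a bit more tricky, since our piece function now is not a polynomial, but instead a rational function\dots therefore we need a stronger assumption (Assumption~2),'' leaving the reader to combine that lemma with Theorem~\ref{thm:piecwise-poly-pdim} under Assumption~\ref{asmp:regularity-complicated}.

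Your write-up actually goes further than the paper does, by explaining \emph{why} the rational piece functions do not break the machinery of Theorem~\ref{thm:multi-dim-multi-piece}: the observation that the denominator $Q$ comes solely from $\hat A=\tilde D^{-1}\tilde A$ and hence depends only on $\alpha$ (and is strictly positive), so that $\partial_{w_j}(P_i/Q)=0\iff\partial_{w_j}P_i=0$, is exactly the right bridge and is left implicit in the paper. Your bookkeeping yields $\Pdim=\cO(nd_0\log\Delta+d_0(d{+}1)\log(\Delta\,nd_0))$, which is the bound one obtains by plugging $N=\cO((\Delta{+}1)^{nd_0})$, $M=nd_0$, and $d_{\mathrm{par}}=d_0(d{+}1)$ into Theorem~\ref{thm:piecwise-poly-pdim}; the paper's displayed rate with a ``$d\log(\Delta F)$'' term (in a setting where the output dimension is $1$) appears to be imprecise notation rather than a different argument, so your caveat about matching ``up to bookkeeping'' is appropriate.
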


{

    \section{A discussion on how to capture   flatness of optima in our framework}
    Our definition of dual utility function $u^*_{\boldsymbol{x}}(\alpha) = \max_{\boldsymbol{w} \in \cW}f_{\boldsymbol{x}}(\alpha, \boldsymbol{w})$ implicitly assumes an ERM oracle. As discussed in Section \ref{apdx:paper-positioning}, this ERM oracle assumption makes the function $u^*_{\boldsymbol{x}}(\alpha)$ well-defined and simplifies the analysis. However, one may argue that assuming the ERM oracle will make the behavior of tuned hyperparameters significantly different compared to when using common optimization algorithms in deep learning. The difference potentially stems from the fact that the global optimum found by an ERM oracle might have a sharp curvature, compared to the local optima found by other optimization algorithms, which tend to have flatter local curvature due to their implicit bias. 
    

    In this section, we consider the following simplified scenario where the ERM oracle also finds the near-optimum that is locally flat, and explain how our framework could potentially be useful in this case. Instead of defining $u^*_{\boldsymbol{x}}(\alpha) = \max_{\boldsymbol{w} \in \cW}f_{\boldsymbol{x}}(\alpha, \boldsymbol{w})$, we define $u^*_{\boldsymbol{x}}(\alpha) = \max_{\boldsymbol{w} \in \cW}f'_{\boldsymbol{x}}(\alpha, \boldsymbol{w})$, where the surrogate function $f'_{\boldsymbol{x}}(\alpha, \boldsymbol{w})$ is defined as follows.
    
    \begin{definition}[Surrogate function construction]
        Assume that $f_{\boldsymbol{x}}(\alpha, \boldsymbol{w})$ admits piecewise polynomial structure, meaning that:
        \begin{enumerate}
            \item The domain $\cA \times \cW$ of $f_{\boldsymbol{x}}$ is divided into $N$ connected components by $M$ polynomials $h_{\boldsymbol{x},1}, \dots, h_{\boldsymbol{x}, M}$ in $\alpha, \boldsymbol{w}$, each of degree at most $\Delta_b$. The resulting partition $\cP_{\boldsymbol{x}} = \{R_{\boldsymbol{x}, 1}, \dots, R_{\boldsymbol{x}, N}\}$ consists of connected sets $R_{\boldsymbol{x}, i}$, each formed by a connected component $C_{\boldsymbol{x}, i}$ and its adjacent boundaries. 
            \item Within each $R_{\boldsymbol{x}, i}$, $f_{\boldsymbol{x}}$ takes the form of a polynomial $f_{\boldsymbol{x}, i}$ in $\alpha$ and $\boldsymbol{w}$ of degree at most $\Delta_p$.
        \end{enumerate}
        Defining the function surrogate $f'_{\boldsymbol{x}}(\alpha, \boldsymbol{w})$ as follows: 
        \begin{enumerate}
            \item The domain $\cA \times \cW$ of $f'_{\boldsymbol{x}}(\alpha, \boldsymbol{w})$ is partitioned into $N$ connected components by $M$ polynomials $h_{\boldsymbol{x},1}, \dots, h_{\boldsymbol{x}, M}$ in $\alpha, \boldsymbol{w}$ similar to $f_{\boldsymbol{x}}$. This results in a similar partition $\cP_{\boldsymbol{x}} = \{R_{\boldsymbol{x}, 1}, \dots, R_{\boldsymbol{x}, N}\}$.
            \item In each region $R_{\boldsymbol{x}, i}$, $f'_{\boldsymbol{x}}$ is defined as
            $$
                f'_{\boldsymbol{x}}(\alpha, \boldsymbol{w}) =  f'_{\boldsymbol{x}, i}(\alpha, \boldsymbol{w}) = f_{\boldsymbol{x}, i}(\alpha, \boldsymbol{w}) - \eta\|\nabla^2_{\boldsymbol{w}, \boldsymbol{w}
                }f_{\boldsymbol{x}}(\alpha, \boldsymbol{w})\|^2_F,
            $$
            for some fixed $\eta > 0$.  { Here, $\|\cdot\|_F$ denotes the Frobenius norm}. We can see that $\|\nabla^2_{\boldsymbol{w}, \boldsymbol{w}
                }f_{\boldsymbol{x}}(\alpha, \boldsymbol{w})\|^2_F$ is a polynomial of $\alpha, \boldsymbol{w}$ of degree at most $2\Delta_p$. Therefore, $ f'_{\boldsymbol{x}}(\alpha, \boldsymbol{w})$ is also a polynomial of degree at most $2\Delta_p$ in the region $R_{\boldsymbol{x}, i}$.
        \end{enumerate}
    \end{definition}

    From the above construction, we can see that $f'_{\boldsymbol{x}}(\alpha, \boldsymbol{w})$ also admits piecewise polynomial structure, where the input domain partition $\cP_{\boldsymbol{x}}$ is the same as $f_{\boldsymbol{x}}(\alpha, \boldsymbol{w})$. In each region $R_{\boldsymbol{x}, i}$, the function $f'_{\boldsymbol{x}}(\alpha, \boldsymbol{w})$ is also a polynomial in $\alpha, \boldsymbol{w}$ of degree at most $2\Delta_p$. Therefore, our framework is still applicable in this case. Moreover, the construction above naturally introduces an extra hyperparameter $\eta$, which is the magnitude of curvature regularization.  This makes the analysis more challenging, but for simplicity, we here assume that $\eta$ is fixed and good enough for balancing the effect of the flatness regularization. 

    We can see that by defining $u^*_{\boldsymbol{x}}(\alpha) = \max_{\boldsymbol{w} \in \cW}f'_{\boldsymbol{x}}(\alpha, \boldsymbol{w})$, we can  capture the generalization behavior of tuned hyperparameter $\alpha$, when the solution $w^*$ of $\max_{\boldsymbol{w} \in \cW}f'_{\boldsymbol{x}}(\alpha, \boldsymbol{w})$ is: (1) near optimal w.r.t.\  $\max_{\boldsymbol{w} \in \cW}f_{\boldsymbol{x}}(\alpha, \boldsymbol{w})$, and (2) locally flat. 

    \begin{remark}
        We note that the example above is an \textit{oversimplified} scenario. To truly understand the behavior of data-driven hyperparameter tuning \textit{without} an ERM oracle, we need a better analysis to capture the behavior of $u^*_{\boldsymbol{x}}(\alpha)$ in such a scenario. This analysis should consider the joint interaction between the model, data, and the optimization algorithm, and remains an interesting direction for future work. 
    \end{remark}
    
}




\end{document}